%% file: main_iclr.tex
\pdfoutput=1 %
\documentclass{article} %
\usepackage{etoc}
\usepackage{iclr2027_conference,times}
\usepackage[varqu]{zi4}   %

\input{math_commands.tex}

\usepackage{url}
\usepackage{booktabs}
\definecolor{blue1}{HTML}{2851CC}
\definecolor{red1}{HTML}{E00000}

\hypersetup{
    colorlinks,
    linkcolor = red1,
    citecolor = blue1,
    urlcolor={blue!80!black},
    pdftitle={A Finslerian Approach for Embedding Directed Data},
    pdfauthor={Gwendal Debaussart-Joniec, Théau Blanchard, Argyris Kalogeratos},
}

\usepackage[normalem]{ulem}

 \newcounter{marginNoteCounter}

\title{A Finslerian Approach for Embedding \\ Directed Data}

\author{Gwendal Debaussart-Joniec\thanks{equal contribution.}\\
  ENS Paris-Saclay, Universit\'e Paris-Saclay, CNRS, Centre Borelli\\
  F-91190 Gif-sur-Yvette, France \\
  \texttt{gwendal.debaussart@ens-paris-saclay.fr} \\
  \AND
  Théau Blanchard\textsuperscript{*} \\
  UMR 1346 HeKA, Universit\'e Paris Cité, Inria, Inserm\\
  GE Healthcare \\
  F-75015 Paris, France \\
  \texttt{theau.blanchard@inria.fr} \\
  \AND
  Argyris Kalogeratos \\
  ENS Paris-Saclay, Universit\'e Paris-Saclay, CNRS, Centre Borelli\\
  F-91190 Gif-sur-Yvette, France \\
	\texttt{argyris.kalogeratos@ens-paris-saclay.fr} \\
}

\def\arxivheadertext{}
\arxivfinalcopy
\begin{document}

\maketitle

\begin{abstract}
  Many datasets carry an intrinsic directionality: citations point backward in time, cells differentiate along lineages, and traffic follows preferred routes. Spectral embedding methods, including most of their extensions to directed graphs, discard this information: they symmetrize the data and map it into a Euclidean space where asymmetry cannot be represented. We instead model directed data as sampled from a Finsler manifold, whose distance depends on the direction of travel, and study the kernel operator built from this asymmetric distance. Through a moment expansion of this operator, we show that its symmetric and antisymmetric parts separate geometry from direction. As the bandwidth of the kernel vanishes, the symmetric part converges to a weighted Laplacian, recovering diffusion maps in the Riemannian case, while the antisymmetric part converges to a first-order transport operator that encodes the directionality. We prove that the corresponding graph operators, built from finitely many samples, converge uniformly and almost surely to these limits. For Randers metrics, this vector field is explicit and yields an embedding algorithm recovering both the manifold structure, from the spectrum of the symmetric part, and the underlying drift. We illustrate the approach on synthetic directed graphs and point-clouds.
\end{abstract}

\section{Introduction} \label{sec:introduction}

A citation never points to the future, a cell rarely un-differentiates, influence in a social network is rarely returned in equal measure, and traffic or commodity flows between two locations are rarely symmetric. These are just some of the many examples of \emph{directed data}, \ie data with asymmetric relationships between entities. This directionality can be given explicitly, as in directed graphs \citep{debaussart_parpic_2026, chung_laplacians_2005, cucuringu_hermitian_2020, sevi_generalized_2026}, or arise from points in a continuous space, as in RNA velocity \citep{bergen2021rna} or lineage data \citep{zweig_learning_2026}. For such data, \emph{Finsler geometry}, an extension of Riemannian geometry allowing direction-dependent metrics, offers a natural mathematical language. Yet, Finsler geometry remains largely unexplored in machine learning or manifold learning.

Graph Laplacians and diffusion operators underlie many of the tools we rely on for structured data, including spectral clustering \citep{von2007tutorial}, dimensionality reduction \citep{donoho2003hessian}, semi-supervised learning, and graph neural networks \citep{kipf2016semi}. Their theoretical grounding rests on a convergence result: when data is sampled from a compact Riemannian manifold, the graph Laplacian converges, as the sample size grows and the kernel bandwidth shrinks, to the Laplace--Beltrami operator \citep{coifman_diffusion_2006, belkin_laplacian_2001, lafon2004diffusion, hein2007graph, singer_from_2006, calder_improved_2022, wilson_peoples_spectral_2025}. This theory is symmetric by construction, as the kernel is a function of a symmetric distance.

Methods for directed graphs generally restore symmetry somewhere along the way, through the stationary distribution of a random walk \citep{chung_laplacians_2005, zhou2003learning}, an explicit symmetrization of the adjacency matrix \citep{satuluri_symmetrizations_2011}, an encoding of the direction in the complex phase of a Hermitian matrix \citep{cucuringu_hermitian_2020, fanuel_magnetic_2018, zhang_magnet_2021}, or vertex measures \citep{sevi_generalized_2026, debaussart_parpic_2026}. Directed embedding methods \citep{chen2007directed, ou2016asymmetric} keep the asymmetry in the objective, but, like the spectral ones, return points of a Euclidean space, in which the directionality of the data is lost. Closest to our work, \citet{perrault2011directed} and \citet{yuan2022continuum} take continuous limits of Laplacian-type operators built from an asymmetric kernel, and obtain a diffusion term together with an advection term. There, as in other uses of asymmetric kernels \citep{wu2010asymmetric, he2023learning, tao2024learning}, the asymmetry is introduced by hand, without an underlying geometry.

Finsler geometry is classical in mathematics \citep{finsler1918ueber, yeonHistoryBirthFinsler, bao2012introduction, shen2001lectures, ohtaComparisonFinslerGeometry2021}, and arises wherever the cost of moving depends on the direction of travel \citep{zermelo1931navigationsproblem, markvorsenFinslerGeodesicSpray2016, gahtan2026differentiable, asanjarani2021finslertraffic, melanokos_2008_finsler, chen2024region, pfeifer2019finsler}. In machine learning, it has been used to model anisotropic or asymmetric structures \citep{lopez_2021_Symmetric, Weber_finsler_2024, pouplin2023identifyinglatentdistancesfinslerian, zweig_learning_2026, shaska_finsler_2025}, and \citet{dages_finsler_2025, dages_harnessing_2026} generalize multidimensional scaling to a Finslerian embedding space. These methods learn direction-dependent representations, but the Finsler metric usually has a prescribed form \citep{Weber_finsler_2024, dages_finsler_2025}.

This work proposes a principled approach for representing directed data, grounded in Finsler geometry: we model the data as sampled from a Finsler manifold, so that the asymmetry comes from an arbitrary underlying geometry rather than from a hand-crafted kernel, and we recover this geometry, through a Randers approximation of the metric, in the embedding space (see \cref{fig:Finsler_setting}). Our contributions are as follows.
\begin{itemize}[itemsep=0em, topsep=0em, leftmargin=*]
  \item \textbf{Continuum limits} (\cref{sec:main_results}). Building on the convergence theory of graph Laplacians, we derive a moment expansion of the Finsler kernel operator. Its symmetric part converges to a weighted Laplacian of the Binet--Legendre metric, recovering diffusion maps in the Riemannian case, while its antisymmetric part converges to the derivative along the centroid field of the unit ball, which encodes the directionality.
  \item \textbf{Consistency} (\cref{sec:discrete_operators}). We prove that the corresponding graph operators, built from finitely many samples, converge uniformly and almost surely to these limits, with an explicit rate.
  \item \textbf{Directed embedding} (\cref{sec:randers_approximation}). We introduce the moment-Randers metric, a Randers approximation of the Finsler metric with the same limiting operators, and show that it can be estimated from the two graph operators alone. This yields an algorithm returning both an embedding and a Randers metric on it.
  \item \textbf{Experiments} (\cref{sec:experiments}). We illustrate the approach on point-clouds sampled from Randers and Matsumoto manifolds, as well as on directed stochastic block models.
\end{itemize}

\begin{figure}[t]
  \centering
  \includegraphics[width=0.85\textwidth]{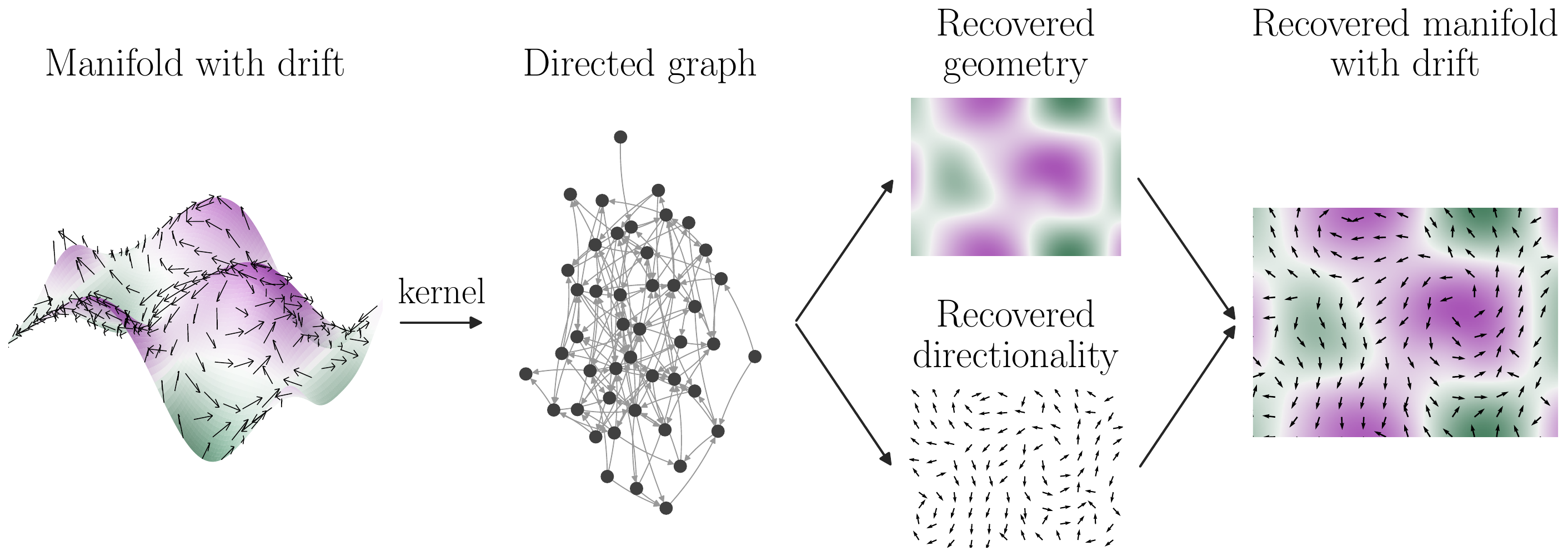}
  \caption{\textbf{Recovering geometry and directionality from Finsler manifolds.} A kernel built from the Finsler distance turns sampled points into a directed graph. The operators $\gL^s$ and $\gL^a$ of \cref{thm:sym_limit} recover the geometry and the directional structure, and are merged into a Randers metric on the embedding.}
  \label{fig:Finsler_setting}
\end{figure}

\section{Background and setting} \label{sec:background}

\subsection{Basic concepts}

Notations are summarized in \cref{tab:notation_table,tab:notation_table_discrete}, and \appref{appendix:finsler_geometry} gives a self-contained introduction to Finsler geometry. We refer to \citet{shen2001lectures} and \citet{ohtaComparisonFinslerGeometry2021} for additional details.

Let $\gM$ be an $m$-dimensional manifold. A \emph{Finsler metric} is a map $\FF: T\gM \to [0,\infty)$, smooth and positive away from the zero section, that is positively homogeneous, $\FF(x,t v) = t \FF(x,v)$ for $t > 0$, sub-additive in $v$, and strongly convex. It plays on each tangent space $T_x\gM$ the role that a norm plays on $\R^m$ and, exactly as in the Riemannian case, it defines the length of a curve and a distance $\dstF(x,y)$ as the infimum of the lengths of the curves joining $x$ to $y$. Riemannian metrics are the special case where $\FF(x,\cdot)$ is induced by an inner product. The essential difference is that homogeneity is only required for \emph{positive} scalars: in general $\FF(x,v) \neq \FF(x,-v)$, so that the distance $\dstF$ satisfies the triangle inequality but not symmetry,  since traveling from $x$ to $y$ need not cost as much as traveling back. The \emph{reverse metric} $\FF(x,-v)$ is again a Finsler metric, whose distance is $(x,y) \mapsto \dstF(y,x)$ (\cref{property:finsler_reverse}).

This work relies on a subclass of Finsler metrics and one canonical construction. \emph{Randers metrics} \citep{randers1941asymmetrical} are a natural generalization of Riemannian metrics. For a Riemannian metric $\alpha(x,v) = \sqrt{v^\top \rmA(x) v}$ and a vector field $b$ on $\gM$, the Randers metric is defined as
\begin{equation} \label{eq:randers_main}
  \FF(x,v) = \sqrt{v^\top \rmA(x) v} \ + \ b(x)^\top v, \qquad \lVert b(x) \rVert_{\rmA^{-1}} < 1.
\end{equation}
The constraint on $b$ ensures that $\FF$ is positive. Its indicatrix $\{v \in T_x\gM : \FF(x,v) = 1\}$ is an ellipsoid whose center is shifted in a direction determined by $b(x)$, so that all the directional information is carried by the single vector field $b$. We write $\tball_x = \{v \in T_x\gM : \FF(x,v) \leq 1\}$ for the unit ball of $\FF$ in the tangent space $T_x\gM$. The second moment of the uniform distribution on $\tball_x$ yields a Riemannian metric canonically attached to $\FF$: the \emph{Binet--Legendre metric} \citep{matveev2012binet},
\begin{equation} \label{eq:binet_legendre_main}
  \gBL^{ij}(x) = \frac{(m+2)}{\leb_x(\tball_x)} \int_{\tball_x} v^i v^j  \diff \leb_x(v),
\end{equation}
which is smooth whenever $\FF$ is smooth, and coincides with $\rmA$ when $\FF$ is Riemannian. The first moment of the same distribution is the \emph{centroid} of the unit ball,
\begin{equation} \label{eq:centroid_main}
  \rvc(x) = \frac{1}{\leb_x(\tball_x)} \int_{\tball_x} v \diff \leb_x(v) \in T_x\gM,
\end{equation}
which defines a vector field on $\gM$, the \emph{centroid field}. It vanishes whenever $\FF$ is reversible, since the unit ball is then symmetric about the origin, and thus measures the asymmetry of $\FF$ at first order; for Randers metrics, it is an explicit function of $\rmA$ and $b$ (\cref{prop:centroid_randers_metric}). The pair $(\rvc, \gBL)$, \ie the first two moments of the unit ball, will carry respectively the direction and the geometry recovered by our operators.

Unlike a Riemannian manifold, a Finsler manifold carries no canonical volume. In this work, we use the \emph{Busemann--Hausdorff measure} $\mBH$ (\cref{def:busemann_haussdorf_measure}). In a chart, with $\diff x$ the Lebesgue measure on $\gM$ and $\leb_x$ the Lebesgue measure on $T_x\gM$ in the coordinate basis $(\partial_i|_x)$, it reads
\begin{equation} \label{eq:busemann_hausdorff_main}
  \diff\mBH(x) = \sigBH(x) \diff x, \qquad \sigBH(x) = \frac{\vole(\eball^m)}{\leb_x(\tball_x)},
\end{equation}
where $\vole(\eball^m)$ is the volume of the Euclidean unit ball of $\R^m$. Although $\diff x$ and $\leb_x$ are chart-dependent, $\mBH$ is not. When $\FF$ is Riemannian, $\mBH$ coincides with the Riemannian volume. We assume that the data are $N$ \iid samples $X_1, \ldots, X_N$ from a probability measure $\mathbb{P}$ on $\gM$, and denote by $\rho$ its density with respect to $\mBH$, so that $\diff\mathbb{P}(x) = \rho(x) \diff\mBH(x)$.

\subsection{Our setting}

We make the following assumptions throughout:
\begin{itemize}[itemsep=0em, topsep=0em]
  \item[(A1)] The kernel $K: \R_+ \to \R_+$ is a smooth, positive, monotone  decreasing function with bounded derivative, and has a sub-exponential tail: there exist constants $C_K, \nu_K>0$ such that for any $r \in \R_+$, $K(r) \leq C_K e^{-\nu_K r}$.
  \item[(A2)] \label{assum:A2} The manifold $\gM$ is smooth, connected, compact and \emph{without boundary}\footnote{This assumption is standard in the study of convergence of graph Laplacians. When $\gM$ has a boundary, the analysis still holds for points ``far enough'' from it \citep{coifman_diffusion_2006}.}, and the Finsler metric $\FF$ is smooth and positive on the slit tangent bundle $T\gM \setminus \{0\}$. Moreover, the sampling density $\rho$ is smooth and bounded away from zero by a constant $\rho_{\min} > 0$.
\end{itemize}
Given $K$ satisfying (A1), we define the kernel $W: \MM \times \MM \to \R_+$ and its out/in-degrees as:
\begin{equation*}
    W(x,y) = \frac{1}{\eps^m} K\left(\frac{\dstF(x,y)}{\eps}\right), \quad \dout(x) = \int_\gM W(x,y) \diff\mathbb{P}(y), \quad \din(x) = \int_\gM W(y,x) \diff\mathbb{P}(y),
\end{equation*}
where $\eps$ is the kernel bandwidth. We further assume that the degrees are bounded below: for any $x \in \gM$, $\dout(x), \din(x) > \dmin > 0$. This is a standard assumption in the literature \citep{hein2007graph}, and it is satisfied when the kernel is sub-exponential and $\rho$ is positive. We study the kernel through its associated forward and backward transport operators $\gG_\FF$ and $\gG'_\FF$. For any $f:\gM\rightarrow \R$ smooth enough, these are defined as:
\begin{equation*}
  \label{eq:transport_operator}
    \gG_\FF[f](x) = \int_\gM W(x,y)f(y) \diff\mBH(y), \quad \gG'_\FF[f](x) = \int_\gM W(y,x) f(y) \diff\mBH(y).
\end{equation*}
The backward operator is the transport operator of the reversed Finsler metric, and is the $L^2(\gM,\mBH)$-adjoint of $\gG_\FF$, so that $\gG_\FF$ is self-adjoint precisely when $\FF$ is reversible (\cref{prop:adjoint_reverse}, proved in \appref{appendix:proofs:prop:adjoint_reverse}).

In practice, we only observe samples from $\mathbb{P}$, whose density $\rho$ is unknown. To reduce its influence, we use a $\theta$-normalization of the kernel $W$, similar to \citet{coifman_diffusion_2006}:
\begin{equation*}
  W^{(\theta)}(x,y) = \frac{W(x,y)}{\qe(x)^\theta \qe(y)^\theta}, \quad \gGt [f](x) = \int_\gM W^{(\theta)}(x,y) f(y) \diff\mBH(y),
\end{equation*}
where $\qe(x) = (\din(x) + \dout(x))/2$ is a normalization factor, and $\theta \in [0,1]$. The backward operator is defined using $W^{(\theta)}(y,x)$ instead. In general, the forward and backward operators differ; \cref{sec:main_results} exploits this difference to split the operator into a symmetric and an antisymmetric part.

\section{Continuum limits: separating geometry from direction}
\label{sec:main_results}
Our starting point is a moment expansion of the transport operator $\gG_\FF$, the Finslerian counterpart of the expansions used to derive continuum limits of graph Laplacians \citep{coifman_diffusion_2006, singer_from_2006}. The overall procedure is illustrated in \cref{fig:Finsler_setting}.
\begin{theorem}\label{thm:moment_expansion}
  For any $f:\gM\rightarrow \R$ smooth enough, we have
  \begin{equation*}
    \gG_\FF [f] (x) = m_0 f (x) + \eps \gA_1[f](x) + \frac{\eps^2}{2} \gA_2[f](x) + \bigO(\eps^3),
  \end{equation*}
  where $\gA_1$ and $\gA_2$ are first- and second-order differential operators, whose coefficients are explicitly given in \appref{appendix:proofs:thm:moment_expansion}. The leading term $m_0$ is the zeroth moment of the kernel $K$, and does not depend on $x$.
\end{theorem}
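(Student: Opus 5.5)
We localize the integral defining $\gG_\FF[f](x)$ and then Taylor-expand in normal-type coordinates adapted to $\FF$. First, by the sub-exponential tail in (A1) and the compactness of $\gM$ in (A2), the contribution of $\{y : \dstF(x,y) > \eps^{1/2}\}$ is $\bigO(\eps^{-m} e^{-\nu_K \eps^{-1/2}})$. This is $\bigO(\eps^k)$ for every $k$, uniformly in $x$. Here we use that $\dstF$ is comparable to the chart distance, with constants uniform on a compact set. So we may work in a single chart around $x$, or in the exponential map $\exp_x$ of $\FF$. We write $y = \exp_x(\eps v)$, with $v$ ranging over a ball of radius $\eps^{-1/2}$ in $T_x\gM$.

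The key feature of the Finsler exponential map is that $\dstF(x,\exp_x(w)) = \FF(x,w)$ exactly for small $w$. This holds because forward geodesics from $x$ minimize locally. Hence the kernel becomes $\eps^{-m}K(\FF(x,\eps v)/\eps) = \eps^{-m}K(\FF(x,v))$, using positive homogeneity. The change of variables is
\[
  \diff\mBH(y) = \eps^m\, J_x(\eps v)\, \diff \leb_x(v),
\]
where $J_x(w) = \sigBH(\exp_x w)\,|\det D\exp_x(w)|$ in coordinates. Since $\exp_x$ is only $C^1$ at the origin, but smooth along each ray $t\mapsto \exp_x(tv)$, we expand along rays:
\begin{align*}
  J_x(\eps v) &= \sigBH(x) + \eps\, j_1(x,v) + \eps^2 j_2(x,v) + \bigO(\eps^3 \FF(x,v)^3), \\
  f(\exp_x(\eps v)) &= f(x) + \eps\, \partial_i f\, v^i + \tfrac{\eps^2}{2}\bigl(\partial_{ij} f\, v^i v^j - 2 G^i(x,v)\,\partial_i f\bigr) + \bigO(\eps^3 \FF(x,v)^3).
\end{align*}
Here $G^i$ are the geodesic spray coefficients, which are positively $2$-homogeneous in $v$. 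The coefficients $j_1, j_2$ are positively homogeneous of degrees $1$ and $2$ in $v$. Multiplying the two expansions and integrating against $K(\FF(x,v))\,\diff\leb_x(v)$ gives the claimed form.

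The integrals are computed in polar coordinates adapted to $\FF$. We write $v = r u$ with $\FF(x,u)=1$. A degree-$k$ homogeneous integrand $h$ then yields
\[
  \int K(\FF(x,v))\, h(v)\, \diff\leb_x(v) = \Bigl(\int_0^\infty K(r) r^{m+k-1}\diff r\Bigr)\Bigl(\int_{\partial\tball_x} h \,\diff\varsigma\Bigr).
\]
The indicatrix integrals convert to unit-ball moments via $\int_{\tball_x} h = \frac{1}{m+k}\int_{\partial \tball_x} h\,\diff\varsigma$, with $\varsigma$ the cone measure. Consequently:
\begin{itemize}
  \item The order-$0$ term is $f(x)\sigBH(x)\, m\,\leb_x(\tball_x) \int_0^\infty K(r) r^{m-1}\diff r = f(x)\, m\, \vole(\eball^m)\int_0^\infty K r^{m-1}\diff r$, by \eqref{eq:busemann_hausdorff_main}. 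This is the $x$-independent constant $m_0$. This cancellation is exactly why the Busemann--Hausdorff measure is the natural choice.
  \item The $\eps$ term involves the centroid $\rvc(x)$ from \eqref{eq:centroid_main} applied to $\partial_i f$, together with $f$ times the first moment of $j_1$. This defines $\gA_1$, a first-order operator.
  \item The $\eps^2$ term involves the second moments, i.e.\ $\gBL$ from \eqref{eq:binet_legendre_main}, against $\partial_{ij}f$, plus lower-order terms coming from $G^i$, $j_1$ and $j_2$. This defines $\gA_2$.
\end{itemize}

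The main obstacle is controlling the remainder uniformly in $x$, given the non-smoothness of $\exp_x$ and of $\FF^2$ at the zero section. I will handle this by taking all Taylor expansions along rays, with remainders bounded by $C\FF(x,v)^3$. The constant $C$ is uniform because the unit sphere bundle is compact and $\FF$, $\rho$ and $\sigBH$ are smooth. The bound $\int K(r) r^{m+2}\diff r<\infty$, guaranteed by the sub-exponential tail, then makes the integrated remainder $\bigO(\eps^3)$. A secondary obstacle is the region $\eps^{1/2}$-far from $x$, beyond the injectivity radius; this is disposed of by the tail estimate from the first step. If the identity $\dstF(x,\exp_x w)=\FF(x,w)$ caused issues, for instance from the $C^1$-only regularity, I would fall back on an ordinary chart. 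There I would expand $\dstF(x,x+w) = \FF(x,w) + \bigO(|w|^2)$ to second order and Taylor-expand $K$ around $\FF(x,v)$. This requires the bounded derivative of $K$ and produces additional moment terms of the same form.
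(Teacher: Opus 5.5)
Your proposal is correct and follows the paper's proof essentially step for step. Both arguments localize using the sub-exponential tail, change variables via $y = \expF_x(\eps v)$ with $\dstF(x,\expF_x(\eps v)) = \eps\FF(x,v)$, Taylor-expand $f$ along geodesics (your $-2G^i\partial_i f$ is the paper's $-\ChrisF^k_{ij}v^iv^j\partial_k f$), expand the Jacobian along rays, and use the polar/cone-measure computation to get $m_0 = m\mu_0\vole(\eball^m)$. The main difference is that you leave the Jacobian coefficients $j_1, j_2$ abstract, whereas the paper identifies them through the distortion and Jacobi fields as $-\sigBH S$ and $\tfrac{\sigBH}{2}(S^2 - \dot S - \tfrac13\RicF)$. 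The theorem as stated does not need that identification, but the paper relies on it later for the divergence identities.
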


We now split $\gGt$ into its symmetric and antisymmetric parts $\gGts$ and $\gGta$, which is where the geometry and the direction separate:
\begin{equation*}
  \gGt = \gGts + \gGta, \quad \text{where} \quad \gGts = (\gGt + \gGt')/2  \quad \text{and} \quad \gGta = (\gGt - \gGt')/2.
\end{equation*}
Define the associated transport operators $\Pts$ and $\Pta$ as
\begin{equation*}
  \Ptb [f] = \frac{1}{\gGts[\rho]}\left( \gGtb[\rho f] - f \gGtb[\rho] \right),
\end{equation*}
for $\bullet \in \{s,a\}$. The following theorem gives their limits as $\eps \to 0$, after rescaling by $\eps^{-2}$ and $\eps^{-1}$, respectively.
\begin{theorem} \label{thm:sym_limit} The operators $\Ptb$ admit limiting operators $\gL^{\bullet}$ as $\eps \to 0$:
  \begin{align}
    \gL^s f &:= \lim_{\eps \to 0} \frac{1}{\eps^2} \Pts[f] = \frac{c_2}{\rho^{2(1-\theta)}} \operatorname{div}_{BH} \left( \rho^{2(1-\theta)} \nabla_\text{BL} f \right), \\
    \gL^a f &:= \lim_{\eps \to 0} \frac{1}{\eps} \Pta[f] = \tilde{m}_1^i \partial_i f = c_1 \, \rvc^i \partial_i f, \label{eq:antisym_operator}
  \end{align}
  where $\tilde{m}_1(x)$ is the normalized first moment of the kernel $K$ over the unit ball $\tball_x$ (\cref{def:kernel_moments}), $\rvc$ is the centroid field \cref{eq:centroid_main}, $c_1$ and $c_2$ are constants depending on the kernel and the dimension of the manifold, $\nabla_\text{BL}$ is the gradient with respect to the Binet--Legendre metric $\gBL$ of $\FF$, and $\operatorname{div}_{BH}$ is the divergence with respect to the Busemann--Hausdorff measure $\mBH$. In particular, for any Finsler metric $\FF$, $\gL^s$ is an elliptic operator, self-adjoint on $L^2(\gM, \rho^{2(1-\theta)} \mBH)$.
\end{theorem}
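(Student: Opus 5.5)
The plan is to apply \cref{thm:moment_expansion} to $\gG_\FF$ and to the backward operator $\gG'_\FF$. Both are transport operators: $\gG'_\FF$ is the one attached to the reversed metric $\FF(x,-v)$, by \cref{prop:adjoint_reverse}. For the reversed metric the unit ball is $-\tball_x$. Its even moments therefore agree with those of $\tball_x$, while its odd moments change sign. Hence in the expansion of $\gG'_\FF$ the term $m_0$ is unchanged, $\gA_1$ becomes $-\gA_1$ up to $\bigO(\eps)$ corrections, and the second-moment part of $\gA_2$ is unchanged. The odd-moment contributions to $\gA_2$ come from the expansion of $\sigBH$ and of the chart around $x$; these are the terms whose symmetry must be tracked. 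Writing $\gA_1[f]=\tilde m_1^i\partial_i f + a_1 f$, the multiplicative part $a_1f$ is the first-order correction coming from the measure. We obtain
\begin{equation*}
  \tfrac12(\gG_\FF+\gG'_\FF)[f]=m_0f+\tfrac{\eps^2}{2}\gA_2^{s}[f]+\bigO(\eps^3),\qquad
  \tfrac12(\gG_\FF-\gG'_\FF)[f]=\eps\gA_1[f]+\bigO(\eps^2).
\end{equation*}
Here $\gA_2^s$ is the even part of $\gA_2$. Its principal symbol is $\tfrac{m_2}{2}\,\gBL^{ij}\partial_{ij}$, because the second moment of the kernel over $\tball_x$, taken in polar form $v=r u$ with $u\in\partial\tball_x$, is proportional to the Binet--Legendre tensor \cref{eq:binet_legendre_main}. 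The same computation identifies the first kernel moment $\tilde m_1$ with $c_1\rvc$ through \cref{eq:centroid_main}.

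Next I would treat the $\theta$-normalization. Applying the expansion to $f\equiv\rho$ gives $\dout,\din = m_0\rho+\bigO(\eps)$, and hence $\qe=m_0\rho+\eps\, q_1+\bigO(\eps^2)$, with $q_1$ explicit. Since $\qe$ is the average of $\din$ and $\dout$, the odd contributions cancel in it. I would then check that $q_1$ is in fact $\bigO(\eps)$, i.e.\ that $\qe=m_0\rho+\bigO(\eps^2)$ with an even correction. Now $\gGt[g]=\qe^{-\theta}\gG_\FF[\qe^{-\theta}g]$ and $\gGt{}'[g]=\qe^{-\theta}\gG'_\FF[\qe^{-\theta}g]$. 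So $\Ptb[f]$ equals $\gGtb[\rho f]-f\gGtb[\rho]$ divided by $\gGts[\rho]$, and this becomes $\gG^\bullet$ applied to $h=\qe^{-\theta}\rho$ and $hf$. The key point is that the combination $\gG^\bullet[hf]-f\gG^\bullet[h]$ kills every zeroth-order (multiplicative) part of the expansion, including $m_0$ and $a_1$.

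For the antisymmetric part, $\gG^a[hf]-f\gG^a[h]=\eps\tilde m_1^i h\,\partial_if+\bigO(\eps^2)$ and $\gGts[\rho]=m_0 h\qe^{-\theta}+\bigO(\eps)$. After rescaling by $\eps^{-1}$ the factors of $h$ cancel, and with the normalization of $\tilde m_1$ in \cref{def:kernel_moments} this gives \cref{eq:antisym_operator}.

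For the symmetric part, $\gA_2^s[hf]-f\gA_2^s[h]=m_2\gBL^{ij}\partial_i h\,\partial_jf+h\,\gA_2^{s,\mathrm{op}}[f]$, where $\gA_2^{s,\mathrm{op}}$ denotes the differential part. Dividing by $m_0h\rho^{-\theta}$ and using $h\propto\rho^{1-\theta}$ yields
\begin{equation*}
  c_2\bigl(\gBL^{ij}\partial_{ij}f+b^j\partial_jf+2(1-\theta)\gBL^{ij}\partial_i\log\rho\,\partial_jf\bigr).
\end{equation*}
It remains to recognize this as $\rho^{-2(1-\theta)}\operatorname{div}_{BH}(\rho^{2(1-\theta)}\nabla_{BL}f)$, i.e.\ to show $b^j=\sigBH^{-1}\partial_i(\sigBH\gBL^{ij})$. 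Ellipticity then follows from positive-definiteness of $\gBL$. Self-adjointness on $L^2(\rho^{2(1-\theta)}\mBH)$ follows from the divergence form and $\partial\gM=\emptyset$.

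The main obstacle is the identification of the drift $b^j$, together with the factor $2(1-\theta)$ rather than the naive $(1-\theta)$. I would handle the drift by duality rather than by direct computation. The operator $\tfrac{1}{\eps^2}\bigl(\gGts[g\cdot]-g\gGts[\cdot]\bigr)$ is symmetric on $L^2(\mBH)$, up to the kernel-weighted correction, because $\gGts$ has a symmetric kernel with respect to $\mBH$. Its limit is therefore a symmetric second-order operator with principal part $\gBL^{ij}\partial_{ij}$ and no zeroth-order term. The only such operator is the weighted divergence form, which forces $b^j$ and also produces the extra factor of $\rho^{1-\theta}$ from $h$. To make this rigorous I would test against smooth $g$ and pass to the limit in $\int g\,\Pts[f]\,\rho^{2(1-\theta)}\diff\mBH$, which is symmetric in $(f,g)$ at each $\eps$.
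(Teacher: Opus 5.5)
\textbf{Verdict: correct, but you identify the drift of $\gL^s$ by a genuinely different route from the paper.}

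Your overall skeleton matches the paper's:
\begin{itemize}
\item expand $\gG_\FF$ and $\gG'_\FF=\gG_\backFF$, and let the odd-order terms cancel in the symmetric part, so that $\qe=m_0\rho+\bigO(\eps^2)$;
\item remove every multiplicative term by the centering;
\item read off the factor $2(1-\theta)$ from the cross term $2m_2^{ij}\partial_i\log h\,\partial_j f$, with $h=\qe^{-\theta}\rho\propto\rho^{1-\theta}$. This uses that $m_0$ does not depend on $x$.
\end{itemize}

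\textbf{Where you differ.} The paper computes the drift. Using Shen's formula $S=\partial_{v^l}G^l-v^l\partial_l\log\sigBH$ and the conservation of $\FF$ along geodesics, an integration by parts in $v$ gives the pointwise identity $\Lambda^k+2s_1^k=-\sigBH^{-1}\partial_j(\sigBH m_2^{jk})$ (\cref{lemma:moment_identities}). The divergence form follows by substitution, and self-adjointness comes out as a consequence. You reverse the logic: self-adjointness is your input, and the drift is deduced from it.

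\textbf{One correction to the symmetry step.} At fixed $\eps$, $\Pts$ is exactly symmetric on $L^2(\dts\rho\,\mBH)$, not on $L^2(\rho^{2(1-\theta)}\mBH)$ and not on $L^2(\mBH)$. Indeed,
$\int g\,\Pts[f]\,\dts\rho\,\diff\mBH=-\frac12\iint W^{(\theta,s)}(x,y)(f(y)-f(x))(g(y)-g(x))\,\diff\mathbb{P}(x)\diff\mathbb{P}(y)$.
Since $\dts\rho=m_0^{1-2\theta}\rho^{2(1-\theta)}+\bigO(\eps^2)$ and $\Pts[f]=\bigO(\eps^2)$ uniformly, you can divide by $\eps^2$ and let $\eps\to0$. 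This still gives symmetry of $\gL^s$ on $L^2(\rho^{2(1-\theta)}\mBH)$, so your conclusion survives.

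\textbf{The uniqueness step is sound.} Set $\omega=\rho^{2(1-\theta)}$ and compare $\gL^s$ with $\frac{c_2}{\omega}\operatorname{div}_{BH}(\omega\nabla_\text{BL} f)$. Both have principal part $c_2\gBL^{ij}$ and both kill constants, so their difference is a symmetric first-order operator $\beta^j\partial_j$. Testing against $g=1$ gives $\operatorname{div}_{BH}(\omega\beta)=0$. Hence $\beta^j\partial_j$ is also skew-symmetric, and therefore $\beta=0$.

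\textbf{What each route buys.}
\begin{itemize}
\item Your route needs no spray or S-curvature calculus beyond the moment expansion itself. You never need the first-order coefficient of $\gA_2$, nor even its invariance under reversal. It also explains structurally, via detailed balance of the symmetrized kernel, why the divergence is taken with respect to $\mBH$ and why the weight is $\rho^{2(1-\theta)}$.
\item The paper's computation gives more. It yields identities on the kernel moments themselves, including $s_0=-\frac12\operatorname{div}_{BH}(m_1)$. These make $\gA_1$ skew-adjoint and $\gA_2$ self-adjoint at the level of the expansion, independently of any limit. The remark after \cref{lemma:moment_identities} notes that your duality argument is the infinitesimal form of $\gG'_\FF=\gG_\FF^*$.
\end{itemize}

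\textbf{Minor slips.}
\begin{itemize}
\item $q_1$ vanishes identically; it is not merely ``$\bigO(\eps)$''.
\item You assert the sign flip $\backs_0=-s_0$ of the multiplicative part of $\gA_1$ without proof. This is harmless. A multiplicative $\bigO(\eps)$ term in the symmetric part would be removed by the centering, and would perturb $\qe$, $\dts$ and the symmetrizing weight only at orders that vanish in the limit.
\end{itemize}
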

This result is proved in \appref{appendix:proofs:prop:sym_limit}, and the constants are given in \cref{props:moment_expression_unitball}; for instance, $c_1 = (m+1)\mu_1/(m \mu_0)$ and $c_2 = \mu_2 / (2m\mu_0)$ with $\mu_n = \int_0^\infty K(r) r^{m+n-1} \diff r$, see \cref{appendix:kernel_moments} for more details. The symmetric limit is thus a weighted Laplacian, whose metric is the Binet--Legendre metric of $\FF$ and whose reference measure is $\rho^{2(1-\theta)} \mBH$. When $\theta = 1$, it does not depend on the sampling density $\rho$, which is a desirable property for the embedding algorithm. In contrast, $\theta = 0$ keeps the whole drift induced by $\rho$, and $\theta = 1/2$ halves it (see also \cref{appendix:proofs:prop:moment_expression_unitball}). The antisymmetric limit is the derivative along the centroid field: the first moment of the unit ball carries the direction, and the second moment the geometry. Note also that the finite-$\eps$ operator $\Pts$ is always diagonalizable with real eigenvalues.

The divergence $\operatorname{div}_{BH}$ is taken with respect to the Busemann--Hausdorff measure, and not with respect to the Riemannian volume of $\gBL$. The two coincide up to a constant factor for \emph{Berwald} metrics, which are those whose connection depends on the position only and not on the direction (\cref{def:berwald_metric}). In that case, the symmetric limit simplifies as follows, proven in \appref{appendix:proofs:cor:berwald_limit}.
\begin{corollary}[Berwald and Riemannian metrics] \label{cor:berwald_limit}
  If $\FF$ is a Berwald metric, then
  \begin{equation} \label{eq:berwald_limit}
    \gL^s f = c_2 \left( \Delta_\text{BL} f + 2 (1-\theta) \langle \nabla \log \rho, \nabla f \rangle_\text{BL} \right),
  \end{equation}
  where $\Delta_\text{BL}$ is the Laplace--Beltrami operator of $\gBL$ and $\langle \nabla u, \nabla f \rangle_\text{BL} = \gBL^{ij} \partial_i u \partial_j f$. If moreover $\FF$ is Riemannian, \ie $\FF(x,v) = \sqrt{v^\top \rmA(x) v}$, then $\gBL = \rmA$, $\gL^a = 0$, and \cref{eq:berwald_limit} is the $\theta$-normalized diffusion generator of \citet{coifman_diffusion_2006}.
\end{corollary}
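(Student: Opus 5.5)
The plan is to start from the formula for $\gL^s$ in \cref{thm:sym_limit} and rewrite $\operatorname{div}_{BH}$ in terms of the Riemannian divergence of $\gBL$. In a chart, $\operatorname{div}_{BH}(Y) = \sigBH^{-1}\partial_i(\sigBH Y^i)$ and $\operatorname{div}_{\text{BL}}(Y) = \sqrt{\det \gBL_{\flat}}^{\,-1}\partial_i(\sqrt{\det \gBL_\flat}\, Y^i)$, where $\gBL_\flat$ is the inverse of the cometric $\gBL^{ij}$. Subtracting gives
\[
\operatorname{div}_{BH}(Y) = \operatorname{div}_{\text{BL}}(Y) + Y^i \partial_i \log\!\Big(\sigBH / \sqrt{\det \gBL_\flat}\Big).
\]
The key claim is therefore that for a Berwald metric the ratio $\sigBH/\sqrt{\det\gBL_\flat}$ is constant on $\gM$. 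Once this is established, the ratio drops out. Taking $Y = \rho^{2(1-\theta)}\nabla_{\text{BL}} f$ and expanding $\operatorname{div}_{\text{BL}}(\rho^{2(1-\theta)}\nabla_{\text{BL}} f) = \rho^{2(1-\theta)}\Delta_{\text{BL}} f + 2(1-\theta)\rho^{2(1-\theta)}\langle\nabla\log\rho,\nabla f\rangle_{\text{BL}}$, then dividing by $\rho^{2(1-\theta)}$, yields \cref{eq:berwald_limit}.

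To show the ratio is constant, I would use the defining property of Berwald metrics (\cref{def:berwald_metric}). Their Chern connection reduces to a linear, torsion-free connection on $T\gM$ depending only on $x$, and its parallel transport $P_\gamma: T_x\gM\to T_y\gM$ along any curve preserves $\FF$. Hence $P_\gamma(\tball_x)=\tball_y$.

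Both quantities can be expressed through $\tball_x$. First, $\sigBH(x)^{-1}\propto \leb_x(\tball_x)$ by \cref{eq:busemann_hausdorff_main}. Second, $\gBL$ in \cref{eq:binet_legendre_main} is the normalized second moment of $\tball_x$. Because $P_\gamma$ is linear, the moment formula is equivariant: the second moment of $\tball_y = P\tball_x$ equals $P\,\gBL(x)P^\top$, with the Lebesgue normalization cancelling, so $\det\gBL(y)=\det(P)^2\det\gBL(x)$. Meanwhile $\leb_y(\tball_y)=|\det P|\,\leb_x(\tball_x)$. With $\det\gBL_\flat = (\det \gBL^{ij})^{-1}$, we get
\[
\sqrt{\det\gBL_\flat}\propto |\det P|^{-1},\qquad \sigBH\propto|\det P|^{-1}.
\]
So the ratio is the same at $x$ and $y$. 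Since $\gM$ is connected (A2), it is constant, which matches the remark in the paper that the two measures agree up to a constant factor.

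For the Riemannian case, the unit ball is the ellipsoid $\{v^\top\rmA v\le1\}$. A direct computation (or the change of variables $v=\rmA^{-1/2}u$ and $\int_{\eball^m}uu^\top = \vole(\eball^m)\,I/(m+2)$) gives $\gBL^{ij}=(\rmA^{-1})^{ij}$, so the Binet--Legendre metric is $\rmA$. The ball is centrally symmetric, so $\rvc=0$ and, by \cref{eq:antisym_operator}, $\gL^a=0$. Finally, $\mBH$ equals the Riemannian volume and the formula is the $\theta$-normalized generator of Coifman–Lafon.

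The main obstacle is the invariance step, specifically justifying that Berwald parallel transport is a linear isometry mapping $\tball_x$ onto $\tball_y$. I would cite this as a standard property of Berwald spaces (Szabó; see \citet{shen2001lectures}). If it is not available in the appendix, I would derive it from the connection coefficients being independent of $v$, which makes $\FF$ constant along horizontal lifts.
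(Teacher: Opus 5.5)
Your proposal is correct and follows essentially the same route as the paper: rewrite $\operatorname{div}_{BH}$ as the Binet--Legendre divergence plus a drift along $\nabla \log w$, with $w = \sigBH/\sqrt{\det \gBL}$, and then show that $w$ is constant for Berwald metrics because parallel transport is a linear map sending $\tball_x$ onto $\tball_y$ and acting on $\gBL^{-1}$ by congruence. Your determinant bookkeeping is just the coordinate version of the paper's remark that $w = \vole(\eball^m)/\mathrm{vol}_\text{BL}(\tball_x)$ is preserved by this isometry, and your Riemannian case matches the paper's, with $\rvc = 0$ playing the role of its $m_1 = 0$.
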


\section{Discrete operators and convergence guarantees} \label{sec:discrete_operators}
In the discrete setting, where the manifold is only accessed through $N$ points $X_1, \ldots, X_N$ sampled \iid from $\mathbb{P}$, the construction follows the same principle as the continuous one. From the empirical kernel matrix $\rmW_N(i,j) = W(X_i,X_j)$, we define the out- and in-degrees $\rmD_N(i,i) = \sum_j \rmW_N(i,j)$ and $\rmD'_N(i,i) = \sum_j \rmW_N(j,i)$, and their half-sum $\rmQ_N = (\rmD_N + \rmD'_N)/2$. The $\theta$-normalized matrix and its symmetric and antisymmetric parts are then
\begin{equation*}
  \rmW_N^{(\theta)} = \rmQ_N^{-\theta} \rmW_N \rmQ_N^{-\theta}, \qquad
  \rmW_N^{(\theta,\bullet)} = \tfrac{1}{2}\big(\rmW_N^{(\theta)} \pm (\rmW_N^{(\theta)})^\top\big),
\end{equation*}
with the sign $+$ for $\bullet = s$ and $-$ for $\bullet = a$. We write $\rmD_N^{(\theta,\bullet)}$ for the diagonal matrix of row sums of $\rmW_N^{(\theta,\bullet)}$, which discretizes $\gGtb[\rho]$. Finally, for $\rvf = (f(X_1),\dots,f(X_N))^\top \in \R^N$, we define the discrete operators:
\begin{equation}
  \label{eq:discrete_operators}
  \rmP^{(\theta,\bullet)}[f] = \big(\rmD_N^{(\theta,s)}\big)^{-1}\Big(\rmW_N^{(\theta,\bullet)} \rvf - \rmD_N^{(\theta,\bullet)} \rvf \Big), \qquad \text{with} \qquad
  \rmL^s[f] = \frac{\rmP^{(\theta,s)}[f]}{\eps^2}, \quad \rmL^a[f] = \frac{\rmP^{(\theta,a)}[f]}{\eps}.
\end{equation}
Both parts are normalized by the same $\rmD_N^{(\theta,s)}$, each is recentered by its own degree $\rmD_N^{(\theta,\bullet)}$, and each is rescaled by the order at which its limit appears in \cref{thm:sym_limit}.

\begin{theorem}\label{thm:convergence_discrete_op}
  Let $f \in \gC^3(\gM)$ and let the bandwidth $\eps = \eps(N)$ vary with the number of samples in such a way that $\eps(N) \to 0$ and $N \eps(N)^{m+2} / \log N \to \infty$ as $N \to \infty$. Then the discrete operators converge to the continuous ones, uniformly and almost surely: for $\bullet \in \{s,a\}$,
  \begin{equation*}
    \max_{1 \leq i \leq N} \big| \rmL^{\bullet}[f](X_i) - \gL^{\bullet} f(X_i) \big| \xrightarrow[N \to \infty]{a.s.} 0 .
  \end{equation*}
\end{theorem}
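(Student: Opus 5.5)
The plan is the standard bias--variance decomposition of \citet{hein2007graph} and \citet{calder_improved_2022}, adapted to the two-step ($\theta$-normalization, then symmetric/antisymmetric split) construction. We write $\rmL^\bullet[f](X_i) - \gL^\bullet f(X_i)$ as the sum of a \emph{variance} term, comparing the discrete operator to its finite-$\eps$ continuum analogue $\eps^{-k}\Ptb[f](X_i)$ with $k=2$ for $\bullet=s$ and $k=1$ for $\bullet=a$, and a \emph{bias} term $\eps^{-k}\Ptb[f] - \gL^\bullet f$. The bias term is deterministic. It tends to $0$ uniformly in $x$ by \cref{thm:sym_limit}, provided we check that the $\bigO(\eps^3)$ remainder in \cref{thm:moment_expansion} is uniform over the compact manifold $\gM$. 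Uniformity should follow from (A1)--(A2): smoothness of $\FF$ on the slit bundle, compactness, and the sub-exponential tail of $K$, which lets us truncate the integral to a ball of radius $\eps\log(1/\eps)$. Under these conditions the Taylor remainders are controlled by $\|f\|_{\gC^3}$.

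For the variance term, we first normalize the matrices by $1/N$, which changes nothing in $\rmP^{(\theta,\bullet)}$ once the $\theta$-powers are tracked: the factor $N^{-1+2\theta}$ cancels between numerator and $\rmD_N^{(\theta,s)}$. We then treat every quantity as an empirical average. The degrees $\frac1N\sum_j W(X_i,X_j)$ and $\frac1N\sum_j W(X_j,X_i)$ estimate $\dout(X_i)$ and $\din(X_i)$. Their summands are bounded by $\|K\|_\infty\eps^{-m}$ and have variance $\bigO(\eps^{-m})$, so Bernstein's inequality conditional on $X_i$, combined with a union bound over $i\le N$, gives
\begin{equation*}
  \max_i |\hat q(X_i)-\qe(X_i)| = \bigO\big(\sqrt{\log N/(N\eps^m)}\big)
\end{equation*}
with probability $1-\bigO(N^{-2})$. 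Borel--Cantelli then upgrades this to an almost sure statement. Since $\qe\ge\dmin>0$, the maps $q\mapsto q^{-\theta}$ are Lipschitz near $\qe$, so the normalized weights $W^{(\theta)}(X_i,X_j)$ are uniformly close to their population versions. A second Bernstein plus union-bound step then controls the sums $\frac1N\sum_j W^{(\theta,\bullet)}(X_i,X_j)g(X_j)$ for $g\in\{1,f\}$. Here we swap the estimated $\hat q$ for $\qe$ and absorb the error, and we handle the leave-one-out diagonal term $j=i$, which is of size $\bigO(1/(N\eps^m))$.

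The main obstacle is that the numerator $\frac1N\sum_j W^{(\theta,\bullet)}(X_i,X_j)(f(X_j)-f(X_i))$ must be estimated to relative precision $o(\eps^k)$, while it is itself of size $\eps^k$. Bounding the numerator and denominator separately would require $N\eps^{m+4}/\log N\to\infty$ for the symmetric part. We apply Bernstein directly to the centered summand $W^{(\theta,\bullet)}(X_i,X_j)(f(X_j)-f(X_i))$. Because the kernel localizes at scale $\eps$ and $f$ is Lipschitz, this summand is bounded by $C\eps^{1-m}$ and has variance $\bigO(\eps^{2-m})$. Bernstein then gives a deviation $\bigO\big(\sqrt{\eps^{2-m}\log N/N}\big)$, which after dividing by $\eps^2$ is $o(1)$ exactly when $N\eps^{m+2}/\log N\to\infty$, matching the hypothesis of the theorem. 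For the antisymmetric part we divide by $\eps$ instead of $\eps^2$, so the same hypothesis suffices with room to spare.

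The delicate point is the error from replacing $\hat q$ by $\qe$ inside this centered numerator. We expand $\hat q^{-\theta}(X_i)\hat q^{-\theta}(X_j)$ around $\qe$. The first-order correction multiplies a sum that is already centered, of order $\eps$ or $\eps^2$ times the degree error, and also involves the smooth increment $\qe(X_j)-\qe(X_i)=\bigO(\eps)$. We would check that this contributes $\bigO\big(\eps\sqrt{\log N/(N\eps^m)}\big)/\eps^2$, which is $o(1)$ under the same rate. Finally, the denominator $\rmD_N^{(\theta,s)}$ converges uniformly to $\gGts[\rho]\ge c>0$, which follows from the degree lower bound. A quotient argument then combines all the bounds, and Borel--Cantelli over $N$ gives almost sure uniform convergence.
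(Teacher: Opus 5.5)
Your proposal is correct and follows essentially the paper's route. It uses the same bias/fluctuation split, Bernstein plus Borel--Cantelli, and the same key step of concentrating the centered summand $W^\bullet(X_i,X_j)\big(f(X_j)-f(X_i)\big)$ (bound $\eps^{1-m}$, variance $\eps^{2-m}$), giving a fluctuation of order $\eps\zeta_N$ with $\zeta_N=\sqrt{\log N/(N\eps^m)}$. The replacement of $\hat q$ by $\qe$ at cost $\eps\zeta_N$ and the final quotient argument also match, and together they yield exactly the condition $N\eps^{m+2}/\log N\to\infty$.

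The only real difference is how you get uniformity over points. You condition on $X_i$ and take a union bound over the $N$ sample points, treating the $j=i$ term separately. The paper instead uses a $\delta$-net of $\gM$ with $\delta=N^{-5}$ and the Lipschitz bound on $W$. That controls the extended operator at every $x\in\gM$ and handles the dependence at sample points automatically. Also, $\hat q(X_i)^{-\theta}$ cancels exactly between the numerator and $\rmD_N^{(\theta,s)}$, so you do not need to expand it.
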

This result is proved in \appref{appendix:proofs:thm:convergence_discrete_op}, where the convergence rate is given explicitly.

\section{Moment-Randers metric and directed embedding} \label{sec:randers_approximation}

\cref{thm:sym_limit} shows that the operators $\gL^s$ and $\gL^a$ depend on the Finsler metric $\FF$ only through its Binet--Legendre metric, its Busemann--Hausdorff measure, and its centroid field. This motivates the definition of a Randers metric that has the same Binet--Legendre metric and centroid field as $\FF$, and that can be estimated from the graph operators of \cref{sec:discrete_operators}. Results stated in this section are proved in \appref{appendix:randers_approximation}, where additional details are also provided.

\subsection{Moment-Randers approximation} \label{sec:moment_randers}
\begin{definition} (Moment-Randers metric) \label{def:moment_randers}
  Let $\FF$ be a Finsler metric on $\gM$ with centroid field $\rvc$ and Binet--Legendre metric $\gBL$. Set $\rmH^{-1} = \gBL^{-1} - (m+2) \rvc\rvc^\top$, and $\lambda = 1 - \lVert \rvc \rVert_{\rmH}^2$. If $\lVert \rvc(x) \rVert_{\rmH} < 1$ for all $x \in \gM$, the \emph{moment-Randers metric} of $\FF$ is the Randers metric \cref{eq:randers_main} with data $(\rmA_R, b_R)$:
  \begin{equation*}
    \FF_R(x,v) = \sqrt{v^\top \rmA_R(x) v} + b_R(x)^\top v, \qquad
    \rmA_R = \frac{\lambda \rmH + \rmH \rvc \rvc^\top \rmH}{\lambda^2}, \qquad
    b_R = - \frac{\rmH \rvc}{\lambda}.
  \end{equation*}
\end{definition}
Its drift satisfies $\lVert b_R \rVert_{\rmA_R^{-1}} = \lVert \rvc \rVert_{\rmH}$, so the admissibility constraint of \cref{eq:randers_main} is exactly the hypothesis above. $\FF_R$ has the same Binet--Legendre metric and centroid field as $\FF$, but can have a different Busemann--Hausdorff measure\footnote{As detailed in \appref{appendix:randers_approximation}, the symmetric limits of both metrics have the same principal components.}, and if $\FF$ is a Randers metric, then $\FF_R = \FF$ as this is just the navigation map formulation (\cref{appendix:randers_approximation}). This means that $\gL^a$ is identical for both metrics, that $\gL^s$ only differs through the reference measure, and that the moment-Randers metric is a moment-matching Randers approximation of $\FF$. The construction of $\rmH$ relies on $\gBL$ and $\rvc$. We relate the well-definedness of $\rmH$ and of the moment-Randers metric to the norm of $\rvc$ in the Binet--Legendre metric, which, as shown below, is accessible from the data.

\begin{restatable}{proposition}{HvsGBLProp} \label{prop:H_vs_gBL}
  Let $\FF$ be a Finsler metric and let $x \in \gM$ be such that $\lVert \rvc(x) \rVert_{\gBL}^2 < (m+2)^{-1}$. Then the matrix $\rmH(x)$ is well defined and positive definite, and
  \begin{equation*}
    \lVert \rvc(x) \rVert_{\gBL}^2 = \frac{\lVert \rvc(x) \rVert_{\rmH}^2}{1 + (m+2) \lVert \rvc(x) \rVert_{\rmH}^2}, \qquad \text{hence} \qquad \lVert \rvc(x) \rVert_{\rmH} < 1 \iff \lVert \rvc(x) \rVert_{\gBL}^2 < (m+3)^{-1}.
  \end{equation*}
\end{restatable}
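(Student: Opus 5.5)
Throughout, I write $\rmM(x) = (\gBL^{ij}(x))$ for the second-moment matrix of \cref{eq:binet_legendre_main}. It is the inverse of the Binet--Legendre metric tensor, so that $\lVert \rvc \rVert_{\gBL}^2 = \rvc^\top \rmM^{-1} \rvc$. In the definition of $\rmH$, the term $\gBL^{-1}$ is read as this same matrix $\rmM$. I set $k = m+2$ and $s = \rvc^\top \rmM^{-1}\rvc$. The plan is a direct rank-one perturbation argument at a fixed point $x$, using the Sherman--Morrison formula.

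\textbf{Step 1 ($\rmM$ is positive definite).} Since $\FF(x,\cdot)$ is positive and continuous on the unit sphere, $\tball_x$ is a compact convex body with nonempty interior. For any $u \neq 0$,
\begin{equation*}
u^\top \rmM u = \frac{m+2}{\leb_x(\tball_x)} \int_{\tball_x} (u^\top v)^2 \diff \leb_x(v) > 0,
\end{equation*}
so $\rmM \succ 0$.

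\textbf{Step 2 ($\rmH^{-1}$ is positive definite).} We have $\rmH^{-1} = \rmM - k \rvc\rvc^\top$. By the matrix determinant lemma, $\det(\rmH^{-1}) = \det(\rmM)(1 - k s)$. Since $\rmH^{-1}$ is a rank-one downdate of $\rmM \succ 0$, it has at most one eigenvalue that is not positive. It is therefore positive definite exactly when $1 - ks > 0$, which is the hypothesis $\lVert \rvc \rVert_{\gBL}^2 < (m+2)^{-1}$. An alternative is to write $\rmM^{-1/2}\rmH^{-1}\rmM^{-1/2} = I - k w w^\top$ with $w = \rmM^{-1/2}\rvc$, whose eigenvalues are $1$ and $1 - k\lVert w\rVert^2 = 1 - ks$. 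In either case $\rmH^{-1}$ is invertible, so $\rmH$ is well defined, and $\rmH$ is positive definite.

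\textbf{Step 3 (the identity).} Sherman--Morrison gives
\begin{equation*}
\rmH = \rmM^{-1} + \frac{k\, \rmM^{-1}\rvc\rvc^\top \rmM^{-1}}{1 - ks}.
\end{equation*}
Therefore
\begin{equation*}
t := \lVert \rvc \rVert_{\rmH}^2 = \rvc^\top \rmH \rvc = s + \frac{k s^2}{1-ks} = \frac{s}{1-ks}.
\end{equation*}
Solving $t(1-ks) = s$ for $s$ gives $s = t/(1+kt)$, which is the claimed formula.

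\textbf{Step 4 (the equivalence).} The map $t \mapsto t/(1+kt)$ is strictly increasing on $[0,\infty)$. Hence $t < 1$ if and only if $s < 1/(1+k) = (m+3)^{-1}$. This threshold is compatible with the standing hypothesis, since $(m+3)^{-1} < (m+2)^{-1}$. Conversely, for $s \in [0, 1/k)$ the quantity $t = s/(1-ks)$ is finite and nonnegative, so both directions of the equivalence hold.

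The only step that needs care is the bookkeeping of upper versus lower indices: one must check that the ``$\gBL^{-1}$'' in the definition of $\rmH$ is the moment matrix $\rmM$ and not its inverse. Once that is fixed, the rest is routine rank-one linear algebra.
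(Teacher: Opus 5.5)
Your proof is correct and follows essentially the paper's route: a positivity argument for the rank-one downdate $\rmH^{-1} = \gBL^{-1} - (m+2)\rvc\rvc^\top$, then Sherman--Morrison, then monotonicity of $t \mapsto t/(1+(m+2)t)$. The differences are minor. You invert the downdate to get $\lVert\rvc\rVert_{\rmH}^2 = s/(1-(m+2)s)$ with $s = \lVert\rvc\rVert_{\gBL}^2$ and then solve for $s$, whereas the paper inverts the update $\rmS = \frac{1}{m+2}\rmH^{-1} + \rvc\rvc^\top$ and reads off $\lVert\rvc\rVert_{\gBL}^2$ directly. You also certify positivity via the determinant lemma (or whitening) instead of the paper's Cauchy--Schwarz bound.
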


\subsection{Estimation through an embedding} \label{sec:randers_estimation}
The quantities $\rvc$ and $\gBL$ cannot be observed directly. We therefore estimate them, through an embedding, from the operators $\gL^s$ and $\gL^a$, which are themselves approximated by the graph operators $\rmL^s$ and $\rmL^a$ (\cref{thm:convergence_discrete_op}). Consider an embedding $\Psi(x) = (\psi_1(x), \ldots, \psi_\ell(x))$ and its Jacobian at $x$, $J(x)\in\R^{\ell\times m}$, assumed of full column rank $m$. By \cref{eq:antisym_operator}, the push-forward of $\rvc$ by $\Psi$ is recovered from the antisymmetric operator as
\begin{align*}
  V(x) :=\gL^a[\Psi](x) = c_1 J(x) \rvc(x).
\end{align*}
\begin{proposition} \label{prop:carre_du_champ}
  The carré du champ operator $\Gamma$ associated with $\gL^s$,
  \begin{align*}
    \Gamma(x)^{kl} = \frac{1}{2} \left(
      \gL^s[\psi_k \psi_l](x) - \psi_k(x) \gL^s[\psi_l](x) - \psi_l(x) \gL^s[\psi_k](x)\right),
  \end{align*}
  recovers the Binet--Legendre metric up to the embedding: $\Gamma(x) = c_2 J(x) \gBL(x)^{-1} J(x)^\top$, where $c_2$ is the constant of \cref{thm:sym_limit}.
\end{proposition}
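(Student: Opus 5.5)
The plan is to use the explicit form of $\gL^s$ from \cref{thm:sym_limit}. Writing $w = \rho^{2(1-\theta)}$ and $\sigBH$ for the Busemann--Hausdorff density, we have in a chart
\begin{equation*}
  \gL^s f = \frac{c_2}{w\,\sigBH}\,\partial_i\big( w\,\sigBH\, \gBL^{ij}\, \partial_j f\big),
\end{equation*}
since $\nabla_\text{BL} f$ has components $\gBL^{ij}\partial_j f$ and $\operatorname{div}_{BH} X = \sigBH^{-1}\partial_i(\sigBH X^i)$. The first step is to confirm this sign and index convention. In \cref{eq:binet_legendre_main} the upper-index $\gBL^{ij}$ is the second moment of the unit ball, i.e.\ the cometric. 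The statement writes $J\gBL^{-1}J^\top$, so it reads $\gBL$ as the lower-index metric. I will state explicitly that the cometric $(\gBL^{ij})$ is what appears, and that it equals the inverse of the metric tensor.

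The key computation is the Leibniz rule for $\gL^s$. Expanding $\gL^s[uv]$ with $\partial_j(uv) = u\,\partial_j v + v\,\partial_j u$, all terms in which the derivative falls on the weight or on a single function reassemble into $u\,\gL^s v + v\,\gL^s u$. The only leftover is the cross term
\begin{equation*}
  \frac{c_2}{w\sigBH}\big( w\sigBH\,\gBL^{ij}(\partial_i u\,\partial_j v + \partial_i v\,\partial_j u)\big) = 2c_2\,\gBL^{ij}\partial_i u\,\partial_j v .
\end{equation*}
The symmetry of $\gBL^{ij}$ gives the factor $2$. Hence $\tfrac12(\gL^s[uv]-u\gL^s v - v\gL^s u) = c_2\,\gBL^{ij}\partial_i u\,\partial_j v$. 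The weight $w\sigBH$ and the first-order drift terms cancel entirely. This is the standard fact that the carré du champ of a diffusion operator sees only its principal symbol.

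Applying this with $u=\psi_k$ and $v=\psi_l$ gives $\Gamma^{kl} = c_2\,\partial_i\psi_k\,\gBL^{ij}\,\partial_j\psi_l$. Since $J_{ki} = \partial_i\psi_k$, this is exactly $\Gamma = c_2\,J\,(\gBL^{ij})\,J^\top = c_2 J\gBL^{-1}J^\top$.

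The argument is pointwise, and the products $\psi_k\psi_l$ are smooth whenever $\Psi$ is, so no regularity issue arises. The full-rank assumption on $J$ is not needed for the identity itself, only for later inversion. The main obstacle is therefore only the bookkeeping of the metric versus the cometric convention and the factor $c_2$. Everything else reduces to the product rule.
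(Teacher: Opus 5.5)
Your proof is correct and is essentially the paper's argument: the paper writes $\gL^s = c_2\gBL^{ij}\partial_i\partial_j + \xi^i\partial_i$ (principal part plus first-order terms, no zeroth-order term) and lets the Leibniz rule cancel everything but the cross term, whereas you run the same product-rule cancellation directly inside the divergence form $\frac{c_2}{w\sigBH}\partial_i\big(w\sigBH\gBL^{ij}\partial_j f\big)$, which is equivalent. Your reading of $\gBL^{ij}$ as the entries of $\gBL^{-1}$, and your remark that the full-rank assumption on $J$ is not needed for the identity, also agree with the paper.
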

Combining these results, the moment-Randers metric of $\FF$ is well defined at $x$ if and only if $\lVert \rvc(x) \rVert_{\gBL}^2 = \frac{c_2}{c_1^2} \lVert V(x) \rVert_{\Gamma^+}^2 < (m+3)^{-1}$, where $\Gamma^+$ is the pseudo-inverse of $\Gamma$. This criterion only involves $\gL^s$ and $\gL^a$, and allows us to define the embedded moment-Randers metric $\hat{\FF}_R$.
\begin{definition} \label{def:embedded_moment_randers}
  The embedded moment-Randers metric $\hat{\FF}_R$ is defined as the moment-Randers metric (\cref{def:moment_randers}) associated with the estimated Binet--Legendre metric $\hat{g}_{BL}$ and the estimated centroid of the unit tangent ball $\hat{\rvc}$. Let
  \begin{align*}
    & \hat{g}_{BL}(x)^{-1} = \Gamma(x),  &&\hat{\rvc}(x) = \frac{\sqrt{c_2}}{c_1} V(x), \\
    & \hat{\rmH}^{-1}(x) = \Gamma(x) - (m+2) \frac{c_2}{c_1^2} V(x) V(x)^\top, &&\hat{\lambda}(x) = 1 - \lVert \hat{\rvc}(x) \rVert_{\hat{\rmH}}^2,
  \end{align*}
  so that the embedded moment-Randers metric is given by
  \begin{equation*}
    \hat{\FF}_R(x,v) = \frac{1}{\hat{\lambda}(x)} \left(
      \sqrt{\hat{\lambda}(x) \lVert v \rVert_{\hat{\rmH}}^2 + \langle \hat{\rvc}(x), v \rangle_{\hat{\rmH}}^2} - \langle \hat{\rvc}(x), v \rangle_{\hat{\rmH}}
    \right).
  \end{equation*}
\end{definition}

\begin{restatable}{proposition}{EmpiricalRandersProp} \label{prop:embedded_randers}
  If $\lVert \hat{\rvc}(x) \rVert_{\hat{g}_{BL}} < 1 / \sqrt{m+3}$ and $J$ is of full column rank, then $\hat{\FF}_R$ is related to the original moment-Randers metric $\FF_R$ by
  \begin{align*}
    \hat{\FF}_R(x, J(x)v) = \frac{1}{\sqrt{c_2}} \FF_R(x, v), \quad \forall v \in T_x\gM.
  \end{align*}
  In particular, $\hat{\FF}_R$ is a valid Randers metric on $J(T_x\gM)$.
\end{restatable}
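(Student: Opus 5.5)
The plan is to reduce everything to linear algebra at a fixed point $x$, working with pushforwards through $J=J(x)$. By \cref{prop:carre_du_champ} we have $\Gamma = c_2 J \gBL^{-1} J^\top$. By the formula for $V$, we have $\hat{\rvc} = \frac{\sqrt{c_2}}{c_1} c_1 J\rvc = \sqrt{c_2}\, J\rvc$. Hence
\[
\hat{\rmH}^{-1} = \Gamma - (m+2)\tfrac{c_2}{c_1^2}VV^\top = c_2 J\big(\gBL^{-1} - (m+2)\rvc\rvc^\top\big)J^\top = c_2 J \rmH^{-1} J^\top .
\]
So every estimated quantity is the pushforward of the intrinsic one, scaled by a power of $c_2$. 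Since $\hat{\rmH}^{-1}$ is singular on $\R^\ell$ when $\ell>m$, I will interpret $\lVert\cdot\rVert_{\hat{\rmH}}$ as the norm induced by the pseudo-inverse, restricted to the range $J(T_x\gM)$. This is the same convention that already underlies $\lVert V\rVert_{\Gamma^+}$.

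The key lemma is the pseudo-inverse identity on the range of $J$. Let $M$ be positive definite and $J$ of full column rank. Then for $u=Jv$ and $w=Jv'$,
\[
u^\top (JMJ^\top)^+ w = v^\top M^{-1} v' .
\]
To prove it, write $J=QR$ with $Q$ having orthonormal columns and $R$ invertible. Then $(JMJ^\top)^+ = Q R^{-\top}M^{-1}R^{-1}Q^\top$, and substituting gives the claim. Applying the lemma with $M=c_2\rmH^{-1}$ yields $\langle Jv, Jv'\rangle_{\hat{\rmH}} = c_2^{-1}\langle v,v'\rangle_{\rmH}$. Applying it with $M=c_2\gBL^{-1}$ yields the analogous identity for $\hat g_{BL}$. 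Two consequences follow:
\begin{itemize}
\item $\lVert\hat{\rvc}\rVert_{\hat g_{BL}}^2 = c_2\lVert J\rvc\rVert^2_{(c_2J\gBL^{-1}J^\top)^+} = \lVert\rvc\rVert_{\gBL}^2$. The hypothesis is therefore exactly the hypothesis of \cref{prop:H_vs_gBL}, which then gives that $\rmH$ is positive definite and $\lVert\rvc\rVert_{\rmH}<1$.
\item By the same computation, $\lVert\hat{\rvc}\rVert_{\hat{\rmH}} = \lVert\rvc\rVert_{\rmH}$, so $\hat\lambda=\lambda\in(0,1]$.
\end{itemize}

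Next I substitute into \cref{def:embedded_moment_randers} with argument $Jv$:
\[
\hat{\FF}_R(x,Jv) = \frac1\lambda\Big(\sqrt{\lambda c_2^{-1}\lVert v\rVert_{\rmH}^2 + c_2^{-1}\langle\rvc,v\rangle_{\rmH}^2} - c_2^{-1/2}\langle \rvc, v\rangle_{\rmH}\Big).
\]
Here I use $\langle\hat{\rvc},Jv\rangle_{\hat{\rmH}} = \sqrt{c_2}\,c_2^{-1}\langle\rvc,v\rangle_{\rmH} = c_2^{-1/2}\langle\rvc,v\rangle_{\rmH}$. Factoring out $c_2^{-1/2}$, it remains to check that
\[
\tfrac1\lambda\Big(\sqrt{\lambda\, v^\top\rmH v + (v^\top\rmH\rvc)^2} - v^\top\rmH\rvc\Big) = \FF_R(x,v).
\]
This is immediate from \cref{def:moment_randers}. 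Indeed, $v^\top\rmA_R v = \big(\lambda v^\top\rmH v + (v^\top \rmH\rvc)^2\big)/\lambda^2$, whose square root is the first term, and $b_R^\top v = -v^\top\rmH\rvc/\lambda$ is the second.

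Validity as a Randers metric on $J(T_x\gM)$ then follows by transport. The map $J$ is a linear isomorphism onto its range, and $c_2^{-1/2}\FF_R$ is a Randers metric because $\lVert b_R\rVert_{\rmA_R^{-1}}=\lVert\rvc\rVert_{\rmH}<1$, as stated after \cref{def:moment_randers}. The pullback of $\hat\FF_R$ is $c_2^{-1/2}\FF_R$, so $\hat\FF_R$ on the range is of the form \cref{eq:randers_main}, with pushed-forward data satisfying the same admissibility constraint. The main obstacle is the careful handling of the degenerate $\ell\times\ell$ matrices. I must make sure that the pseudo-inverse convention is used consistently in defining $\lVert\cdot\rVert_{\hat{\rmH}}$ and $\hat\lambda$, and that $\hat{\rvc}$ lies in the range of $J$ so the lemma applies; it does, because $\hat{\rvc}=\sqrt{c_2}J\rvc$. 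Everything else is routine substitution.
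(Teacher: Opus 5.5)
Your proposal is correct and follows essentially the same route as the paper. You derive the pushforward identities of \cref{lem:empirical_pushforward}, with the pseudo-inverse identity proved by a QR factorization of $J$ instead of the Penrose-conditions argument of \cref{cor:centroid_norm_estimate}. You then get $\lVert\hat{\rvc}\rVert_{\hat g_{BL}}=\lVert\rvc\rVert_{\gBL}$ and $\hat\lambda=\lambda$, and substitute into the navigation form \cref{eq:moment_randers_navigation}. One small wording fix: the hypothesis is not exactly that of \cref{prop:H_vs_gBL}; it implies it because $(m+3)^{-1}<(m+2)^{-1}$, and the equivalence in that proposition then gives $\lVert\rvc\rVert_{\rmH}<1$.
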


\subsection{Embedding algorithm} \label{sec:embedding_algo}
\begin{algorithm}[t]\small
  \caption{Finsler directed embedding}
  \label{alg:embedding}
  \begin{algorithmic}[1]
    \REQUIRE Directed distances $\dstF(X_i,X_j)$ between samples, or a weighted directed graph with adjacency matrix $\rmW_N$; kernel $K$, bandwidth $\eps$, normalization $\theta \in [0,1]$, intrinsic dimension $m$, embedding dimension $\ell$.
    \ENSURE Embedded samples $Y_i \in \R^\ell$, drifts $\rmV_i$, and Randers metrics $\hat{\FF}_R(Y_i,\cdot)$.
		\vspace{0.5em}
		\hrule
		\vspace{0.5em}
    \STATE $\rmW_N(i,j) \gets \eps^{-m} K\big(\dstF(X_i,X_j)/\eps\big)$ \hfill \COMMENT{skipped if $\rmW_N$ is given}
    \STATE $\rmQ_N \gets (\rmD_N + \rmD'_N)/2$, \quad $\rmW_N^{(\theta)} \gets \rmQ_N^{-\theta} \rmW_N \rmQ_N^{-\theta}$
    \STATE $\rmW_N^{(\theta,s)}, \rmW_N^{(\theta,a)} \gets$ symmetric and antisymmetric parts of $\rmW_N^{(\theta)}$
    \STATE Build $\rmL^s$ and $\rmL^a$ from \cref{eq:discrete_operators}
    \STATE $\rmY \gets (\Psi(X_1), \ldots, \Psi(X_N))^\top \in \R^{N \times \ell}$ \hfill \COMMENT{\eg leading eigenvectors of $\rmL^s$}
    \FOR{$i = 1, \ldots, N$}
      \STATE $\rmV_i \gets (\rmL^a \rmY)_i$, \quad $\Gamma_i^{kl} \gets \frac{1}{2}\big(\rmL^s[Y^k Y^l]_i - Y_i^k \rmL^s[Y^l]_i - Y_i^l \rmL^s[Y^k]_i\big)$
      \STATE $\hat{\rvc}_i \gets \frac{\sqrt{c_2}}{c_1} \rmV_i$, \quad $\hat{\rmH}_i^{-1} \gets \Gamma_i - (m+2) \hat{\rvc}_i \hat{\rvc}_i^\top$
      \IF{$\lVert \hat{\rvc}_i \rVert_{\Gamma_i^+}^2 < (m+3)^{-1}$}
        \STATE Set $\hat{\FF}_R(Y_i, \cdot)$ as in \cref{def:embedded_moment_randers}
      \ENDIF
    \ENDFOR
  \end{algorithmic}
\end{algorithm}
\cref{alg:embedding} summarizes the procedure. The embedding $\Psi$ can be any classical embedding, \eg Isomap \citep{tenenbaum2000global} or the leading eigenvectors of $\rmL^s$, in which case $J$ has full column rank for $\ell$ large enough. $\rmV_i$ and $\Gamma_i$ are the discrete counterparts of $V$ and $\Gamma$, obtained by replacing $\gL^\bullet$ with $\rmL^\bullet$; here $Y^k \in \R^N$ is the $k$-th column of $\rmY$. When the data is given as a weighted directed graph (\cref{sec:experiments}), $\rmW_N$ is its adjacency matrix.

\section{Experiments} \label{sec:experiments}

\textbf{Swiss roll dataset.}
We first consider a Randers metric on the Swiss roll, a 2D manifold embedded in $\R^3$, whose Riemannian part is supported on the manifold and whose drift $b$ is tangent to it (\appref{appendix:experiments:swiss_roll}). We sample $N=2000$ points from the manifold and construct a $k$-NN graph with $k=10$ neighbors. With Isomap as $\Psi$, \cref{fig:swiss_roll} shows that we recover both the geometry of the manifold and the directionality of the edges. As $\rvc$ and $\gBL$ are known in closed form for a Randers metric (\cref{prop:centroid_randers_metric}), \cref{tab:swiss_roll} evaluates the estimated strength of the asymmetry for varying $\lVert b \rVert_{\rmA^{-1}}$ and $N$, on a radius graph and with $J_i$ estimated by finite differences (\appref{appendix:experiments:swiss_roll}). The cosine between the recovered and true drifts is $1.00$ in every configuration, and the relative error on $\lVert \hat{\rvc} \rVert_{\hat{g}_{BL}}^2$ stays between $0.07$ and $0.17$. It decreases with $N$ at every drift strength, although slowly. The bandwidth $\eps$ is the median Finsler distance from a sample to its $10$-th nearest neighbor, and the graph construction is detailed in \appref{appendix:experiments:swiss_roll}. The rejections at large $\lVert b \rVert_{\rmA^{-1}}$ are expected. All samples are admissible for the true metric, since $\FF_R = \FF$ for a Randers metric, but $\lVert \rvc \rVert_{\gBL}^2$ approaches the threshold $(m+3)^{-1}$ as the drift grows, so that small estimation errors suffice to cross it.

\begin{table}[t]
\small
\centering
\caption{\textbf{Swiss roll}. Relative error on $\lVert \hat{\rvc} \rVert_{\hat{g}_{BL}}^2$ and cosine between $V_i$ and $c_1 J_i \rvc(X_i)$ (medians over samples), and admissible fraction; medians over $3$ seeds. Admissible means $\lVert \hat{\rvc} \rVert_{\hat{g}_{BL}}^2 < (m+3)^{-1}$.}
\label{tab:swiss_roll}
\begin{tabular}{cccc ccc ccc}
    \toprule
     & \multicolumn{3}{c}{Rel. error $\|\hat{\mathbf{c}}\|^2_{\hat{g}_{BL}}$} & \multicolumn{3}{c}{$\cos(V_i, J_i\mathbf{c})$} & \multicolumn{3}{c}{Admissible} \\
    \cmidrule(lr){2-4} \cmidrule(lr){5-7} \cmidrule(lr){8-10}
    $\|b\|_{\mathbf{A}^{-1}}$ & $N = 1000$ & 2000 & 4000 & $N = 1000$ & 2000 & 4000 & $N = 1000$ & 2000 & 4000 \\
    \midrule
    0.1 & 0.17 & 0.14 & 0.13 & 1.00 & 1.00 & 1.00 & 1.00 & 1.00 & 1.00 \\
    0.3 & 0.15 & 0.13 & 0.12 & 1.00 & 1.00 & 1.00 & 1.00 & 1.00 & 1.00 \\
    0.5 & 0.13 & 0.11 & 0.10 & 1.00 & 1.00 & 1.00 & 0.98 & 0.99 & 0.99 \\
    0.7 & 0.11 & 0.10 & 0.09 & 1.00 & 1.00 & 1.00 & 0.89 & 0.91 & 0.91 \\
    0.9 & 0.09 & 0.08 & 0.07 & 1.00 & 1.00 & 1.00 & 0.46 & 0.47 & 0.50 \\
    \bottomrule
\end{tabular}
\end{table}

\textbf{Approximating Matsumoto metrics.}
The method relies on a Randers approximation of the Finsler metric. We investigate whether the directional embeddings recovered from a non-Randers metric still capture the essential geometric properties of the data. We consider the Matsumoto metric \citep{matsumoto_theory_1992} induced by movement on a height map $h$ over $[-3.2,3.2]\times[-2.4,2.4]$ \citep{matsumoto1989slope, chansri2018geometry}, $\FF = \alpha^2/(\alpha - \beta)$, where $\alpha$ is the Riemannian metric of the graph of $h$ and $\beta = \diff h$ (\appref{appendix:experiments:matsumoto}). \cref{fig:matsumoto} shows the height map and the drift recovered with Isomap embeddings. The drift is aligned with the gradient of $h$, the defining feature of the Matsumoto metric, which suggests that our method captures the main directional structure of non-Randers metrics.

\textbf{Embedding directed graphs.} Finally, we apply our method to general directed graphs, without an underlying manifold. We generate directed graphs from a Directed Stochastic Block Model (DiSBM, \appref{appendix:experiments:disbm}). \cref{fig:disbm_graph} shows the leading eigenvectors of $\rmL^s$ for $N=1000$ nodes and $15$ communities: nodes cluster by community, and the drift follows the dominant, here backward, cyclic flow.

\begin{figure*}[t]
    \centering

    \begin{subfigure}[t]{0.31\textwidth}
        \centering
        \includegraphics[width=0.8\textwidth]{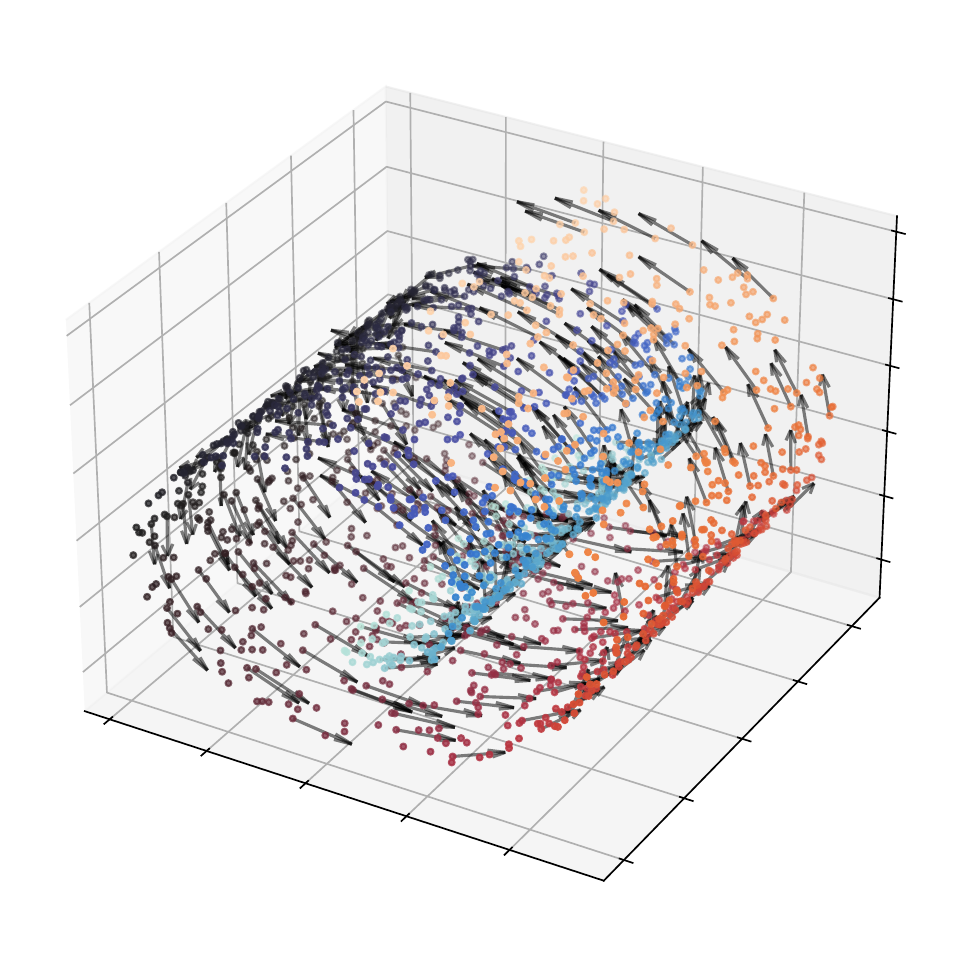}
    \end{subfigure}
    \quad
    \begin{subfigure}[t]{0.31\textwidth}
        \centering
        \includegraphics[width=0.8\textwidth]{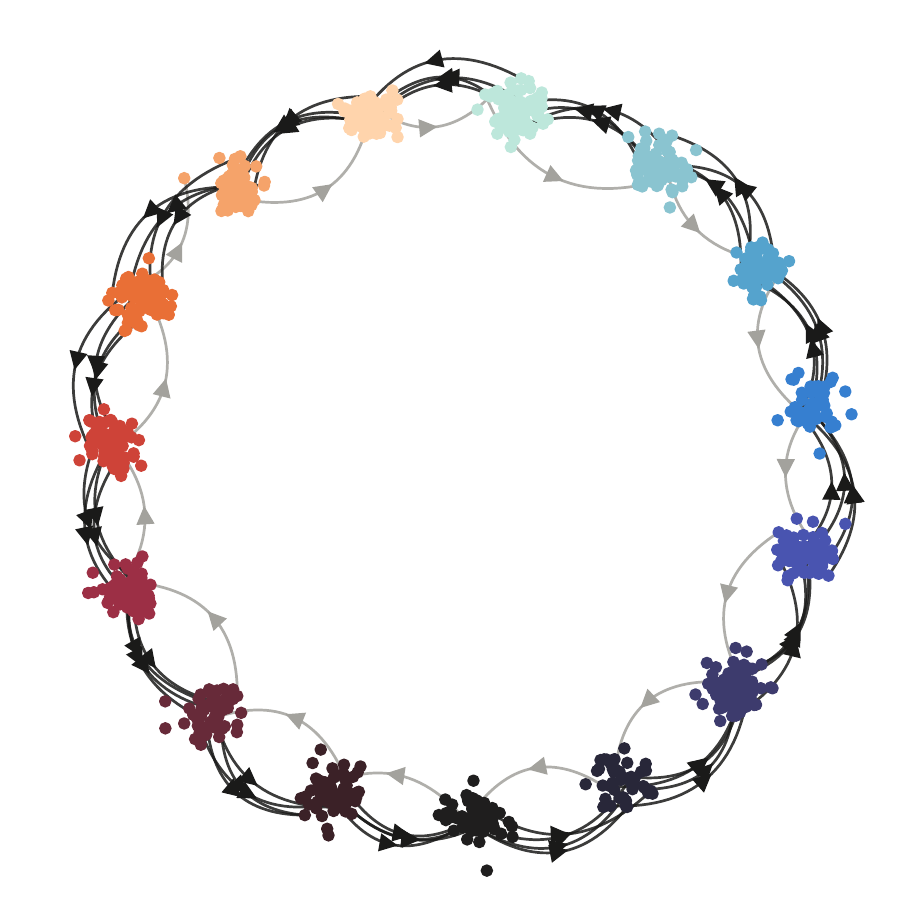}
    \end{subfigure}
    \quad
    \begin{subfigure}[t]{0.31\textwidth}
        \centering
        \includegraphics[width=0.8\textwidth]{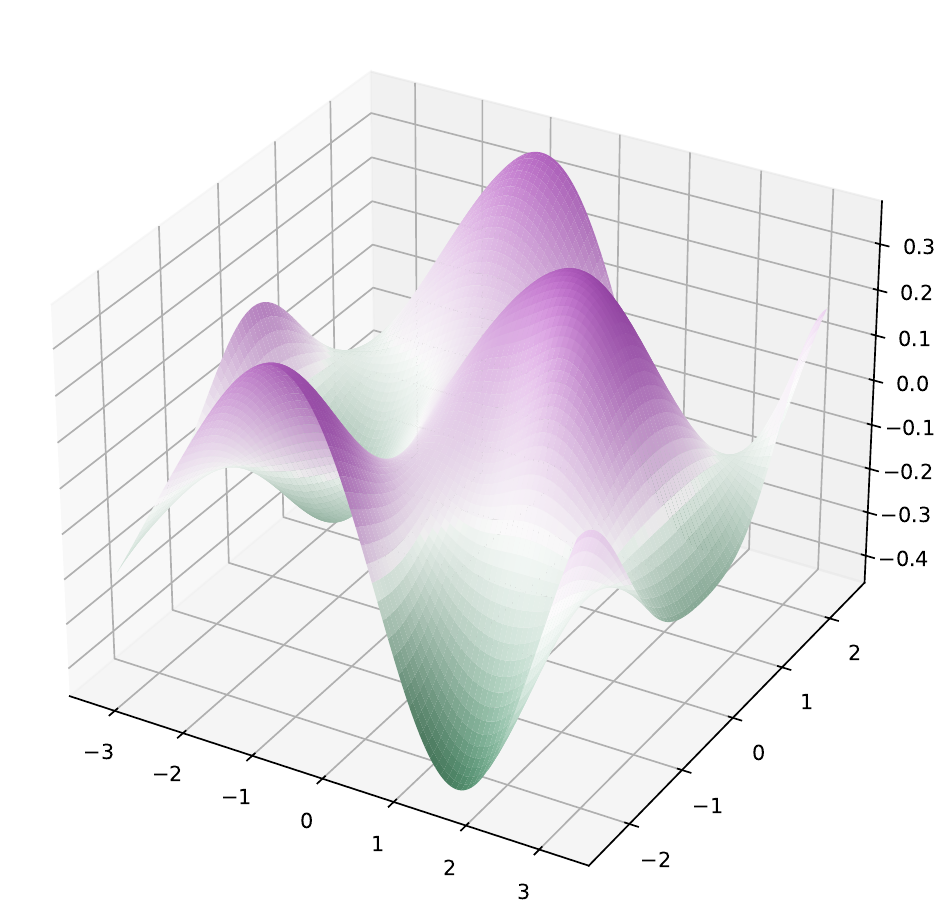}

    \end{subfigure}

    \vspace{0.25em}

    \begin{subfigure}[t]{0.31\textwidth}
        \centering
        \includegraphics[width=0.8\textwidth]{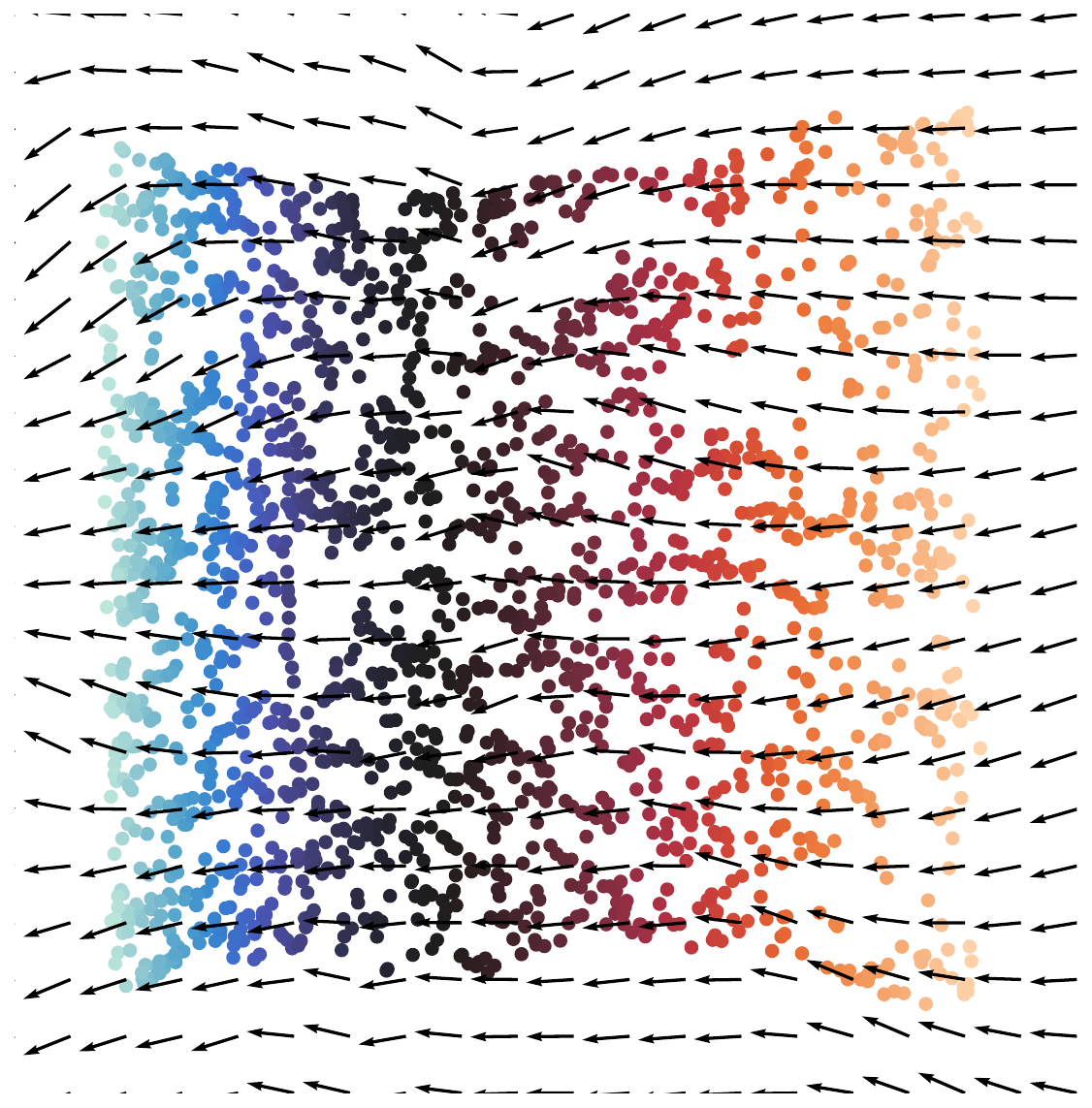}
        \caption{3D Swiss Roll Randers manifold.}
        \label{fig:swiss_roll}
    \end{subfigure}
    \quad
    \begin{subfigure}[t]{0.31\textwidth}
      \centering
      \includegraphics[width=0.8\textwidth]{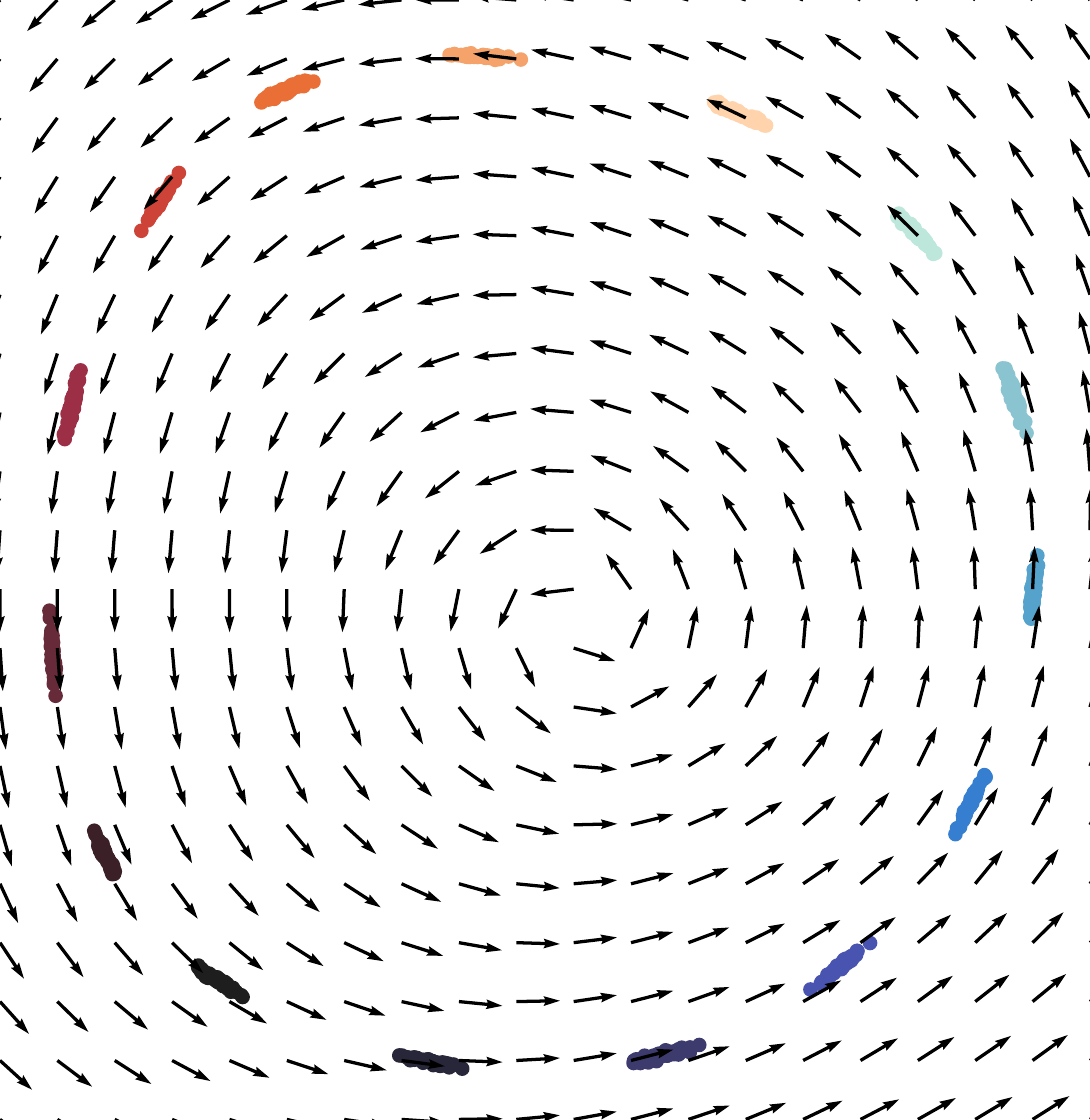}
        \caption{DiSBM graph.}
        \label{fig:disbm_graph}
    \end{subfigure}
    \quad
    \begin{subfigure}[t]{0.31\textwidth}
        \centering
        \includegraphics[width=0.8\textwidth]{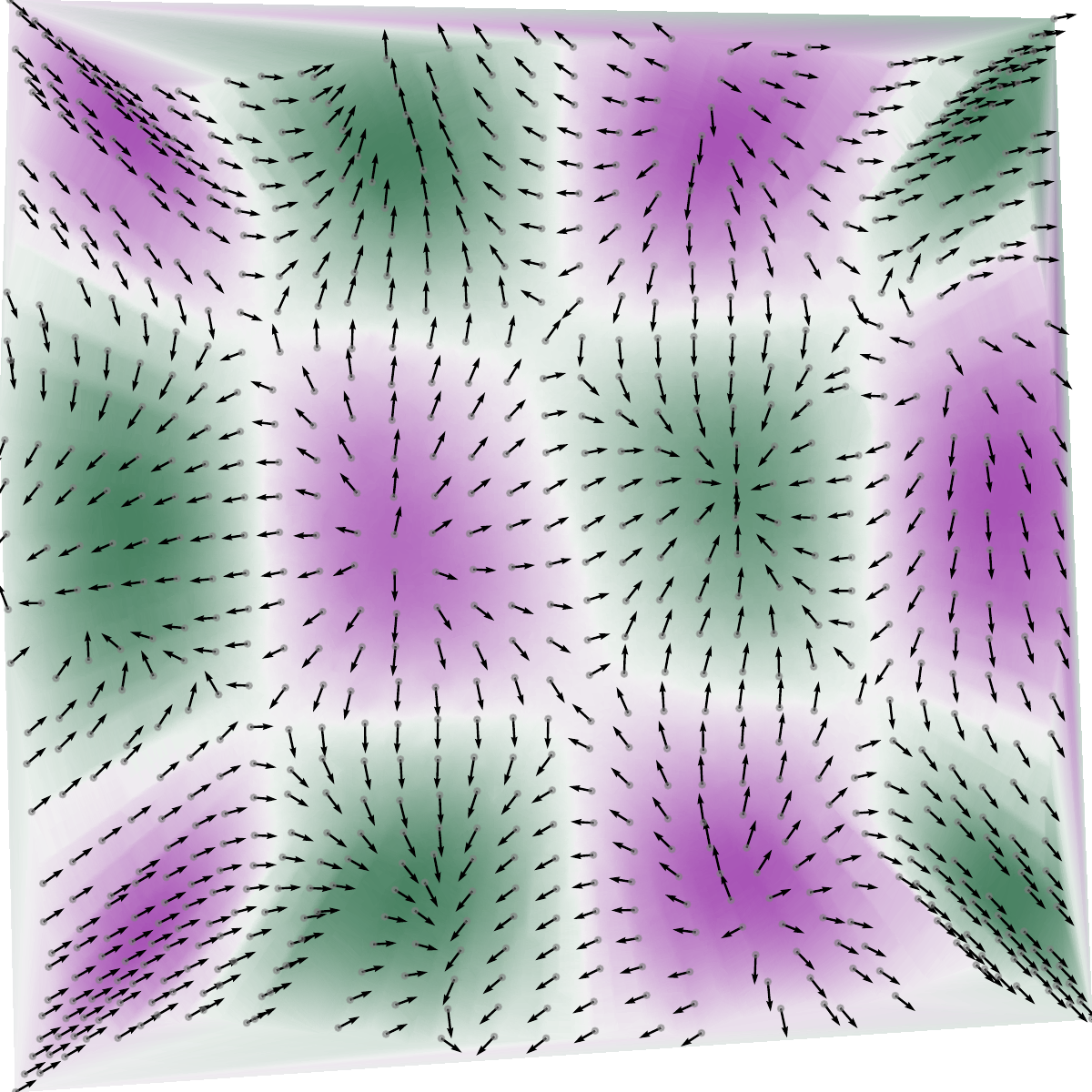}
        \caption{Matsumoto manifold.}
        \label{fig:matsumoto}
    \end{subfigure}

    \caption{\textbf{Directional embeddings.}
    Recovered directional embeddings on the three datasets. The top row shows the input data, and the bottom row shows the corresponding recovered embeddings and drifts.}
    \label{fig:directional_embeddings}
\end{figure*}

\section{Conclusion and Perspectives} \label{sec:conclusion}
We showed that Finsler geometry plays for directed graphs the role that Riemannian geometry plays for undirected ones. The symmetric and antisymmetric parts of the graph operators recover, respectively, the Binet--Legendre metric and the centroid field of the unit ball, which yields a moment-Randers approximation of the underlying metric and a directed embedding of the data. Experiments on Randers and Matsumoto manifolds and on directed stochastic block models show that both the geometry and the directionality are recovered. We hope that this work will open up new avenues for applied Finsler geometry in graph analysis. Future work could include the study of the spectral properties of the operators, the extension of the analysis to `standard' directed graph operators such as the flow-based Laplacian \citep{chung_laplacians_2005}, and the development of Finsler-based methods for directed graph analysis.

\subsection*{AI use statement}

In this work, we used generative AI tools to assist in the writing of proofs, to design or provide feedback on research methodology or experiments, and to help with translation.
We have not used generative AI tools to generate synthetic datasets, help develop theoretical models or conceptual frameworks, formulate mathematical claims, provide critical ingredients for proving mathematical claims, propose or refine hypotheses, implement methods, clean and reformat datasets, support qualitative and thematic data analysis, or interpret results. Additionally, we used generative AI tools to search for information and identify relevant literature. We have reviewed all AI-assisted work. We take responsibility for the final content of this work, including text, claims or artifacts produced with the aid of generative AI.

\subsection*{Ethics statement}

This work proposes a novel method for representation learning. As such, there are many potential societal consequences of our work, none of which we feel must be specifically highlighted here.

\subsection*{Reproducibility statement}

The code used to produce the results in this paper is available publicly at \url{https://github.com/Gwendal-Debaussart/Finsler-Embedding} and is licensed under the MIT License. All datasets used in this work are synthetic and can be regenerated with the released code. The theoretical results in this work are accompanied by complete proofs in the appendix, and all assumptions are clearly stated. A primer is provided in the appendix to introduce the reader to the relevant concepts of Finsler geometry, and to help them check the definitions, notations and proofs used in the main text.

\subsubsection*{Author Contributions}
\emph{This subsection follows the CRediT taxonomy.}

\emph{Gwendal Debaussart-Joniec:} Conceptualization, Formal Analysis, Investigation, Methodology, Validation, Visualization, Writing -- original draft, Writing -- review \& editing.

\emph{Théau Blanchard:} Conceptualization, Formal Analysis, Investigation, Methodology, Software, Validation, Visualization, Writing -- original draft, Writing -- review \& editing.

\emph{Argyris Kalogeratos:} Supervision, Writing -- review \& editing.

\subsubsection*{Acknowledgments}
The authors would like to deeply thank Jean-Marie Mirebeau for the discussions and feedback on the theoretical aspects of this work. Gwendal Debaussart-Joniec, and Argyris Kalogeratos acknowledge the support of the Industrial Analytics and Machine Learning (IdAML) Chair hosted at ENS Paris-Saclay, Université Paris-Saclay. Théau Blanchard is funded by GEHealthcare.

\bibliographystyle{iclr2027_conference}
\bibliography{iclr2027_conference}

\newpage
\appendix

\etocsetlocaltop.toc{part}          %
\etocdoesnotcheckemptiness
\etocsettocstyle{\section*{Appendix Contents}}{}
\etocsettocdepth{subsection}
\localtableofcontents               %

\section{Table of notations} \label{appendix:notations}

We provide in \cref{tab:notation_table,tab:notation_table_discrete} a summary of the main notations used throughout the paper.

\begin{table}[h!]
  \centering\small
  \caption{\textbf{Notation table for the continuous setting.} This table summarizes the notations attached to the manifold, the Finsler metric and the integral operators, with references to the equations where they are defined.}
  \label{tab:notation_table}
  \footnotesize
  \begin{tabular}{l|l}
  \toprule
    \textbf{Notation} & \textbf{Description} \\
  \Xhline{1\arrayrulewidth}
  $\gM$ & Compact $m$-dimensional manifold embedded in $\R^D$ \\
  $\lVert w \rVert_M$, $\langle w, w'\rangle_M$ & $\lVert w \rVert_M^2 = w^\top M w$ and $\langle w, w'\rangle_M = w^\top M w'$, for $M \succ 0$ \\
  $\gF$, $\dstF(\cdot, \cdot)$ & Finsler metric and its associated distance \\
  $\backFF$ & Reversed Finsler metric \\
  $\alpha, \beta, \rmA, b$ & Randers metric parameters \\
  $\lambda_\FF$, $i_\gM$ & Reversibility constant $\sup_{x,v} \FF(x,v)/\FF(x,-v)$ and injectivity radius of $\gM$ (A2) \\
  $\mBH$, $\sigBH$ & Busemann--Hausdorff measure on $\gM$ and its density in a chart (\cref{eq:busemann_hausdorff_main}) \\
  $\leb_x$ & Lebesgue measure on $T_x\gM$, in the coordinate basis \\
  $\dist$, $\vol$ & Riemannian distance and volume induced on $\gM$ by the ambient Euclidean norm \\
  $\tball_x$, $\tball_x(r)$ & Finsler unit ball $\{\FF(x,\cdot) \leq 1\}$ and open ball $\{\FF(x,\cdot) < r\}$ in the tangent space $T_x\gM$ \\
  $\fball(x,r)$ & Forward metric ball $\{y \in \gM : \dstF(x,y) < r\}$ on the manifold $\gM$ \\
  $\eball^m$ & Euclidean unit ball of $\R^m$ \\
  $\gI_x$ & Indicatrix $\{v \in T_x\gM : \FF(x,v) = 1\}$, boundary of $\tball_x$ \\
  $\rvc(x)$, $\rmS(x)$ & Centroid and raw second moment of the unit ball $\tball_x$ (\cref{eq:centroid,eq:raw_second_moment}) \\
  $\gBL$ & Binet--Legendre metric of $\FF$, with $\gBL^{ij} = (m+2)\rmS^{ij}$ (\cref{eq:binet_legendre}) \\
  $\mathbb{P}$, $\rho$ & Sampling distribution on $\gM$ and its density $\rho = \diff\mathbb{P}/\diff\mBH$ \\
  $K$ & Radial kernel profile on $\R_+$ \\
  $W$, $W^{(\theta)}$ & Directed kernel on $\gM \times \gM$ and its $\theta$-normalization \\
  $\dout$, $\din$, $\qe$ & Out- and in-degrees of $W$, and their half-sum $\qe = (\dout + \din)/2$ \\
  $\gG_\FF$, $\gG'_\FF$ & Integral operators built from $W$; $\gG'_\FF = \gG_\backFF = \gG_\FF^*$ (\cref{prop:adjoint_reverse}) \\
  $\gGt$, $\gGts$, $\gGta$ & $\theta$-normalized integral operator, and its symmetric and antisymmetric parts \\
  $\Pts$, $\Pta$ & Continuous transport operators built from $\gGts$ and $\gGta$ (\cref{sec:main_results}) \\
  $\gL^s$, $\gL^a$ & Their limits as $\eps \to 0$: a weighted Laplacian and a drift along $\rvc$ (\cref{thm:sym_limit}) \\
  $\expF$ & Finsler exponential map of $\FF$ \\
  $G^k$, $N^i_j$ & Geodesic spray coefficients and nonlinear connection of $\FF$ (\cref{def:nonlinear_connection}) \\
  $\ChrisF^k_{ij}$, $\RicF$ & Formal Christoffel symbols and Ricci curvature of $\FF$ \\
  $\tau$, $S$ & Distortion and S-curvature of $\FF$ (\cref{def:s_curvature}) \\
  \bottomrule
  \end{tabular}
\end{table}

\begin{table}[h!]
  \centering
  \caption{\textbf{Notation table: discrete setting, hyper-parameters and constants.} This table summarizes the notations attached to the sampled data, the matrices built from it, the hyper-parameters, and the moments and constants of the kernel.}
  \label{tab:notation_table_discrete}
  \footnotesize
  \begin{tabular}{l|l}
  \toprule
    \textbf{Notation} & \textbf{Description} \\
  \Xhline{1\arrayrulewidth}
  $X_1, \ldots, X_N$ & A sequence of $N$ \iid samples from $\mathbb{P}$ on $\gM$ \\
  $\rmW_N$, $\rmQ_N$ & Empirical kernel matrix and half-sum of degrees $\rmQ_N = (\rmD_N + \rmD'_N)/2$ (\cref{sec:discrete_operators}) \\
  $\rmD_N$, $\rmD'_N$ & Diagonal matrices of the out- and in-degrees of $\rmW_N$ \\
  $\rmW_N^{(\theta)}$, $\rmW_N^{(\theta,\bullet)}$ & $\theta$-normalized matrix and its symmetric ($\bullet = s$) and antisymmetric ($\bullet = a$) parts \\
  $\rmD_N^{(\theta,\bullet)}$ & Diagonal matrix of the row sums of $\rmW_N^{(\theta,\bullet)}$, discretizing $\gGtb[\rho]$ \\
  $\rmP^{(\theta,\bullet)}$ & Discrete transport operators, discretizing $\Ptb$ (\cref{eq:discrete_operators}) \\
  $\rmL^s$, $\rmL^a$ & Rescaled discrete operators $\rmP^{(\theta,s)}/\eps^2$ and $\rmP^{(\theta,a)}/\eps$ (\cref{eq:discrete_operators}) \\
  $\hat{q}_N$ & Empirical counterpart of the normalization $\qe$ \\
  $\Psi$, $Y_i$ & Spectral embedding $\Psi = (\psi_1, \ldots, \psi_\ell)$ of $\rmL^s$, and embedded points $Y_i = \Psi(X_i)$ \\
  $\rmH$, $\FF_R$ & Moment-Randers matrix of $\FF$ and its moment-Randers metric (\cref{def:moment_randers}) \\
  $\Gamma$, $V$ & Estimated Binet--Legendre metric and drift, computed from $\gL^s, \gL^a$ (\cref{prop:carre_du_champ}) \\
  $\hat{\rvc}$, $\hat{\rmH}$, $\hat{\FF}_R$ & Empirical centroid, matrix, and moment-Randers metric estimated from the embedding \\
  \Xhline{1\arrayrulewidth}
  $\eps$ & Kernel bandwidth parameter \\
  $\theta$ & Normalization exponent of the kernel, $\theta \in [0,1]$ \\
  $D$ & Ambient dimension of the manifold $\gM$ \\
  $m$ & Intrinsic dimension of the manifold $\gM$ \\
  $\ell$ & Dimension of the embedding, \ie number of eigenvectors of $\rmL^s$ retained \\
  \Xhline{1\arrayrulewidth}
  $m_n$, $\Lambda^k$, $s_n$, $\Upsilon$ & Moments of the kernel $K$ (\cref{def:kernel_moments}) \\
  $\tilde{m}_n$ & Normalized moments of the kernel, $\tilde{m}_n = m_n / m_0$ (\cref{props:moment_expression_unitball}) \\
  $\mu_n$ & Raw moments of the radial profile, $\mu_n = \int_0^{+\infty} K(r) r^{m+n-1} \diff r$ \\
  $C_K$, $\nu_K$ & Sub-exponential constants of the kernel: $K(r) \leq C_K e^{-\nu_K r}$ (A1) \\
  $c_1$ & Constant $(m+1)\mu_1 / (m\mu_0)$ appearing in $\gL^a$ (\cref{thm:sym_limit}) \\
  $c_2$ & Constant $\mu_2 / (2m\mu_0)$ appearing in $\gL^s$ and $\Gamma$ (\cref{thm:sym_limit,prop:carre_du_champ}) \\
  \bottomrule
  \end{tabular}
\end{table}

\section{A Primer of Riemannian and Finsler-Randers Geometry}
\label{appendix:finsler_geometry}

In this section we provide a brief introduction to Riemannian geometry and its extension to Finsler geometry. For a more complete and formal introduction to those subjects, we refer the reader to \citet{Sommer2020introduction} for Riemannian geometry, and to \citet{ohtaComparisonFinslerGeometry2021} for a comprehensive treatment of Finsler geometry and Randers metrics. Throughout this paper, we use the Einstein summation convention, where an index repeated once as a superscript and once as a subscript is summed over, \eg $\partial_i f\, v^i = \sum_{i=1}^m \partial_i f\, v^i$.

\paragraph{Riemannian geometry.} Riemannian manifolds can be described in multiple ways. Generally, they are defined as smooth manifolds (i.e.\ topological spaces that are locally homeomorphic to Euclidean spaces, with smoothly compatible charts) endowed with a smoothly varying inner product on the tangent space at each point, called the Riemannian metric. This inner product allows one to define distances and angles on the manifold, see \cref{fig:riemannian} for an illustration. One may think of a Riemannian manifold as a curved space in which the local cost of crossing a region is governed by the metric. More formally, given a smooth manifold $\gM$, for any point $x \in \gM$ we can define the \emph{tangent space} $T_x \gM$ as the vector space of all tangent vectors at $x$. A Riemannian manifold is then a pair $(\gM, \rmA)$, where $\rmA$ is a smoothly varying inner product on the tangent space at each point,
\begin{equation*}
    \langle \cdot, \cdot \rangle_{\rmA(x)} : T_x\gM \times T_x\gM \to \R, \qquad
    (v,w) \mapsto  v^\top \rmA(x) w.
\end{equation*}
In coordinates, components of tangent vectors carry upper indices, $v = (v^i)$, while components of covectors, such as the partial derivatives $\partial_i f$ of a function, carry lower indices. We write $\rmA_{ij}$ for the entries of $\rmA$, so that $\langle v, w \rangle_{\rmA} = \rmA_{ij} v^i w^j$, and $\rmA^{ij} = (\rmA^{-1})_{ij}$ for those of its inverse, so that $\rmA^{ik} \rmA_{kj} = \delta^i_j$. The same convention applies to any metric. This inner product induces a natural local norm, $\lVert v \rVert_{\rmA(x)} = \sqrt{v^\top \rmA(x) v}$ for $v \in T_x\gM$, which in turn defines the length and energy of a curve $\gamma: [0,1] \to \gM$:
\begin{equation*}
    L(\gamma) = \int_0^1 \lVert \dot{\gamma}(t) \rVert_{\rmA(\gamma(t))} \diff t
    \quad \text{ and } \quad
    E(\gamma) = \int_0^1 \lVert \dot{\gamma}(t) \rVert_{\rmA(\gamma(t))}^2 \diff t.
\end{equation*}
This lets us define the geodesic distance between two points as the length of the shortest path between them, which is also (up to reparametrization) the minimizer of the energy between these two points:
\begin{equation}
  \label{eq:geodesic_distance_def}
    \forall(a,b)\in\gM^2, \text{dst}(a,b) =
    \min_{\substack{\gamma:[0,1] \to \gM \\ \gamma(0)=a, \gamma(1)=b}} L(\gamma)
\end{equation}

\paragraph{Finsler geometry.} Finsler manifolds are a natural extension of Riemannian manifolds that relax two of their defining constraints: the local cost need not be a quadratic form, and it need not be symmetric. This lets Finsler geometry model \emph{local asymmetry} (the cost of crossing a region can depend on the direction of travel) while keeping a formulation close to that of Riemannian geometry, as illustrated in \cref{fig:finsler}.

\begin{definition}[Finsler metric] \label{def:finsler_metric}
  A Finsler metric on a smooth manifold $\gM$ is a function $\FF: T\gM \to [0, \infty)$ such that for each $x \in \gM$ and $v \in T_x\gM$, the following conditions hold:
  \begin{itemize}[itemsep=0pt, leftmargin=*, topsep=0pt]
    \item \textbf{Positive homogeneity}: $\FF(x, t v) = t \FF(x, v)$ for all $t > 0$ and $v \in T_x\gM$.
    \item \textbf{Strong convexity}: The Hessian of $\FF^2$ with respect to the velocity variable is positive definite for all nonzero $v \in T_x\gM$.
    \item \textbf{Triangle inequality}: $\FF(x, v + w) \leq \FF(x, v) + \FF(x, w)$ for all $v, w \in T_x\gM$.
  \end{itemize}
\end{definition}

Since homogeneity is only required for positive scalars $t$, these metrics induce local distances that depend on direction: we generally do not have $\FF(x, v) = \FF(x, -v)$.
\begin{remark}
  In some textbooks, one may also encounter the notation $\FF(v)$, dropping the explicit dependence on the base point $x$. Throughout this work, we will keep the explicit dependence on $x$ on the Finsler-based quantities.
\end{remark}

\begin{definition}[Fundamental tensor] \label{def:fundamental_tensor}
  The fundamental tensor $g$ of a Finsler metric $\FF$ is given by
  \begin{equation*}
    g_{ij}(x,v) = \frac{1}{2}\frac{\partial^2 \FF^2}{\partial v^i\partial v^j}(x,v),
  \end{equation*}
  where $(x,v)\in T\gM$ and $v^i$ are the components of $v$ in a local coordinate system. The fundamental tensor is a smoothly varying, positive definite bilinear form on the tangent space at each point, and it generalizes the notion of a Riemannian metric to Finsler geometry.
\end{definition}

A general Finsler tangent indicatrix $\{v \in T_x\gM : \FF(x,v) = 1\}$ need not be an ellipsoid at all, as Finsler metrics are only required to be strongly convex, not quadratic. Nonetheless, the fundamental tensor always yields the best local Riemannian (quadratic) approximation of $\FF$ at $(x,v)$.

Similarly to Riemannian geometry, shortest paths under a Finsler metric are also minimizers of the energy of the curve. Using Euler's homogeneous function theorem, one can show that the Finsler local energy satisfies
\begin{equation*}
    \FF^2(x,v)=v^\top g(x,v)v = \lVert v\rVert_{g(x,v)}^2.
\end{equation*}
Consequently, we can define the length and energy of curves, and thus geodesic distances, on Finsler manifolds by replacing $\rmA(\gamma_t)$ with $g(\gamma_t,\dot{\gamma}_t)$ in the definitions above. This lets us use the same class of algorithms as in the Riemannian case to solve the shortest-path problem, at marginal additional cost \citep{rygaard2025georce,arvanitidis2019fast}.

\paragraph{Balls.} Several kinds of balls appear in this work. For $x \in \gM$ and $r > 0$,
\begin{align*}
  &\tball_x = \{v \in T_x\gM : \FF(x,v) \leq 1\}, \ \, \tball_x(r) = \{v \in T_x\gM : \FF(x,v) < r\}, \ \,
  \fball(x,r) = \{y \in \gM : \dstF(x,y) < r\}.
\end{align*}
The first two live in the tangent space $T_x\gM$: $\tball_x$ is the (closed) unit tangent ball, whose boundary is the indicatrix $\gI_x = \{v \in T_x\gM : \FF(x,v) = 1\}$, and $\tball_x(r)$ is the open tangent ball of radius $r$, which coincides with $r\tball_x$ up to its boundary. The third one is the forward metric ball and lives on the manifold $\gM$. Since $\dstF$ is not symmetric, it differs in general from the backward ball. Below the injectivity radius, $\expF_x$ maps $\tball_x(r)$ onto $\fball(x,r)$. Finally, $\eball^m$ denotes the Euclidean unit ball of $\R^m$, and $\vole(\eball^m)$ its volume. Unlike $\leb_x(\tball_x)$, it does not depend on $x$.

\paragraph{Randers metrics.} A Randers metric \citep{randers1941asymmetrical} is a special type of Finsler metric that can be expressed as the sum of a Riemannian metric and a one-form.

\begin{definition}[Randers metric] \label{def:randers_metric}
  A Randers metric on a smooth manifold $\gM$ is a Finsler metric $\FF$ that can be expressed as
\begin{equation*}
    \FF(x, v) = \alpha(x, v) + \beta(x,v),
\end{equation*}
  where $\alpha$ is a Riemannian metric and $\beta$ is a $1$-form, such that $\alpha(x,v)^2 = v^\top \rmA(x) v$, with $\rmA:\gM \to \mathcal{S}^D_{++}$ a smooth map from the manifold to the space of positive definite matrices, and $\beta(x,v) = b(x)^\top v$, with $b: \gM \to \R^D$ a smooth vector field on the manifold verifying $\lVert b(x) \rVert_{\rmA^{-1}} <1$.
\end{definition}
In coordinates, $\beta(x,v) = b_i(x) v^i$, so that $b$ carries a lower index as a covector. Its index is raised with $\rmA$, $b^i = \rmA^{ij} b_j$, and $\lVert b \rVert_{\rmA^{-1}}^2 = \rmA^{ij} b_i b_j = b_i b^i$.

Randers metrics are a particular case of Finsler metrics that are generally easier to work with due to their specific structure: for such metrics we have an explicit form of $g$ and $g^{-1}$, enabling efficient implementation. Moreover, the Finsler indicatrix of a Randers metric is always an ellipsoid, simply not centered at the origin, obtained by translating the $\alpha$-indicatrix by a vector determined by $b(x)$.

\begin{example}
Perhaps the simplest example of a Randers metric is the Euclidean space $\R^2$ with the standard Euclidean metric $\alpha(x,v) = \lVert v \rVert_2$ and a constant vector field $b(x) = (b_1, b_2)$ with $\lVert b \rVert_2 < 1$. In this case, the Randers metric is given by
\begin{equation*}
    \FF(x, v) = \lVert v \rVert_2 + b^\top v = \sqrt{v_1^2 + v_2^2} + b_1 v_1 + b_2 v_2.
\end{equation*}
The indicatrix of this Randers metric is an ellipse that is shifted in the direction of the vector $b$. One may verify that, in this context, geodesics are straight lines, but the distance between two points depends on the direction of travel.
\end{example}

\paragraph{Berwald metrics.} Among Finsler metrics, an important intermediate class, strictly more general than Riemannian metrics but with enough rigidity to retain some of their linear structure, is given by Berwald metrics.

\begin{definition}[Berwald metric] \label{def:berwald_metric}
  A Finsler metric $\FF$ is a \emph{Berwald metric} if its formal Christoffel symbols $\ChrisF^k_{ij}(x,v)$ \citep{ohtaComparisonFinslerGeometry2021} do not depend on the direction $v$, \ie $\ChrisF^k_{ij}(x,v) = \Gamma^k_{ij}(x)$ for some coefficients $\Gamma^k_{ij}$ on $\gM$.
\end{definition}
Every Riemannian metric is trivially Berwald, since its fundamental tensor $g_{ij}(x,v) = \rmA_{ij}(x)$ does not depend on $v$ to begin with. In the Randers case, $\FF = \alpha + \beta$ is Berwald if and only if $\beta$ is parallel with respect to the Levi--Civita connection of $\alpha$ \citep{ohtaComparisonFinslerGeometry2021}, a condition on $b$ that constant vector fields such as the one in the previous example trivially satisfy, but that a direction field varying freely across $\gM$, as we consider in our experiments, generally will not.

\paragraph{Binet--Legendre metric.} The fundamental tensor $g(x,v)$ of a Finsler metric depends on the direction $v$. A direction-free Riemannian metric is obtained by averaging over the unit tangent ball $\tball_x$ \citep{matveev2012binet}.
\begin{definition}[Binet--Legendre metric] \label{def:binet_legendre}
  The Binet--Legendre metric $\gBL$ of $\FF$ is the Riemannian metric whose inverse is
  \begin{equation} \label{eq:binet_legendre}
    \gBL^{ij}(x) = \frac{m+2}{\leb_x(\tball_x)} \int_{\tball_x} v^i v^j \diff\leb_x(v) = (m+2) \rmS^{ij}(x).
  \end{equation}
\end{definition}
The factor $m+2$ ensures that $\gBL = \rmA$ when $\FF(x,v) = \sqrt{v^\top \rmA(x) v}$ is Riemannian (\appref{appendix:proofs:cor:berwald_limit}). Here $\gBL^{ij}$ denotes the entries of the inverse of $\gBL$, following the convention above. By contrast, the second moment $\rmS^{ij}$ of $\tball_x$, like the kernel moment $m_2^{ij}$ (\cref{def:kernel_moments}), is defined directly with upper indices, and is not the inverse of a matrix with lower indices. The metric $\gBL$ is what the symmetric limit $\gL^s$ sees (\cref{thm:sym_limit}), while the antisymmetric limit $\gL^a$ sees the centroid $\rvc(x)$ of $\tball_x$.

\paragraph{Busemann--Hausdorff measure.} Finsler manifolds carry no canonical volume; the two usual choices are the Busemann--Hausdorff and Holmes--Thompson measures, which coincide in the Riemannian case \citep{shen2001lectures}. We use the former.
\begin{definition}[Busemann--Hausdorff measure] \label{def:busemann_haussdorf_measure}
  The Busemann--Hausdorff measure $\mBH$ is the measure on $\gM$ which reads, in a chart, $\diff\mBH(x) = \sigBH(x) \diff x$, with density
  \begin{equation} \label{eq:busemann_haussdorf_measure}
    \sigBH(x) = \frac{\vole(\eball^m)}{\leb_x(\tball_x)},
  \end{equation}
  where $\leb_x(\tball_x)$ is the Lebesgue volume of $\tball_x$ in the coordinate basis $(\partial_i|_x)$. Under a change of chart, $\sigBH$ and $\diff x$ are multiplied by inverse factors, so that $\mBH$ does not depend on the chart.
\end{definition}
Under $\mBH$, every unit tangent ball has the volume of the Euclidean unit ball. This is what makes the zeroth kernel moment $m_0$ constant (\cref{props:moment_expression_unitball}), so that the degree normalization behaves as in the Riemannian case.

\paragraph{Distortion and S-curvature.} The Jacobian of $\expF_x$ with respect to $\mBH$ (\cref{lemma:jacobian_expansion}) involves the mismatch between $\mBH$ and the direction-dependent volume $\sqrt{\det g(x,v)} \diff x$, which is measured by the following two quantities.
\begin{definition}[Distortion and S-curvature] \label{def:distortion} \label{def:s_curvature}
  For $z \in \gM$ and $u \in T_z\gM \setminus\{0\}$, the distortion of $\FF$ with respect to $\mBH$ is
  \begin{equation*}
    \tau(z, u) = \log \left( \frac{\sqrt{\det g_{ij}(z, u)}}{\sigBH(z)} \right),
  \end{equation*}
  and the S-curvature is its derivative along the geodesic $\gamma$ with $\gamma(0) = z$ and $\dot\gamma(0) = u$, \ie $S(z,u) = \frac{\diff}{\diff t} \tau(\gamma(t), \dot\gamma(t)) \big|_{t=0}$.
\end{definition}
The S-curvature vanishes for Berwald metrics, and in particular for Riemannian ones \citep[Proposition 4.3]{shen2016introduction}, but not for general Randers metrics. It produces the terms $s_0$ and $s_1$ of \cref{def:kernel_moments}, which are removed by the centering of $\Ptb$.

\paragraph{Consequences of assumption (A2).} The proofs use three uniform properties of $(\gM, \FF)$, which follow from the compactness of $\gM$ and the smoothness and positivity of $\FF$ on $T\gM \setminus \{0\}$.
\begin{enumerate}[label=(\roman*), itemsep=0pt, topsep=0pt, leftmargin=*]
  \item \emph{Bounded asymmetry.} The reversibility constant $\lambda_\FF = \sup_{x, v \neq 0} \FF(x,v)/\FF(x,-v)$ is finite, as the supremum of a continuous function on the compact unit sphere bundle. Hence $\lambda_\FF^{-1} \dstF(y,x) \leq \dstF(x,y) \leq \lambda_\FF \dstF(y,x)$.
  \item \emph{Completeness.} $\gM$ is forward and backward complete, so that any two points are joined by a minimizing geodesic and $\expF$ is defined on all of $T\gM$ \citep{bao2012introduction}.
  \item \emph{Injectivity radius.} The injectivity radius is bounded below by some $i_\gM > 0$, so that $\expF_x$ is a diffeomorphism from $\tball_x(i_\gM)$ onto its image for every $x$.
\end{enumerate}

\begin{figure}
  \centering
  \begin{subfigure}[t]{0.31\textwidth}
    \centering
    \includestandalone[width=\textwidth]{euclidean_distance}
    \caption{\textbf{Euclidean distance} through ambient space does not take into account the geometry of the manifold.}
    \label{fig:euclidean}
  \end{subfigure}
  \hfill
  \begin{subfigure}[t]{0.31\textwidth}
    \centering
    \includestandalone[width=\textwidth]{riemannian_distance}
    \caption{\textbf{Riemannian distance} follows a symmetric geodesic on the manifold. It captures the intrinsic geometry of the manifold without any directional information.}
    \label{fig:riemannian}
  \end{subfigure}
  \hfill
  \begin{subfigure}[t]{0.31\textwidth}
    \centering
    \includestandalone[width=\textwidth]{finsler_distance}
    \caption{\textbf{Finsler distance} is direction-dependent, with different paths for forward and reverse directions. It captures both the intrinsic geometry of the manifold and the directional information.}
    \label{fig:finsler}
  \end{subfigure}
  \caption{\textbf{Comparison of distance metrics on a manifold.} Each panel shows the same manifold $\gM$ with two points $x$ and $y$, but different notions of distance between them. (a) Euclidean distance is the straight line in ambient space. (b) Riemannian distance follows a symmetric geodesic on the manifold. (c) Finsler distance is direction-dependent, with different paths for forward (orange) and reverse (purple) directions.}
  \label{fig:finsler_metric}
\end{figure}

\section{Additional experimental results} \label{appendix:additional_experiments}

We provide in this section additional details on the experiments presented in \cref{sec:experiments}. We show how we construct the Randers metric and the drift for the Swiss Roll dataset, the Matsumoto metric approximation, and the Directed Stochastic Block Model (DiSBM).

\paragraph{On the illustrations.} The presented method recovers the drift field as a pointwise quantity at each embedded sample. In the figures, we display the drift field as a vector field on a grid of points in the embedding space, obtained by interpolating the drift vectors at the embedded samples with a neural network.

\subsection{Swiss Roll dataset} \label{appendix:experiments:swiss_roll}
We consider the Swiss Roll dataset $(X_i)_i$ generated using \textit{sklearn} with $N=2000$ points and a noise of $0.1$. We consider this dataset as sampled from a Randers manifold. The Riemannian metric is given by:
\begin{equation*}
  \rmA^{-1}_{ii}(z) = \sum_k \exp\left(- \frac{\lVert z - z_k \rVert^2}{h}\right) + \xi, \quad \text{and} \quad \rmA^{-1}_{ij} = 0 \text{ for } i \neq j,
\end{equation*}
where $\xi= 10^{-3}$ and $h$ is chosen as the average distance to the 5th nearest neighbor.

We can then construct the drift as tangential to the dataset. Let $\varphi_i= \operatorname{atan2}(X_{i,z}, X_{i,x})$ be the angle of the point $X_i$ in the $xz$-plane. The initial tangent vector is given by $b(X_i) = (-\sin(\varphi_i), 0, \cos(\varphi_i))$, which is then normalized so that $\lVert b(X_i) \rVert_{\rmA^{-1}} = 0.9$.

The directed graph is computed as a KNN graph with $k=10$ neighbors, and the edge weights, \ie geodesic distances between nodes, are computed using the \textit{GEORCE} algorithm \citep{rygaard2025georce}. The $\eps$ in the Gaussian kernel is chosen as the standard deviation of the geodesic distances between the nodes. The points are then embedded in $\R^2$ using the Isomap algorithm. The embedding is then used to estimate the Randers metric and the drift following the method described in the paper.

\paragraph{Quantitative evaluation.} For \cref{tab:swiss_roll}, we keep the same metric and vary the drift norm $\beta = \lVert b \rVert_{\rmA^{-1}}$ and the number of points, with $3$ seeds per configuration. Here, we use radius graphs rather than $k$-NN graphs. With a strong drift, the kernel extends further along the drift than across it, so that a $k$-NN graph would truncate it unevenly and bias its moments, and hence the limit of \cref{thm:sym_limit}. The bandwidth $\eps$ is the median Finsler distance from a sample to its $10$-th nearest neighbor. The graph is then truncated to the ball $\dstF(X_i,X_j) < 3\eps$, outside of which the kernel is negligible. As above, the edge weights are $K(\dstF(X_i,X_j)/\eps)$ with the Gaussian kernel, where $\dstF$ is computed with \textit{GEORCE}. In other words, the graph keeps every pair on which the kernel is non-negligible, and the geometry is carried by the weights.

The ground truth $\rvc$ and $\gBL$ is given by \cref{prop:centroid_randers_metric}.  The tangent plane of the roll is estimated by a local PCA on the $15$ nearest neighbors. The Jacobian $J_i$ of Isomap at $X_i$ is estimated by least squares on the same neighbors, $Y_j - Y_i \approx J_i (X_j - X_i)$ in these tangent coordinates. It is only needed for the direction of the drift, since $\lVert \hat{\rvc} \rVert_{\hat{g}_{BL}}$ does not depend on the embedding. \Cref{tab:swiss_roll} reports the cosine between $V_i$ and its limit $c_1 J_i \rvc(X_i)$: the direction of the drift is recovered at every strength and sample size.

\subsection{Matsumoto Metric approximation} \label{appendix:experiments:matsumoto}

We focus on the case of a graph sampled from a Matsumoto metric \citep{matsumoto_theory_1992}, more precisely on the metric induced by movement on a height map \citep{matsumoto1989slope, chansri2018geometry}. We define on $[-3.2,3.2]\times[-2.4,2.4]$ the height function $h$:

\begin{align*}
  h(x,y) &= 0.4\sin(x)\cos(y) + 0.15 \exp\left(-((x+1)^2 + (y-0.5)^2)\right)- 0.15 \exp\left(-((x-1)^2 + (y+0.5)^2)\right).
\end{align*}

The Matsumoto metric is then given by:
\begin{align*}
    \alpha((x,y),v)^2 &= (1+\partial_x h(x,y)^2) v_x^2 + 2 \partial_x h(x,y) \partial_y h(x,y) v_x v_y + (1+\partial_y h(x,y)^2) v_y^2, \\
    \beta((x,y),v) &= \partial_x h(x,y) v_x + \partial_y h(x,y) v_y, \\
    \FF((x,y),v) &= \frac{\alpha((x,y),v)^2}{\alpha((x,y),v) - \beta((x,y),v)}.
\end{align*}

We then sample $N=3000$ points uniformly on $[-3.2,3.2]\times[-2.4,2.4]$ that yield the samples $(x_i, y_i, h(x_i, y_i))_i$. Similarly to the Swiss Roll dataset, we construct a directed graph from the sampled points using a KNN graph with $k=10$ neighbors, and where the edge weights are computed using the \textit{GEORCE} algorithm \citep{rygaard2025georce}. The $\eps$ in the Gaussian kernel is chosen as the standard deviation of the geodesic distances between the nodes. The samples are then embedded in $\R^2$ using the Isomap algorithm. The embedding is then used to estimate the Randers metric and the drift following the method described in the paper.

\subsection{Directed Stochastic Block Model} \label{appendix:experiments:disbm}
\paragraph{Formal definition.} We start by giving a formal definition of the DiSBM. Let $N$ be the number of nodes in the graph, $B$ the number of communities, and $n \in \sN^B$ the number of nodes in each community. We denote by $\Pi \in [0,1]^{B \times B}$ the block matrix that defines the connection probabilities between communities, such that $\sum_j \Pi_{i,j} = 1$ for every $i$. We consider in particular the case of directed graphs that are generated from a circular block model, where the communities are arranged in a circle and the connection probabilities are defined as follows:
\begin{equation*}
  \Pi_{i,j} = \begin{cases}
    p & \text{if } j = i + 1 \mod B, \\
    q & \text{if } j = i - 1 \mod B, \\
    r & \text{if } j = i, \\
    0 & \text{otherwise,}
  \end{cases}
\end{equation*}
where $p, q, r \in [0,1]$ and $p + q + r = 1$. The probabilities $p, q, r$ are the probabilities of connecting to the next, previous, and same community, respectively. The adjacency matrix $\rmQ \in \{0,1\}^{N \times N}$ of the graph is then generated by sampling each entry $\rmQ_{i,j}$ independently according to a Bernoulli distribution with parameter $\Pi_{c(i),c(j)}$, where $c(i)$ is the community label of node $i$. The model is illustrated in \cref{fig:disbm_cycle} for $B=3$.

\paragraph{Experiment setting.}
We consider a DiSBM with $B=15$ communities, $N = 1000$, $r=0.1$, $p=0.4$ and $q=0.5$. Note that $p > q$ is not necessary: as soon as $p \neq q$, the flow between communities has a preferred orientation, forward if $p > q$ and backward if $q > p$. The community labels are drawn uniformly at random, and the adjacency matrix is generated according to the DiSBM model. The $\eps$ in the kernel is chosen as $1$. The points are then embedded in $\R^2$ using the first two non-trivial eigenvectors of $\rmL^s$, with $\theta = 1$. The embedding is then used to estimate the Randers metric and the drift following the method described in the paper.

\begin{figure}
  \centering
  \begin{subfigure}[t]{0.4\textwidth}
    \centering
    \includestandalone[width=0.85\textwidth]{disbm_cycle_model}
    \caption{\textbf{Block model.} Each community sends its edges to the next community with probability $p$, to the previous one with probability $q$, and to itself with probability $r$.}
    \label{fig:disbm_cycle_model}
  \end{subfigure}
  \hspace{0.06\textwidth}
  \begin{subfigure}[t]{0.4\textwidth}
    \centering
    \includestandalone[width=0.85\textwidth]{disbm_cycle_graph}
    \caption{\textbf{Sampled graph.} Intra-community edges (gray) have no preferred orientation, while edges between communities mostly follow the cycle (colored by source), edges going against the cycle are less frequent (dashed).}
    \label{fig:disbm_cycle_graph}
  \end{subfigure}
  \caption{\textbf{Cyclic Directed Stochastic Block Model.} Illustration with $B=3$ communities and $p > q$. (a) Connection probabilities of the block matrix $\Pi$. (b) A realization of the adjacency matrix $\rmQ$.}
  \label{fig:disbm_cycle}
\end{figure}

\paragraph{Direction of the drift.}
Although DiSBM has no underlying manifold, the direction of the flow between communities is known: forward if $p > q$, backward if $q > p$, and none if $p = q$. We measure whether the recovered drift follows it as $p - q$ varies. The $N$, $B$ and $r$ parameters remains as in the previous paragraph. Let $Y_i \in \R^2$ be the embedding of node $i$ and $c(i)$ its community. The center of community $k$ is
\begin{equation*}
  \mu_k = \frac{1}{\lvert C_k \rvert} \sum_{j \in C_k} Y_j,
\end{equation*}
where $C_k = \{ j : c(j) = k \}$ is the set of its nodes. The forward direction at community $k$ is the tangent to the cycle of communities, $t_k = \mu_{k+1} - \mu_{k-1}$, with indices taken modulo $B$. The chord $\mu_{k+1} - \mu_k$ would be off by half the angle between two consecutive communities. The \emph{signed cosine} of node $i$ is the cosine between its drift $\rmV_i$ and the forward direction at its community,
\begin{equation*}
  s_i = \frac{\langle \rmV_i, t_{c(i)} \rangle}{\lVert \rmV_i \rVert \, \lVert t_{c(i)} \rVert} \in [-1, 1],
\end{equation*}
which is close to $1$ when the drift points forward, close to $-1$ when it points backward, and spread around $0$ when it has no preferred direction. It is signed because it is measured against the forward direction whatever the actual flow, so that it changes sign when the flow is reversed. We also report the estimated strength of the asymmetry, $\lVert \hat{\rvc}_i \rVert_{\hat{g}_{BL}}$. \cref{fig:disbm_direction} shows that the drift points in the direction of the flow as soon as $p \neq q$: the median signed cosine is already $\pm 0.89$ for $\lvert p - q \rvert = 0.02$, and above $0.99$ from $\lvert p - q \rvert = 0.1$ on. When $p = q$, the drift has no preferred direction. The strength of the asymmetry is symmetric in $p - q$ and grows linearly with $\lvert p - q \rvert$.

\begin{figure}
  \centering
  \begin{subfigure}[t]{0.48\textwidth}
    \centering
    \includegraphics[width=\textwidth]{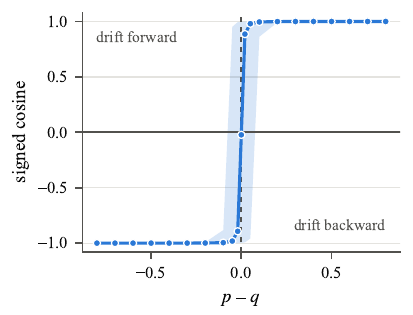}
    \caption{Signed cosine according to the direction of the flow.}
    \label{fig:disbm_direction_cos}
  \end{subfigure}
  \hfill
  \begin{subfigure}[t]{0.48\textwidth}
    \centering
    \includegraphics[width=\textwidth]{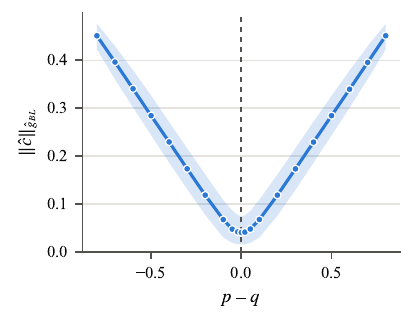}
    \caption{Estimated strength of the asymmetry.}
    \label{fig:disbm_direction_cnorm}
  \end{subfigure}
  \caption{\textbf{Embedding of the DiSBM and its dynamics.} Direction of the drift on the DiSBM as a function of $p - q$, with $N = 1000$ nodes, $B = 15$ communities and $r = 0.1$. Lines are medians over the nodes of $5$ random graphs, and shaded bands their $10$--$90\%$ range.}
  \label{fig:disbm_direction}
\end{figure}

\section{Proofs} \label{appendix:proofs}

This appendix gathers the proofs of the results stated in the main text.

\subsection{Forward and backward transport operators}
\label{appendix:proofs:prop:adjoint_reverse}

\begin{proposition}[Relation between the forward and backward transport operators]\label{prop:adjoint_reverse}
The backward transport operator $\gG'_\FF$ is equal to the transport operator of the reversed Finsler metric $\backFF$, and is the $L^2(\gM,\mBH)$-adjoint of $\gG_\FF$, \ie $\gG'_\FF = (\gG_\FF)^* = \gG_\backFF$.
\end{proposition}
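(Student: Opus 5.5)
The plan is to prove the two identities $\gG'_\FF = (\gG_\FF)^*$ and $\gG'_\FF = \gG_\backFF$ separately. Both follow directly from the definitions once we have two facts about the reversed metric: its distance is the transpose of $\dstF$, and it has the same Busemann--Hausdorff measure.

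\textbf{Step 1 (reversed distance).} I would verify that $\dstB(x,y) = \dstF(y,x)$, where $\dstB$ is the distance of $\backFF(x,v) = \FF(x,-v)$. Take a curve $\gamma:[0,1]\to\gM$ from $x$ to $y$ and let $\bar\gamma(t) = \gamma(1-t)$ be its reversal, which runs from $y$ to $x$. Since $\dot{\bar\gamma}(t) = -\dot\gamma(1-t)$, a change of variables gives
\begin{equation*}
L_{\backFF}(\gamma) = \int_0^1 \FF(\gamma(t),-\dot\gamma(t))\diff t = \int_0^1 \FF(\bar\gamma(s),\dot{\bar\gamma}(s))\diff s = L_\FF(\bar\gamma).
\end{equation*}
Reversal is a bijection between curves from $x$ to $y$ and curves from $y$ to $x$, so taking infima gives the claim. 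This is \cref{property:finsler_reverse}, which may be cited directly. It follows that $W_{\backFF}(x,y) = \eps^{-m}K(\dstF(y,x)/\eps) = W(y,x)$.

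\textbf{Step 2 (same measure).} The unit ball of $\backFF$ at $x$ is $\{v : \FF(x,-v)\le 1\} = -\tball_x$. Lebesgue measure $\leb_x$ is invariant under $v\mapsto -v$, so $\leb_x(-\tball_x) = \leb_x(\tball_x)$. Hence $\sigma_{BH}$ and $\mBH$ are the same for $\FF$ and $\backFF$. This is the step that needs the most care, since a volume that depended on the orientation would break the identification. It works here because the Busemann--Hausdorff density depends only on the Lebesgue volume of the ball. Combining Steps 1 and 2,
\begin{equation*}
\gG_\backFF[f](x) = \int_\gM W(y,x) f(y)\diff\mBH(y) = \gG'_\FF[f](x).
\end{equation*}

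\textbf{Step 3 (adjointness).} Take $f,h\in L^2(\gM,\mBH)$. By (A1), $K$ is bounded, so $W$ is bounded on $\gM\times\gM$. Since $\gM$ is compact, $\mBH$ is finite. The integrand $W(x,y)f(y)h(x)$ is therefore integrable, and Fubini applies:
\begin{equation*}
\langle \gG_\FF f, h\rangle = \iint W(x,y) f(y) h(x)\diff\mBH(y)\diff\mBH(x) = \int f(y)\Big(\int W(x,y)h(x)\diff\mBH(x)\Big)\diff\mBH(y) = \langle f, \gG'_\FF h\rangle .
\end{equation*}
This gives $\gG'_\FF = (\gG_\FF)^*$.

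Finally, if $\FF$ is reversible then $\backFF=\FF$, so $\gG_\FF$ is self-adjoint. This is the remark made after the definition of $\gG_\FF$, and it follows immediately from the two identities.
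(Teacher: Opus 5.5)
Your proposal is correct and follows essentially the same route as the paper: reversing curves gives $\dst_{\backFF}(x,y)=\dstF(y,x)$, hence $W_{\backFF}(x,y)=W(y,x)$, and Fubini gives the adjoint identity. The paper's proof leaves two details implicit that you check explicitly: that $\backFF$ has the same Busemann--Hausdorff measure $\mBH$, and that Fubini's integrability hypothesis holds. Your infimum-over-all-curves argument for the reversed distance is also slightly cleaner than the paper's, which goes through minimizing geodesics.
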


Before proving \cref{prop:adjoint_reverse}, we prove the following lemma, which allows us to formally work with the reversed Finsler metric $\backFF$ and its associated distance.

\begin{lemma}[Finsler metric and its reverse] \label{property:finsler_reverse}
The reversed Finsler metric $\backFF$ is a Finsler metric on $\gM$. Moreover, its geodesics are the reverse of the geodesics of $\FF$, and the geodesic distance $\text{dst}_{\backFF}$ is the reverse of the geodesic distance $\text{dst}_\FF$.
\end{lemma}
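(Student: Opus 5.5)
The plan is to check the three defining properties of \cref{def:finsler_metric} for $\backFF(x,v) := \FF(x,-v)$ directly, then compare curve lengths. Regularity and positivity carry over because $v \mapsto -v$ is a linear diffeomorphism of each $T_x\gM$ that preserves the slit tangent bundle $T\gM\setminus\{0\}$. Hence $\backFF$ is smooth and positive there, and vanishes only on the zero section.

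For the three properties:
\begin{itemize}
  \item \emph{Positive homogeneity.} For $t>0$ we have $\backFF(x,tv) = \FF(x,-tv) = t\FF(x,-v) = t\backFF(x,v)$.
  \item \emph{Triangle inequality.} $\backFF(x,v+w) = \FF(x,(-v)+(-w)) \le \FF(x,-v)+\FF(x,-w) = \backFF(x,v)+\backFF(x,w)$.
  \item \emph{Strong convexity.} By the chain rule, $\partial^2_{v^iv^j}\backFF^2(x,v) = \partial^2_{v^iv^j}\FF^2(x,-v)$, since the two sign flips cancel. So the fundamental tensor satisfies $\tilde g_{ij}(x,v) = g_{ij}(x,-v)$, which is positive definite for $v\neq 0$.
\end{itemize}

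For the distance, take a piecewise $C^1$ curve $\gamma:[0,1]\to\gM$ from $x$ to $y$. Its reverse $\bar\gamma(t)=\gamma(1-t)$ runs from $y$ to $x$ and has velocity $\dot{\bar\gamma}(t) = -\dot\gamma(1-t)$. Substituting $s=1-t$ gives
\begin{equation*}
  L_{\backFF}(\bar\gamma)=\int_0^1 \FF\big(\gamma(1-t),\dot\gamma(1-t)\big)\diff t = L_\FF(\gamma).
\end{equation*}
Reversal is a bijection between curves from $x$ to $y$ and curves from $y$ to $x$. Taking infima therefore gives $\mathrm{dst}_{\backFF}(y,x)=\dstF(x,y)$.

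For the geodesics, I would work with energy: the same substitution shows $E_{\backFF}(\bar\gamma)=E_\FF(\gamma)$. Geodesics are the critical points of energy among curves with fixed endpoints, and the reversal bijection preserves both the endpoints (swapped) and the energy. So $\gamma$ is an $\FF$-geodesic iff $\bar\gamma$ is a $\backFF$-geodesic. The same argument with length shows that minimizing geodesics correspond as well.

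The main obstacle is only bookkeeping in this geodesic step. Finsler geodesics are defined through the spray or Euler--Lagrange equations of $\FF^2$ in $(x,v)$. To avoid a variational argument, one could instead verify directly that the spray coefficients satisfy $\tilde G^k(x,v)=G^k(x,-v)$, using that $G^k$ is homogeneous of degree $2$. The reversed curve then solves $\ddot{\bar\gamma}^k+2\tilde G^k(\bar\gamma,\dot{\bar\gamma})=0$, because $\ddot{\bar\gamma}(t)=\ddot\gamma(1-t)$ and $\tilde G^k(\bar\gamma,\dot{\bar\gamma}) = G^k(\gamma,\dot\gamma)$. I would use the variational argument and mention this as a check.
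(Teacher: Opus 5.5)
Your proposal is correct and follows essentially the same route as the paper: a direct check of the axioms (including the Hessian identity $\tilde g_{ij}(x,v) = g_{ij}(x,-v)$), followed by the substitution $s = 1-t$ showing that reversing a curve preserves its length. Your version is slightly more complete than the paper's, since you check the triangle inequality and state the curve-reversal bijection explicitly before taking infima. You also cover non-minimizing geodesics through the energy or the spray identity $\tilde G^k(x,v) = G^k(x,-v)$, which the paper proves only later in \cref{property:spray_reverse}.
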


\begin{proof}[Proof of \cref{property:finsler_reverse}]
  First, we show that $\backFF$ is indeed a Finsler metric. By definition, $\backFF(x,v) = \FF(x, -v)$ for all $x \in \gM$ and $v \in T_x\gM$. Hence $\backFF(x,v) > 0$ for all $v \neq 0$, and $\backFF(x,0) = 0$. For any $t > 0$, we have $\backFF(x, t v) = \FF(x, -t v) = t \FF(x, -v) = t \backFF(x,v)$, satisfying the positive homogeneity property. Finally, the Hessian of $\backFF^2$ with respect to $v$ is the same as the Hessian of $\FF^2$ with respect to $-v$, which is positive definite by the definition of a Finsler metric. Hence, $\backFF$ satisfies all three properties of a Finsler metric.

  For the second claim, let $\gamma: [0, T] \to \gM$ be the geodesic of $\FF$ connecting two points $x,y \in \gM$, so that $\gamma(0) = x$ and $\gamma(T) = y$. Then, the reverse curve $\sigma(t) = \gamma(T-t)$ is a curve connecting $y$ to $x$. We have that $\dot{\sigma}(t) = -\dot{\gamma}(T-t)$, so that
  \begin{align*}
    \int_0^T \backFF(\sigma(t), \dot{\sigma}(t)) \diff t &= \int_0^T \FF(\sigma(t), -\dot{\sigma}(t)) \diff t \\
    &= \int_0^T \FF(\gamma(T-t), \dot{\gamma}(T-t)) \diff t &\text{by the definition of $\sigma$} \\
    &= \int_0^T \FF(\gamma(s), \dot{\gamma}(s)) \diff s & \text{by change of variable $s = T-t$} \\
    &= \text{dst}_\FF(x, y).
  \end{align*}
  Since $\gamma$ is the geodesic of $\FF$, it minimizes the integral of $\FF$ along curves connecting $x$ and $y$. Therefore, $\sigma$ minimizes the integral of $\backFF$ along curves connecting $y$ and $x$, and is thus the geodesic of $\backFF$ connecting $y$ to $x$. This shows that the geodesics of $\backFF$ are the reverse of the geodesics of $\FF$, and that the geodesic distance $\text{dst}_{\backFF}(x,y) = \text{dst}_\FF(y, x)$.
\end{proof}

Applying this lemma, we can now prove \cref{prop:adjoint_reverse}.

\begin{proof}[Proof of \cref{prop:adjoint_reverse}]
  By \cref{property:finsler_reverse}, $\dstF(x,y) = \dst_\backFF(y,x)$ for all $x,y\in\gM$, so that, denoting by $W_\backFF$ the kernel built from $\backFF$ in place of $\FF$,
  \begin{equation*}
    W(x,y) = \frac{1}{\eps^m} K\left(\frac{\dstF(x,y)}{\eps}\right) = \frac{1}{\eps^m} K\left(\frac{\dst_\backFF(y,x)}{\eps}\right) = W_\backFF(y,x).
  \end{equation*}
  Hence $\gG'_\FF f(x) = \int_\gM W(y,x) f(y) \diff\mBH(y) = \int_\gM W_\backFF(x,y) f(y) \diff\mBH(y) = \gG_\backFF f(x)$.

  For the adjoint claim, by Fubini's theorem,
  \begin{align*}
    \langle \gG_\FF f, h \rangle &= \iint_{\gM \times \gM} W(x,y) f(y) h(x)  \diff\mBH(y) \diff\mBH(x)  \\
    &= \int_\gM f(y) \left( \int_\gM W(x,y) h(x) \diff\mBH(x) \right) \diff\mBH(y) = \langle f, \gG'_\FF h \rangle. \qedhere
  \end{align*}
\end{proof}

\subsection{Proof of \texorpdfstring{\cref{thm:moment_expansion}}{}}
\label{appendix:proofs:thm:moment_expansion}

Despite its simple statement, the proof of \cref{thm:moment_expansion} is rather dense, and requires a number of technical lemmas. For ease of reading, we first give a proof sketch, and defer the technical lemmas to the paragraphs below. We strongly encourage the interested reader to read \citet{ohtaComparisonFinslerGeometry2021} and \citet{shen2016introduction} for a comprehensive treatment of Finsler geometry, and \citet{coifman_diffusion_2006} for a similar proof in the Riemannian case.

\emph{Proof sketch.} The proof of \cref{thm:moment_expansion} relies on the expansion of the integral operator $\gG_\FF$. To do so, we expand the function $f$ around a point $x$ using the Finsler exponential map $\expF_x$. This expansion will be used to expand the integral operator $\gG_\FF$ in terms of the derivatives of $f$ at $x$, and the moments of the kernel $K$. This expansion is possible since we assume that the kernel $K$ is sub-exponential, and thus that the integral can be localized around $x$ for small $\eps$. The change of variable through $\expF_x$ brings a Jacobian, which we expand in terms of the density $\sigBH$, the S-curvature and the Ricci curvature of $\FF$ (\cref{lemma:jacobian_expansion}). Once this expansion is obtained, we can then use \cref{prop:adjoint_reverse} and the fact that $\gG'_\FF = \gG_\backFF$ to obtain the expansion of $\gG'_\FF$. Finally, one can note the following identity:
\begin{equation*}
  \gGt[f] = \frac{1}{\qe^\theta} \gG_\FF \left[\frac{f}{\qe^\theta} \right],
\end{equation*}
so that the expansion of $\gGt$ can be easily obtained from the expansion of $\gG_\FF$. The final result is obtained by taking the symmetric and antisymmetric parts of the expansion of $\gGt$, and applying the proper normalization.

We give here the form of the differential operators that appear in the expansion of $\gG_\FF$. To do so, we start by defining the \emph{kernel moments} of $K$ as follows:
\begin{definition}[Kernel moments] \label{def:kernel_moments}
  Let $x \in \gM$ and $n \in \N$. Using the formal Christoffel symbols $\ChrisF^k_{ij}$, the Ricci curvature $\RicF$ and the S-curvature $S$ (\cref{def:s_curvature}) of the Finsler metric $\FF$, $\dot S$ denoting the derivative of $S$ along the geodesic flow, we define
  \begin{align*}
    &m_n (x) = \sigBH(x) \int_{T_x\gM} K(\FF(x,v)) v^{\otimes n} \diff\leb_x(v), \\
    &\Lambda^k(x) = \sigBH(x) \int_{T_x\gM} K(\FF(x,v)) \ChrisF^k_{ij}(x,v) v^i v^j \diff\leb_x(v), \\
    &s_n(x) = \sigBH(x) \int_{T_x\gM} K(\FF(x,v)) S(x,v) v^{\otimes n} \diff\leb_x(v), \\
    &\Upsilon(x) = \sigBH(x) \int_{T_x\gM} K(\FF(x,v)) \\
    & \qquad \qquad \times \left( S(x,v)^2 - \dot S(x,v) - \frac{1}{3} \RicF(x,v) \right) \diff\leb_x(v),
  \end{align*}
  where $\sigBH$ is the density of the Busemann--Hausdorff measure (\cref{def:busemann_haussdorf_measure}).
\end{definition}
With this definition at hand, we can clarify the definition of the differential operators $\gA_1$ and $\gA_2$ in \cref{thm:moment_expansion}:
\begin{align*}
  \gA_1[f](x) &= m_1^{i}(x) \partial_i f(x) - s_0(x) f, \\
  \gA_2[f](x) &= m_2^{ij}(x) \partial_i \partial_j f(x) - (\Lambda^k(x) + 2 s_1^k (x)) \partial_k f(x) + \Upsilon(x) f(x).
\end{align*}

\begin{remark}
  This expansion is a generalization of the moment expansion of the kernel in the Riemannian setting, see \eg \citet[Lemma 8]{coifman_diffusion_2006}. Several differences are to be noted:
  \begin{itemize}[itemsep=0em, topsep=0em]
    \item Due to the non-reversibility of the metric, $m_1$ does not vanish in general.
    \item The second-order term contains the term $\Lambda^k(x)$, which simplifies in the Riemannian setting, and the curvature term $\Upsilon(x)$, which only multiplies $f$ and reduces to an average of the Ricci curvature in the Riemannian setting.
    \item The S-curvature brings the terms $s_0$ and $s_1$, among which a zeroth-order term $-\eps s_0 f$ at first order in $\eps$, and contributes to $\Upsilon$. These contributions vanish in the Riemannian setting, and more generally for Berwald metrics, whose S-curvature with respect to $\mBH$ is identically zero \citep[Proposition 4.3]{shen2016introduction}.
  \end{itemize}
\end{remark}

We now turn to the technical lemmas that are used in the proof of \cref{thm:moment_expansion}.

\subsubsection*{Main lemmas}

We start by giving the expansion of a function $f$ around a point $x$ using the Finsler exponential map $\expF_x$. This expansion will be used to expand the integral operator $\gG_\FF$ in terms of the derivatives of $f$ at $x$, and the moments of the kernel $K$. It is to be contrasted with the expansion using the Riemannian exponential map $\expa_x$ \citep{coifman_diffusion_2006}.
\begin{lemma}[Function expansion using Finsler metric]\label{lemma:f_expansion}
    Let $x\in\gM$ and $v\in T_x\gM$ be a unit tangent vector at $x$. Let also $\eps\in\mathbb{R}^+_*$. Then
  \begin{equation}
    f(\expF_x(\eps v)) = f(x) + \eps \partial_i f(x) v^i + \frac{\eps^2}{2} \left(\partial_j \partial_i f(x) - \partial_k f(x) \ChrisF^k_{ij}(x,v) \right)v^j v^i + O(\eps^3),
  \end{equation}
  where $\ChrisF^k_{ij}$ is the Christoffel symbol of the Finsler metric $\FF$\footnote{The Christoffel symbols of a Finsler metric are also called \emph{formal Christoffel symbols} \citep{ohtaComparisonFinslerGeometry2021}}.
\end{lemma}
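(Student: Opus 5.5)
The plan is to Taylor-expand the composite $t \mapsto f(\gamma(t))$ along the Finsler geodesic $\gamma(t) = \expF_x(t v)$, and evaluate at $t = \eps$. By the definition of the Finsler exponential map, $\gamma$ is the unique constant-speed geodesic of $\FF$ with $\gamma(0) = x$ and $\dot\gamma(0) = v$. Since $v \neq 0$ and $\FF$ is smooth on the slit tangent bundle, $\gamma$ is smooth on $[0,\eps]$ for $\eps$ small. (Note that $\expF_x$ is in general only $C^1$ at the origin, which is why we expand along the curve rather than using $\expF_x$ as a chart.) The composite $\varphi(t) = f(\gamma(t))$ is then $C^3$ for $f \in \gC^3(\gM)$, and Taylor's theorem gives
\begin{equation*}
  \varphi(\eps) = \varphi(0) + \eps \varphi'(0) + \frac{\eps^2}{2}\varphi''(0) + \frac{\eps^3}{6}\varphi'''(\xi), \qquad \xi \in (0,\eps).
\end{equation*}

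The next step computes the derivatives in a chart around $x$. The chain rule gives $\varphi'(t) = \partial_i f(\gamma(t))\,\dot\gamma^i(t)$. Differentiating once more gives
\begin{equation*}
  \varphi''(t) = \partial_j\partial_i f(\gamma(t))\,\dot\gamma^i\dot\gamma^j + \partial_k f(\gamma(t))\,\ddot\gamma^k(t).
\end{equation*}
I then use the Finsler geodesic equation in local coordinates, $\ddot\gamma^k + 2G^k(\gamma,\dot\gamma) = 0$, where the spray coefficients are $G^k(x,v) = \frac12 \ChrisF^k_{ij}(x,v) v^i v^j$ (\cref{def:nonlinear_connection}). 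This yields $\ddot\gamma^k(0) = -\ChrisF^k_{ij}(x,v) v^i v^j$. Substituting at $t = 0$ with $\dot\gamma(0) = v$ gives exactly the first- and second-order terms of the lemma.

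The remaining step is the uniformity of the remainder, and I expect this to be the only delicate point. It is needed later, because the lemma is integrated against the kernel over all unit directions $v$ and all $x \in \gM$. Write $\varphi'''$ in terms of $\partial^3 f$, $\partial^2 f$, $\partial f$ and the derivatives $\dddot\gamma$, obtained by differentiating the spray equation. These are continuous functions of $(\gamma(t), \dot\gamma(t))$, which ranges over a compact subset of $T\gM\setminus\{0\}$: the geodesic flow preserves $\FF$, so $\FF(\gamma,\dot\gamma) = \FF(x,v) = 1$ and the velocity stays on the unit sphere bundle. By compactness (A2) and smoothness of $G^k$ away from the zero section, $|\varphi'''|$ is bounded by $C\lVert f\rVert_{\gC^3}$ uniformly in $x$, in unit $v$, and in $t \le \eps_0$. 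Finitely many charts suffice, again by compactness. Hence the $O(\eps^3)$ term is uniform, which completes the argument.
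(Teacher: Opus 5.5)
Your proposal is correct and follows essentially the same route as the paper: Taylor-expand $t \mapsto f(\expF_x(tv))$, compute the first two derivatives by the chain rule, and eliminate $\ddot\gamma^k(0)$ with the geodesic equation $\ddot\gamma^k(0) = -\ChrisF^k_{ij}(x,v)v^iv^j$. Your extra argument that the $O(\eps^3)$ remainder is uniform in $x$ and in unit $v$ (because $\FF$ is conserved along the geodesic flow and the unit sphere bundle is compact) does not appear in the paper's proof of this lemma, which only asserts uniformity later in \cref{lemma:bias}. By homogeneity, the same argument also yields the $O(\eps^3\FF(x,v)^3)$ remainder needed when the lemma is applied to non-unit $v$ in the proof of \cref{thm:moment_expansion}.
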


\begin{proof}
  Let $\gamma(\eps) = \expF_x(\eps v)$, and $\phi(\eps)= f(\gamma(\eps))$. We have that $\phi: \R \to \R$ and $\phi(0) = f(x)$. Taylor expansion of $\phi$ around $0$ yields:
\begin{equation*}
  \phi(\eps) = \phi(0) + \phi'(0)\eps + \frac{1}{2}\phi''(0)\eps^2 + O(\eps^3)
\end{equation*}
Now we need to express $\phi'(0)$ and $\phi''(0)$ in terms of the derivatives of $f$ at $x$.

For the \emph{first derivative}, by the chain rule,
\begin{align*}
  \phi'(0) = \frac{d}{d \eps} f(\gamma(\eps))\bigg|_{\eps=0} = \partial_i f(\gamma(\eps)) \dot{\gamma}^i (\eps)\bigg|_{\eps=0} = \partial_i f(x) v^i
\end{align*}

For the \emph{second derivative}, by the product rule,
\begin{align*}
  \phi''(\eps) &= \frac{d}{d \eps} (df(\gamma(\eps)) \dot{\gamma}(\eps)) \\
            &= \underbrace{\frac{d}{d \eps}\Big(df(\gamma(\eps))\Big) \dot{\gamma}^i(\eps)}_{\text{(A)}} + \underbrace{df(\gamma(\eps)) \ddot{\gamma}^i(\eps)}_{\text{(B)}}.
\end{align*}
Using the chain rule again, the first term (A) gives
\begin{align*}
  \frac{d}{d \eps}\Big(df(\gamma(\eps))\Big) \dot{\gamma}^i(\eps) \bigg|_{\eps=0} = \partial_j \partial_i f(\gamma(\eps)) \dot{\gamma}^j(\eps) \dot{\gamma}^i(\eps) \bigg|_{\eps=0} = \partial_j \partial_i f(x) v^j v^i.
\end{align*}
For the second term (B), since $\gamma$ is a geodesic under the Finsler metric $\FF$, the geodesic equation \citep[Chap. 3.3]{ohtaComparisonFinslerGeometry2021} gives
\begin{equation*}
  \ddot{\gamma}^i(\eps) = -\tilde{G}^i(\gamma(\eps), \dot{\gamma}(\eps)) = - \ChrisF^i_{jk}(x, \dot{\gamma}(\eps)) \dot{\gamma}(\eps)^j \dot{\gamma}(\eps)^k
\end{equation*}
where $\tilde{G}^i$ is the geodesic spray coefficient of the Finsler metric $\FF$, and $\ChrisF$ is the Christoffel symbol of the Finsler metric $\FF$. Consequently,
\begin{equation*}
  df(\gamma(\eps)) \ddot{\gamma}^k(\eps)\bigg|_{\eps=0} = -\partial_k f(x) \ChrisF^k_{ij}(x, v) v^i v^j,
\end{equation*}
where we changed the index from $i$ to $k$. Hence, the second derivative of $\phi$ at $0$ is
\begin{equation*}
  \phi''(0) = (\partial_j \partial_i f(x) - \partial_k f(x) \ChrisF^k_{ij}(x, v) )v^j v^i.
\end{equation*}
The Taylor expansion of $f$ around $x$ is thus
\begin{equation*}
  f(\expF_x(\eps v)) = f(x) + \eps \partial_i f(x) v^i + \frac{\eps^2}{2} \left(\partial_j \partial_i f(x) - \partial_k f(x) \ChrisF^k_{ij}(x, v) \right)v^j v^i + O(\eps^3).
\end{equation*}
This concludes the proof.
\end{proof}

To use the previous expansion, we need to localize the integral operator $\gG_\FF$ around a point $x \in \gM$, so that the integral is instead taken over the tangent space $T_x \gM$, which can be identified with $\R^m$. This is possible thanks to the decay of the kernel $K$, which allows us to restrict the integral to a small neighborhood of $x$.

\begin{lemma}[Integral localization]\label{lemma:integral_localization}
  Let $K$ satisfy (A1) and $x\in\gM$. Then, for any $f \in L_1 (\gM)$ smooth enough,
  \begin{equation*}
    \gG_\FF [f] (x) = \int_{\tball_x(r_0 / \eps)} K\left(\FF(x,v)\right) f(\expF_x(\eps v)) \Jac(\eps v) \diff \leb_x(v) + O(\eps^3),
  \end{equation*}
  where $\Jac$ is the Jacobian of the change of variable through the exponential map, and $0 < r_0 < i_\gM$ is a constant smaller than the injectivity radius of $\gM$, which does not depend on $x$.
\end{lemma}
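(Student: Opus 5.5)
We split the integral defining $\gG_\FF[f](x)$ into the forward metric ball $\fball(x,r_0)$ and its complement, and handle the two pieces separately. Fix $0<r_0<i_\gM$, for instance $r_0 = i_\gM/2$, uniformly in $x$ by consequence (iii) of (A2).

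\textbf{Far region.} On $\gM\setminus \fball(x,r_0)$ we have $\dstF(x,y)\ge r_0$. By (A1),
\begin{equation*}
W(x,y)\le \eps^{-m} C_K e^{-\nu_K r_0/\eps}.
\end{equation*}
Hence
\begin{equation*}
\Bigl|\int_{\gM\setminus\fball(x,r_0)} W(x,y) f(y)\diff\mBH(y)\Bigr| \le C_K\,\eps^{-m} e^{-\nu_K r_0/\eps}\,\lVert f\rVert_{L^1(\mBH)}.
\end{equation*}
This is $O(\eps^k)$ for every $k$, in particular $O(\eps^3)$, uniformly in $x$. Here $L^1$ is taken with respect to $\mBH$; this is equivalent to the chart-based $L^1$ norm because $\sigBH$ is bounded above and below on the compact $\gM$.

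\textbf{Near region.} Since $r_0$ is below the injectivity radius, $\expF_x$ is a diffeomorphism from $\tball_x(r_0)$ onto $\fball(x,r_0)$. For $u\in\tball_x(r_0)$ we have $\dstF(x,\expF_x(u))=\FF(x,u)$, because short geodesics are minimizing in the forward direction.

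We substitute $y=\expF_x(u)$. Let $\Jac(u)$ denote the density of the pullback of $\mBH$ by $\expF_x$ with respect to $\leb_x$, i.e. $\sigBH(\expF_x u)\,|\det D\expF_x(u)|$ in coordinates. This gives
\begin{equation*}
\int_{\tball_x(r_0)} \eps^{-m}K(\FF(x,u)/\eps)\, f(\expF_x u)\,\Jac(u)\diff\leb_x(u).
\end{equation*}
Now rescale $u=\eps v$. The Lebesgue measure scales as $\diff\leb_x(u)=\eps^m\diff\leb_x(v)$, so the factor $\eps^{-m}$ cancels. Positive homogeneity gives $\FF(x,\eps v)/\eps=\FF(x,v)$, and the domain becomes $\tball_x(r_0/\eps)$. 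This is exactly the stated integral.

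\textbf{Main obstacle.} The delicate point is uniformity. The constant $r_0$ must not depend on $x$, which is supplied by the uniform injectivity radius in (A2)(iii). The far-region bound must also not degrade, which holds because $\mBH(\gM)<\infty$ and $\sigBH$ is bounded.

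One should also note that the error is in fact exponentially small, so the $O(\eps^3)$ claim is generous. The integrand $K(\FF(x,v))|f(\expF_x \eps v)|\Jac(\eps v)$ remains integrable on the growing domain, since $\Jac$ is bounded on $\tball_x(r_0)$ by compactness and $K$ decays exponentially. This integrability is what later lets the Taylor expansions of \cref{lemma:f_expansion} and of $\Jac$ be integrated term by term.
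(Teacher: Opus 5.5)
Your proposal is correct and follows the paper's proof: you split at the forward ball $\fball(x,r_0)$, kill the far region with the exponential tail of $K$, and on the near region substitute $y=\expF_x(\eps v)$ so that the factor $\eps^{-m}$ is absorbed. The only difference is cosmetic. You bound the far region by $C_K\eps^{-m}e^{-\nu_K r_0/\eps}\lVert f\rVert_{L^1(\mBH)}$, whereas the paper uses $\sup|f|\cdot\mBH(\gM)$; your version matches the stated $L^1$ hypothesis slightly more directly.
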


\begin{proof}
We start by splitting the integral into two parts, one over the ball $\fball (x, r_0) = \{ y \in \gM : \dstF(x,y) < r_0 \}$ and one over its complement.

\emph{On the far region.} Since $\gM$ is compact, $f$ is bounded, and we denote $M_f = \sup_{y\in\gM} |f(y)| < \infty$. By definition, it holds that $\dstF(x,y) \geq r_0$ for all $y \in \gM \setminus \fball (x, r_0)$. Using the sub-exponential property of the kernel (A1), it holds:
\begin{align*}
    \eps^{-m}\int_{\gM \setminus \fball (x, r_0)} K\left(\frac{\dstF(x,y)}{\eps}\right) f(y) \diff\mBH(y) &\leq \eps^{-m} M_f \int_{\gM \setminus \fball (x, r_0)} C_K \exp\left(- \frac{\nu_K r_0}{\eps}\right) \diff\mBH(y) \\
    &\leq M_f C_K \eps^{-m} \exp\left(- \frac{\nu_K r_0}{\eps}\right) \mBH(\gM),
\end{align*}
which is $O(\eps^N)$ for any $N>0$, and in particular $O(\eps^3)$.

\emph{On the near region.} Because $r_0 < i_\gM$, the exponential map $\expF_x$ is a diffeomorphism from the tangent ball $\tball_x(r_0) \subset T_x\gM$ onto the forward metric ball $\fball (x, r_0) \subset \gM$ (it is $C^1$ at the origin and smooth elsewhere), so that we can do the change of variable $y = \expF_x (\eps v)$, with $v \in \tball_x(r_0/\eps)$. For such $v$, $\eps\FF(x, v) < r_0 < i_\gM$, so that the Finslerian distance writes $\dstF(x, \expF_x (\eps v)) = \eps \FF(x, v)$. The integral over the near region can then be rewritten as:
\begin{align*}
    \eps^{-m} \int_{\fball (x, r_0)} K\left(\frac{\dstF(x,y)}{\eps}\right) f(y) \diff\mBH(y) &= \int_{\tball_x(r_0 / \eps)} K\left(\FF(x,v)\right) f(\expF_x(\eps v)) \Jac( \eps v) \diff \leb_x(v),
\end{align*}
where $\Jac(u)$ is the Jacobian of the change of variable $y = \expF_x(u)$, \ie $\diff\mBH(y) = \Jac(u) \diff\leb_x(u)$. The factor $\eps^{-m}$ is absorbed in the change of variable $u = \eps v$. %
Finally, we can combine the two regions to get the desired result.
\end{proof}

\begin{lemma}[Jacobian expansion] \label{lemma:jacobian_expansion}
  Let $x \in \gM$, $v \in T_x\gM\setminus\{0\}$, and $\eps > 0$ with $\eps\FF(x,v) < i_\gM$. Then the Jacobian $\Jac$ of \cref{lemma:integral_localization} satisfies
  \begin{equation*}
    \Jac(\eps v) = \sigBH(x)\left(1 - \eps S(x,v) + \frac{\eps^2}{2}\left( S(x,v)^2 - \dot{S}(x,v) - \frac{1}{3}\RicF (v) \right) + \bigO\big(\eps^3\big)\right),
  \end{equation*}
  where $S$ and $\dot S$ are the S-curvature (\cref{def:s_curvature}) and its derivative along the geodesic flow, and where the constant in $\bigO$ depends neither on $x$ nor on $v$.
\end{lemma}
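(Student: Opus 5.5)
We will compute the Jacobian in the coordinate chart first and then factor out $\sigBH(x)$. Since $\diff\mBH(y) = \sigBH(y)\diff y$, write
\begin{equation*}
  \Jac(u) = \sigBH(\expF_x(u)) \, \det\big(\diff(\expF_x)_u\big),
\end{equation*}
where the determinant is taken in the coordinate bases. Polar coordinates adapted to $\FF$ are natural here: write $u = t\xi$ with $\xi \in \gI_x$ and $t = \eps\FF(x,v)$. By positive homogeneity, the expansion in $\eps$ for direction $v$ and the expansion in $t$ for the unit direction $\xi = v/\FF(x,v)$ differ only by factors $\FF(x,v)^k$. These factors are absorbed because $S(x,\cdot)$ is $1$-homogeneous, while $\dot S(x,\cdot)$ and $\RicF(x,\cdot)$ are $2$-homogeneous. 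It therefore suffices to treat unit $v$ and expand in $t$.

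The core step is to compare $\Jac$ with the direction-dependent volume $\diff\mathrm{vol}_{g_{\dot\gamma}} = \sqrt{\det g(\gamma(t),\dot\gamma(t))}\,\diff y$ along the geodesic $\gamma(t) = \expF_x(tv)$. By \cref{def:distortion},
\begin{equation*}
  \sigBH(\gamma(t)) = e^{-\tau(\gamma(t),\dot\gamma(t))}\sqrt{\det g(\gamma(t),\dot\gamma(t))}.
\end{equation*}
For $\sqrt{\det g_{\dot\gamma}}\,\det \diff(\expF_x)_{tv}$ we will use the standard Finsler Jacobi-field comparison (Ohta, Shen). In the weighted Riemannian structure $g_{\dot\gamma}$ along $\gamma$, the volume density in normal coordinates of $T_x\gM$ (measured with $g(x,v)$) is $1 - \frac{t^2}{6}\RicF(v) + O(t^3)$. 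This is the usual Jacobi-field expansion, with the flag curvature replacing sectional curvature. As a result,
\begin{equation*}
  \Jac(tv) = \sqrt{\det g(x,v)}\, e^{-\tau(\gamma(t),\dot\gamma(t))}\Big(1 - \tfrac{t^2}{6}\RicF(v) + O(t^3)\Big).
\end{equation*}
We then Taylor-expand $\tau(\gamma(t),\dot\gamma(t)) = \tau(x,v) + tS(x,v) + \frac{t^2}{2}\dot S(x,v) + O(t^3)$, which is just the definition of $S$ and $\dot S$ as derivatives of $\tau$ along the geodesic flow. Combining this with $\sqrt{\det g(x,v)}\,e^{-\tau(x,v)} = \sigBH(x)$ gives
\begin{equation*}
  \Jac(tv) = \sigBH(x)\Big(1 - tS + \tfrac{t^2}{2}\big(S^2 - \dot S - \tfrac13\RicF(v)\big) + O(t^3)\Big),
\end{equation*}
using $e^{-a t - b t^2/2} = 1 - at + \frac{t^2}{2}(a^2 - b) + O(t^3)$.

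Uniformity of the $O(\eps^3)$ constant is a compactness argument. The remainders are controlled by third derivatives of $\tau$ along the geodesic flow and by curvature derivatives. Both are continuous on the compact unit sphere bundle $\{\FF = 1\}$, and on the set $t < i_\gM$ the exponential map is a smooth diffeomorphism away from the origin.

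The main obstacle is the Jacobi-field step. The exponential map is only $C^1$ at the origin, and the reference metric $g_{\dot\gamma}$ varies with direction. We will handle this by working with the fixed direction $v$ along a single geodesic. Along it, $g_{\dot\gamma}$ is a genuine Riemannian metric on the normal bundle and the Jacobi equation involves the flag curvature $R_{\dot\gamma}$. The usual expansion of $\det$ of the Jacobi matrix then gives the $-\frac{t^2}{6}\operatorname{tr}R_{\dot\gamma} = -\frac{t^2}{6}\RicF$ term, and the $C^1$ issue at $u = 0$ does not arise.
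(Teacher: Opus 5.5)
Your proposal is correct and follows essentially the same route as the paper. You split $\Jac$ into $\sigBH(\expF_x(\eps v))$ times the coordinate determinant, and rewrite the first factor through the distortion at both ends of the geodesic. The Jacobi-field expansion in a $g_{\dot\gamma}$-parallel orthonormal frame then gives $\frac{\sqrt{\det g(x,v)}}{\sqrt{\det g(\gamma(\eps),\dot\gamma(\eps))}}\bigl(1-\frac{\eps^2}{6}\RicF(v)+O(\eps^3)\bigr)$ for the determinant. The $\sqrt{\det g}$ ratios cancel, and a Taylor expansion of $\tau$ along the geodesic finishes the proof.

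The only differences are cosmetic. You reduce to unit $v$ by homogeneity, which makes explicit that the remainder is really $O(\eps^3\FF(x,v)^3)$; this is harmless once integrated against $K(\FF(x,v))$. You also give a compactness argument for uniformity, which the paper leaves implicit.
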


\begin{proof}
By definition of the Jacobian, $\Jac$ has the form:
\begin{equation*}
  \Jac(\eps v) = \underbrace{\sigBH(\expF_x(\eps v))}_{\text{change of measure} (\star)} \space \times \space \underbrace{\det(d(\expF_x)_{\eps v})}_{\text{change of variable} (\star \star)},
\end{equation*}
where $\sigBH(\expF_x (\eps v))$ comes from the change of measure from the Busemann--Hausdorff measure to the Lebesgue measure on $T_x \gM$, and $\det(d(\expF_x)_{\eps v})$ comes from the change of variable $y = \expF_x (\eps v)$. This proof is split in two parts, the expansion of $\sigBH(\expF_x (\eps v))$ and the expansion of $\det(d(\expF_x)_{\eps v})$.

\emph{Expansion of $(\star)$.} By definition of the distortion (\cref{def:distortion}), we have, for every $u \in T_z\gM\setminus\{0\}$, that $\sigBH(z) = \exp(-\tau(z,u))\sqrt{\det g(z,u)}$. The left-hand side does not depend on $u$, so that we may choose the velocity of the geodesic $\gamma$, with $\gamma(0) = x$ and $\dot\gamma(0) = v$, at both of its ends:
\begin{equation*}
  \sigBH(x) = \exp(-\tau(x,v))\sqrt{\det g(x,v)}, \qquad \sigBH(\expF_x(\eps v)) = \exp(-\tau(\gamma(\eps),\dot\gamma(\eps)))\sqrt{\det g(\gamma(\eps),\dot\gamma(\eps))}.
\end{equation*}
Taking the ratio of these two identities, we have that
\begin{equation*}
  \sigBH(\expF_x(\eps v)) = \sigBH(x) \frac{\sqrt{\det g(\gamma(\eps),\dot\gamma(\eps))}}{\sqrt{\det g(x,v)}} \exp\big(\tau(x,v) - \tau(\gamma(\eps),\dot\gamma(\eps))\big).
\end{equation*}
By definition of the S-curvature (\cref{def:s_curvature}), and since $t \mapsto \gamma(s+t)$ is the geodesic with initial conditions $(\gamma(s),\dot\gamma(s))$, the first two derivatives of $t \mapsto \tau(\gamma(t),\dot\gamma(t))$ are $S(\gamma(t),\dot\gamma(t))$ and $\dot S(\gamma(t),\dot\gamma(t))$, so that
\begin{equation*}
  \tau(\gamma(\eps),\dot\gamma(\eps)) - \tau(x,v) = \eps S(x,v) + \frac{\eps^2}{2}\dot S(x,v) + O(\eps^3).
\end{equation*}
Hence, we have that
\begin{equation*}
  \sigBH(\expF_x(\eps v)) = \sigBH(x) \frac{\sqrt{\det g(\gamma(\eps),\dot\gamma(\eps))}}{\sqrt{\det g(x,v)}}\left(1 - \eps S(x,v) + \frac{\eps^2}{2}\left(S(x,v)^2 - \dot S(x,v)\right) + O(\eps^3)\right).
\end{equation*}

\emph{Expansion of $(\star \star)$.} Let $(e_1,\ldots,e_m)$ be a basis of $T_x\gM$ which is orthonormal for $g(x,v)$, and such that $e_m = v/\FF(x,v)$. For each $i$, we consider the variation of geodesics $\gamma_i(t,s) = \expF_x(t(v + se_i))$, and we define
  \begin{equation*}
    J_i(t) = \partial_s\gamma_i(t,0) = d(\expF_x)_{tv}(te_i),
  \end{equation*}
  which is a Jacobi field along the geodesic $\gamma(t) = \expF_x(tv)$ \citep[Prop. 5.5]{ohtaComparisonFinslerGeometry2021}. For the sake of completeness, we check here its initial conditions $J_i(0)$ and $\dot{J}_i(0)$. First,
  \begin{equation*}
    J_i(0) = \partial_s\gamma_i(0,0) = \partial_s x = 0.
  \end{equation*}
  Second, $\partial_t\gamma_i(0,s) = v + se_i$, and by the symmetry of second derivatives\footnote{While the exponential map is only $C^1$ at $0$, the Jacobi fields admit a $C^\infty$ extension, see \citep[Lemma 5.4.1]{bao2012introduction}.}, we have that $\partial_t\partial_s\gamma_i(0,0) = \partial_s\partial_t\gamma_i(0,0)$. Thus, we get $\dot J_i(0) = e_i$. As a Jacobi field, $J_i$ satisfies the Jacobi equation \citep[Def. 5.1]{ohtaComparisonFinslerGeometry2021}:
  \begin{equation*}
    \ddot{J}_i (t) + R(t) J_i (t) = 0
  \end{equation*}
  where $R$ is the curvature tensor of the Finsler metric $\FF$, evaluated along the geodesic $\gamma(t)$. By definition, for $t=0$, we have that $R(0) J_i(0) = 0$. Hence, the Jacobi equation at $t=0$ gives $\ddot{J}_i(0) = 0$. Differentiating the Jacobi equation gives
  \begin{equation*}
    \dddot{J}_i (t) + \dot{R}(t) J_i (t) + R(t) \dot{J}_i(t) = 0,
  \end{equation*}
  which, evaluated at $t=0$, gives $\dddot{J}_i(0) = - R(0) \dot{J}_i(0) = - R(0) e_i$. To write these relations in matrix form, we express the $J_i(t)$ in the basis $(E_1(t),\ldots,E_m(t))$ of $T_{\gamma(t)}\gM$ obtained by parallel transport of $(e_1,\ldots,e_m)$ along $\gamma$, for the Chern connection with reference vector $\dot\gamma$. This basis remains orthonormal for $g(\gamma(t),\dot\gamma(t))$, and the covariant derivatives along $\gamma$ are the ordinary derivatives of the components \citep{ohtaComparisonFinslerGeometry2021}. Letting $\rmJ(t)$ be the matrix of the components of $J_1(t),\ldots,J_m(t)$ in this basis, and $\rmR(t)$ the matrix of $R(t)$, we have that $\rmJ(0) = \ddot{\rmJ}(0) = 0$, $\dot{\rmJ}(0) = I_m$ and $\dddot{\rmJ}(0) = - \rmR(0)$. Hence, the Taylor expansion of $\rmJ$ around $0$ is given by
  \begin{equation*}
    \rmJ(t) = t I_m - \frac{t^3}{6} \rmR(0) + O(t^4).
  \end{equation*}
  Since $J_i(\eps)/\eps = d(\expF_x)_{\eps v}e_i$, the matrix $\rmJ(\eps)/\eps$ represents $d(\expF_x)_{\eps v}$ in the orthonormal bases $(e_i)$ of $T_x\gM$ and $(E_i(\eps))$ of $T_{\gamma(\eps)}\gM$. Its determinant is thus
  \begin{align*}
    \det\left(\frac{1}{\eps}\rmJ(\eps)\right) &= \det\left(I_m - \frac{\eps^2}{6} \rmR(0) + O(\eps^3)\right) \\
    &= 1 - \frac{\eps^2}{6} \operatorname{tr}(\rmR(0)) + O(\eps^3) \\
    &= 1 - \frac{\eps^2}{6} \RicF(v) + O(\eps^3),
  \end{align*}
  where $\RicF(v)$ is the Ricci curvature of $\FF$ in the direction $v$, \ie the trace of $R_v$ in a basis which is orthonormal for $g(x,v)$. Since $e_m = v/\FF(x,v)$ and $R_v v = 0$, only the $m-1$ transverse directions contribute to this trace.

  Going back to the original basis, the determinant in $(\star\star)$ is taken in the coordinate bases of $T_x\gM$ and $T_{\gamma(\eps)}\gM$, and not in the orthonormal ones. Let $\rmP$ and $\rmQ$ be the matrices whose columns are the coordinates of $e_1,\ldots,e_m$ and of $E_1(\eps),\ldots,E_m(\eps)$. The matrix of $d(\expF_x)_{\eps v}$ in the coordinate bases is $\rmQ (\rmJ(\eps)/\eps) \rmP^{-1}$, so that
  \begin{equation*}
    \det\big(d(\expF_x)_{\eps v}\big) = \frac{\det\rmQ}{\det\rmP} \det\left(\frac{1}{\eps}\rmJ(\eps)\right).
  \end{equation*}
  The orthonormality of the two bases reads $\rmP^\top g(x,v) \rmP = I_m$ and $\rmQ^\top g(\gamma(\eps),\dot\gamma(\eps)) \rmQ = I_m$, so that $\det\rmP = \det g(x,v)^{-1/2}$ and $\det\rmQ = \det g(\gamma(\eps),\dot\gamma(\eps))^{-1/2}$. Both are positive when $(e_i)$ is positively oriented, since the determinant of the coordinates of $(E_i(t))$ is then continuous in $t$, never vanishes, and equals $\det\rmP$ at $t = 0$. Hence
  \begin{equation*}
    \det\big(d(\expF_x)_{\eps v}\big) = \frac{\sqrt{\det g(x,v)}}{\sqrt{\det g(\gamma(\eps),\dot\gamma(\eps))}}\left(1 - \frac{\eps^2}{6}\RicF(v) + O(\eps^3)\right).
  \end{equation*}

  \emph{Conclusion.} Multiplying the expansions of $(\star)$ and $(\star\star)$, the two ratios of $\sqrt{\det g}$ cancel exactly, and
  \begin{equation*}
    \Jac(\eps v) = \sigBH(x)\left(1 - \eps S(x,v) + \frac{\eps^2}{2}\left(S(x,v)^2 - \dot S(x,v)\right) + O(\eps^3)\right)\left(1 - \frac{\eps^2}{6}\RicF(v) + O(\eps^3)\right).
  \end{equation*}
  Expanding the product, the cross term $\eps S(x,v)\cdot\frac{\eps^2}{6}\RicF(v)$ is of order $\eps^3$, so that
  \begin{equation*}
    \Jac(\eps v) = \sigBH(x)\left(1 - \eps S(x,v) + \frac{\eps^2}{2}\left( S(x,v)^2 - \dot{S}(x,v) - \frac{1}{3}\RicF (v) \right) + \bigO\big(\eps^3\big)\right).
  \end{equation*}
 This concludes the proof.
\end{proof}

\subsubsection*{Proof of the moment expansion of the kernel}

We can now prove the moment expansion of the kernel of \cref{thm:moment_expansion}. The proof is a direct application of the previous lemmas. We recall that it states that, for any $f:\gM\rightarrow \R$ smooth enough,
\begin{align*}
  \gG_\FF [f] (x) = m_0 f + \eps \gA_1[f](x) + \frac{\eps^2}{2} \gA_2[f](x) + \bigO(\eps^3).
\end{align*}

\begin{proof}
The proof utilizes \cref{lemma:integral_localization} and both expansions of $f(\expF_x(\eps v))$ and $\Jac( \eps v)$ in $\eps$ (\cref{lemma:f_expansion,lemma:jacobian_expansion}). Putting those together, it holds that:
\begin{align*}
  \gG_\FF [f] (x) &= \int_{\tball_x(r_0 / \eps)} K(\FF(x,v)) f(\expF_x(\eps v)) \Jac( \eps v)  \diff\leb_x(v) + O(\eps^3) \\
  &= \sigBH(x) \int_{\tball_x(r_0 / \eps)} K(\FF(x,v)) \bigg[ f(x) + \eps \partial_i f(x) v^i \\
  & \quad + \frac{\eps^2}{2} \left(\partial_j \partial_i f(x) - \partial_k f(x) \ChrisF^k_{ij}(x, v) \right)v^j v^i + O(\eps^3) \bigg] \\
  & \qquad \times \bigg[1 - \eps S(x,v) + \frac{\eps^2}{2} \left( S(x,v)^2 - \dot S(x,v) - \frac{1}{3} \RicF (x, v) \right) + O(\eps^3)\bigg] \diff\leb_x(v) + O(\eps^3).
\end{align*}
Expanding the product and keeping the terms up to order $\eps^2$, the integrand is $K(\FF(x,v))$ times
\begin{align*}
  f(x) &+ \eps \left( \partial_i f(x) v^i - S(x,v) f(x) \right) \\
  &+ \frac{\eps^2}{2} \Bigg[ \partial_i \partial_j f(x) v^i v^j - \partial_k f(x) \ChrisF^k_{ij}(x,v) v^i v^j - 2 S(x,v) \partial_k f(x) v^k \\
  & \qquad + \left( S(x,v)^2 - \dot S(x,v) - \frac{1}{3} \RicF(x,v) \right) f(x) \Bigg],
\end{align*}
up to terms of order $\eps^3$ and higher. The moments of \cref{def:kernel_moments} are integrals over the whole tangent space $T_x\gM$, whereas this integral is over $\tball_x(r_0/\eps)$. As in \cref{lemma:integral_localization}, the sub-exponential decay of $K$ (A1) makes the integral over $T_x\gM \setminus \tball_x(r_0/\eps)$ exponentially small, so that the domain can be extended to $T_x\gM$ up to $O(\eps^3)$. By collecting the terms of order $0$, $\eps$ and $\eps^2$, and including back the density $\sigBH(x)$, we recover the definitions of the moments (\cref{def:kernel_moments}) and thus
\begin{align*}
  \gG_\FF [f] (x) &=  m_0(x) f(x) + \eps \left( m_1^i (x) \partial_i f(x) - s_0(x) f(x) \right) \\
  & \qquad + \frac{\eps^2}{2} \Big(m_2^{ij} (x) \partial_i \partial_j f(x) - \left(\Lambda^k(x) + 2 s_1^k(x)\right) \partial_k f(x) + \Upsilon(x) f(x) \Big) + \bigO(\eps^3).
\end{align*}
Hence to recover the operators $\gA_1$ and $\gA_2$ of \cref{thm:moment_expansion}, we define:
\begin{align*}
  \gA_1[f](x) &= m_1^i (x) \partial_i f(x) - s_0(x) f(x), \\
  \gA_2[f](x) &= m_2^{ij} (x) \partial_i \partial_j f(x) - \left(\Lambda^k(x) + 2 s_1^k(x)\right) \partial_k f(x) + \Upsilon(x) f(x).
\end{align*}
\end{proof}

\subsection{Proof of \texorpdfstring{\cref{thm:sym_limit}}{}} \label{appendix:proofs:prop:sym_limit}

This subsection is devoted to the proof of \cref{thm:sym_limit}, which builds upon the previous expansion of the kernel. To do so, we first prove some properties of the reverse Finsler metric and its nonlinear connection.

\subsubsection*{Lemmas on the reverse Finsler metric and nonlinear connection}

Some of the following properties are stated in \citet[Section 2.5]{ohtaComparisonFinslerGeometry2021}, but are not proved there. We provide here proofs for completeness.

\begin{proposition}[Geodesic spray of the reverse Finsler metric] \label{property:spray_reverse}
  Let $\backFF(x,v) = \FF(x, -v)$, and let $G^k$ and $\backG^k$ be the geodesic spray coefficients of $\FF$ and $\backFF$, respectively. Then, we have that $\backG^k(x,v) = G^k(x,-v)$ for all $v \in T_x\gM$.
\end{proposition}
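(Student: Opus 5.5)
The plan is a direct computation in local coordinates from the explicit formula for the spray coefficients. For any Finsler metric, the geodesic spray coefficients are
\begin{equation*}
  G^k(x,v) = \frac{1}{4} g^{kl}(x,v)\left( \frac{\partial^2 \FF^2}{\partial x^j \partial v^l}(x,v)\, v^j - \frac{\partial \FF^2}{\partial x^l}(x,v) \right),
\end{equation*}
up to the normalization constant used in \citet{ohtaComparisonFinslerGeometry2021}. That constant is the same for $\FF$ and $\backFF$, so it plays no role. I will apply this formula to $\backFF^2(x,v) = \FF^2(x,-v)$ and track each ingredient under the substitution $v \mapsto -v$. All derivatives are taken on the slit tangent bundle, where both metrics are smooth.

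The key steps are three chain-rule identities. First, the fundamental tensor. Differentiating $\FF^2(x,-v)$ twice in $v$ produces two factors of $-1$, which cancel. Hence $\backg_{ij}(x,v) = g_{ij}(x,-v)$, and therefore $\backg^{kl}(x,v) = g^{kl}(x,-v)$. This is consistent with the proof of \cref{property:finsler_reverse}.

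Second, the mixed term. We have
\begin{equation*}
  \partial_{v^l}\backFF^2(x,v) = -(\partial_{v^l}\FF^2)(x,-v).
\end{equation*}
Differentiating in $x^j$ does not affect the sign. Contracting with $v^j$ then gives
\begin{equation*}
  -(\partial_{x^j}\partial_{v^l}\FF^2)(x,-v)\, v^j = (\partial_{x^j}\partial_{v^l}\FF^2)(x,-v)\,(-v)^j .
\end{equation*}
This is exactly the corresponding term for $\FF$ evaluated at $(x,-v)$.

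Third, the pure position derivative satisfies $\partial_{x^l}\backFF^2(x,v) = (\partial_{x^l}\FF^2)(x,-v)$ trivially.

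Substituting these three identities into the formula shows that every factor of $\backG^k(x,v)$ equals the corresponding factor of $G^k$ evaluated at $(x,-v)$. This gives $\backG^k(x,v) = G^k(x,-v)$. The case $v = 0$ follows by continuity, or from the convention $G^k(x,0) = 0$.

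The only delicate point is the sign bookkeeping in the mixed term. The contraction with $v^j$ must be rewritten as a contraction with $(-v)^j$, so that the odd sign from the single $v$-derivative is absorbed; I expect this to be the main obstacle, though a minor one.

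As a consistency check, I can also use \cref{property:finsler_reverse}. If $\gamma$ is an $\FF$-geodesic, then $\sigma(t) = \gamma(-t)$ is a $\backFF$-geodesic. Moreover $\ddot\sigma(t) = \ddot\gamma(-t)$ and $\dot\sigma(t) = -\dot\gamma(-t)$. Comparing the two geodesic equations $\ddot\gamma^k + 2G^k(\gamma,\dot\gamma) = 0$ and $\ddot\sigma^k + 2\backG^k(\sigma,\dot\sigma) = 0$ then yields the same identity. This works because both sprays are determined by their geodesics and every $(x,v)$ is the initial datum of some geodesic.
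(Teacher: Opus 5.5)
Your proposal is correct and follows the paper's proof essentially step for step. Both apply the explicit spray formula to $\backFF^2(x,v)=\FF^2(x,-v)$, show $\backg_{ij}(x,v)=g_{ij}(x,-v)$, absorb the single sign from the $v$-derivative in the mixed term by contracting against $(-v)^i$, and note that the pure $x$-derivative is unchanged. Your geodesic-reversal cross-check is also sound, provided you show that $t\mapsto\gamma(-t)$ is a $\backFF$-geodesic without this identity (e.g.\ via short minimizing segments or the energy functional), since the paper itself uses the identity to prove that fact in \cref{lemma:S_reversal}.
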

\begin{proof}
  We first recall the relevant notations. For $x \in \gM$ and $v \in T_x\gM$, let $g_{ij}(x,v)$ and $\backg_{ij}(x,v)$ be the fundamental tensors of $\FF$ and $\backFF$, respectively. By \cref{def:fundamental_tensor}, we have that:
  \begin{equation*}
    g_{ij}(x,v) = \frac{1}{2} \frac{\partial^2 \FF^2(x,v)}{\partial v^i \partial v^j} \quad \textnormal{and} \quad \backg_{ij}(x,v) = \frac{1}{2} \frac{\partial^2 \backFF^2(x,v)}{\partial v^i \partial v^j}.
  \end{equation*}
  And the geodesic spray coefficients \citep{ohtaComparisonFinslerGeometry2021} are given by:
  \begin{align*}
    G^k(x,v) &= \frac{1}{4} g^{kl}(x,v) \left( \frac{\partial^2 \FF^2(x,v)}{\partial x^i \partial v^l} v^i - \frac{\partial \FF^2(x,v)}{\partial x^l} \right), \\
    \backG^k(x,v) &= \frac{1}{4} \backg^{kl}(x,v) \left( \frac{\partial^2 \backFF^2(x,v)}{\partial x^i \partial v^l} v^i - \frac{\partial \backFF^2(x,v)}{\partial x^l} \right).
  \end{align*}
  To start, we show that the fundamental tensor of the reverse Finsler metric is related to the fundamental tensor of the original Finsler metric by $\backg_{ij}(x,v) = g_{ij}(x,-v)$. Indeed:
  \begin{align*}
    \backg_{ij}(x,v) &= \frac{1}{2} \frac{\partial^2 \backFF^2(x,v)}{\partial v^i \partial v^j} = \frac{1}{2} \frac{\partial^2 \FF^2(x,-v)}{\partial v^i \partial v^j} = g_{ij}(x,-v).
  \end{align*}
  This implies that $\backg^{kl}(x,v) = g^{kl}(x,-v)$. Similarly,
  \begin{equation*}
    \frac{\partial^2 \backFF^2(x,v)}{\partial x^i \partial v^l} = -\frac{\partial^2 \FF^2(x,-v)}{\partial x^i \partial v^l} \quad \textnormal{and} \quad \frac{\partial \backFF^2(x,v)}{\partial x^l} = \frac{\partial \FF^2(x,-v)}{\partial x^l}.
  \end{equation*}
  Putting everything together, we have that:
  \begin{align*}
    \backG^k(x,v) &= \frac{1}{4} g^{kl}(x,-v) \left( -\frac{\partial^2 \FF^2(x,-v)}{\partial x^i \partial v^l} v^i - \frac{\partial \FF^2(x,-v)}{\partial x^l} \right) \\
    &= \frac{1}{4} g^{kl}(x,-v) \left( \frac{\partial^2 \FF^2(x,-v)}{\partial x^i \partial v^l} (-v)^i - \frac{\partial \FF^2(x,-v)}{\partial x^l} \right) \\
    &= G^k(x,-v).
  \end{align*}
  This concludes the proof.
\end{proof}

Next, we define the nonlinear connection of a Finsler metric, which is a key quantity in Finsler geometry. It is used to define the curvature tensor and the Ricci curvature of a Finsler metric, which appeared in \cref{thm:moment_expansion}.
\begin{definition}[Nonlinear connection] \label{def:nonlinear_connection}
  The nonlinear connection of a Finsler metric $\FF$ is defined as:
  \begin{equation*}
    N^i_j(x,v) = \frac{\partial G^i}{\partial v^j}(x,v).
  \end{equation*}
\end{definition}

\begin{lemma}[Nonlinear connection under reversal] \label{lemma:N_reversal}
  Let $\backN^i_j$ and $N^i_j$ be the nonlinear connections of $\backFF$ and $\FF$ respectively. Then $\backN^i_j(x,v) = -N^i_j(x,-v)$.
\end{lemma}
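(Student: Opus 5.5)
We differentiate the identity of \cref{property:spray_reverse} with respect to the velocity variable. By \cref{def:nonlinear_connection}, the nonlinear connection of $\backFF$ is
\begin{equation*}
  \backN^i_j(x,v) = \frac{\partial \backG^i}{\partial v^j}(x,v).
\end{equation*}
By \cref{property:spray_reverse}, $\backG^i(x,v) = G^i(x,-v)$ for every $v \in T_x\gM$. This is an identity of functions of $v$ on the slit tangent space $T_x\gM\setminus\{0\}$, where both sides are smooth under (A2). We may therefore differentiate it in $v^j$.

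By the chain rule applied to the linear map $v \mapsto -v$,
\begin{equation*}
  \frac{\partial}{\partial v^j}\big[G^i(x,-v)\big] = \frac{\partial G^i}{\partial v^j}(x,-v)\cdot \frac{\partial(-v^j)}{\partial v^j} = -\frac{\partial G^i}{\partial v^j}(x,-v).
\end{equation*}
The right-hand side is $-N^i_j(x,-v)$, which gives $\backN^i_j(x,v) = -N^i_j(x,-v)$ for all $v \neq 0$.

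The only point requiring care is the zero section, where $G^i$ is only $C^1$ in general, being positively $2$-homogeneous. If $N^i_j(x,0)$ is to be included, it is defined as the derivative at $0$ of the $C^1$ function $G^i$. We can then either note that the same chain-rule computation holds for $C^1$ functions, or extend the identity by continuity from $v \neq 0$. Either way the statement also holds at $v=0$.

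As a sanity check, homogeneity is consistent with the result. The spray $G^i$ is $2$-homogeneous, so $N^i_j$ is $1$-homogeneous. In the Riemannian case $N^i_j(x,v)=\Gamma^i_{jk}(x)v^k$ is linear, and the formula gives $\backN=N$, as expected since then $\backFF=\FF$.

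I expect no real obstacle here. The only step needing attention is the sign bookkeeping in the chain rule: the derivative is taken with respect to $v$ and then evaluated at $-v$, rather than being taken with respect to the argument $-v$.
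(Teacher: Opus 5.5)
Your proof is correct and follows the paper's argument exactly: you differentiate the identity $\backG^i(x,v) = G^i(x,-v)$ from \cref{property:spray_reverse} in $v^j$, and the chain rule produces the sign. Your remark about the zero section goes slightly beyond what the paper addresses and does no harm; in your chain-rule display the summed index should be written as $\frac{\partial G^i}{\partial v^k}(x,-v)\,\frac{\partial(-v^k)}{\partial v^j}$, though the result is unchanged.
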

\begin{proof}
  Using \cref{property:spray_reverse} ($\backG^i(x,v) = G^i(x,-v)$) and the chain rule,
  \begin{align*}
    \backN^i_j(x,v) &= \frac{\partial \backG^i}{\partial v^j}(x,v)
    = \frac{\partial}{\partial v^j}\left[G^i(x,-v)\right]
    = - \frac{\partial G^i}{\partial v^j}(x,-v)
    = - N^i_j(x,-v). \qedhere
  \end{align*}
\end{proof}
This allows us to prove the following property of the Ricci curvature of the reverse Finsler metric.
\begin{proposition}[Ricci curvature of the reverse Finsler metric] \label{property:ricci_reverse}
  Let $\backRicF$ and $\RicF$ be the Ricci curvatures of $\backFF$ and $\FF$ respectively. Then $\backRicF(x,v) = \RicF(x,-v)$.
\end{proposition}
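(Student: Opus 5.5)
The plan is to reduce the statement to the behaviour of the curvature tensor under reversal, using \cref{property:spray_reverse} and \cref{lemma:N_reversal}. In coordinates, the Riemann curvature of a Finsler metric (the operator $R_v$ appearing in the Jacobi equation in \cref{lemma:jacobian_expansion}) can be written in terms of the spray as
\begin{equation*}
  R^i_k(x,v) = 2\frac{\partial G^i}{\partial x^k} - v^j \frac{\partial^2 G^i}{\partial x^j \partial v^k} + 2 G^j \frac{\partial^2 G^i}{\partial v^j \partial v^k} - N^i_j N^j_k,
\end{equation*}
and $\RicF(x,v) = R^i_i(x,v)$ is its trace. I will establish $\backR^i_k(x,v) = R^i_k(x,-v)$, where $\backR$ is the curvature of $\backFF$; taking traces then gives the claim.

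I will go term by term, substituting $\backG^i(x,v) = G^i(x,-v)$ and $\backN^i_j(x,v) = -N^i_j(x,-v)$, and tracking signs with the chain rule.
\begin{itemize}
  \item The first term has no $v$-derivative, so $\partial_{x^k}\backG^i(x,v) = (\partial_{x^k} G^i)(x,-v)$.
  \item In the second term, one $v$-derivative gives a factor $-1$. It combines with the explicit $v^j = -(-v)^j$ to give $+(-v)^j(\partial^2_{x^j v^k} G^i)(x,-v)$.
  \item In the third term, two $v$-derivatives give $(+1)$, and $\backG^j(x,v) = G^j(x,-v)$, so the term is unchanged in form at $-v$.
  \item In the last term, the two minus signs from \cref{lemma:N_reversal} cancel in the product $\backN^i_j \backN^j_k$.
\end{itemize}
Summing the four terms yields $\backR^i_k(x,v) = R^i_k(x,-v)$.

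\textbf{Main obstacle.} The delicate point is committing to a convention for $R^i_k$ that is consistent with the one used in the Jacobi equation of \cref{lemma:jacobian_expansion}, following \citet{ohtaComparisonFinslerGeometry2021}. Some conventions use a factor $2$ on $G^i$ differently, and some write the curvature with $N$ and its horizontal derivatives, for instance $R^i_k = \delta_k G^i$-type expressions with $\delta_k = \partial_{x^k} - N^j_k \partial_{v^j}$.

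Each such convention is a polynomial in the spray, its derivatives and $v$. In every monomial, the number of $v$-derivatives plus the number of explicit factors of $v$ is even. Thus the sign bookkeeping always cancels in the same way, and the argument is convention-independent.

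As a sanity check I would also note the alternative geometric route. The Jacobi fields of $\backFF$ along the reversed geodesic $t \mapsto \gamma(-t)$ are the Jacobi fields of $\FF$ reparametrized, which follows from \cref{property:finsler_reverse}. Since the Jacobi equation is invariant under $t \mapsto -t$, it directly gives $\backR_v = R_{-v}$, and hence the same trace identity.
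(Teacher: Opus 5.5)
Your proposal is correct and follows essentially the same route as the paper: write the curvature coefficients in coordinates in terms of the spray and nonlinear connection, substitute $\backG^i(x,v) = G^i(x,-v)$ and $\backN^i_j(x,v) = -N^i_j(x,-v)$ term by term to obtain $\backR^i_j(x,v) = R^i_j(x,-v)$, and then take the trace. The differences are cosmetic: you write the middle terms with second derivatives of $G^i$ rather than first derivatives of $N^i_j$, and you take the trace directly as $R^i_i$, which is basis-independent, whereas the paper computes it in a $g(x,v)$-orthonormal basis.
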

\begin{proof}
  We first introduce a few auxiliary quantities. The curvature tensor of a Finsler metric $\FF$ is defined as
  \begin{align*}
    R^i_j(x,v) =  \frac{\partial G^i}{\partial x^j}(x,v)
                  -
                  \sum_{k=1}^m \left(
                      \frac{\partial N^i_j}{\partial x^k }(x,v) v^k -
                      \frac{\partial N^i_j}{\partial v^k }(x,v) G^k(x,v)
                      \right)
                  -
                  \sum_{k=1}^m N^i_k(x,v) N^k_j(x,v),
  \end{align*}
  inducing a linear endomorphism $R_v(x, \cdot): T_x\gM \to T_x\gM$ defined as $R_v(x,w) = \sum_{i,j=1}^m R^i_j(x,v) w^j \frac{\partial}{\partial x^i}\big\rvert_{x}$. The Ricci curvature is the trace of this endomorphism, for $v \neq 0$:
  \begin{align*}
    \RicF(x,v) = \sum_{i=1}^{m-1} g(x,v)_{kl} R_v(x,e_i)^k e_i^l,
  \end{align*}
  where $\mathcal{B}=\{ v/\FF(x,v)\} \cup \{e_i\}_{i=1}^{m-1} \subset T_x\gM$ is an orthonormal basis with respect to the fundamental tensor $g(x,v)$ of $\FF$. We similarly define $\backR^i_j(x,v)$, $\backR_v(x,w)$ and $\backRicF(x,v)$ for the reverse Finsler metric $\backFF$.

  By \cref{lemma:N_reversal}, $\backN^i_j(x,v) = -N^i_j(x,-v)$. Differentiating this identity in $x^k$ (which passes through unchanged, as in the proof of \cref{property:spray_reverse}) and in $v^k$ (which picks up an extra sign, as in the proof of \cref{lemma:N_reversal} itself) gives
  \begin{equation*}
    \frac{\partial \backN^i_j}{\partial x^k}(x,v) = -\frac{\partial N^i_j}{\partial x^k}(x,-v), \qquad
    \frac{\partial \backN^i_j}{\partial v^k}(x,v) = \frac{\partial N^i_j}{\partial v^k}(x,-v).
  \end{equation*}
  Together with $\backG^k(x,v) = G^k(x,-v)$ (\cref{property:spray_reverse}), we get
  \begin{align*}
    \frac{\partial \backN^i_j}{\partial x^k }(x,v) v^k -
                      \frac{\partial \backN^i_j}{\partial v^k }(x,v) \backG^k(x,v)
                      &= -\frac{\partial N^i_j}{\partial x^k }(x,-v) v^k -
                      \frac{\partial N^i_j}{\partial v^k }(x,-v) G^k(x,-v) \\
                      &= \frac{\partial N^i_j}{\partial x^k }(x,-v) (-v)^k -
                      \frac{\partial N^i_j}{\partial v^k }(x,-v) G^k(x,-v),
  \end{align*}
  and $\sum_k \backN^i_k(x,v)\backN^k_j(x,v) = \sum_k N^i_k(x,-v) N^k_j(x,-v)$ (product of two sign-flipped terms). By \cref{property:spray_reverse}:
  \begin{equation*}
    \frac{\partial \backG^i}{\partial x^j}(x,v) = \frac{\partial G^i}{\partial x^j}(x,-v).
  \end{equation*}
  Putting everything together, we have that
  \begin{align*}
    \backR^i_j(x,v) &= \frac{\partial G^i}{\partial x^j}(x,-v) - \sum_{k=1}^m \left(
                      \frac{\partial N^i_j}{\partial x^k }(x,-v) (-v)^k -
                      \frac{\partial N^i_j}{\partial v^k }(x,-v) G^k(x,-v)
                    \right) \\
                    &\qquad - \sum_{k=1}^m N^i_k(x,-v) N^k_j(x,-v)
                    = R^i_j(x,-v),
  \end{align*}
  so that $\backR_v(x,w) = R_{-v}(x,w)$. Since the fundamental tensor of $\backFF$ is $\backg(x,v) = g(x,-v)$ (\cref{property:spray_reverse}'s proof), a basis orthonormal for $g(x,-v)$ is also orthonormal for $\backg(x,v)$, so we may use the same basis $\mathcal B$ for both. Hence
  \begin{align*}
    \backRicF(x,v) &= \sum_{i=1}^{m-1} \backg(x,v)_{kl} \backR_v(x,e_i)^k e_i^l
    = \sum_{i=1}^{m-1} g(x,-v)_{kl} R_{-v}(x,e_i)^k e_i^l
    = \RicF(x,-v). \qedhere
  \end{align*}
\end{proof}

Finally, we need the behavior of the S-curvature under reversal. Since $-\tball_x$ and $\tball_x$ have the same volume\footnote{The unit ball of $\backFF$ at $x$ is $-\tball_x$, and the linear map $v \mapsto -v$ has determinant $\pm 1$, so that it preserves the Lebesgue measure $\leb_x$. Hence $\leb_x(-\tball_x) = \leb_x(\tball_x)$ and, by \cref{eq:busemann_haussdorf_measure}, $\FF$ and $\backFF$ have the same density $\sigBH$.}, $\FF$ and $\backFF$ share the Busemann--Hausdorff measure, and we denote by $\backtau$, $\backS$ and $\dot{\backS}$ the distortion, the S-curvature and its derivative along geodesics of $\backFF$, with respect to $\mBH$.

\begin{lemma}[S-curvature under reversal] \label{lemma:S_reversal}
  For every $x \in \gM$ and $v \in T_x\gM \setminus \{0\}$, we have that $\backtau(x,v) = \tau(x,-v)$, $\backS(x,v) = -S(x,-v)$ and $\dot{\backS}(x,v) = \dot S(x,-v)$.
\end{lemma}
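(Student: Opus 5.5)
The plan is to derive the three identities in order, each from the previous one, using the definitions in \cref{def:distortion} together with the reversal results already established: $\backg_{ij}(x,v) = g_{ij}(x,-v)$ from the proof of \cref{property:spray_reverse}, and \cref{property:finsler_reverse} for geodesics. We also use the fact, noted just before the lemma, that $\FF$ and $\backFF$ share the density $\sigBH$.

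\emph{Distortion.} By \cref{def:distortion} applied to $\backFF$ with respect to $\mBH$,
\begin{equation*}
  \backtau(x,v) = \log\left(\frac{\sqrt{\det \backg(x,v)}}{\sigBH(x)}\right) = \log\left(\frac{\sqrt{\det g(x,-v)}}{\sigBH(x)}\right) = \tau(x,-v),
\end{equation*}
since the fundamental tensors are related by $\backg(x,v) = g(x,-v)$ and the Busemann--Hausdorff densities coincide.

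\emph{S-curvature.} Let $\backgamma$ be the $\backFF$-geodesic with $\backgamma(0)=x$ and $\dot\backgamma(0)=v$. By \cref{property:finsler_reverse}, geodesics of $\backFF$ are reversed geodesics of $\FF$. Hence $\backgamma(t) = \gamma(-t)$, where $\gamma$ is the $\FF$-geodesic with $\gamma(0)=x$ and $\dot\gamma(0) = -v$. To see this directly without the global minimizing statement, note that $\sigma(t) = \gamma(-t)$ satisfies $\ddot\sigma^k = \ddot\gamma^k(-t) = -2G^k(\gamma(-t),\dot\gamma(-t)) = -2G^k(\sigma,-\dot\sigma) = -2\backG^k(\sigma,\dot\sigma)$ by \cref{property:spray_reverse}. (The normalization constant in front of $G^k$ is irrelevant here.) So $\sigma$ solves the $\backFF$ geodesic equation, and by uniqueness $\backgamma=\sigma$. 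Then $\dot\backgamma(t) = -\dot\gamma(-t)$, and
\begin{equation*}
  \backtau(\backgamma(t),\dot\backgamma(t)) = \tau(\gamma(-t),\dot\gamma(-t)).
\end{equation*}
Differentiating at $t=0$ gives $\backS(x,v) = -S(x,-v)$.

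\emph{Derivative of the S-curvature.} Differentiating the same identity once more gives the derivative of $S$ along the flow:
\begin{equation*}
  \frac{\diff}{\diff t}\backS(\backgamma(t),\dot\backgamma(t)) = \frac{\diff}{\diff t}\big[-S(\gamma(-t),\dot\gamma(-t))\big] = \dot S(\gamma(-t),\dot\gamma(-t)).
\end{equation*}
At $t=0$ this reads $\dot{\backS}(x,v) = \dot S(x,-v)$. Here we use that $\backS(\backgamma(t),\dot\backgamma(t)) = -S(\gamma(-t),\dot\gamma(-t))$ holds for all $t$, which is the previous step applied at every point of the geodesic.

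The only delicate point is identifying the reversed geodesic as a geodesic in the ODE sense, since \cref{property:finsler_reverse} is stated for minimizers. I handle this locally via the spray identity of \cref{property:spray_reverse}. Everything else is bookkeeping of signs from the chain rule.
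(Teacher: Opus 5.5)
Your proposal is correct and follows the paper's proof essentially step for step. You get the distortion identity from $\backg(x,v)=g(x,-v)$ and the shared density $\sigBH$. You then use the spray identity to show that $t\mapsto\gamma(-t)$ is the $\backFF$-geodesic through $(x,v)$, and differentiate along it. The only cosmetic difference is that you obtain $\dot{\backS}$ by differentiating the pointwise identity $\backS = -S(\cdot,-\cdot)$ once, whereas the paper differentiates the $\backtau$ identity twice; the two are equivalent.
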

\begin{proof}
  The distortion only involves the fundamental tensor and the density $\sigBH$. Since $\backg_{ij}(x,v) = g_{ij}(x,-v)$ (see the proof of \cref{property:spray_reverse}) and $\sigBH$ is shared by $\FF$ and $\backFF$, we get $\backtau(x,v) = \tau(x,-v)$.

  Let $\gamma$ be the geodesic of $\FF$ with $\gamma(0) = x$ and $\dot\gamma(0) = -v$, defined on some interval $(-\delta, \delta)$, and let $\bar\gamma(t) = \gamma(-t)$, so that $\bar\gamma(0) = x$ and $\dot{\bar\gamma}(0) = v$. Using the geodesic equation of $\FF$ and \cref{property:spray_reverse}, we have that
  \begin{equation*}
    \ddot{\bar\gamma}^k(t) = \ddot\gamma^k(-t) = -2 G^k\big(\gamma(-t), \dot\gamma(-t)\big) = -2 G^k\big(\bar\gamma(t), -\dot{\bar\gamma}(t)\big) = -2 \backG^k\big(\bar\gamma(t), \dot{\bar\gamma}(t)\big),
  \end{equation*}
  so that $\bar\gamma$ is the geodesic of $\backFF$ with initial conditions $(x,v)$. Hence, for $t \in (-\delta, \delta)$,
  \begin{equation*}
    \backtau\big(\bar\gamma(t), \dot{\bar\gamma}(t)\big) = \tau\big(\bar\gamma(t), -\dot{\bar\gamma}(t)\big) = \tau\big(\gamma(-t), \dot\gamma(-t)\big).
  \end{equation*}
  Differentiating once and twice at $t = 0$ gives $\backS(x,v) = -S(x,-v)$ and $\dot{\backS}(x,v) = \dot S(x,-v)$.
\end{proof}

\begin{lemma} \label{property:moments_summary}
  The moments of the kernel for the reverse Finsler metric are related to the moments of the kernel for the Finsler metric. In particular, for $n \in \N$ we have that:
  \begin{align*}
    &\backm_n(x) = (-1)^n m_n(x), \qquad \backLambda^k(x) = \Lambda^k(x), \\
    &\backs_n(x) = (-1)^{n+1} s_n(x), \qquad \backUpsilon(x) = \Upsilon(x).
  \end{align*}
\end{lemma}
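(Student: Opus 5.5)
The plan is to derive each identity directly from the definitions in \cref{def:kernel_moments}, applied to $\backFF$, by the change of variables $v \mapsto -v$ on $T_x\gM$. Three facts established above do most of the work. First, $\backFF(x,v) = \FF(x,-v)$. Second, $\FF$ and $\backFF$ share the density $\sigBH$, as noted in the footnote before \cref{lemma:S_reversal}. Third, the map $v\mapsto -v$ preserves $\leb_x$, because its determinant has absolute value one. For instance,
\begin{equation*}
  \backm_n(x) = \sigBH(x)\int_{T_x\gM} K(\FF(x,-v))\, v^{\otimes n}\diff\leb_x(v) = \sigBH(x)\int_{T_x\gM} K(\FF(x,u))\,(-u)^{\otimes n}\diff\leb_x(u) = (-1)^n m_n(x).
\end{equation*}

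For the other moments, we first record how each geometric integrand transforms under reversal. For $s_n$, \cref{lemma:S_reversal} gives $\backS(x,v) = -S(x,-v)$. The substitution $u=-v$ then produces a factor $(-1)^n$ from $v^{\otimes n}$ and an extra $-1$ from $S$, so $\backs_n = (-1)^{n+1}s_n$.

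For $\Upsilon$, we use three reversal rules. The quadratic term satisfies $\backS(x,v)^2 = S(x,-v)^2$. \cref{lemma:S_reversal} gives $\dot{\backS}(x,v) = \dot S(x,-v)$. \cref{property:ricci_reverse} gives $\backRicF(x,v) = \RicF(x,-v)$. Hence the whole bracket evaluated at $v$ for $\backFF$ equals the bracket for $\FF$ evaluated at $-v$, and the substitution yields $\backUpsilon=\Upsilon$.

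For $\Lambda^k$, we need the reversal rule for the formal Christoffel symbols contracted with $v^iv^j$. I would take the spray relation $\ChrisF^k_{ij}(x,v)v^iv^j = 2G^k(x,v)$, which is consistent with the geodesic equation used in \cref{lemma:f_expansion} and in \cref{lemma:S_reversal}. \cref{property:spray_reverse} then gives $\backChrisF^k_{ij}(x,v)v^iv^j = 2\backG^k(x,v) = 2G^k(x,-v) = \ChrisF^k_{ij}(x,-v)(-v)^i(-v)^j$. With this rule, the substitution $u=-v$ gives $\backLambda^k = \Lambda^k$, with no sign change because the contraction is even in $v$.

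The main obstacle is matching conventions. The paper writes $\ddot\gamma^i = -\ChrisF^i_{jk}\dot\gamma^j\dot\gamma^k$ in \cref{lemma:f_expansion} but $\ddot\gamma = -2G$ in \cref{lemma:S_reversal}. The argument only needs the contraction $\ChrisF^k_{ij}(x,v)v^iv^j$ to be a fixed multiple of $G^k(x,v)$, and both conventions satisfy this. Since $G^k$ is positively $2$-homogeneous, the contraction is well defined as a function of $(x,v)$, and the proportionality constant cancels on both sides. A remaining technical point is integrability at $v=0$, where $S$ and $\RicF$ are only $0$-homogeneous. These functions are bounded, and the substitution is a measure-preserving bijection of $T_x\gM\setminus\{0\}$, so this causes no difficulty.
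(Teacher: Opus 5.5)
Your proposal is correct and follows the paper's proof essentially step by step: it factors out the shared density $\sigBH$, substitutes $u=-v$, uses $\ChrisF^k_{ij}v^iv^j = 2G^k$ together with $\backG^k(x,v) = G^k(x,-v)$ for $\Lambda^k$, and invokes the S-curvature and Ricci reversal lemmas for $s_n$ and $\Upsilon$. The only slip is in your closing side remark: $S$ and $\RicF$ are positively homogeneous of degrees $1$ and $2$ in $v$, not $0$, and this is harmless because they are then bounded near $v=0$ anyway.
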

\begin{proof}

Since $\FF$ and $\backFF$ share the density $\sigBH$, which does not depend on $v$, it factors out of all the integrals below, and we omit it. First, we show the relation between $\backm_n$ and $m_n$. By the change of variable $u = -v$,
\begin{align*}
  \backm_n(x) &= \int_{T_x\gM} K(\backFF(x,v)) v^{\otimes n} \diff\leb_x(v) \\
  &= \int_{T_x\gM} K(\FF(x,-u)) (-u)^{\otimes n} \diff\leb_x(u) \\
  &= (-1)^n \int_{T_x\gM} K(\FF(x,u)) u^{\otimes n} \diff\leb_x(u) \\
  &= (-1)^n m_n(x).
\end{align*}
The change of variable $u = -v$ has Jacobian determinant $\det(-I_m) = (-1)^m$, whose absolute value is $1$, so that it preserves $\leb_x$, and the sign factor $(-1)^n$ comes only from the $v^{\otimes n}$ term. This argument is also used below for the other terms.

It remains to treat the $\backLambda^k(x)$, $\backs_n(x)$ and $\backUpsilon(x)$ terms. For $\backLambda^k(x)$, recall that $\backChrisF^k_{ij}(v)v^i v^j = 2 \backG^k(x,v)$ and $\ChrisF^k_{ij}(x, v) v^i v^j = 2 G^k(x,v)$. Using $\backG^k(x,v) = G^k(x,-v)$ (\cref{property:spray_reverse}),
\begin{align*}
  \backLambda^k(x) &=  2 \int_{T_x\gM} K(\backFF(x,v)) \backG^k(x,v) \diff\leb_x(v) \\
  &= 2 \int_{T_x\gM} K(\FF(x,-v)) G^k(x,-v) \diff\leb_x(v) \\
  &= 2 \int_{T_x\gM} K(\FF(x,u)) G^k(x,u) \diff\leb_x(u) & \text{by change of variable $u = -v$} \\
  &= \Lambda^k(x).
\end{align*}
For the S-curvature terms, using \cref{lemma:S_reversal} and the same change of variable,
\begin{align*}
  \backs_n(x) &= \int_{T_x\gM} K(\backFF(x,v)) \backS(x,v) v^{\otimes n} \diff\leb_x(v) = - \int_{T_x\gM} K(\FF(x,-v)) S(x,-v) v^{\otimes n} \diff\leb_x(v) \\
  &= (-1)^{n+1} \int_{T_x\gM} K(\FF(x,u)) S(x,u) u^{\otimes n} \diff\leb_x(u) = (-1)^{n+1} s_n(x).
\end{align*}
Finally, for $\backUpsilon(x)$, \cref{lemma:S_reversal} gives $\backS(x,v)^2 - \dot{\backS}(x,v) = S(x,-v)^2 - \dot S(x,-v)$, and \cref{property:ricci_reverse} gives $\backRicF(x,v) = \RicF(x,-v)$, so that, by the change of variable $u = -v$,
\begin{align*}
  \backUpsilon(x) &= \int_{T_x\gM} K(\backFF(x,v)) \left( \backS(x,v)^2 - \dot{\backS}(x,v) - \frac{1}{3} \backRicF(x,v) \right) \diff\leb_x(v) \\
  &= \int_{T_x\gM} K(\FF(x,-v)) \left( S(x,-v)^2 - \dot S(x,-v) - \frac{1}{3} \RicF(x,-v) \right) \diff\leb_x(v) \\
  &= \int_{T_x\gM} K(\FF(x,u)) \left( S(x,u)^2 - \dot S(x,u) - \frac{1}{3} \RicF(x,u) \right) \diff\leb_x(u) \\
  &= \Upsilon(x).
\end{align*}
Putting everything together, we have shown that $\backm_n(x) = (-1)^n m_n(x)$, $\backLambda^k(x) = \Lambda^k(x)$, $\backs_n(x) = (-1)^{n+1} s_n(x)$ and $\backUpsilon(x) = \Upsilon(x)$, which concludes the proof.
\end{proof}

\textbf{Recap.} The goal of this subsection was to study the behavior of the reverse Finsler metric $\backFF$ and its associated transport operator $\gG_\backFF$. We have shown that the moments of the kernel for the reverse Finsler metric are related to the moments of the kernel for the Finsler metric, with $\backm_n(x) = (-1)^n m_n(x)$, $\backLambda^k(x) = \Lambda^k(x)$, $\backs_n(x) = (-1)^{n+1} s_n(x)$ and $\backUpsilon(x) = \Upsilon(x)$. This allows us to derive the moment expansion of the transport operator $\gG_\backFF$ in terms of the moments of the kernel for $\FF$. In particular, we have the following proposition.

\begin{proposition}\label{prop:reverse_moment_expansion}
  The transport operator $\gG_\backFF$ associated with the reverse Finsler metric $\backFF$ has the following moment expansion:
  \begin{align*}
    \gG_\backFF [f] =  & m_0 f - \eps \left( m_1^i \partial_i f - s_0 f \right) + \frac{\eps^2}{2} \left(m_2^{ij}  \partial_i \partial_j f - \left(\Lambda^k + 2 s_1^k\right) \partial_k f + \Upsilon f \right)  + O(\eps^3).
  \end{align*}
\end{proposition}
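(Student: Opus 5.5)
The plan is to apply \cref{thm:moment_expansion} directly to the reversed metric $\backFF$, which is a Finsler metric by \cref{property:finsler_reverse} and satisfies (A2) as well. This gives
\begin{equation*}
  \gG_\backFF[f] = \backm_0 f + \eps\big(\backm_1^i \partial_i f - \backs_0 f\big) + \frac{\eps^2}{2}\Big(\backm_2^{ij}\partial_i\partial_j f - \big(\backLambda^k + 2\backs_1^k\big)\partial_k f + \backUpsilon f\Big) + O(\eps^3),
\end{equation*}
where the barred moments are those of \cref{def:kernel_moments} built from $\backFF$. I will first check that the density $\sigBH$ entering these moments is the one of $\FF$. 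This holds because $\backFF$ has unit ball $-\tball_x$, which has the same Lebesgue volume as $\tball_x$, so the Busemann--Hausdorff measure is unchanged. The expansion of $\gG_\backFF$ is therefore an expansion with respect to the same measure $\mBH$ that defines $\gG_\FF$.

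The second step substitutes the relations of \cref{property:moments_summary}:
\begin{equation*}
  \backm_0 = m_0, \quad \backm_1 = -m_1, \quad \backm_2 = m_2, \quad \backs_0 = -s_0, \quad \backs_1 = s_1, \quad \backLambda = \Lambda, \quad \backUpsilon = \Upsilon.
\end{equation*}
The first-order coefficient then becomes $-m_1^i\partial_i f + s_0 f = -(m_1^i\partial_i f - s_0 f)$. The second-order coefficient is unchanged, since $m_2$, $\Lambda$, $s_1$ and $\Upsilon$ are all even under reversal. This is exactly the claimed expansion.

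The only point requiring care is that the $O(\eps^3)$ remainder of \cref{thm:moment_expansion} is legitimate for $\backFF$. The constants in \cref{lemma:integral_localization} and \cref{lemma:jacobian_expansion} depend on the injectivity radius, on the reversibility constant, and on bounds on curvature and on the S-curvature. All of these are shared by $\FF$ and $\backFF$: geodesics are reversed (\cref{property:finsler_reverse}), $\lambda_{\backFF}=\lambda_\FF$, $\backRicF(x,v)=\RicF(x,-v)$ (\cref{property:ricci_reverse}), and $\backS(x,v)=-S(x,-v)$, $\dot{\backS}(x,v)=\dot S(x,-v)$ (\cref{lemma:S_reversal}). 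The remainder is therefore uniform in $x$ with comparable constants.

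I expect no real obstacle beyond this bookkeeping; the result is a direct corollary of \cref{thm:moment_expansion} together with \cref{property:moments_summary}.
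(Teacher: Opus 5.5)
Your proposal is correct and is exactly the paper's argument: apply \cref{thm:moment_expansion} to $\backFF$, which shares $\sigBH$ (hence $\mBH$) with $\FF$, then substitute the parity relations of \cref{property:moments_summary}. Your paragraph on the uniformity of the remainder is a reasonable sanity check but is not needed: $\backFF$ itself satisfies (A2), so the theorem applies to it directly and supplies its own uniform $O(\eps^3)$.
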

This follows from \cref{thm:moment_expansion} applied to $\backFF$, which shares the density $\sigBH$ with $\FF$, together with \cref{property:moments_summary}. Compared with the expansion of $\gG_\FF$, the terms of odd order in $\eps$ change sign under reversal, while the terms of even order do not.

\subsubsection*{Proof of the moment expansion of the symmetric and anti-symmetric transport operators}

Using \cref{prop:reverse_moment_expansion}, the moment expansions of the symmetric and antisymmetric parts $\gGs = (\gG_\FF + \gG_\backFF)/2$ and $\gGa = (\gG_\FF - \gG_\backFF)/2$ of the transport operator follow directly: up to order $\eps^2$, the symmetric part retains the even-order terms and the antisymmetric part the odd-order ones. This yields
\begin{align*}
  \gGs [f] &= m_0 f + \frac{\eps^2}{2} \left(m_2^{ij}  \partial_i \partial_j f - \left(\Lambda^k + 2 s_1^k\right) \partial_k f + \Upsilon f \right) + O(\eps^3) \quad \text{and} \\
  \gGa [f] &= \eps \left( m_1^i  \partial_i f - s_0 f \right) + O(\eps^3).
\end{align*}
It is important to note that, in practice, one does not observe the transport operator $\gG_\FF [f]$ directly, as points are rarely sampled uniformly from the manifold. Instead, one typically observes $\gG_\FF [\rho f]$ applied to the density $\rho$ of the sampling distribution. Looking at the expansion of $\gG_\FF [\rho f]$, we have:
\begin{align*}
  \gG_\FF [\rho f] = & m_0 \rho f + \eps \left( m_1^i  \partial_i (\rho f) - s_0 \rho f \right) \\
  & + \frac{\eps^2}{2} \left(m_2^{ij}  \partial_i \partial_j (\rho f) - \left(\Lambda^k + 2 s_1^k\right) \partial_k (\rho f) + \Upsilon \rho f \right) + O(\eps^3)
\end{align*}
so that every term is `polluted' by the density $\rho$. This calls for a normalization, which we now introduce.

\subsubsection*{$\theta$-normalized transport operator}

Our goal is now to define a $\theta$-normalized transport operator $\gGt$ that mitigates the impact of the sampling density $\rho$ on the limit operator. We define $\gGt$ as:
\begin{equation*}
  \gGt [f](x) = \frac{1}{\qe(x)^\theta} \gG_\FF \left[\frac{f}{\qe^\theta} \right](x),
\end{equation*}
where $\qe(x)$ is a normalization factor. Specifically, we take $\qe = (\din + \dout)/2$. This choice allows us to clean up the expansion of $\gGt [\rho f]$ and analyze its behavior as $\eps \to 0$. Since $\qe$ is invariant under reversal, the normalization $\qe(x)^\theta \qe(y)^\theta$ is symmetric in $(x,y)$, and the symmetric and antisymmetric parts of $\gGt$ are obtained from those of $\gG_\FF$ as
\begin{align*}
  \gGts [f] &= \frac{1}{\qe^\theta} \gGs \left[\frac{f}{\qe^\theta} \right] \quad \text{and} \quad
  \gGta [f] = \frac{1}{\qe^\theta} \gGa \left[\frac{f}{\qe^\theta} \right].
\end{align*}
We use $\qe$ rather than $\din$ or $\dout$ alone because its expansion has no term of order $\eps$, which makes it a cleaner proxy for the sampling density $\rho$. Indeed, the expansions of $\din$ and $\dout$ read
\begin{align*}
  \dout = \gG_\FF [\rho] &= m_0 \rho + \eps \left( m_1^i  \partial_i \rho - s_0 \rho \right) \\
  & \qquad + \frac{\eps^2}{2} \left(m_2^{ij}  \partial_i \partial_j \rho - \left(\Lambda^k + 2 s_1^k\right) \partial_k \rho + \Upsilon \rho \right) + O(\eps^3),\\
  \din = \gG_\backFF [\rho] &= m_0 \rho - \eps \left( m_1^i  \partial_i \rho - s_0 \rho \right) \\
  & \qquad + \frac{\eps^2}{2} \left(m_2^{ij}  \partial_i \partial_j \rho - \left(\Lambda^k + 2 s_1^k\right) \partial_k \rho + \Upsilon \rho \right) + O(\eps^3).
\end{align*}
Hence $\qe = m_0 \rho + O(\eps^2)$, whereas $\din$ and $\dout$ each carry a term of order $\eps$. Moreover, $m_0 = m \mu_0 \vole(\eball^m)$ does not depend on $x$ (\cref{appendix:proofs:prop:moment_expression_unitball}), so that $\qe$ is proportional to $\rho$ up to $O(\eps^2)$. We now define $\phit(x) = \rho(x) / \qe(x)^\theta$, and look at the expansion of the observable quantity $\gGt [\rho f]$:
\begin{align*}
  \gGt [\rho f] &= \frac{1}{\qe^\theta} \gG_\FF [\phit f]  \\
    &= \frac{1}{\qe^\theta} \Big[ m_0 \phit f + \eps \left( m_1^i  \partial_i (\phit f) - s_0 \phit f \right) \\
    & \qquad + \frac{\eps^2}{2} \left(m_2^{ij}  \partial_i \partial_j (\phit f) - \left(\Lambda^k + 2 s_1^k\right) \partial_k (\phit f) + \Upsilon \phit f \right) + O(\eps^3) \Big].
\end{align*}
Expanding the derivatives with the Leibniz rule gives
\begin{align*}
  \partial_i (\phit f) &= \phit \partial_i f + f \partial_i \phit, \\
  m_2^{ij} \partial_i \partial_j (\phit f) &= m_2^{ij} \phit \partial_i \partial_j f + 2 m_2^{ij} \partial_i \phit \partial_j f + m_2^{ij} f \partial_i \partial_j \phit ,
\end{align*}
where we used the fact that $m_2^{ij} = m_2^{ji}$. Plugging these into the expansion of $\gGt [\rho f]$, we obtain:
\begin{align}
  \gGt [\rho f] &= \frac{1}{\qe^\theta} \Bigg[
    m_0 \phit f + \eps \left( m_1^i \left(\phit \partial_i f + f \partial_i \phit\right) - s_0 \phit f \right) \nonumber \\
    & \quad + \frac{\eps^2}{2} \bigg(m_2^{ij}(\phit \partial_i \partial_j f + 2 \partial_i \phit \partial_j f +  f \partial_i \partial_j \phit)  - \left(\Lambda^k + 2 s_1^k\right) \left(\phit \partial_k f + f \partial_k \phit\right) \label{eq:Gt_expansion}  \\
    & \qquad + \Upsilon \phit f \bigg) + O(\eps^3) \nonumber
  \Bigg].
\end{align}

We then define the symmetric and antisymmetric normalized transport operators $\Pts$ and $\Pta$ as
\begin{align}
  \gP^{(\theta, \bullet)} [f] = \frac{1}{\dts} \left( \gG^{(\theta, \bullet)}[\rho f] - f \gG^{(\theta, \bullet)}[\rho] \right) \label{eq:gP_theta}
\end{align}
for $\bullet \in \{s, a\}$, where $\dts = \gGts[\rho]$ is the symmetric normalized degree defined below. This expression centers the operators $\gG$ around their respective degrees, so that $\gP$ vanishes on constant functions.

\paragraph{Expansion for the degrees of the $\theta$-normalized operator.} The in- and out-degrees $\dint$ and $\doutt$ of the $\theta$-normalized kernel expand as
\begin{align*}
  \doutt = \gGt [\rho] &= \frac{1}{\qe^\theta} \Big(m_0 \phit + \eps \left( m_1^i \partial_i \phit - s_0 \phit \right) \\
  & \qquad + \frac{\eps^2}{2} \left(m_2^{ij} \partial_i \partial_j \phit - \left(\Lambda^k + 2 s_1^k\right) \partial_k \phit + \Upsilon \phit \right) \Big) + O(\eps^3),
\end{align*}
and $\dint = \backgGt [\rho]$ has the same expansion, with the opposite sign in front of the term of order $\eps$. By definition, the degrees of the symmetric and antisymmetric normalized kernels are
\begin{align*}
  \dts &= \gGts [\rho] = \frac{1}{2} \left(\gGt[\rho] + \backgGt[\rho]\right) = \frac{\doutt + \dint}{2}, \\
  \dta &= \gGta [\rho] = \frac{1}{2} \left(\gGt[\rho] - \backgGt[\rho]\right) = \frac{\doutt - \dint}{2}.
\end{align*}
Plugging the expansions of $\dint$ and $\doutt$ into the above expressions, we have that:
\begin{align}
  \dts &=  \frac{1}{\qe^\theta} \left(m_0 \phit + \frac{\eps^2}{2} \left(m_2^{ij} \partial_i \partial_j \phit - \left(\Lambda^k + 2 s_1^k\right) \partial_k \phit + \Upsilon \phit \right) \right) + O(\eps^3), \label{eq:dts_expansion} \\
  \dta &= \frac{1}{\qe^\theta} \eps \left( m_1^i \partial_i \phit - s_0 \phit \right) + O(\eps^3). \label{eq:dta_expansion}
\end{align}
Using the fact that $1/(1+x) = 1 - x + O(x^2)$, and noting that the bracket of order $\eps^2$ in \cref{eq:dts_expansion} is $\gA_2[\phit]$ (\cref{def:kernel_moments}), we can write the following expansion for $1 / \dts$:
\begin{align}
  \frac{1}{\dts} &= \left(\frac{m_0 \phit}{\qe^\theta} \right)^{-1} \left(
    1 + \frac{\eps^2}{2 m_0 \phit} \gA_2[\phit] + O(\eps^3)
  \right)^{-1} \nonumber \\
  &=\frac{\qe^\theta}{m_0 \phit} \left(1 - \frac{\eps^2}{2 m_0 \phit} \gA_2[\phit] + O(\eps^3)\right) \nonumber \\
  &= \frac{\qe^\theta}{m_0 \phit} + O(\eps^2) \label{eq:symmetric_normalized_degree_expansion}
\end{align}
We now expand the symmetric and antisymmetric normalized transport operators $\Pts$ and $\Pta$ of \cref{eq:gP_theta}.

\paragraph{Expansion for the symmetric part.} By putting together \cref{eq:Gt_expansion} and \cref{eq:dts_expansion}, we have that:
\begin{align*}
  \gGts [\rho f] - f \dts &=
  \frac{\eps^2}{2 \qe^\theta} \Bigg(
    m_2^{ij} \left(\phit \partial_i \partial_j f + 2  \partial_i \phit \partial_j f +  f \partial_i \partial_j \phit - f\partial_i\partial_j \phit\right) \\
    &\quad - \left(\Lambda^k + 2 s_1^k\right) \left(\phit \partial_k f + f \partial_k \phit - f \partial_k \phit\right) \\
    &\quad + \Upsilon \left(\phit f - \phit f\right)
    \Bigg) + O(\eps^3)\\
  &=\frac{\eps^2}{2 \qe^\theta} \Bigg(
    m_2^{ij} \left(\phit \partial_i \partial_j f + 2  \partial_i \phit \partial_j f\right) -  \phit \left(\Lambda^k + 2 s_1^k\right) \partial_k f
    \Bigg) + O(\eps^3).
\end{align*}
Hence, by \cref{eq:symmetric_normalized_degree_expansion}, we have that:
\begin{align*}
  \Pts [f] &= \frac{1}{\dts} \left(\gGts [\rho f] - f \dts\right) \\
  &= \frac{\qe^\theta}{m_0 \phit} \frac{\eps^2}{2 \qe^\theta} \Bigg(
    m_2^{ij} \left(\phit \partial_i \partial_j f + 2  \partial_i \phit \partial_j f\right) -  \phit \left(\Lambda^k + 2 s_1^k\right) \partial_k f
    \Bigg) + O(\eps^3) \\
  &= \frac{\eps^2}{2 m_0} \Bigg(
    m_2^{ij} \left( \partial_i \partial_j f + 2 \partial_i \log \phit \partial_j f\right) -  \left(\Lambda^k + 2 s_1^k\right) \partial_k f
    \Bigg) + O(\eps^3).
\end{align*}
where we used $\partial_i \log \phit = \partial_i \phit / \phit$. Note that the curvature term $\Upsilon$, which only multiplies $f$, is removed by the centering.

\paragraph{Expansion for the anti-symmetric part.} By putting together \cref{eq:Gt_expansion} and \cref{eq:dta_expansion}, we have that:
\begin{align*}
  \gGta [\rho f] - f \dta &=
  \frac{\eps}{\qe^\theta} \Big( m_1^i \left(\phit \partial_i f + f \partial_i \phit\right) - s_0 \phit f - f \left( m_1^i \partial_i \phit - s_0 \phit \right) \Big) + O(\eps^3)\\
  &=\frac{\eps}{\qe^\theta} m_1^i \phit \partial_i f + O(\eps^3).
\end{align*}
In particular, the term $s_0$ brought by the S-curvature only multiplies $f$, and is removed by the centering. Hence, by \cref{eq:symmetric_normalized_degree_expansion}, we have that:
\begin{align*}
  \Pta [f] &= \frac{1}{\dts} \left(\gGta [\rho f] - f \dta\right) \\
  &= \frac{\qe^\theta}{m_0 \phit} \frac{\eps}{\qe^\theta} m_1^i \phit \partial_i f + O(\eps^3) \\
  &= \frac{\eps}{m_0} m_1^i \partial_i f + O(\eps^3).
\end{align*}

 Consequently, if we introduce the normalized quantities $\tilde{m}_n = m_n / m_0$, $\tilde{\Lambda}^k = \Lambda^k / m_0$ and $\tilde{s}_1^k = s_1^k / m_0$, we have that:
\begin{align*}
  \Pts [f] &= \frac{\eps^2}{2} \Big(
    \tilde{m}_2^{ij} \left( \partial_i \partial_j f + 2 \partial_i \log \phit \partial_j f\right) -  \left(\tilde{\Lambda}^k + 2 \tilde{s}_1^k\right) \partial_k f
    \Big) + O(\eps^3),\\
  \Pta [f] &= \eps \tilde{m}_1^i \partial_i f + O(\eps^3).
\end{align*}

\paragraph{Impact of the normalization parameter.} The antisymmetric normalized transport operator $\Pta$ depends neither on the sampling density $\rho$ nor on the normalization parameter $\theta$. We thus focus on the impact of the normalization parameter $\theta$ on the symmetric normalized transport operator $\Pts$. This normalization appears through the term $\partial_i \log \phit$, by definition of $\phit = \rho / \qe^\theta$. We have that:
\begin{align*}
  \partial_i \log \phit &= \partial_i \log \left(\frac{\rho}{\qe^\theta}\right) = \partial_i \log \rho - \theta \partial_i \log \qe.
\end{align*}
Moreover, we have seen that $\qe = m_0 \rho + O(\eps^2)$, where $m_0 = m \mu_0 \vole(\eball^m)$ does not depend on $x$ (\cref{appendix:proofs:prop:moment_expression_unitball}), so that $\partial_i \log \qe = \partial_i \log \rho + O(\eps^2)$. Hence, we have that:
\begin{align*}
  \partial_i \log \phit &= (1 - \theta) \partial_i \log \rho + O(\eps^2).
\end{align*}
The leading-order term in the expansion of $\Pts$ is thus
\begin{align*}
  \Pts [f] &= \frac{\eps^2}{2} \Big(
    \tilde{m}_2^{ij} \left( \partial_i \partial_j f + 2 (1 - \theta) \partial_i \log \rho \partial_j f\right) -  \left(\tilde{\Lambda}^k + 2 \tilde{s}_1^k\right) \partial_k f
    \Big) + O(\eps^3).
\end{align*}
In particular, for $\theta = 1$ the sampling density disappears from the limit, exactly as in the Riemannian setting \citep{coifman_diffusion_2006}. Taking the limits for both operators when $\eps \to 0$ yields $\gL^a f = \tilde{m}_1^i \partial_i f$, and
\begin{equation} \label{eq:pf_general_limit}
  \gL^s f = \frac{1}{2} \tilde{m}_2^{ij} \partial_i \partial_j f + (1-\theta) \tilde{m}_2^{ij} \partial_i \log \rho \partial_j f - \frac{1}{2} \big(\tilde{\Lambda}^k + 2 \tilde{s}_1^k\big) \partial_k f.
\end{equation}

\paragraph{Divergence form of the symmetric limit.} It remains to write \cref{eq:pf_general_limit} in divergence form. This relies on the following identities between the coefficients of \cref{thm:moment_expansion}.

\begin{lemma}[Divergence identities] \label{lemma:moment_identities}
  For every $x \in \gM$, the kernel moments of \cref{def:kernel_moments} satisfy
  \begin{equation*}
    s_0 = -\frac{1}{2} \operatorname{div}_{BH}(m_1) \qquad \text{and} \qquad \Lambda^k + 2 s_1^k = - \frac{1}{\sigBH} \partial_j \left( \sigBH m_2^{jk} \right),
  \end{equation*}
  where $\operatorname{div}_{BH} X = \sigBH^{-1} \partial_i(\sigBH X^i)$. Equivalently, the operators of \cref{thm:moment_expansion} read
  \begin{equation*}
    \gA_1 f = m_1^i \partial_i f + \frac{1}{2} \operatorname{div}_{BH}(m_1) f, \qquad \gA_2 f = \frac{1}{\sigBH} \partial_j \left( \sigBH m_2^{jk} \partial_k f \right) + \Upsilon f,
  \end{equation*}
  so that $\gA_1$ is skew-adjoint and $\gA_2$ is self-adjoint on $L^2(\gM, \mBH)$.
\end{lemma}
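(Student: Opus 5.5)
The plan is to derive both identities from the adjointness relation of \cref{prop:adjoint_reverse}, rather than by computing the S-curvature and spray integrals directly. For smooth $f, h$ on $\gM$ we have $\langle \gG_\FF f, h\rangle = \langle f, \gG_\backFF h\rangle$ in $L^2(\gM,\mBH)$. We will expand the left side with \cref{thm:moment_expansion} and the right side with \cref{prop:reverse_moment_expansion}. Integrating against the smooth, bounded function $h$ over the compact $\gM$ gives
\begin{equation*}
  m_0\langle f,h\rangle + \eps\langle \gA_1 f, h\rangle + \tfrac{\eps^2}{2}\langle \gA_2 f,h\rangle = m_0\langle f,h\rangle - \eps \langle f, \gA_1 h\rangle + \tfrac{\eps^2}{2}\langle f,\gA_2 h\rangle + \bigO(\eps^3).
\end{equation*}
Here we use that the $\bigO(\eps^3)$ remainders are uniform in $x$. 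The statement of \cref{lemma:jacobian_expansion} already gives this uniformity, and \cref{lemma:f_expansion} gives it by compactness of the unit sphere bundle, with the sub-exponential tail (A1) controlling the $v$-integrals. Identifying the coefficients of $\eps$ and of $\eps^2$ then gives $\gA_1^* = -\gA_1$ and $\gA_2^* = \gA_2$ on $L^2(\gM,\mBH)$. These are exactly the adjointness claims of the lemma.

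Next we turn these operator identities into pointwise identities on the coefficients. Work in a chart and take $f,h$ smooth and compactly supported in it, so $\diff\mBH = \sigBH \diff x$ and integration by parts produces no boundary terms. For the first order, take $h \equiv 1$ near $\operatorname{supp} f$, so that $\gA_1 h = -s_0$ there. We then get $\int (m_1^i\partial_i f - s_0 f)\sigBH \diff x = \int f s_0 \sigBH\diff x$. Integrating by parts turns this into $\int f\,(-\partial_i(\sigBH m_1^i) - 2 s_0\sigBH)\diff x = 0$ for all $f$. Hence $s_0 = -\tfrac12 \operatorname{div}_{BH}(m_1)$, and substituting back gives the stated form of $\gA_1$.

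For the second order, write $\gA_2 = a^{jk}\partial_j\partial_k + b^k\partial_k + \Upsilon$ with $a = m_2$ and $b = -(\Lambda + 2s_1)$. Compute its formal adjoint in $L^2(\sigBH \diff x)$:
\begin{equation*}
  \gA_2^* h = \sigBH^{-1}\partial_j\partial_k(\sigBH a^{jk} h) - \sigBH^{-1}\partial_k(\sigBH b^k h) + \Upsilon h.
\end{equation*}
The second-order parts of $\gA_2^*$ and $\gA_2$ coincide. Comparing first-order coefficients, using test functions $h$ whose gradient is prescribed at a point, gives $2\sigBH^{-1}\partial_j(\sigBH a^{jk}) - b^k = b^k$. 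Hence $b^k = \sigBH^{-1}\partial_j(\sigBH m_2^{jk})$, which is the claimed identity for $\Lambda^k + 2 s_1^k$. With this, $\gA_2 f = \sigBH^{-1}\partial_j(\sigBH m_2^{jk}\partial_k f) + \Upsilon f$ follows by expanding the derivative. Testing with $h = 1$ alone would only give the divergence of this vector identity, which is why general test functions $h$ are needed here.

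The main obstacle is justifying that the coefficients are smooth (at least $C^1$) in $x$, and that the expansion remainders are uniform, so that comparing orders and integrating by parts is legitimate. The integrands $\ChrisF^k_{ij}(x,v)v^iv^j$, $S(x,v)$ and $\RicF(x,v)$ are only smooth away from $v=0$. However, they are positively homogeneous in $v$, so after the polar decomposition $v = r u$ with $u$ on the indicatrix, the $r$-integrals against $K$ converge and the $x$-dependence becomes an integral over the smooth indicatrix bundle. This gives the regularity. Uniformity then follows from compactness of that bundle together with (A1). I would handle this first and treat the comparison of orders as routine.
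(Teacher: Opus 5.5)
Your proof is correct, but it runs in the opposite logical direction from the paper's.

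\textbf{The paper's route.} The paper proves the two identities directly, as exact facts about $\FF$. It combines Shen's formula $S = \partial G^l/\partial v^l - v^l\partial_l\log\sigBH$ with the conservation of $\FF$ along geodesics, $v^j\partial_{x^j}\FF = 2G^l\partial_{v^l}\FF$. It then integrates by parts in the fibre variable $v$ against $K(\FF)P$, with $P=1$ for the $s_0$ identity and $P=v^k$ for the $\Lambda^k+2s_1^k$ identity. The skew- and self-adjointness of $\gA_1,\gA_2$ then follow via Stokes.

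\textbf{Your route.} You start from the exact relation $\gG'_\FF=\gG_\FF^*=\gG_\backFF$. Matching the $\eps$ and $\eps^2$ terms of the two expansions gives $\gA_1^*=-\gA_1$ and $\gA_2^*=\gA_2$. Only then do you extract the pointwise identities with localized test functions. This is the argument the paper merely sketches in the remark after the lemma. Your second-order step, prescribing $\nabla h$ at a point because $h\equiv 1$ only yields the divergence of the vector identity, supplies the detail that remark leaves out.

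\textbf{What each buys.} The paper's route is independent of the expansion machinery. It uses neither \cref{lemma:jacobian_expansion} nor the reversal lemmas behind \cref{prop:reverse_moment_expansion}, and no remainder estimates. Combined with the adjointness remark, it therefore acts as a genuine cross-check of the S-curvature terms in \cref{thm:moment_expansion}.

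Your route avoids Shen's formula and all spray computations, and it explains why the identities are forced. However, it is only as sound as the full second-order expansions of both $\gG_\FF$ and $\gG_\backFF$. A sign error in the $-\eps S$ term of the Jacobian, for instance, would propagate silently into your identities.

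It also needs the quantitative control you flag. The remainders in \cref{lemma:f_expansion,lemma:jacobian_expansion} are really of size $C(\eps|v|)^3$ on $\eps\FF(x,v)<r_0$, not uniform in $v$, which is why (A1) is needed to integrate them. You also need $\sigBH m_2\in\gC^2$ and $\sigBH(\Lambda+2s_1)\in\gC^1$ to form the formal adjoint. Your polar-coordinate argument does give this. Parametrizing over a fixed sphere makes it explicit: for $\phi$ positively $p$-homogeneous in $v$,
$$\int K(\FF(x,v))\,\phi(x,v)\,\diff\leb_x(v)=\mu_p\int_{\mathbb{S}^{m-1}}\phi(x,\theta)\,\FF(x,\theta)^{-(m+p)}\,\diff\theta,$$
which is smooth in $x$.
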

\begin{proof}
  We work in a chart, write $\sigma = \sigBH$, and $M_n(x) = \int_{T_x\gM} K(\FF(x,v)) v^{\otimes n} \diff\leb_x(v)$, so that $m_n = \sigma M_n$. The proof relies on two classical facts about the geodesic spray coefficients $G^k$ of $\FF$ (\cref{property:spray_reverse}), whose geodesics satisfy $\ddot\gamma^k = -2 G^k(\gamma, \dot\gamma)$. First, the S-curvature reads \citep{shen2001lectures}
  \begin{equation} \label{eq:shen_S}
    S(x,v) = \frac{\partial G^l}{\partial v^l}(x,v) - v^l \partial_l \log \sigma(x).
  \end{equation}
  Second, $\FF$ is constant along geodesics, so that differentiating $t \mapsto \FF(\gamma(t), \dot\gamma(t))$ at $t = 0$ gives
  \begin{equation} \label{eq:F_conservation}
    v^j \frac{\partial \FF}{\partial x^j}(x,v) = 2 G^l(x,v) \frac{\partial \FF}{\partial v^l}(x,v).
  \end{equation}
  Let $P$ be a polynomial in $v$. An integration by parts in $v$, followed by \cref{eq:F_conservation}, gives
  \begin{align*}
    \int_{T_x\gM} K(\FF) \frac{\partial G^l}{\partial v^l} P \diff\leb_x(v)
    &= - \int_{T_x\gM} K'(\FF) \frac{\partial \FF}{\partial v^l} G^l P \diff\leb_x(v) - \int_{T_x\gM} K(\FF) G^l \frac{\partial P}{\partial v^l} \diff\leb_x(v) \\
    &= - \frac{1}{2} \int_{T_x\gM} K'(\FF) v^j \frac{\partial \FF}{\partial x^j} P \diff\leb_x(v) - \int_{T_x\gM} K(\FF) G^l \frac{\partial P}{\partial v^l} \diff\leb_x(v) \\
    &= - \frac{1}{2} \partial_j \int_{T_x\gM} K(\FF) v^j P \diff\leb_x(v) - \int_{T_x\gM} K(\FF) G^l \frac{\partial P}{\partial v^l} \diff\leb_x(v).
  \end{align*}
  The boundary terms vanish, and the derivative can be taken out of the integral, since $K(\FF(x,v))$ and $K'(\FF(x,v))$ decay exponentially in $v$ by (A1), while $G^l$, its derivatives and $P$ grow polynomially. Moreover, in a chart, $\leb_x$ is the Lebesgue measure of $\R^m$, which does not depend on $x$.

  \emph{First identity.} Taking $P = 1$, we get $\int K(\FF) \partial_{v^l} G^l \diff\leb_x = -\frac{1}{2} \partial_j M_1^j$. Hence, by \cref{eq:shen_S},
  \begin{equation*}
    s_0 = \sigma \int_{T_x\gM} K(\FF) S \diff\leb_x(v) = - \frac{\sigma}{2} \partial_j M_1^j - \partial_j \sigma M_1^j = - \frac{1}{2\sigma} \partial_j \left( \sigma^2 M_1^j \right) = - \frac{1}{2} \operatorname{div}_{BH}(m_1).
  \end{equation*}

  \emph{Second identity.} Taking $P = v^k$, we get $\int K(\FF) \partial_{v^l} G^l v^k \diff\leb_x = -\frac{1}{2} \partial_j M_2^{jk} - \int K(\FF) G^k \diff\leb_x$. Moreover, $\ChrisF^k_{ij}(x,v) v^i v^j = 2 G^k(x,v)$, so that $\Lambda^k = 2 \sigma \int K(\FF) G^k \diff\leb_x$. Hence, by \cref{eq:shen_S},
  \begin{align*}
    \Lambda^k + 2 s_1^k &= 2 \sigma \int_{T_x\gM} K(\FF) \left( G^k + \frac{\partial G^l}{\partial v^l} v^k \right) \diff\leb_x(v) - 2 \partial_j \sigma M_2^{jk} \\
    &= - \sigma \partial_j M_2^{jk} - 2 \partial_j \sigma M_2^{jk} = - \frac{1}{\sigma} \partial_j \left( \sigma^2 M_2^{jk} \right) = - \frac{1}{\sigma} \partial_j \left( \sigma m_2^{jk} \right).
  \end{align*}

  The expressions of $\gA_1$ and $\gA_2$ follow by substitution in \cref{def:kernel_moments}. Finally, for smooth $f$ and $h$, Stokes' theorem \citep[Theorem 16.11]{lee2012introduction} on the compact manifold without boundary $\gM$ gives $\int_\gM \operatorname{div}_{BH}(f h  m_1) \diff\mBH = 0$, that is
  \begin{equation*}
    \int_\gM \gA_1 [f] h  \diff\mBH = - \int_\gM f  \gA_1 [h]  \diff\mBH,
  \end{equation*}
  and similarly $\int_\gM \operatorname{div}_{BH}(h  m_2 \nabla f - f  m_2 \nabla h) \diff\mBH = 0$ gives the self-adjointness of $\gA_2$.
\end{proof}

\begin{remark}
  \cref{lemma:moment_identities} is the infinitesimal counterpart of the adjointness relation $\gG'_\FF = \gG_\FF^*$ (\cref{prop:adjoint_reverse}). Indeed, plugging the expansions of \cref{thm:moment_expansion,prop:reverse_moment_expansion} into $\int_\gM \gG_\FF[f]  h  \diff\mBH = \int_\gM f  \gG'_\FF[h]  \diff\mBH$, the terms of order $\eps$ and $\eps^2$ show that $\gA_1$ is skew-adjoint and $\gA_2$ is self-adjoint on $L^2(\gM, \mBH)$, which gives back the two identities after an integration by parts.
\end{remark}

Dividing the second identity by the constant $m_0$, the second-order and first-order terms of \cref{eq:pf_general_limit} combine into a divergence,
\begin{equation*}
  \frac{1}{2} \tilde{m}_2^{ij} \partial_i \partial_j f - \frac{1}{2} \big(\tilde{\Lambda}^k + 2 \tilde{s}_1^k\big) \partial_k f = \frac{1}{2 \sigBH} \partial_j \left( \sigBH \tilde{m}_2^{jk} \partial_k f \right) = \frac{1}{2} \operatorname{div}_{BH} \left( \tilde{m}_2 \nabla f \right),
\end{equation*}
where $\tilde{m}_2 \nabla f$ denotes the vector field $\tilde{m}_2^{jk} \partial_k f \partial_j$. Moreover, writing $\omega = \rho^{2(1-\theta)}$, the Leibniz rule gives $\frac{1}{2\omega} \operatorname{div}_{BH}(\omega X) = \frac{1}{2} \operatorname{div}_{BH}(X) + (1-\theta) X^j \partial_j \log \rho$ for any vector field $X$. Taking $X = \tilde{m}_2 \nabla f$, and using $\tilde{m}_2^{jk} = c  \gBL^{jk}$ (\cref{props:moment_expression_unitball}), we get
\begin{equation*}
  \gL^s f = \frac{1}{2\omega} \operatorname{div}_{BH} \left( \omega \tilde{m}_2 \nabla f \right) = \frac{c_2}{\rho^{2(1-\theta)}} \operatorname{div}_{BH} \left( \rho^{2(1-\theta)} \nabla_\text{BL} f \right).
\end{equation*}

Finally, since $\gM$ is compact and without boundary (assumption (A2)), Stokes' theorem \citep[Theorem 16.11]{lee2012introduction} gives $\int_\gM \operatorname{div}_{BH}(X) \diff\mBH = 0$ for any smooth vector field $X$. Applying it to $X = u  \omega \nabla_\text{BL} f$, for smooth $u, f : \gM \to \R$, gives
\begin{equation*}
  \int_\gM u  \gL^s [f]  \omega \diff\mBH = - c_2 \int_\gM \langle \nabla u, \nabla f \rangle_\text{BL}  \omega  \diff\mBH,
\end{equation*}
which is symmetric in $u$ and $f$: $\gL^s$ is self-adjoint on $L^2(\gM, \omega \mBH)$. Its principal part is $c_2 \gBL^{ij} \partial_i \partial_j$, with $\gBL^{ij}$ positive definite, so that it is elliptic. This concludes the proof of \cref{thm:sym_limit}.

\subsection{Links between the moments of the kernel and the unit ball} \label{appendix:proofs:prop:moment_expression_unitball}

We recall that the moments of the kernel are defined as (\cref{def:kernel_moments}):
\begin{align*}
  m_n(x) = \sigBH(x) \int_{T_x\gM} K(\FF(x,v)) v^{\otimes n} \diff\leb_x(v),
\end{align*}
We denote the normalized moments of the kernel as $\tilde{m}_n = m_n / m_0$. We define the unit tangent ball and its indicatrix as:
\begin{align*}
  \tball_x = \{v \in T_x\gM : \FF(x,v) \leq 1\}, \quad \gI_x = \{v \in T_x\gM : \FF(x,v) = 1\}.
\end{align*}
We also introduce the volume, the centroid, and the raw second moment of the unit ball read
\begin{align*}
  \leb_x(\tball_x) = \int_{\tball_x} \diff\leb_x(v), \quad \rvc^i (x) = \frac{1}{\leb_x(\tball_x)} \int_{\tball_x} v^i \diff\leb_x(v), \quad \text{and} \quad \rmS^{ij}(x) = \frac{1}{\leb_x(\tball_x)} \int_{\tball_x} v^i v^j \diff\leb_x(v).
\end{align*}

\begin{proposition}\label{props:moment_expression_unitball}
  We can relate the moments of the kernel to the unit ball as follows:
  \begin{align*}
    m_0(x) &= m \mu_0 \sigBH(x) \leb_x(\tball_x) = m \mu_0 \vole(\eball^m), \\
    \tilde{m}_1^i(x) &= \frac{(m+1) \mu_1}{m\mu_0} \rvc^i(x), \\
    \tilde{m}_2^{ij}(x) &= \frac{(m+2) \mu_2}{m\mu_0}\rmS^{ij}(x),
  \end{align*}
  where $\mu_n = \int_0^\infty K(r) r^{m + n - 1} \diff r$ are the raw moments of the radial kernel $K$.
\end{proposition}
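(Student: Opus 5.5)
The plan is to compute each moment in generalized polar coordinates adapted to the gauge $\FF(x,\cdot)$. Since $\FF(x,\cdot)$ is positively homogeneous and the unit ball $\tball_x$ is star-shaped with respect to the origin, every $v \neq 0$ in $T_x\gM$ can be written uniquely as $v = r u$ with $r = \FF(x,v) > 0$ and $u \in \gI_x$. The first step is the layer-cake (coarea) identity
\begin{equation*}
  \int_{T_x\gM} h(\FF(x,v))\, \phi(v) \diff\leb_x(v) = \int_0^\infty h(r)\, \frac{\diff}{\diff r}\Big( \int_{r\tball_x} \phi \diff\leb_x \Big) \diff r ,
\end{equation*}
valid for $h$ with sub-exponential decay and $\phi$ of polynomial growth. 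Here $\{\FF(x,\cdot) \le r\} = r\tball_x$, and the inner integral is differentiated in $r$.

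Next I take $\phi(v) = v^{\otimes n}$. The substitution $v = r w$ gives
\begin{equation*}
  \int_{r\tball_x} v^{\otimes n} \diff\leb_x(v) = r^{m+n} \int_{\tball_x} w^{\otimes n} \diff\leb_x(w).
\end{equation*}
Differentiating in $r$ yields $(m+n) r^{m+n-1}\int_{\tball_x} w^{\otimes n}\diff\leb_x$. Integrating against $K$ and using the definition of $\mu_n$, this gives
\begin{equation*}
  m_n(x) = \sigBH(x)\,(m+n)\,\mu_n \int_{\tball_x} w^{\otimes n}\diff\leb_x(w).
\end{equation*}
Specializing to $n = 0, 1, 2$ yields
\begin{align*}
  m_0 &= m\mu_0\,\sigBH\,\leb_x(\tball_x), \\
  m_1 &= (m+1)\mu_1\,\sigBH\,\leb_x(\tball_x)\,\rvc, \\
  m_2 &= (m+2)\mu_2\,\sigBH\,\leb_x(\tball_x)\,\rmS,
\end{align*}
by the definitions of $\rvc$ and $\rmS$.

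For $m_0$, I then insert $\sigBH = \vole(\eball^m)/\leb_x(\tball_x)$ from \cref{def:busemann_haussdorf_measure}. This gives $m_0 = m\mu_0\vole(\eball^m)$, which is independent of $x$. Dividing $m_1$ and $m_2$ by $m_0$, the common factor $\sigBH\leb_x(\tball_x)$ cancels, and the stated expressions for $\tilde m_1$ and $\tilde m_2$ follow.

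An alternative to the coarea identity is to write $K(r) = -\int_r^\infty K'(s)\diff s$ and apply Fubini. This gives $\int K(\FF)\phi = -\int_0^\infty K'(s)\, s^{m+n}\diff s \int_{\tball_x}w^{\otimes n}$, and an integration by parts in $s$ turns this into $(m+n)\mu_n$. The boundary terms vanish at $0$ because $s^{m+n}\to 0$, and at $\infty$ by (A1).

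The main point requiring care is justifying the identity rigorously, namely the interchange of integrals and the vanishing of the boundary terms; I will use the Fubini version because it avoids any regularity issue on $\gI_x$. This needs only the decay of $K$ from (A1), the boundedness of $K'$, and the finiteness of all $\mu_n$, which follows from $K(r)\le C_K e^{-\nu_K r}$. The rest is bookkeeping of the homogeneity exponent $m+n$.
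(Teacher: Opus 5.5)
Your proof is correct, but it takes a different route from the paper. The paper uses polar coordinates adapted to the gauge: it writes $v = ru$ with $u \in \gI_x$ and decomposes $\diff\leb_x(v) = r^{m-1}\diff r\,\diff\omega_x(u)$, where $\omega_x(A) = m\,\leb_x(\{ru : u \in A,\ 0 \le r \le 1\})$ is the cone measure on the indicatrix. It then expresses $\leb_x(\tball_x)$, $\rvc$ and $\rmS$ as indicatrix integrals, e.g.\ $\rvc^i = \frac{1}{(m+1)\leb_x(\tball_x)}\int_{\gI_x} u^i \diff\omega_x$, and the kernel moments as $\sigBH\,\mu_n \int_{\gI_x} u^{\otimes n}\diff\omega_x$. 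In that route the factor $m+n$ arises as $1/\int_0^1 r^{m+n-1}\diff r$.

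You never put a measure on $\gI_x$. Writing $K(r) = -\int_r^\infty K'(s)\diff s$ and exchanging integrals, you reduce everything to moments of the dilated balls $s\tball_x$. There the only geometric input is the scaling $\int_{s\tball_x} v^{\otimes n}\diff\leb_x = s^{m+n}\int_{\tball_x} w^{\otimes n}\diff\leb_x$, and the factor $m+n$ comes from the radial integration by parts.

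What each buys:
\begin{itemize}
\item Your argument is more self-contained: it does not need the polar decomposition for a general star body with its cone measure, which the paper uses without proof.
\item In exchange, you use more of (A1): differentiability of $K$ and $K(r) \to 0$ as $r \to \infty$.
\item The paper's route needs only $\mu_n < \infty$, and it also produces the indicatrix representations of $\rvc$ and $\rmS$ along the way.
\end{itemize}

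One small correction to your justification. The hypothesis that makes the interchange legitimate is the monotonicity in (A1), i.e.\ $-K' \ge 0$, not the boundedness of $K'$. Monotonicity lets Tonelli apply to $|v^{\otimes n}|\,(-K'(s))\,\mathbf{1}\{\FF(x,v) < s\}$. The resulting $\int_0^\infty (-K'(s))\,s^{m+n}\diff s$ is then finite by exactly your integration by parts, via monotone convergence. A bounded but oscillating $K'$ would not give absolute integrability. Also note that $\{\FF(x,\cdot) < s\}$ and $s\tball_x$ differ only by the null set $s\gI_x$, which is harmless.
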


\begin{proof}
  Since $\FF$ is positively homogeneous of degree $1$ in the second variable, we use a polar change of variable $v = r u$ with $r \in \R_+$ and $u \in \gI_x$. Under this change of variable, we have that $\diff\leb_x(v) = r^{m-1} \diff r \diff\omega_x$, where $\omega_x(A) = m \leb_x(\{ru : u \in A, 0 \leq r \leq 1\})$ for $A \subset \gI_x$. With this change of variable, the volume, centroids, and raw second moment of the unit ball as:
  \begin{align}
    & \leb_x(\tball_x) = \int_{\gI_x} \int_0^1 r^{m-1} \diff r \diff\omega_x = \frac{1}{m} \int_{\gI_x} \diff\omega_x, \label{eq:volume} \\
    & \rvc^i(x) = \frac{1}{\leb_x(\tball_x)} \int_{\gI_x} \int_0^1 (ru)^i r^{m-1} \diff r \diff\omega_x = \frac{1}{(m+1) \leb_x(\tball_x)} \int_{\gI_x} u^i \diff\omega_x, \label{eq:centroid} \\
    & \rmS^{ij}(x) = \frac{1}{\leb_x(\tball_x)} \int_{\gI_x} \int_0^1 (ru)^i (ru)^j r^{m-1} \diff r \diff\omega_x = \frac{1}{(m+2) \leb_x(\tball_x)} \int_{\gI_x} u^i u^j \diff\omega_x. \label{eq:raw_second_moment}
  \end{align}

  We now turn to the moments $m_n(x)$ of the kernel. Since $\sigBH(x)$ does not depend on $v$, we compute them up to this factor. For the zeroth moment,
  \begin{align*}
    \frac{m_0(x)}{\sigBH(x)} &= \int_{T_x\gM} K(\FF(x,v)) \diff\leb_x(v) \\
    &= \int_{\gI_x} \int_0^\infty K(r) r^{m-1} \diff r \diff\omega_x  & \text{By the change of variable } v = ru \\
    &= \mu_0 \int_{\gI_x} \diff\omega_x & \text{By Fubini's theorem} \\
    &= m \mu_0 \leb_x(\tball_x) & \text{By \cref{eq:volume}.}
  \end{align*}
  Similarly, the first moment is given by:
  \begin{align*}
    \frac{m_1^i(x)}{\sigBH(x)} &= \int_{T_x\gM} K(\FF(x,v)) v^i \diff\leb_x(v) \\
    &= \int_{\gI_x} \int_0^\infty K(r) (ru)^i r^{m-1} \diff r \diff\omega_x & \text{By the change of variable } v = ru  \\
    &= \mu_1 \int_{\gI_x} u^i \diff\omega_x  & \text{By Fubini's theorem} \\
    &= (m+1) \mu_1 \leb_x(\tball_x) \rvc^i(x) & \text{By \cref{eq:centroid}.}
  \end{align*}
  The second moment is given by:
  \begin{align*}
    \frac{m_2^{ij}(x)}{\sigBH(x)} &= \int_{T_x\gM} K(\FF(x,v)) v^i v^j \diff\leb_x(v) \\
    &= \int_{\gI_x} \int_0^\infty K(r) (ru)^i (ru)^j r^{m-1} \diff r \diff\omega_x & \text{By the change of variable } v = ru  \\
    &= \mu_2 \int_{\gI_x} u^i u^j \diff\omega_x  & \text{By Fubini's theorem} \\
    &= (m+2) \mu_2 \leb_x(\tball_x) \rmS^{ij}(x) & \text{By \cref{eq:raw_second_moment}.}
  \end{align*}

  Multiplying by $\sigBH(x)$, \cref{eq:busemann_haussdorf_measure} gives $m_0(x) = m \mu_0 \sigBH(x) \leb_x(\tball_x) = m \mu_0 \vole(\eball^m)$. Dividing the expression of $m_n$ by $m_0$, in which $\sigBH(x)$ cancels, yields the desired result for the normalized moments $\tilde{m}_n = m_n / m_0$.

\end{proof}

\subsubsection*{Implications for the limit operators}

\paragraph{Zeroth moment.} The zeroth moment $m_0(x)$ of the kernel appears in two ways: as the normalization factor in the definitions of $\tilde{m}_n$, $\tilde{\Lambda}^k$ and $\tilde{s}_1^k$, which is discussed at the end of this subsection, and as the leading coefficient of the expansion of \cref{thm:moment_expansion}. By \cref{props:moment_expression_unitball}, $m_0 = m \mu_0 \vole(\eball^m)$ does not depend on $x$: the Busemann--Hausdorff measure is precisely the one for which small forward balls have, at leading order, the volume of Euclidean balls. As a consequence, $\qe = m_0 \rho + O(\eps^2)$ carries no geometric factor, and $\partial_i \log \phit = (1-\theta) \partial_i \log \rho + O(\eps^2)$ (\cref{appendix:proofs:prop:sym_limit}): the $\theta$-normalization acts exactly as in the Riemannian setting, and $\theta = 1$ removes the sampling density from the limit operator $\gL^s$.

\paragraph{First moment.} By \cref{props:moment_expression_unitball}, the normalized first moment $\tilde{m}_1^i(x)$ is proportional to the centroid $\rvc^i(x)$ of the unit ball $\tball_x$. If the Finsler metric $\FF$ is reversible, the unit ball $\tball_x$ is symmetric with respect to the origin, so that $\rvc^i(x) = 0$. Hence, for reversible Finsler metrics, $\tilde{m}_1^i = 0$ and the anti-symmetric limit operator $\gL^a$ vanishes\footnote{One can note that in this context, the anti-symmetric operator $\Pta$ also vanishes: $\Pta[f] \equiv 0$ for any $f$. This is due to the fact that for a reversible Finsler metric, $\dstF(x,y) = \dstF(y,x)$ so that $W(x,y) = W(y,x)$.}. When $\FF$ is not reversible, the unit ball is in general not symmetric, and its centroid defines a vector field $\rvc$ on $\gM$ which measures the asymmetry of $\FF$ at first order. Note that $\rvc(x) = 0$ does not imply that $\FF$ is reversible at $x$, since a non-symmetric unit ball can have its centroid at the origin. For Randers metrics, the centroid is given explicitly in terms of $b$ by \cref{prop:centroid_randers_metric}. This vector field is exactly what the anti-symmetric part of the graph Laplacian recovers: by \cref{thm:sym_limit},
\begin{equation*}
  \gL^a f(x) = \tilde{m}_1^i(x) \partial_i f(x) = \frac{(m+1) \mu_1}{m \mu_0} \rvc^i(x) \partial_i f(x),
\end{equation*}
so that $\gL^a$ is the derivative along the centroid vector field, up to a constant that only depends on the kernel.

\paragraph{Second moment.} By \cref{props:moment_expression_unitball} and \cref{eq:binet_legendre}, the normalized second moment is proportional to the raw second moment $\rmS^{ij}(x)$ of the unit ball, and hence to the Binet--Legendre metric $\gBL$ (\cref{def:binet_legendre}):
\begin{equation*}
  \tilde{m}_2^{ij}(x) = \frac{(m+2) \mu_2}{m \mu_0} \rmS^{ij}(x) = \frac{\mu_2}{m \mu_0} \gBL^{ij}(x).
\end{equation*}
Hence, up to the constant $2c_2 = \mu_2 / (m \mu_0)$ of \cref{thm:sym_limit}, the normalized second moment coincides with the Binet--Legendre metric, which is a Riemannian metric capturing the local geometry of $(\gM, \FF)$. It depends on the kernel only through this constant, and does not depend on the bandwidth $\eps$.

\paragraph{Normalization by $m_0$.} Dividing by $m_0$ is what makes $\tilde{m}_1$ and $\tilde{m}_2$ the natural geometric objects governing the limit operators. Indeed, by \cref{props:moment_expression_unitball}, the raw moments $m_1$ and $m_2$ carry the factor $\sigBH(x) \leb_x(\tball_x)$, as $m_0$ does, which cancels in the ratio. The normalized moments thus only involve the centroid $\rvc$ and the raw second moment $\rmS$, which are averages over the unit ball, and the kernel only enters through the constants $\mu_n / \mu_0$. This is also why the Binet--Legendre metric, which is itself normalized by $\leb_x(\tball_x)$ in \cref{eq:binet_legendre}, appears in the limit.

\subsection{Proof of \texorpdfstring{\cref{cor:berwald_limit}}{}} \label{appendix:proofs:cor:berwald_limit}

We start from the divergence form of \cref{thm:sym_limit}. Let $\sqrt{\det \gBL}$ denote the density, in a chart, of the Riemannian volume $\mathrm{vol}_\text{BL}$ of $\gBL$, and let
\begin{equation*}
  w = \frac{\sigBH}{\sqrt{\det \gBL}} = \frac{\vole(\eball^m)}{\mathrm{vol}_\text{BL}(\tball_x)}
\end{equation*}
be the density of $\mBH$ with respect to $\mathrm{vol}_\text{BL}$, where we used \cref{eq:busemann_haussdorf_measure}, and $\mathrm{vol}_\text{BL}(\tball_x) = \sqrt{\det \gBL(x)} \leb_x(\tball_x)$ is the volume of the unit ball of $\FF$ for the Binet--Legendre metric. In particular, $w$ does not depend on the chart. Since $\sigBH = w \sqrt{\det \gBL}$, we have $\operatorname{div}_{BH} X = \operatorname{div}_\text{BL} X + X^i \partial_i \log w$ for any vector field $X$, where $\operatorname{div}_\text{BL}$ is the Riemannian divergence of $\gBL$. Applying it in \cref{thm:sym_limit}, and using $\operatorname{div}_\text{BL}(\nabla_\text{BL} f) = \Delta_\text{BL} f$, we get, for any Finsler metric,
\begin{equation} \label{eq:pf_general_BL}
  \gL^s f = c_2 \left( \Delta_\text{BL} f + 2 (1-\theta) \langle \nabla \log \rho, \nabla f \rangle_\text{BL} + \langle \nabla \log w, \nabla f \rangle_\text{BL} \right).
\end{equation}
The discrepancy with a weighted Laplace--Beltrami operator of $\gBL$ is thus a drift along the gradient of $\log w$, which measures the variations of the Binet--Legendre volume of the unit ball of $\FF$ across $\gM$.

\emph{Berwald metrics.} When $\FF$ is Berwald, the parallel transport along any curve from $x$ to $y$ is a linear map $P : T_x\gM \to T_y\gM$ which preserves $\FF$ \citep{bao2012introduction}, so that $P(\tball_x) = \tball_y$. By \cref{eq:binet_legendre}, the Binet--Legendre metric is built from the unit ball alone, and the change of variable $v = Pu$ gives $\gBL^{-1}(y) = P \gBL^{-1}(x) P^\top$, the Jacobian of $P$ being cancelled by the normalization by the volume of the ball. Hence $P$ is an isometry from $(T_x\gM, \gBL(x))$ to $(T_y\gM, \gBL(y))$ which maps $\tball_x$ onto $\tball_y$, so that $\mathrm{vol}_\text{BL}(\tball_x) = \mathrm{vol}_\text{BL}(\tball_y)$. Since $\gM$ is connected, $w$ is constant, and \cref{eq:pf_general_BL} reduces to \cref{eq:berwald_limit}.

\emph{Riemannian metrics.} Let $\FF(x,v) = \sqrt{v^\top \rmA(x) v}$. Its fundamental tensor is $g_{ij}(x,v) = \rmA_{ij}(x)$ for every $v$, hence does not depend on the direction, so $\FF$ is Berwald. Its unit ball $\tball_x = \{v : v^\top \rmA(x) v \leq 1\}$ is the ellipsoid defined by $\rmA(x)$, whose second moment is given by the classical identity $\int_{\tball_x} v^i v^j \diff\leb_x(v) = \frac{\leb_x(\tball_x)}{m+2} \rmA^{ij}(x)$. Substituting it into \cref{eq:binet_legendre} gives $\gBL^{ij} = \rmA^{ij}$, \ie $\gBL = \rmA$. \cref{eq:berwald_limit} then reads
\begin{equation*}
  \gL^s f = c_2 \left( \Delta_\rmA f + 2 (1-\theta) \langle \nabla \log \rho, \nabla f \rangle_\rmA \right),
\end{equation*}
which is the $\theta$-normalized diffusion generator of \citet{coifman_diffusion_2006}. Note also that the S-curvature of a Riemannian metric vanishes, so that $s_0 = s_1 = 0$ and, in this setting, $\Upsilon$ reduces to
\begin{equation*}
  \Upsilon(x) = -\frac{1}{3} \sigBH(x) \int_{T_x\gM} K(\FF(x,v)) \RicF(x,v) \diff\leb_x(v).
\end{equation*} Finally, a Riemannian metric is reversible, so $v \mapsto K(\FF(x,v))$ is an even function on $T_x\gM$, and the change of variable $v \mapsto -v$ gives $m_1 = 0$. By \cref{thm:sym_limit}, $\gL^a f = \tilde{m}_1^i \partial_i f = 0$.

\subsection{Proof of \texorpdfstring{\cref{thm:convergence_discrete_op}}{}} \label{appendix:proofs:thm:convergence_discrete_op}

This subsection is devoted to the proof of \cref{thm:convergence_discrete_op}. We first give the matrix forms of the discrete operators, then we prove the convergence of the empirical operators to their population counterparts, and finally we prove the convergence of the population operators to their limit.

\subsubsection*{Matrix forms}

We start by recalling the basic definitions of the discrete operators. The matrices of \cref{sec:discrete_operators} are all built from the empirical kernel matrix $\rmW_N$, and the two members of the family $\rmP^{(\theta,\bullet)}$ take a slightly different shape. For $\bullet = s$, the matrix $\rmD_N^{(\theta,s)}$ is by construction the degree matrix of $\rmW_N^{(\theta,s)}$, so that the two terms merge into
\begin{equation*}
  \rmP^{(\theta,s)}[f] = \Big(\big(\rmD_N^{(\theta,s)}\big)^{-1}\rmW_N^{(\theta,s)}-\rmI\Big) \rvf,
\end{equation*}
a normalized-graph-Laplacian-type matrix, whose eigenvectors provide the embedding coordinates (\cref{sec:embedding_algo}). For $\bullet = a$, the matrix $\rmD_N^{(\theta,a)}$ is not the degree matrix of $\rmW_N^{(\theta,s)}$, and the two terms remain separate:
\begin{equation*}
  \rmP^{(\theta,a)}[f] = \big(\rmD_N^{(\theta,s)}\big)^{-1}\rmW_N^{(\theta,a)} \rvf - \big(\rmD_N^{(\theta,s)}\big)^{-1}\rmD_N^{(\theta,a)} \rvf
\end{equation*}
is the estimator of the advection term, whose limit is $\eps \tilde{m}_1^i \partial_i f$.

\subsubsection*{From finite samples to the limit operator: setting and proof sketch}

We now let $N \to \infty$ and $\eps = \eps(N) \to 0$ jointly. Throughout, $C$ denotes a constant depending on $\gM$, $\FF$, $K$, $\rho$, $\theta$ and $f$, but never on $N$ or $\eps$, and possibly changing from line to line.

It is convenient to extend the discrete objects to the whole manifold, so that they can be compared pointwise with their continuous counterparts. For $x,y \in \gM$, we set
\begin{align*}
  &\hat{d}_N(x) = \frac{1}{N} \sum_{j=1}^N W(x, X_j), \qquad \hat{d}'_N(x) = \frac{1}{N} \sum_{j=1}^N W(X_j, x), \qquad \hat{q}_N = \frac{\hat{d}_N + \hat{d}'_N}{2}, \\
  &\hat{W}^{(\theta)}_N(x,y) = \frac{W(x,y)}{\hat{q}_N(x)^\theta \hat{q}_N(y)^\theta}, \qquad
  \hat{W}^{(\theta,\bullet)}_N(x,y) = \frac{\hat{W}^{(\theta)}_N(x,y) \pm \hat{W}^{(\theta)}_N(y,x)}{2}, \\
  &\hat{d}^{(\theta,\bullet)}_N(x) = \frac{1}{N} \sum_{j=1}^N \hat{W}^{(\theta,\bullet)}_N(x, X_j),
\end{align*}
with the sign $+$ for $\bullet = s$ and $-$ for $\bullet = a$, and finally
\begin{equation} \label{eq:extended_discrete_operator}
  \hat{\rmP}^{(\theta,\bullet)}_N[f](x) = \frac{1}{\hat{d}^{(\theta,s)}_N(x)} \left( \frac{1}{N} \sum_{j=1}^N \hat{W}^{(\theta,\bullet)}_N(x, X_j) f(X_j) - f(x) \hat{d}^{(\theta,\bullet)}_N(x) \right).
\end{equation}
We also write $\hat{\rmL}^s_N = \eps^{-2} \hat{\rmP}^{(\theta,s)}_N$ and $\hat{\rmL}^a_N = \eps^{-1} \hat{\rmP}^{(\theta,a)}_N$ for the rescaled versions, extending $\rmL^s$ and $\rmL^a$ in the same way. Evaluated at the sample points $x = X_i$, these are exactly the matrices of \cref{sec:discrete_operators}, the factors $1/N$ cancelling in the ratio \cref{eq:extended_discrete_operator}, so that $\hat{\rmP}^{(\theta,\bullet)}_N[f](X_i) = \big(\rmP^{(\theta,\bullet)} \rvf\big)_i$ and $\hat{\rmL}^\bullet_N[f](X_i) = \big(\rmL^\bullet \rvf\big)_i$. Their continuous counterparts are $\dout$, $\din$, $\qe$, $W^{(\theta)}$, $\dts = \gGts[\rho]$, $\dta = \gGta[\rho]$ and $\Ptb$.

Two errors then separate the empirical operator from its limit, and we bound them one at a time. Writing the symmetric case for concreteness, and inserting the population operator $\Pts$ between the two, we have that:
\begin{equation} \label{eq:error_split}
  \big| \hat{\rmL}^s_N[f] - \gL^s f \big|  \leq \underbrace{\frac{1}{\eps^2}\big| \hat{\rmP}^{(\theta,s)}_N[f] - \Pts[f] \big|}_{\text{fluctuation}}  +  \underbrace{\frac{1}{\eps^2}\big| \Pts[f] - \eps^2 \gL^s f \big|}_{\text{bias}} .
\end{equation}
The \emph{bias} is deterministic: it is the gap between the population operator and its limit, and it decreases with $\eps$. The \emph{fluctuation} is stochastic: it is the gap between the empirical operator and its population counterpart, and it grows as $\eps$ decreases, since the kernel then concentrates and fewer samples contribute to each evaluation. The next two subsections bound each of them, and the last one combines the two and reads off the admissible scalings of $\eps(N)$.

\subsubsection*{Bounding the bias}

The deterministic part of the error is the expansion already established in \appref{appendix:proofs:prop:sym_limit}, read quantitatively rather than as a limit.

\begin{lemma}[Bias] \label{lemma:bias}
  Let $f \in \gC^3(\gM)$. There exist $C, \eps_0 > 0$ such that for $\eps \leq \eps_0$,
  \begin{equation*}
    \sup_{x \in \gM} \big| \Pts[f](x) - \eps^2 \gL^s f(x) \big| \leq C \eps^3, \qquad
    \sup_{x \in \gM} \big| \Pta[f](x) - \eps \gL^a f(x) \big| \leq C \eps^2 .
  \end{equation*}
\end{lemma}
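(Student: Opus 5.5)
The plan is to revisit the expansions of \appref{appendix:proofs:prop:sym_limit} and confirm that every $O(\cdot)$ remainder there is uniform in $x \in \gM$ and needs only $f \in \gC^3(\gM)$. Once this holds, the bias bounds follow from the expansions as already written. The symmetric part has remainder $O(\eps^3)$ once its leading $\eps^2$ term is identified with $\eps^2 \gL^s f$. The antisymmetric part is more delicate. The general bound there is $O(\eps^2)$, because $\Pta[f] = \eps\tilde m_1^i\partial_i f + O(\eps^2)$. The sharper $O(\eps^3)$ would require exact cancellation of the $\eps^2$ terms, and that cancellation uses even $\phit$-derivatives that are themselves only accurate to $O(\eps^2)$. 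I will therefore aim for the claimed $C\eps^2$.

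\emph{Step 1 (uniform remainders in \cref{thm:moment_expansion}).} Three pieces feed into this expansion.
\begin{itemize}
\item In \cref{lemma:integral_localization}, the far-region bound $M_f C_K \eps^{-m} e^{-\nu_K r_0/\eps}\mBH(\gM)$ does not depend on $x$, because $r_0 < i_\gM$ is uniform by (A2)(iii).
\item In \cref{lemma:f_expansion}, the Taylor remainder is bounded by $\eps^3\FF(x,v)^3 \sup|\phi'''|/6$. This supremum is controlled by $\|f\|_{\gC^3}$ and by the geodesic spray and its derivatives on a compact neighbourhood of the unit sphere bundle. Those are finite by smoothness and compactness, and homogeneity of $G^k$ handles general $|v|$.
\item In \cref{lemma:jacobian_expansion}, the remainder is already stated to be uniform in $x$ and $v$, after rescaling by $\FF(x,v)^3$.
\end{itemize}
Integrating these remainders against $K(\FF(x,v))(1+\FF(x,v)^3)$ gives finite constants by (A1), and they are uniform because $\sigBH$ and $\leb_x(\tball_x)$ are continuous on compact $\gM$. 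This yields $\sup_x|\gG_\FF[g]-m_0 g-\eps\gA_1 g-\tfrac{\eps^2}{2}\gA_2 g|\le C\|g\|_{\gC^3}\eps^3$. The same argument applies to $\gG_\backFF$ through \cref{prop:reverse_moment_expansion}.

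\emph{Step 2 (normalization and division).} I apply Step 1 to $g=\rho$. This gives $\qe = m_0\rho + O(\eps^2)$ uniformly, and by (A2) also $\qe\ge m_0\rho_{\min}/2$ for $\eps\le\eps_0$. Here the main obstacle appears: $\phit=\rho/\qe^\theta$ depends on $\eps$, and Step 1 must be applied to $g=\phit f$ with $\|\phit f\|_{\gC^3}$ bounded uniformly in $\eps$. I will bound the derivatives of $\qe$ by differentiating under the integral in the chart representation of \cref{lemma:integral_localization}. In those coordinates the $x$-dependence sits in smooth maps ($\expF_x$, $\Jac$, $\FF(x,\cdot)$), and the kernel decay again makes the integrals converge. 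This gives $\|\qe\|_{\gC^3}\le C$, and since $\qe$ is bounded below, $\|\phit\|_{\gC^3}\le C$. I also need $\partial\log\phit = (1-\theta)\partial\log\rho + O(\eps^2)$ uniformly, which requires $\gC^1$ control of the $O(\eps^2)$ remainder of $\qe$. I get this by applying the same differentiate-under-the-integral argument to the remainder.

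\emph{Step 3 (assembling).} With uniform bounds in hand, I substitute into \cref{eq:Gt_expansion,eq:dts_expansion,eq:dta_expansion}. Every term multiplying $f$ cancels exactly in the centred differences, as computed in the paper. Since $\dts\ge c>0$ uniformly, \cref{eq:symmetric_normalized_degree_expansion} holds with a uniform $O(\eps^2)$ remainder. For the symmetric part, the numerator is $\eps^2(\cdots)+O(\eps^3)$. Its leading coefficient, divided by $m_0\phit/\qe^\theta$, differs from $\gL^s f$ only through the $O(\eps^2)$ error in $\partial\log\phit$, and this contributes $O(\eps^4)$. The product of remainders is therefore $C\eps^3$. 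For the antisymmetric part, the numerator is $\eps m_1^i\phit\partial_i f/\qe^\theta + O(\eps^3)$, and multiplying by $1/\dts = \qe^\theta/(m_0\phit)+O(\eps^2)$ gives $\eps\gL^a f + O(\eps^3)$, which is in particular bounded by $C\eps^2$. Taking the supremum over $x$ completes the proof. I expect Step 2, namely uniform $\gC^3$ control of the $\eps$-dependent $\phit$, to be the only genuinely non-routine point.
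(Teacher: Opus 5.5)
Your proposal takes the same route as the paper, whose proof simply asserts that the $\bigO(\eps^3)$ remainder of \cref{thm:moment_expansion} carries over uniformly in $x$ through the normalization and centering of \appref{appendix:proofs:prop:sym_limit}; your Steps 1--3 spell out what the paper leaves implicit, in particular the uniform $\gC^3$ control of the $\eps$-dependent $\phit$ and the $\gC^1$ control of the remainder in $\qe$. Two minor remarks: the $\eps^2$ terms of $\gG_\FF$ and $\gG_\backFF$ coincide (\cref{prop:reverse_moment_expansion}), so they cancel exactly in $\gGta$ whatever function they act on, which means the hesitation in your preamble is unfounded and your Step 3 correctly gives $\eps\gL^a f + \bigO(\eps^3)$; and the derivative bounds on $\qe$ are best taken on its near-region part in polar coordinates $v = t w$, $w \in \gI_x$, so that only $K$ itself enters and not $K''$ or $K'''$, which (A1) does not control, while the exponentially small far-region part is absorbed through the uniform $L^\infty$ bound on $\gG_\FF$.
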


\begin{proof}
  The proof of \cref{thm:sym_limit} in \appref{appendix:proofs:prop:sym_limit} establishes these expansions with an explicit remainder, inherited from the $\bigO(\eps^3)$ of \cref{thm:moment_expansion}. The remainder is uniform in $x$, since $\gM$ is compact and the derivatives of $f$ are bounded.
\end{proof}

\subsubsection*{Bounding the fluctuation}

We now bound the stochastic part of the error, which is the deviation of the empirical operator from its population counterpart. The next proposition gives a uniform bound on this deviation, which is the main technical result of this section.
\begin{proposition}[Fluctuation] \label{prop:fluctuation}
  Assume $\eps = \eps(N) \to 0$ with $N\eps^m / \log N \to \infty$, and let $f \in \gC^1(\gM)$. Then there is a constant $C$ such that, almost surely for $N$ large enough,
  \begin{equation*}
    \sup_{x \in \gM} \left| \hat{\rmP}^{(\theta,\bullet)}_N[f](x) - \Ptb[f](x) \right| \leq C \eps \zeta_{N}, \qquad \bullet \in \{s,a\},
  \end{equation*}
  where
  \begin{equation} \label{eq:fluctuation_scale}
  \zeta_{N} = \sqrt{\frac{\log N}{N \eps^m}}.
\end{equation}
\end{proposition}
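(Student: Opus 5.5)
The plan is to write the discrete operator as a ratio of empirical averages and compare each average to its population counterpart uniformly in $x$, using concentration together with a covering argument. The factor $\eps$ comes from the centering. Since $\hat{\rmP}^{(\theta,\bullet)}_N[f](x)$ only involves $f(X_j)-f(x)$, the numerator is
\begin{equation*}
  \hat{n}^\bullet_N(x) = \frac{1}{N}\sum_{j=1}^N \hat{W}^{(\theta,\bullet)}_N(x,X_j)\big(f(X_j)-f(x)\big).
\end{equation*}
Its population analogue is $n^\bullet(x) = \gGtb[\rho f](x) - f(x)\gGtb[\rho](x)$, so that $\Ptb[f] = n^\bullet/\dts$. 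By the bounded asymmetry (i) and the Lipschitz bound $|f(y)-f(x)|\le C\dstF(x,y)$, every summand is bounded by $C\eps^{-m}K(\dstF(x,y)/\eps)\,\dstF(x,y) \le C\eps^{1-m}$, using $rK(r)\le C$ from (A1). The same localization as in \cref{lemma:integral_localization} gives a second moment of order $\eps^{2-m}$.

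\emph{Step 1: unnormalized averages.} I first treat the versions with $W$ in place of $\hat{W}^{(\theta)}_N$, that is, the empirical degrees $\hat d_N,\hat d'_N,\hat q_N$ and the centered sums $\frac1N\sum_j W(x,X_j)(f(X_j)-f(x))$ and their transposed versions. For a fixed $x$, Bernstein's inequality, with the range $C\eps^{-m}$ and variance $C\eps^{-m}$, gives a deviation $C\zeta_N$ for the degrees with probability $1-N^{-3}$. For the centered sums the range is $C\eps^{1-m}$ and the variance $C\eps^{2-m}$, which gives a deviation $C\eps\zeta_N$. The condition $N\eps^m/\log N\to\infty$ ensures that the variance term dominates the range term. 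For uniformity I cover $\gM$ by a $\delta$-net with $\delta=\eps^{m+3}N^{-1}$; it has $O(\delta^{-m})$, that is polynomially many in $N$, points. $W$ is Lipschitz in $x$ with constant $C\eps^{-m-1}$, because $K'$ is bounded and $\dstF$ is Lipschitz, so the error off the net is $o(\eps\zeta_N)$. A union bound and Borel--Cantelli then give, almost surely for $N$ large, $\sup_x|\hat q_N-\qe|\le C\zeta_N$ and a centered-sum deviation $\le C\eps\zeta_N$.

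\emph{Step 2: propagating through the $\theta$-normalization.} Here I use $\dout,\din>\dmin$ together with the bounds from Step 1. They give $\hat q_N\ge \dmin/2$ and
\begin{equation*}
  |\hat q_N^{-\theta}-\qe^{-\theta}|\le C\zeta_N .
\end{equation*}
Write $\hat W^{(\theta)}_N(x,y)=W(x,y)\hat q_N(x)^{-\theta}\hat q_N(y)^{-\theta}$. The factor $\hat q_N(x)^{-\theta}$ pulls out of the sum. The factor $\hat q_N(X_j)^{-\theta}$ I replace by $\qe(X_j)^{-\theta}$. This replacement costs, inside the centered sum, at most
\begin{equation*}
  C\zeta_N\cdot\frac1N\sum_j W(x,X_j)\,|f(X_j)-f(x)| \le C\zeta_N\,\eps ,
\end{equation*}
where the last bound again uses the $\eps$ gain and Step 1 applied to $|f(X_j)-f(x)|$. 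After this replacement, the sum with weights $W(x,y)\qe(y)^{-\theta}$ is handled by Step 1 with the smooth, bounded function $\qe^{-\theta}$ folded into $f$. This gives $\sup_x|\hat n^\bullet_N-n^\bullet|\le C\eps\zeta_N$. The denominator is treated in the same way, with the uncentered version, which gives $\sup_x|\hat d^{(\theta,s)}_N-\dts|\le C\zeta_N$.

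\emph{Step 3: the ratio.} By \cref{eq:dts_expansion}, $\dts$ is bounded below. The expansions of \cref{appendix:proofs:prop:sym_limit} also give $|n^\bullet|\le C\eps$ uniformly. The ratio identity
\begin{equation*}
  \frac{\hat n}{\hat d}-\frac{n}{d}=\frac{\hat n-n}{\hat d}+n\,\frac{d-\hat d}{\hat d\,d}
\end{equation*}
then gives the bound $C\eps\zeta_N+C\eps\zeta_N$. The step I expect to be the main obstacle is Step 2. The random normalization $\hat q_N(X_j)$ couples all the samples, so the summands are no longer independent. The plan relies on first controlling $\hat q_N$ uniformly and then absorbing its error while keeping the $\eps$ factor from the centering. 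This requires the $\eps$-gain estimate for $\frac1N\sum_j W(x,X_j)|f(X_j)-f(x)|$ to hold uniformly as well, which is why I would include this quantity among the averages controlled in Step 1.
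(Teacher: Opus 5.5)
Your proposal is correct and follows the paper's proof essentially step by step. The shared steps are the rewriting as a ratio of empirical averages, uniform control of $\hat q_N$ by Bernstein plus a $\delta$-net and Borel--Cantelli, and replacement of $\hat q_N(X_j)^{-\theta}$ by $\qe(X_j)^{-\theta}$ at cost $\zeta_N$ times the $\eps$-small average $\frac1N\sum_j W(x,X_j)|f(X_j)-f(x)|$. The final step is the same ratio identity with $|n^\bullet|\le C\eps$.

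Two small bookkeeping fixes are needed. First, with $\delta=\eps^{m+3}N^{-1}$ your net has up to order $N^{2m+3}$ points, so a per-point failure probability of $N^{-3}$ does not give a summable union bound. Enlarge the Bernstein constant so that the per-point probability is at most $N^{-(2m+5)}$, as the paper does with $A_0=2(5m+2)$. Second, only $f\in\gC^1$ is assumed, so $|n^\bullet|\le C\eps$ should be derived from $\int_\gM W^{s}(x,y)\dist(x,y)\,\diff\mathbb{P}(y)\le C\eps$ rather than from the higher-order expansions, which require more smoothness of $f$.
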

To prove this bound, we rely on standard concentration inequalities for sums of independent random variables, which give a bound on the deviation of the empirical average from its expectation. To do so, we use a $\delta$-net argument\footnote{A $\delta$-net is a finite set of points in a metric space such that every point in the space is within distance $\delta$ of at least one point in the net.}: we first control the deviation on a finite set of points, then extend it to the whole manifold by continuity, and conclude almost surely with the Borel--Cantelli lemma.

\emph{Proof of \cref{prop:fluctuation}.}
\emph{Defining the net.} The intrinsic distance $\dstF$ is asymmetric, so that it yields two different notions of $\delta$-net, depending on whether we use forward or backward balls. We therefore rely instead on the Euclidean distance $|\cdot|$ of $\R^D$ restricted to the tangent spaces of $\gM$, which is symmetric and induces a natural Riemannian distance $\dist$ on $\gM$. In particular, the distances $\dstF$ and $\dist$ are comparable: the map $(x,v) \mapsto \FF(x,v)$ is continuous and positive on the unit sphere bundle $\{(x,v) \in T\gM : |v| = 1\}$, which is compact by (A2), so that it is bounded there between two constants. By positive homogeneity, we get that
\begin{equation*}
  \kappa_- |v| \leq \FF(x,v) \leq \kappa_+ |v|, \qquad (x,v) \in T \gM, \qquad \text{for some } 0 < \kappa_- \leq \kappa_+ < \infty.
\end{equation*}
Hence, for any $x,y \in \gM$,
\begin{equation} \label{eq:distance_comparison}
  \kappa_- \dist(x,y) \leq \dstF(x,y) \leq \kappa_+ \dist(x,y).
\end{equation}
Once this is established, we can define a $\delta$-net according to this Riemannian distance. Counting the number of points in such a net is standard, and we give the result below for completeness. In this section, $\vol$ denotes the Riemannian volume of $\gM$ associated with the Riemannian metric that induces $\dist$.

\begin{lemma} \label{lemma:net_counting}
  There are $C_\gM, \delta_0 > 0$ such that, for every $\delta \leq \delta_0$, $\gM$ admits a $\delta$-net of cardinality $n_\delta \leq C_\gM \delta^{-m}$.
\end{lemma}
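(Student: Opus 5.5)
The plan is the standard maximal-packing argument, using volume comparison between small Riemannian balls and Euclidean balls. Let $\{x_1,\dots,x_n\}\subset\gM$ be a maximal $\delta$-separated set for $\dist$: $\dist(x_i,x_j)\geq\delta$ for $i\neq j$, and no point can be added. Such a set exists and is finite by compactness (A2). Maximality makes it a $\delta$-net, since any $y$ with $\dist(y,x_i)\geq\delta$ for all $i$ could be added. The balls $B_{\dist}(x_i,\delta/2)$ are pairwise disjoint by the triangle inequality, so
\begin{equation*}
  n \, \min_{x\in\gM}\vol\big(B_{\dist}(x,\delta/2)\big) \leq \vol(\gM) < \infty .
\end{equation*}

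It then suffices to find $\delta_0>0$ and $c_\gM>0$ such that
\begin{equation*}
  \vol\big(B_{\dist}(x,r)\big)\geq c_\gM r^m \quad \text{for all } x\in\gM,\ r\leq\delta_0/2 .
\end{equation*}
This follows by taking $\delta_0/2$ below the injectivity radius of the compact Riemannian manifold $(\gM,\dist)$, which is positive by compactness. The Riemannian exponential map is then a diffeomorphism from the Euclidean $r$-ball of $T_x\gM$ onto $B_{\dist}(x,r)$. Its Jacobian is continuous in $(x,v)$ on the compact set $\{|v|\leq\delta_0/2\}$ and equals $1$ at $v=0$. After shrinking $\delta_0$, it is therefore bounded below by $1/2$ uniformly. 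Hence $\vol(B_{\dist}(x,r))\geq\tfrac12\vole(\eball^m)r^m$. Alternatively, one could cite the uniform expansion $\vol(B(x,r))=\vole(\eball^m)r^m(1+O(r^2))$, with the $O$ uniform by compactness.

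Combining the two steps gives $n_\delta\leq \vol(\gM)\big/\big(c_\gM(\delta/2)^m\big)=C_\gM\delta^{-m}$ with $C_\gM=2^m\vol(\gM)/c_\gM$. The main point needing care is the uniformity of the lower volume bound in $x$. This is exactly where compactness and smoothness from (A2) are used, through the positive injectivity radius and the uniform continuity of the Jacobian of the exponential map. The asymmetry of $\dstF$ plays no role, since the net is built with respect to the symmetric distance $\dist$. By \cref{eq:distance_comparison}, the same net is a $\kappa_+\delta$-net for $\dstF$ in both directions if needed later.
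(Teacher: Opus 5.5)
Your proposal is correct and follows essentially the same route as the paper's sketch. A maximal $\delta$-separated set is a $\delta$-net, the balls of radius $\delta/2$ around its points are disjoint, and a uniform lower bound $\vol(B(x,r)) \geq c\, r^m$ for small $r$ then gives $n_\delta \leq C_\gM \delta^{-m}$; the paper gets that lower bound from the uniform $1+\bigO(r^2)$ expansion of the volume density in normal coordinates, while you get it from uniform continuity of the Jacobian of the exponential map below the injectivity radius.
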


\emph{Proof sketch.} The input is the volume of small balls. On a compact Riemannian manifold, the volume density in normal coordinates around $x$ is $1 + \bigO(r^2)$, with a remainder that is uniform in $x$ since the curvature is bounded and the injectivity radius is bounded below, hence there are $v_-, v_+, \delta_0 > 0$ such that $v_- r^m \leq \vol\big(B(x,r)\big) \leq v_+ r^m$ for every $x \in \gM$ and every $r \leq \delta_0$. Let $x_1, \ldots, x_{n_\delta}$ be a maximal $\delta$-separated subset of $\gM$, which exists by compactness. It is a $\delta$-net: any $x$ at distance more than $\delta$ from all the $x_k$ could be added to the family, contradicting maximality. Moreover the balls $B(x_k, \delta/2)$ are pairwise disjoint, by $\delta$-separation, so that summing their volumes gives $n_\delta v_- (\delta/2)^m \leq \vol(\gM)$, which is the announced bound. The same packing argument, in the Euclidean setting, is detailed in \citet[Section 4.2]{vershynin2019high}. \hfill $\qed$

Using this $\delta$-net, the supremum over $\gM$ of a function $g$ reduces to a maximum over the net points: picking for each $x$ a net point $x_k$ with $\dist(x,x_k) \leq \delta$, and denoting by $L_g$ a Lipschitz constant of $g$,
\begin{equation*}
  \sup_{x \in \gM} |g(x)| \leq \max_{1 \leq k \leq n_\delta} |g(x_k)| + L_g \delta .
\end{equation*}

\emph{Deterministic estimates on the kernel.} Besides the net, we need three elementary properties of the kernel $W$: it is of size $\eps^{-m}$, it is concentrated at scale $\eps$ around the diagonal, and it varies at scale $\eps$. We write $W^\bullet(x,y) = \big( W(x,y) \pm W(y,x) \big)/2$ for its symmetric ($\bullet = s$) and antisymmetric ($\bullet = a$) parts, so that $|W^\bullet| \leq W^{s}$.

\begin{lemma}[Kernel at scale $\eps$] \label{lemma:kernel_scale}
  There is a constant $C$ such that, for every $\eps \leq 1$, every $x, x', y \in \gM$ and every $p \in \{0,1\}$,
  \begin{equation} \label{eq:kernel_scale}
    W^{s}(x,y) \leq \frac{K(0)}{\eps^m}, \qquad
    W^{s}(x,y) \dist(x,y) \leq \frac{C}{\eps^{m-1}}, \qquad
    \int_\gM W^{s}(x,y) \dist(x,y)^p \diff\mathbb{P}(y) \leq C \eps^p,
  \end{equation}
  and, for $\bullet \in \{s,a\}$,
  \begin{equation} \label{eq:kernel_lipschitz}
    \big| W^\bullet(x,y) - W^\bullet(x',y) \big| \leq \frac{C \dist(x,x')}{\eps^{m+1}}.
  \end{equation}
\end{lemma}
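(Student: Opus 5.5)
The plan is to derive all four bounds from assumption (A1), the comparison \cref{eq:distance_comparison} $\kappa_- \dist \leq \dstF \leq \kappa_+ \dist$, and the bounded asymmetry $\dstF(y,x) \geq \lambda_\FF^{-1}\dstF(x,y)$. Because $W^s$ is half the sum of $W(x,y)$ and $W(y,x)$, it is enough to prove each estimate for $W(x,y)$ and for $W(y,x)$ separately. Both arguments are $\kappa_-\dist(x,y)$ up to the factor $\lambda_\FF^{-1}$, so every bound will be stated in terms of $r = \dist(x,y)/\eps$.

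\emph{Pointwise bounds.} $K$ is decreasing, so $K \leq K(0)$, and this gives the first inequality directly. For the second, (A1) gives $K(\dstF(x,y)/\eps) \leq C_K e^{-\nu_K \kappa_- r/\lambda_\FF}$. Hence
\begin{equation*}
W^s(x,y)\dist(x,y) \leq \eps^{1-m} \sup_{r\geq 0} r\,C_K e^{-c r},
\end{equation*}
and the supremum is a finite constant.

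\emph{Integral bound.} Write $\diff\mathbb{P} = \rho\,\sigBH \diff x$. In a chart, $\rho\,\sigBH$ is bounded, so $\diff\mathbb{P} \leq C \diff\vol$. Split the integral over $\gM$ into the ball $B(x,\delta_0)$ and its complement. On the complement the integrand is at most $C\eps^{-m}e^{-c\delta_0/\eps}$, which is $o(\eps^p)$. On the ball, use the uniform small-ball volume bound from the proof of \cref{lemma:net_counting}, in its layer-cake form: $\vol\{y : \dist(x,y) < t\} \leq v_+ t^m$. This gives
\begin{equation*}
\int W^s \dist^p \diff\mathbb{P} \leq C\eps^{-m}\int_0^{\delta_0} e^{-ct/\eps} t^{p}\, \diff(v_+ t^m)\text{-type terms} \leq C\eps^{p}\int_0^\infty e^{-cr}r^{m+p-1}\diff r.
\end{equation*}
To make this rigorous, apply integration by parts to the distribution function of $\dist(x,\cdot)$. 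The resulting constants are uniform in $x$ by compactness.

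\emph{Lipschitz bound.} This is where I expect the only real work, since $\dstF$ is not smooth and the argument changes in both slots. Because $K$ has a bounded derivative,
\begin{equation*}
|K(a/\eps)-K(b/\eps)| \leq \|K'\|_\infty |a-b|/\eps.
\end{equation*}
It therefore suffices to show $|\dstF(x,y)-\dstF(x',y)| \leq C\dist(x,x')$ and the analogous bound in the second slot. For the first slot, the triangle inequality gives
\begin{equation*}
\dstF(x,y) \leq \dstF(x,x') + \dstF(x',y) \leq \kappa_+\dist(x,x') + \dstF(x',y),
\end{equation*}
and exchanging the roles of $x$ and $x'$ gives the reverse inequality. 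The second slot is handled the same way. Multiplying by $\eps^{-m}$ yields $C\dist(x,x')\eps^{-m-1}$ for $W(x,\cdot)$ and $W(\cdot,x)$. Their half-sum and half-difference then satisfy \cref{eq:kernel_lipschitz} for both $\bullet = s$ and $\bullet = a$.
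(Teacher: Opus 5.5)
Your proof is correct and follows the paper's argument: monotonicity and the sub-exponential tail give the pointwise bounds, and the Lipschitz bound comes from the triangle inequality for $\dstF$ together with $\lVert K'\rVert_\infty$. The integral bound again plays the volume growth $\vol(B(x,t))\le v_+t^m$ against exponential decay; the paper sums over annuli $k\eps\le\dist(x,y)<(k+1)\eps$ after enlarging $v_+$ to cover all radii, which is the discrete form of your layer-cake argument and avoids your near/far split. Two small remarks: the detour through $\lambda_\FF$ is unnecessary, since $\dist$ is symmetric and \cref{eq:distance_comparison} directly gives $\dstF(y,x)\ge\kappa_-\dist(x,y)$. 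Also, for $p=1$ the profile $t\,e^{-ct/\eps}$ is not monotone, so in your integration by parts you should either drop the region where its derivative is positive or first bound it by $C\eps\, e^{-ct/(2\eps)}$.
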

\begin{proof}
  Since $K$ is decreasing and $\dstF \geq 0$, both $W(x,y)$ and $W(y,x)$ are bounded by $K(0)\eps^{-m}$, which gives the first bound. For the second one, the distance comparison inequality (\cref{eq:distance_comparison}) gives $\dstF(x,y) \geq \kappa_- \dist(x,y)$ and $\dstF(y,x) \geq \kappa_- \dist(x,y)$, so that $W^{s}(x,y) \leq \eps^{-m} K\big( \kappa_-\dist(x,y)/\eps \big)$. Writing $r = \kappa_-\dist(x,y)/\eps$, it holds that
  \begin{equation*}
    W^{s}(x,y) \dist(x,y) \leq \frac{\eps^{1-m}}{\kappa_-} r K(r) \leq \frac{\eps^{1-m}}{\kappa_-} \sup_{r \geq 0} r K(r),
  \end{equation*}
  which is finite since $K$ is sub-exponential (A1). For the third one, we use that $\vol\big(B(x,r)\big) \leq v_+ r^m$ for every $r > 0$: this is shown in the proof of \cref{lemma:net_counting} for $r \leq \delta_0$, and it extends to $r > \delta_0$ by enlarging $v_+$ by $\vol(\gM)\delta_0^{-m}$. We split $\gM$ into the annuli $A_k = \{ y \in \gM : k\eps \leq \dist(x,y) < (k+1)\eps \}$, $k \geq 0$. On $A_k$, we have $W^{s}(x,y) \leq \eps^{-m} K(\kappa_- k) \leq C_K\eps^{-m} e^{-\nu_K \kappa_- k}$ and $\dist(x,y)^p \leq \big((k+1)\eps\big)^p$, and moreover $\vol(A_k) \leq v_+\big((k+1)\eps\big)^m$. Since $\diff\mathbb{P} = \rho \diff\mBH$, where $\rho$ and the ratio of the densities of $\mBH$ and $\vol$ are bounded on the compact manifold $\gM$, we have $\diff\mathbb{P} \leq C d\vol$, and we get
  \begin{equation*}
    \int_\gM W^{s}(x,y) \dist(x,y)^p \diff\mathbb{P}(y) \leq \frac{C}{\eps^m} \sum_{k \geq 0} e^{-\nu_K \kappa_- k} \big( (k+1) \eps \big)^{m+p} = C \eps^p \sum_{k \geq 0} (k+1)^{m+p} e^{-\nu_K \kappa_- k},
  \end{equation*}
  and the series converges. Finally, for the Lipschitz bound, the triangle inequality for $\dstF$ gives $\dstF(x,y) \leq \dstF(x,x') + \dstF(x',y)$ and $\dstF(x',y) \leq \dstF(x',x) + \dstF(x,y)$, so that, by \cref{eq:distance_comparison},
  \begin{equation*}
    \big| \dstF(x,y) - \dstF(x',y) \big| \leq \max\{ \dstF(x,x'), \dstF(x',x) \} \leq \kappa_+ \dist(x,x'),
  \end{equation*}
  and the same holds in the second argument. Since $K'$ is bounded by (A1), we get $|W(x,y) - W(x',y)| \leq \kappa_+\lVert K' \rVert_\infty \eps^{-m-1} \dist(x,x')$, and similarly for $x \mapsto W(y,x)$. The bound \cref{eq:kernel_lipschitz} follows, since $W^\bullet$ is the half-sum or the half-difference of these two functions.
\end{proof}

We also need the normalization $\qe$ to be bounded above and away from zero so that the normalized kernel is well-defined. On the one hand, the assumption on the degrees gives $\qe = (\din + \dout)/2 > \dmin$. On the other hand, since $\qe(x) = \int_\gM W^{s}(x,y) \diff\mathbb{P}(y)$, the case $p = 0$ of \cref{eq:kernel_scale} shows that $\qe$ is bounded. Hence, there are constants $C_q$, $c_\theta$ and $C_\theta$, independent of $\eps$, such that for every $x \in \gM$,
\begin{equation} \label{eq:q_bounds}
  \dmin \leq \qe(x) \leq C_q \qquad \text{and} \qquad 0 < c_\theta \leq \qe(x)^{-\theta} \leq C_\theta.
\end{equation}

\emph{Reduction to numerator and denominator.} To apply the $\delta$-net argument, the natural choice is the function $g(x) = \hat{\rmP}^{(\theta,\bullet)}_N[f](x) - \Ptb[f](x)$. However, this function is not itself an empirical average, so we apply this argument to its numerator and its denominator separately. One can remark that \cref{eq:extended_discrete_operator} can be rewritten as a ratio of two empirical averages:
\begin{equation*}
  \hat{\rmP}^{(\theta,\bullet)}_N [f](x) = \frac{\frac{1}{N} \sum_{j=1}^N \hat{W}^{(\theta,\bullet)}_N(x, X_j) (f(X_j) - f(x))}{\frac{1}{N} \sum_{j=1}^N \hat{W}^{(\theta,s)}_N(x, X_j)}.
\end{equation*}
Moreover, by using the definition of $\theta$-normalization, it holds:
\begin{equation*}
  \hat{\rmP}^{(\theta,\bullet)}_N [f](x) = \frac{\frac{1}{N} \sum_{j=1}^N W^\bullet(x, X_j) \hat{q}_N(X_j)^{-\theta} (f(X_j) - f(x))}{\frac{1}{N} \sum_{j=1}^N W^s(x, X_j) \hat{q}_N(X_j)^{-\theta}}.
\end{equation*}
Importantly, the same rewriting holds for the population operator $\Ptb[f]$, with $\hat{q}_N$ replaced by $\qe$ and the empirical average replaced by an integral over $\gM$. Finding bounds on the deviation of $\hat{\rmP}^{(\theta,\bullet)}_N[f]$ from $\Ptb[f]$ therefore reduces to bounding the deviation of the numerator and denominator separately. Note that due to its definition, $\hat{q}_N$ is itself an empirical average, so that the numerator and denominator are not sums of independent random variables.

\begin{lemma}[Uniform deviation of an empirical average] \label{lemma:uniform_deviation}
  Let $X_1, \ldots, X_N$ be \iid with law $\mathbb{P}$, and let $\psi : \gM \times \gM \to \R$ be measurable, possibly depending on $N$, and assume that for every $x, x', y \in \gM$,
  \begin{equation*}
    |\psi(y,x)| \leq B, \qquad \int_\gM \psi(y,x)^2 \diff\mathbb{P}(y) \leq \sigma^2, \qquad |\psi(y,x) - \psi(y,x')| \leq L \dist(x,x'),
  \end{equation*}
  where $B$, $\sigma$ and $L$ may depend on $N$. Then, with $A_0 = 2(5m+2)$, almost surely for $N$ large enough,
  \begin{equation} \label{eq:uniform_deviation}
    \sup_{x \in \gM} \abs{\frac{1}{N} \sum_{j=1}^N \psi(X_j,x) - \int_\gM \psi(y,x) \diff\mathbb{P}(y)} \leq A_0 \left( \sigma \sqrt{\frac{\log N}{N}} + \frac{B \log N}{N} \right) + \frac{2L}{N^5}.
  \end{equation}
\end{lemma}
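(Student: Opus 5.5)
We combine Bernstein's inequality at fixed $x$, a union bound over a $\delta$-net of \cref{lemma:net_counting}, a Lipschitz extension from the net to all of $\gM$, and the Borel--Cantelli lemma.

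\emph{Fixed $x$.} Fix $x \in \gM$ and write $Z_j = \psi(X_j,x) - \int_\gM \psi(y,x)\diff\mathbb{P}(y)$. The $Z_j$ are \iid and centered, with $|Z_j| \leq 2B$ and $\mathbb{E} Z_j^2 \leq \sigma^2$. Bernstein's inequality gives, for every $t>0$,
\begin{equation*}
  \mathbb{P}\Big( \Big|\tfrac1N \textstyle\sum_j Z_j\Big| > t \Big) \leq 2\exp\Big( -\frac{N t^2}{2\sigma^2 + \tfrac{4}{3} B t} \Big).
\end{equation*}
Choose $t = a\big(\sigma\sqrt{\log N/N} + B\log N/N\big)$ with $a>0$ to be fixed. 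Then $N t^2 \geq a^2 \log N(\sigma^2 + \dots)$, and comparing each term of the denominator with the corresponding term of $t$ (handle separately the cases where the $\sigma$-part or the $B$-part of $t$ dominates) yields an exponent at least $c(a)\log N$, with $c(a)$ growing linearly in $a$ for large $a$. The probability at a single point is therefore at most $2N^{-c(a)}$.

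\emph{Net and extension.} Take $\delta = N^{-5}$. \cref{lemma:net_counting} provides a net of cardinality $n_\delta \leq C_\gM N^{5m}$, valid once $\delta \leq \delta_0$, i.e.\ for $N$ large. A union bound shows that the maximum of the deviation over the net exceeds $t$ with probability at most $2C_\gM N^{5m - c(a)}$. For arbitrary $x$, pick a net point $x_k$ with $\dist(x,x_k)\leq\delta$. Both the empirical average and the integral are $L$-Lipschitz in $x$, by the third hypothesis applied pointwise and integrated, so the deviation at $x$ differs from that at $x_k$ by at most $2L\delta = 2L/N^5$. This produces the last term of \cref{eq:uniform_deviation}.

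\emph{Almost sure conclusion.} We need $5m - c(a) \leq -2$ so that the probabilities are summable in $N$. Borel--Cantelli then gives that almost surely the bound fails for only finitely many $N$, which is the statement "for $N$ large enough". The hypotheses depend on $N$ only through $B$, $\sigma$ and $L$, and the argument is carried out separately at each $N$, so this dependence causes no difficulty.

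\emph{Main obstacle.} The only delicate point is bookkeeping the Bernstein constants so that $a = A_0 = 2(5m+2)$ actually gives $c(a) \geq 5m+2$. To check this, write $u = \sigma\sqrt{\log N/N}$ and $w = B\log N/N$, so $t = a(u+w)$. Then
\begin{equation*}
  \frac{N t^2}{\log N} = \frac{a^2 (u+w)^2}{\,u^2 + \tfrac{2}{3}a w(u+w)\,}\cdot\frac12 \cdot 2 \ \geq\ \frac{a^2 (u+w)^2}{2\big(u^2 + a w (u+w)\big)} \ \geq\ \frac{a}{2}
\end{equation*}
for $a \geq 1$, where the last step uses $u^2 \leq (u+w)^2 \leq a(u+w)^2$ and $a w(u+w) \leq a(u+w)^2$. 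The exponent is therefore at least $(a/2)\log N = (5m+2)\log N$. If this estimate were slightly off, one would simply enlarge $A_0$; the structure of the argument does not change.
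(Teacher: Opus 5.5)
Your argument follows the paper's proof: a $\delta=N^{-5}$ net from \cref{lemma:net_counting}, Bernstein at each net point with $t=A_0\big(\sigma\sqrt{\log N/N}+B\log N/N\big)$, a union bound giving $2C_\gM N^{-2}$, a Lipschitz transfer costing $2L/N^5$, and Borel--Cantelli. You center the variables and use $|Z_j|\le 2B$, which puts $\tfrac43 Bt$ in the denominator. The paper instead uses a one-sided Bernstein with second moments and gets $\tfrac23 Bt$. Your looser constant still suffices.

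The one thing to fix is the constant check you yourself flagged as the delicate point.

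\begin{itemize}
\item The quantity to bound is the Bernstein exponent divided by $\log N$, namely $a^2(u+w)^2/\big(2u^2+\tfrac43 aw(u+w)\big)$. It is not $Nt^2/\log N$, and your first expression is twice this value.
\item The two inequalities you cite only give $u^2+aw(u+w)\le 2a(u+w)^2$. That yields an exponent of $(a/4)\log N=\tfrac{5m+2}{2}\log N$, which does not beat the $N^{5m}$ net points.
\item Your fallback of enlarging $A_0$ is not available, because the statement fixes $A_0=2(5m+2)$.
\end{itemize}

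The bound you claim is nevertheless true. For $a\ge 1$,
\[
2a(u+w)^2-2u^2-\tfrac43 aw(u+w)=2(a-1)u^2+\tfrac83 auw+\tfrac23 aw^2\ \ge\ 0,
\]
so the exponent is at least $(a/2)\log N=(5m+2)\log N$. With this one-line replacement, the rest of your proof goes through as written.
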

\begin{proof}
  Let $\delta = N^{-5}$ and let $x_1, \ldots, x_{n_\delta}$ be a $\delta$-net of $\gM$, with $n_\delta \leq C_\gM N^{5m}$ by \cref{lemma:net_counting}, which holds for $N$ large so that $\delta \leq \delta_0$. For any $x \in \gM$, let $x_k$ be a net point with $\dist(x,x_k) \leq \delta$. By the triangle inequality,
  \begin{align*}
    \abs{\frac{1}{N} \sum_{j=1}^N \psi(X_j,x) - \int_\gM \psi(y,x) \diff\mathbb{P}(y)}
    &\leq \underbrace{\abs{\frac{1}{N} \sum_{j=1}^N \big( \psi(X_j,x) - \psi(X_j,x_k) \big)}}_{(A)} \\
    &\quad + \underbrace{\abs{\frac{1}{N} \sum_{j=1}^N \psi(X_j,x_k) - \int_\gM \psi(y,x_k) \diff\mathbb{P}(y)}}_{(B)} \\
    &\quad + \underbrace{\abs{\int_\gM \big( \psi(y,x_k) - \psi(y,x) \big) \diff\mathbb{P}(y)}}_{(C)}.
  \end{align*}
  For (A) and (C), the Lipschitz assumption gives (A) $\leq L\delta$ and (C) $\leq L\delta$, for every realization of the sample.

  For (B), fix a net point $x_k$. The net points do not depend on the sample, so $Y_j = \psi(X_j, x_k)$ are \iid random variables with mean $\mu_k = \int_\gM \psi(y, x_k) \diff\mathbb{P}(y)$. By assumption, $Y_j$ and $-Y_j$ are bounded above by $B$ and have second moment at most $\sigma^2$. Bernstein's inequality, applied to both of them, gives for any $t > 0$
  \begin{equation} \label{eq:bernstein_point}
    \Prob \left( \abs{\frac{1}{N} \sum_{j=1}^N Y_j - \mu_k} > t \right) \leq 2 \exp\left( -\frac{N t^2}{2(\sigma^2 + Bt/3)} \right).
  \end{equation}
  We set $t = A_0 \left( \sigma \sqrt{\frac{\log N}{N}} + \frac{B \log N}{N} \right)$ and check that the exponent is then at least $A_0 \log N / 2$. Let $S = \sigma^2 + B^2 \log N / N$, which we can assume positive, since otherwise $\psi = 0$ and there is nothing to prove. For the numerator, dropping the nonnegative cross term in the square, we have
  \begin{equation*}
    N t^2 \geq N A_0^2 \left( \sigma^2 \frac{\log N}{N} + B^2 \frac{(\log N)^2}{N^2} \right) = A_0^2 S \log N.
  \end{equation*}
  For the denominator, the inequality $2ab \leq a^2 + b^2$ gives $\sigma B \sqrt{\log N / N} \leq S/2$, and we have $B^2 \log N / N \leq S$, so that
  \begin{align*}
    2 \left( \sigma^2 + \frac{Bt}{3} \right)
    &= 2 \sigma^2 + \frac{2A_0}{3} \left( \sigma B \sqrt{\frac{\log N}{N}} + \frac{B^2 \log N}{N} \right) \\
    &\leq 2 S + \frac{2A_0}{3} \left( \frac{S}{2} + S \right) = (2 + A_0) S .
  \end{align*}
  Combining the two, the exponent in \cref{eq:bernstein_point} is at least $\frac{A_0^2}{2+A_0} \log N$, and
  \begin{equation*}
    \frac{A_0^2}{2+A_0} \geq \frac{A_0}{2} \qquad \Longleftrightarrow \qquad 2 A_0 \geq 2 + A_0 \qquad \Longleftrightarrow \qquad A_0 \geq 2,
  \end{equation*}
  which holds since $A_0 = 2(5m+2)$. Hence (B) exceeds $t$ at the point $x_k$ with probability at most $2N^{-A_0/2}$. Taking a union bound over the $n_\delta$ net points, the maximum of (B) over the net exceeds $t$ with probability at most
  \begin{equation*}
    2 n_\delta N^{-A_0/2} = 2 n_\delta N^{-(5m+2)} \leq 2 C_\gM N^{-2},
  \end{equation*}
  which is summable in $N$. By the Borel--Cantelli lemma, (B) $\leq t$ for every net point, almost surely for $N$ large enough. Combining the three terms gives \cref{eq:uniform_deviation}.
\end{proof}

To apply this lemma to the quantity of interest, we first need to remove the dependence of the kernel on the sample, which is done by replacing $\hat{q}_N$ by its expectation $\qe$.

We can apply \cref{lemma:uniform_deviation} to $\hat{q}_N$ since it is an empirical average of \iid random variables, $\hat{q}_N(x) = \frac{1}{N}\sum_{j=1}^N W^{s}(x,X_j)$, with mean $\qe(x)$. By \cref{lemma:kernel_scale} and \cref{eq:q_bounds}, the kernel $\psi(y,x) = W^{s}(x,y)$ satisfies the assumptions of the lemma with $B \leq K(0) \eps^{-m}$, $\sigma^2 \leq B \qe(x) \leq C \eps^{-m}$ and $L \leq C \eps^{-m-1}$. Recalling that $\zeta_N^2 = \log N / (N \eps^m)$, the right-hand side of \cref{eq:uniform_deviation} is thus at most
\begin{equation*}
  C \left( \sqrt{\frac{\log N}{N \eps^m}} + \frac{\log N}{N \eps^m} \right) + \frac{C}{\eps^{m+1} N^5} = C \left( \zeta_N + \zeta_N^2 \right) + \frac{C}{\eps^{m+1} N^5}.
\end{equation*}
The assumption $N \eps^m / \log N \to \infty$ gives $\zeta_N \to 0$, as well as $\eps^{-m} \leq N$ and $\eps^{-1} \leq N^{1/m} \leq N$ for $N$ large enough, so that the last term is at most $C N^{-3}$. Since moreover $\zeta_N \geq \sqrt{\log N / N} \geq N^{-1/2}$ and $\eps \geq N^{-1}$, we have $N^{-3} \leq \eps \zeta_N$, so that this term is negligible here and in all the bounds below. Hence, almost surely for $N$ large enough,
\begin{equation} \label{eq:q_uniform}
  \sup_{x \in \gM} \left| \hat{q}_N(x) - \qe(x) \right| \leq C \zeta_N.
\end{equation}
Using this, we can rewrite the numerator of $\hat{\rmP}^{(\theta,\bullet)}_N[f]$ as
\begin{align*}
  &\frac{1}{N} \sum_{j=1}^N W^\bullet(x, X_j) \hat{q}_N(X_j)^{-\theta} (f(X_j) - f(x)) \\
  &\quad = \underbrace{\frac{1}{N} \sum_{j=1}^N W^\bullet(x, X_j) \qe(X_j)^{-\theta} (f(X_j) - f(x))}_{(\star)} \\
  &\qquad + \underbrace{\frac{1}{N} \sum_{j=1}^N W^\bullet(x, X_j) \left( \hat{q}_N(X_j)^{-\theta} - \qe(X_j)^{-\theta} \right) (f(X_j) - f(x))}_{(\star\star)}.
\end{align*}
We can bound the second term $(\star\star)$ by
\begin{align*}
  |(\star\star)| &\leq \sup_{y \in \gM} \left| \hat{q}_N(y)^{-\theta} - \qe(y)^{-\theta} \right| \frac{1}{N} \sum_{j=1}^N W^{s}(x, X_j) |f(X_j) - f(x)| \\
  &\leq C \sup_{y \in \gM} \left| \hat{q}_N(y) - \qe(y) \right| \frac{1}{N} \sum_{j=1}^N W^{s}(x, X_j) |f(X_j) - f(x)| \\
  &\leq C \zeta_N \frac{1}{N} \sum_{j=1}^N W^{s}(x, X_j) |f(X_j) - f(x)|,
\end{align*}
where we used that $|W^\bullet| \leq W^{s}$, and that $t \mapsto t^{-\theta}$ is Lipschitz on $[\dmin/2, 2 C_q]$, which contains all the values of $\hat{q}_N$ and $\qe$ for $N$ large enough by \cref{eq:q_bounds,eq:q_uniform}. Note that the supremum over $y \in \gM$ controls in particular the values at the sample points $X_j$, even though they are random. The last factor is itself an empirical average of \iid random variables, and we show below that it is at most $C \eps$, so that $|(\star\star)| \leq C \eps \zeta_N$. The same reasoning applies to the denominator of $\hat{\rmP}^{(\theta,\bullet)}_N[f]$, without the factor $|f(X_j) - f(x)|$: its second term is at most $C \zeta_N \hat{q}_N(x) \leq C \zeta_N$, since $\hat{q}_N \leq 2 C_q$ for $N$ large. To conclude, we need to bound $(\star)$, which is an empirical average of \iid random variables. To do so, we must show that the lemma applies to the kernels $\psi_\bullet(y,x) = W^\bullet(x,y) \qe(y)^{-\theta} (f(y) - f(x))$ and $\psi_s(y,x) = W^{s}(x,y) \qe(y)^{-\theta}$.

By \cref{lemma:kernel_scale} and \cref{eq:q_bounds}, the kernel $\psi_s$ satisfies the assumptions of \cref{lemma:uniform_deviation} with $B \leq C \eps^{-m}$, $\sigma^2 \leq C \eps^{-m}$ and $L \leq C \eps^{-m-1}$, exactly as for $\hat{q}_N$. The kernel $\psi_\bullet$ carries the increment $f(y) - f(x)$, which is of size $\eps$ on the effective support of $W$. Since $f \in \gC^1(\gM)$ is $L_f$-Lipschitz and $|W^\bullet| \leq W^{s}$, \cref{lemma:kernel_scale} gives
\begin{align*}
  B &\leq C_\theta L_f \sup_{y \in \gM} W^{s}(x,y) \dist(x,y) \leq C \eps^{1-m}, \\
  \sigma^2 &\leq B \int_\gM |\psi_\bullet(y,x)| \diff\mathbb{P}(y) \leq C \eps^{1-m} C_\theta L_f \int_\gM W^{s}(x,y) \dist(x,y) \diff\mathbb{P}(y) \leq C \eps^{2-m},
\end{align*}
and $L \leq C \eps^{-m-1}$, since
\begin{equation*}
  |\psi_\bullet(y,x) - \psi_\bullet(y,x')| \leq 2 C_\theta \lVert f \rVert_\infty |W^\bullet(x,y) - W^\bullet(x',y)| + C_\theta L_f W^{s}(x',y) \dist(x,x').
\end{equation*}
The same bounds hold for the kernel $W^{s}(x,y) |f(y) - f(x)|$ of the last factor in the bound on $(\star\star)$, using $\big| |f(y) - f(x)| - |f(y) - f(x')| \big| \leq L_f \dist(x,x')$. For these kernels, $B$ and $\sigma$ are smaller than for $\psi_s$ by a factor $\eps$, and so is the first term of the right-hand side of \cref{eq:uniform_deviation}, the last one being negligible as seen above. Hence, almost surely for $N$ large enough,
\begin{align*}
  \sup_{x \in \gM} \left| \frac{1}{N} \sum_{j=1}^N W^\bullet(x,X_j) \qe(X_j)^{-\theta} (f(X_j) - f(x)) - \int_\gM W^\bullet(x,y) \qe(y)^{-\theta} (f(y) - f(x)) \diff\mathbb{P}(y) \right| &\leq C \eps \zeta_N,
\end{align*}
and similarly $\frac{1}{N}\sum_{j} W^{s}(x,X_j) |f(X_j) - f(x)| \leq \int_\gM W^{s}(x,y) |f(y) - f(x)| \diff\mathbb{P}(y) + C\eps\zeta_N \leq C \eps$ for every $x \in \gM$, which completes the bound on $(\star\star)$. The same reasoning applies to the denominator:
\begin{equation*}
  \sup_{x \in \gM} \left| \frac{1}{N} \sum_{j=1}^N W^{s}(x,X_j) \qe(X_j)^{-\theta} - \int_\gM W^{s}(x,y) \qe(y)^{-\theta} \diff\mathbb{P}(y) \right| \leq C \zeta_N.
\end{equation*}
Thus, it remains to show that the ratio of these two empirical averages is close to the ratio of their expectations.

Combining the bounds on $(\star)$ and $(\star\star)$, and their analogues for the denominator, we get that, almost surely for $N$ large enough and for every $x \in \gM$, the numerator $\mathcal{N}_N(x)$ and the denominator $\mathcal{D}_N(x)$ of $\hat{\rmP}^{(\theta,\bullet)}_N[f](x)$ satisfy
\begin{equation*}
  |\mathcal{N}_N(x) - \mathcal{N}(x)| \leq C \eps \zeta_N, \qquad |\mathcal{D}_N(x) - \mathcal{D}(x)| \leq C \zeta_N,
\end{equation*}
where $\mathcal{N}(x) = \int_\gM W^\bullet(x,y) \qe(y)^{-\theta} (f(y) - f(x)) \diff\mathbb{P}(y)$ and $\mathcal{D}(x) = \int_\gM W^{s}(x,y) \qe(y)^{-\theta} \diff\mathbb{P}(y)$ are the numerator and the denominator of $\Ptb[f](x)$, up to the common factor $\qe(x)^{-\theta}$. Moreover, since $\int_\gM W^{s}(x,y) \diff\mathbb{P}(y) = \qe(x)$, \cref{eq:q_bounds} and \cref{lemma:kernel_scale} give $\mathcal{D}(x) \geq c_\theta \qe(x) \geq c_\theta \dmin$ and $|\mathcal{N}(x)| \leq C_\theta L_f \int_\gM W^{s}(x,y) \dist(x,y) \diff\mathbb{P}(y) \leq C \eps$. In particular $\mathcal{D}_N(x) \geq c_\theta \dmin / 2$ for $N$ large. Adding and subtracting $\mathcal{N}(x)/\mathcal{D}_N(x)$, we get
\begin{align*}
  \left| \hat{\rmP}^{(\theta,\bullet)}_N[f](x) - \Ptb[f](x) \right| &= \left| \frac{\mathcal{N}_N(x) - \mathcal{N}(x)}{\mathcal{D}_N(x)} + \frac{\mathcal{N}(x) \big( \mathcal{D}(x) - \mathcal{D}_N(x) \big)}{\mathcal{D}_N(x) \mathcal{D}(x)} \right| \\
  &\leq \frac{2 C \eps \zeta_N}{c_\theta \dmin} + \frac{2 C \eps \cdot C \zeta_N}{(c_\theta \dmin)^2} \leq C \eps \zeta_N.
\end{align*}
The second term is small because the deviation of the denominator, of order $\zeta_N$, is multiplied by $\mathcal{N}(x)$, which is of order $\eps$. Since this holds for every $x \in \gM$, almost surely for $N$ large enough, this gives \cref{prop:fluctuation}, which concludes the proof. \hfill $\qed$

\subsubsection*{Combining the bias and the fluctuation}

Combining the fluctuation and the bias gives the quantitative form of \cref{thm:convergence_discrete_op}.

\begin{theorem} \label{thm:convergence_discrete_op_quantitative}
  Let $f \in \gC^3(\gM)$ and let $\eps = \eps(N) \to 0$ with $N\eps^m/\log N \to \infty$. Then, almost surely for $N$ large enough,
  \begin{equation} \label{eq:convergence_rates}
    \sup_{x \in \gM} \big| \hat{\rmL}^s_N[f](x) - \gL^s f(x) \big| \leq C \left( \eps + \frac{\zeta_N}{\eps} \right), \qquad
    \sup_{x \in \gM} \big| \hat{\rmL}^a_N[f](x) - \gL^a f(x) \big| \leq C \left( \eps + \zeta_N \right),
  \end{equation}
  with $\zeta_N$ as in \cref{eq:fluctuation_scale}. In particular, both right-hand sides vanish, and $\rmL^\bullet[f] \to \gL^\bullet f$ uniformly and almost surely, as soon as
  \begin{equation} \label{eq:scaling_condition}
    \eps(N) \to 0 \qquad \text{and} \qquad \frac{N \eps(N)^{m+2}}{\log N} \to \infty .
  \end{equation}
\end{theorem}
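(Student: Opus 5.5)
The plan is to plug the bias bound of \cref{lemma:bias} and the fluctuation bound of \cref{prop:fluctuation} into the error split \cref{eq:error_split}, and then read off the scaling condition. For the symmetric operator, the triangle inequality gives, for every $x \in \gM$,
\begin{equation*}
  \big| \hat{\rmL}^s_N[f](x) - \gL^s f(x) \big| \leq \frac{1}{\eps^2}\big| \hat{\rmP}^{(\theta,s)}_N[f](x) - \Pts[f](x) \big| + \frac{1}{\eps^2}\big| \Pts[f](x) - \eps^2 \gL^s f(x) \big| .
\end{equation*}
Since $f \in \gC^3(\gM) \subset \gC^1(\gM)$ and $N\eps^m/\log N \to \infty$, \cref{prop:fluctuation} applies. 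It bounds the first term uniformly by $C\eps\zeta_N/\eps^2 = C\zeta_N/\eps$, almost surely for $N$ large enough. Because $\eps(N) \to 0$, we eventually have $\eps \leq \eps_0$, so \cref{lemma:bias} bounds the second term uniformly by $C\eps^3/\eps^2 = C\eps$. Together these give the first estimate in \cref{eq:convergence_rates}.

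The antisymmetric operator is handled the same way, with rescaling $\eps^{-1}$. The fluctuation term is then at most $C\eps\zeta_N/\eps = C\zeta_N$, and the bias term is at most $C\eps^2/\eps = C\eps$. This gives the second estimate. Both bounds are deterministic apart from the single almost-sure event of \cref{prop:fluctuation}, which covers $\bullet \in \{s,a\}$ together, so one event controls both operators. The constants $C$ may be merged into one.

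It remains to check the scaling. The antisymmetric rate $\eps + \zeta_N$ vanishes as soon as $\eps \to 0$ and $\zeta_N \to 0$, and the latter is exactly the standing assumption. The symmetric rate contains $\zeta_N/\eps$, whose square is $\log N/(N\eps^{m+2})$. Hence it vanishes under \cref{eq:scaling_condition}, which also implies $N\eps^m/\log N \to \infty$ because $\eps \leq 1$ eventually. Thus \cref{eq:scaling_condition} makes the hypotheses of \cref{prop:fluctuation} hold and sends both right-hand sides to zero.

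Finally, the suprema over $\gM$ dominate the maxima over the sample points. Since $\hat{\rmL}^\bullet_N[f](X_i) = (\rmL^\bullet \rvf)_i$, this yields \cref{thm:convergence_discrete_op}.

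The only delicate point is whether \cref{lemma:bias} is truly uniform in $x$ with an $\eps$-independent constant. This requires the $\bigO(\eps^3)$ remainders of \cref{thm:moment_expansion} and \cref{lemma:jacobian_expansion} to be uniform, and the inversion of $\dts$ in \cref{eq:symmetric_normalized_degree_expansion} to be uniformly controlled. I would justify this from compactness (A2), the lower bound $\rho \geq \rho_{\min}$, which keeps $\dts$ bounded below, and $f \in \gC^3$. With that uniformity, everything else is bookkeeping.
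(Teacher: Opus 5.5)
Your proposal is correct and follows the paper's proof: you use the same fluctuation/bias split \cref{eq:error_split}, and \cref{prop:fluctuation} together with \cref{lemma:bias} gives the rates $C(\zeta_N/\eps + \eps)$ and $C(\zeta_N + \eps)$ after rescaling by $\eps^{-2}$ and $\eps^{-1}$. You also check that \cref{eq:scaling_condition} itself implies $N\eps^m/\log N \to \infty$, so the fluctuation bound is available; this is a small point that the paper leaves implicit.
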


\begin{proof}
  By definition $\hat{\rmL}^s_N = \eps^{-2}\hat{\rmP}^{(\theta,s)}_N$, so that, splitting the error into its fluctuation and its bias, we have that:
  \begin{align*}
    \big| \hat{\rmL}^s_N[f] - \gL^s f \big|
    &\leq \frac{1}{\eps^2} \big| \hat{\rmP}^{(\theta,s)}_N[f] - \Pts[f] \big| + \frac{1}{\eps^2}\big| \Pts[f] - \eps^2 \gL^s f \big| \\
    &\leq \frac{C\eps\zeta_N}{\eps^2} + \frac{C\eps^3}{\eps^2} & \text{By \cref{prop:fluctuation,lemma:bias}} \\
    &= C\left( \frac{\zeta_N}{\eps} + \eps \right).
  \end{align*}
  The second bound is obtained in the same way, with $\eps^{-1}$ in place of $\eps^{-2}$. Under \cref{eq:scaling_condition}, we have that $\zeta_N/\eps = \sqrt{\log N/(N\eps^{m+2})} \to 0$, so that both bounds tend to $0$.
\end{proof}

\begin{remark}[Which scaling to choose] \label{rmk:joint_limit}
  The two terms of \cref{eq:convergence_rates} vary in opposite directions with $\eps$: a large bandwidth averages over many samples but blurs the geometry, while a small one is faithful but noisy. Balancing $\eps$ against $\zeta_N/\eps$ gives the optimal choice
  \begin{equation*}
    \eps(N) \asymp \left( \frac{\log N}{N} \right)^{\frac{1}{m+4}}, \qquad \text{for an error of order } \left( \frac{\log N}{N} \right)^{\frac{1}{m+4}},
  \end{equation*}
  which exhibits the usual curse of dimensionality in the intrinsic dimension $m$, as in the Riemannian case \citep{singer_from_2006, hein2007graph}. We note that the antisymmetric part is the easier of the two: being rescaled by $\eps^{-1}$ instead of $\eps^{-2}$, it only requires $N\eps^m/\log N \to \infty$, whereas the symmetric part requires the stronger condition \cref{eq:scaling_condition}. In that sense, estimating the direction of the data is statistically cheaper than estimating its geometry.
\end{remark}

\subsection{Computation of the moments in the Randers setting} \label{appendix:proofs:prop:randers_moments}
If $\FF$ is a Randers metric, the moments of the unit ball $\tball_x$ can be computed in closed form. We recall that a Randers metric is defined as $\FF(x,v) = \sqrt{v^\top \rmA(x) v} + b(x)^\top v$, where $\rmA(x)$ is a positive-definite matrix and $b(x)$ is a vector field such that $\lVert b(x)\rVert_{\rmA^{-1}} < 1$.

\begin{proposition} \label{prop:centroid_randers_metric}
  For a Randers metric, we have that:
  \begin{align}
    \leb_x(\tball_x) &= \vole(\eball^m) (1 - \lVert b(x) \rVert_{\rmA^{-1}}^2)^{-\frac{m+1}{2}} \sqrt{\det \rmA^{-1}(x)},\\
    \rvc^i(x) &= -\frac{b^i(x)}{1 - \lVert b(x) \rVert_{\rmA^{-1}}^2},\\
    \rmS^{ij}(x) &= \frac{R_x^2}{m+2} \rmE^{ij}(x) + \rvc^i(x) \rvc^j(x),
  \end{align}
  where $R_x^2 = \frac{1}{1 - \lVert b(x) \rVert_{\rmA^{-1}}^2}$, $b^i(x)=\rmA^{ij}(x)b_j(x)$, $\rmE_{ij}(x) = \rmA_{ij}(x) - b_i(x) b_j(x)$ and $\rmE^{ij} = (\rmE_{ij})^{-1} = \rmA^{ij} + \frac{b^i b^j}{1 - \lVert b(x) \rVert_{\rmA^{-1}}^2}$.
\end{proposition}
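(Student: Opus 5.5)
The plan is to describe the unit ball $\tball_x$ of a Randers metric as an explicit ellipsoid and then read off its volume, centroid and second moment from the standard formulas for ellipsoids. We fix $x$ and drop it from the notation, writing $\beta = \lVert b\rVert_{\rmA^{-1}}^2<1$.

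\textbf{Step 1: the unit ball is an ellipsoid.} The condition $\sqrt{v^\top\rmA v}+b^\top v\le 1$ forces $1-b^\top v\ge 0$. Since $\lVert b\rVert_{\rmA^{-1}}<1$, this follows automatically from $\sqrt{v^\top \rmA v}\le 1-b^\top v$. Squaring, the condition becomes
\begin{equation*}
  v^\top \rmA v \le 1-2b^\top v+(b^\top v)^2,
  \qquad\text{that is}\qquad
  v^\top \rmE v + 2 b^\top v \le 1,
  \quad \rmE=\rmA-bb^\top .
\end{equation*}
Conversely, the squared inequality together with $1-b^\top v\ge0$ gives back the original one. We will check that the extra branch $1-b^\top v<0$ does not intersect the squared region. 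This should follow because $\rmE\succ0$, which is a consequence of $\beta<1$, and Sherman--Morrison then gives $\rmE^{-1}=\rmA^{-1}+b^\sharp b^{\sharp\top}/(1-\beta)$ with $b^\sharp=\rmA^{-1}b$. Completing the square, we get
\begin{equation*}
  (v-\rvc)^\top\rmE(v-\rvc)\le 1+b^\top\rmE^{-1}b,
  \qquad \rvc=-\rmE^{-1}b .
\end{equation*}
We then compute $\rmE^{-1}b=b^\sharp\bigl(1+\beta/(1-\beta)\bigr)=b^\sharp/(1-\beta)$. This gives $\rvc=-b^\sharp/(1-\beta)$ and $b^\top\rmE^{-1}b=\beta/(1-\beta)$, so the squared radius is $R^2=1/(1-\beta)$.

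\textbf{Step 2: the moments of the ellipsoid.} The set $\{(v-\rvc)^\top\rmE(v-\rvc)\le R^2\}$ is the image of $\eball^m$ under the map $u\mapsto \rvc+R\rmE^{-1/2}u$. Hence its volume is $\vole(\eball^m)R^m(\det\rmE)^{-1/2}$, its centroid is $\rvc$ by symmetry, and its raw second moment is
\begin{equation*}
  \rvc\rvc^\top+\frac{R^2}{m+2}\rmE^{-1},
\end{equation*}
where we use that the uniform second moment of $\eball^m$ is $I/(m+2)$. By the matrix determinant lemma, $\det\rmE=\det\rmA\,(1-\beta)$. Therefore $(\det\rmE)^{-1/2}=\sqrt{\det\rmA^{-1}}\,(1-\beta)^{-1/2}$, and together with $R^m=(1-\beta)^{-m/2}$ this gives the exponent $-(m+1)/2$. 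This matches all three formulas, reading $\rmE^{ij}$ as the inverse of $\rmE_{ij}$.

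\textbf{Main obstacle.} The only delicate point is the equivalence in Step 1, namely that squaring introduces no spurious points. To handle it, we will show that on the ellipsoid $1-b^\top v>0$ holds. The approach is to compute $\max_{\text{ellipsoid}} b^\top v = b^\top\rvc+R\sqrt{b^\top\rmE^{-1}b}=(-\beta+\sqrt\beta)/(1-\beta)=\sqrt\beta/(1+\sqrt\beta)<1$. Everything else is routine linear algebra.
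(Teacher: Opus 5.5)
Your proposal is correct and follows the same route as the paper. Both identify $\tball_x$ as the ellipsoid with matrix $\rmE$, center $\rvc$ and radius $R_x$, then obtain the volume, centroid and second moment through the affine change of variables, the matrix determinant lemma, central symmetry and the second moment of the Euclidean ball. The only difference is that the paper cites Ohta's Lemma 10.8 for the ellipsoid description, while you derive it yourself (squaring, completing the square, Sherman--Morrison). Your check that $b^\top v<1$ on the ellipsoid confirms that squaring adds no spurious points.
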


\begin{proof}
  Using \citep[Lemma 10.8]{ohtaComparisonFinslerGeometry2021}, we have that:
  \begin{align*}
    \tball_x = \{v \in T_x\gM :  \rmE_{ij}(x)(v - v_0)^i(v - v_0)^j \leq R_x^2\}
  \end{align*}
  where $\rmE_{ij}(x) = \rmA_{ij}(x) - b_i(x) b_j(x)$, $v_0^i = -\frac{\rmA^{ij}(x)b_j(x)}{1 - \lVert b(x) \rVert_{\rmA^{-1}}^2}$ and $R_x^2 = \frac{1}{1 - \lVert b(x) \rVert_{\rmA^{-1}}^2}$. This shows that the unit tangent ball of a Randers metric is an ellipsoid, described by the matrix $\rmE(x)$, centered at $v_0$ and with a radius $R_x$.

  We can compute the volume of the unit ball by using the change of variable $u = \rmE^{1/2}(x)(v - v_0)$, which yields that:
  \begin{align*}
    \leb_x(\tball_x) &= \frac{R_x^m}{\sqrt{\det \rmE(x)}} \vole(\eball^m) = \vole(\eball^m) R_x^m \sqrt{\det \rmE^{-1}(x)}
  \end{align*}
  We have the expression of $R_x^m$: $R_x^m = (1 - \lVert b(x) \rVert_{\rmA^{-1}}^2)^{-\frac{m}{2}}$. Moreover, the matrix determinant lemma yields $\det(\rmE) = \det(\rmA- b b^\top) = \det(\rmA) (1 - \lVert b(x) \rVert_{\rmA^{-1}}^2)$, which yields that $\sqrt{\det \rmE^{-1}(x)} = \sqrt{\det \rmA^{-1}(x)} (1 - \lVert b(x) \rVert_{\rmA^{-1}}^2)^{-\frac{1}{2}}$. Hence the expression of the volume of the unit ball as:
  \begin{align*}
    \leb_x(\tball_x) &= \vole(\eball^m) (1 - \lVert b(x) \rVert_{\rmA^{-1}}^2)^{-\frac{m+1}{2}} \sqrt{\det \rmA^{-1}(x)}.
  \end{align*}

  Now we can compute the first moments. Because of the symmetry around the center $v_0$, we have that $\int_{\tball_x} (v-v_0)^i \diff\leb_x(v) = 0$. This yields that $\int_{\tball_x} v^i \diff\leb_x(v) = \leb_x(\tball_x) v_0^i$ and thus:
  \begin{align*}
    \rvc^i(x) &= \frac{\int_{\tball_x} v^i \diff\leb_x(v)}{\leb_x(\tball_x)} = v_0^i = -\frac{\rmA^{ij}(x)b_j(x)}{1 - \lVert b(x) \rVert_{\rmA^{-1}}^2} = -\frac{b^i(x)}{1 - \lVert b(x) \rVert_{\rmA^{-1}}^2}.
  \end{align*}

  And to compute the second moments, we observe that:
  \begin{align*}
    v^i v^j &= (v - v_0)^i (v - v_0)^j + (v - v_0)^i v_0^j + v_0^i (v - v_0)^j + v_0^i v_0^j.
  \end{align*}
  Here the cross terms vanish when integrated over $\tball_x$ because of the symmetry around the center $v_0$. Using the change of variable $u = \rmE^{1/2}(x)(v - v_0)$, we can compute the first term as:
  \begin{align*}
    \int_{\tball_x} (v - v_0)^i (v - v_0)^j \diff\leb_x(v) &= \int_{\{u : \lVert u \rVert_2^2 < R_x^2\}} (\rmE^{-1/2}(x) u)^i (\rmE^{-1/2}(x) u)^j \det(\rmE^{-1/2}(x)) \diff u\\
    &= \det(\rmE^{-1/2}(x)) (\rmE^{-1/2}(x))^i_k (\rmE^{-1/2}(x))^j_l \int_{\{u : \lVert u \rVert_2^2 < R_x^2\}} u^k u^l \diff u\\
    &= \det(\rmE^{-1/2}(x)) (\rmE^{-1/2}(x))^i_k (\rmE^{-1/2}(x))^j_l \frac{R_x^{m+2}}{m+2} \vole(\eball^m) \delta^{kl}\\
    &= \frac{R_x^{m+2}}{m+2} \vole(\eball^m) \rmE^{ij}(x) \sqrt{\det \rmE^{-1}(x)} \\
    &= \frac{R_x^2}{m+2} \leb_x(\tball_x) \rmE^{ij}(x).
  \end{align*}
  where we have that $\int_{\{u : \lVert u \rVert_2^2 < R_x^2\}} u^k u^l \diff u = \frac{R_x^{m+2}}{m+2} \vole(\eball^m) \delta^{kl}$ by rotation invariance of the Euclidean ball.

  So we have that:
  \begin{align*}
    \rmS^{ij}(x) &= \frac{\int_{\tball_x} v^i v^j \diff\leb_x(v)}{\leb_x(\tball_x)} = \frac{\int_{\tball_x} (v - v_0)^i (v - v_0)^j \diff\leb_x(v) + \leb_x(\tball_x) v_0^i v_0^j}{\leb_x(\tball_x)}\\
    &= \frac{R_x^2}{m+2} \rmE^{ij}(x) + v_0^i v_0^j.
  \end{align*}

\end{proof}

\begin{remark}
  We can see that in the Randers setting, contrary to the Riemannian setting, the centroid of the unit ball is not zero but is shifted proportionally to the vector field $b$. Similarly, the second moment of the unit ball is not proportional to the inverse of the matrix $\rmA$ but has an additional term proportional to the outer product of the centroid with itself. When $b\to 0$, we recover the Riemannian setting with $\rvc^i(x) \to 0$ and $\rmS^{ij}(x) \to \frac{1}{m+2} \rmA^{ij}(x)$.
\end{remark}

\begin{proposition}\label{prop:randers_moments}
  In the Randers setting, the moments of the kernel $m_0$, $\tilde{m}_1$ and $\tilde{m}_2$ are given by:
  \begin{align}
    m_0(x) &= m \mu_0 \vole(\eball^m),\\
    \tilde{m}_1^i(x) &= -\frac{(m+1)\mu_1}{m\mu_0} \frac{b^i(x)}{1 - \lVert b(x) \rVert_{\rmA^{-1}}^2},\\
    \tilde{m}_2^{ij}(x) &=  \frac{\mu_2}{\mu_0} \frac{m+2}{m} \left(\frac{R_x^2}{m+2} \rmE^{ij}(x) + \rvc(x)^i \rvc(x)^j\right).
  \end{align}
\end{proposition}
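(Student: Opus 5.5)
The statement to prove is \cref{prop:randers_moments}. It should follow by substituting the closed-form unit-ball moments of \cref{prop:centroid_randers_metric} into the general relations of \cref{props:moment_expression_unitball}. No new analysis is needed: both ingredients are already established, and the work is bookkeeping of constants.

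\textbf{Zeroth moment.} By \cref{props:moment_expression_unitball}, $m_0(x) = m\mu_0\,\sigBH(x)\,\leb_x(\tball_x)$. By \cref{eq:busemann_haussdorf_measure}, $\sigBH(x)\,\leb_x(\tball_x) = \vole(\eball^m)$. Hence $m_0 = m\mu_0\vole(\eball^m)$ for any Finsler metric, and in particular for a Randers one. The explicit Randers volume of \cref{prop:centroid_randers_metric} is therefore not needed here, since it cancels.

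\textbf{First moment.} By \cref{props:moment_expression_unitball}, $\tilde m_1^i = \frac{(m+1)\mu_1}{m\mu_0}\rvc^i$. Inserting $\rvc^i = -b^i/(1-\lVert b\rVert_{\rmA^{-1}}^2)$ from \cref{prop:centroid_randers_metric} gives the stated formula immediately. The only point to check is the index convention: $b^i = \rmA^{ij}b_j$, consistent with the definition of the centroid as a tangent vector.

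\textbf{Second moment.} By \cref{props:moment_expression_unitball}, $\tilde m_2^{ij} = \frac{(m+2)\mu_2}{m\mu_0}\rmS^{ij}$. Substituting $\rmS^{ij} = \frac{R_x^2}{m+2}\rmE^{ij} + \rvc^i\rvc^j$ yields the claim after rewriting the prefactor as $\frac{\mu_2}{\mu_0}\frac{m+2}{m}$.

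\textbf{Sanity check and main obstacle.} As a check, I would verify that for $b=0$ one gets $\tilde m_2^{ij} = \frac{\mu_2}{m\mu_0}\rmA^{ij} = 2c_2\rmA^{ij}$, which matches \cref{cor:berwald_limit}. There is no genuine obstacle. The only place an error could creep in is the inverse $\rmE^{ij}$, which is the Sherman--Morrison formula applied to $\rmA - bb^\top$. That inverse is already stated in \cref{prop:centroid_randers_metric}, so I will simply cite it.
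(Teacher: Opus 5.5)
Your proposal is correct and follows the paper's own proof: substitute the closed-form centroid and second moment from \cref{prop:centroid_randers_metric} into \cref{props:moment_expression_unitball}. You are also right that $m_0$ needs no Randers-specific input, because $\sigBH\,\leb_x(\tball_x) = \vole(\eball^m)$ for any Finsler metric.
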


\begin{proof}
  This is a direct application of \cref{prop:centroid_randers_metric} and the expressions of the moments of the kernel in \cref{props:moment_expression_unitball}.
\end{proof}

\subsection{The moment-Randers approximation}
\label{appendix:randers_approximation}

This subsection proves the results of \cref{sec:randers_approximation}. We proceed in three steps: we first check that the moment-Randers metric of \cref{def:moment_randers} is well defined, identify its unit ball, and show that it matches the centroid and the second moment of $\FF$; we then show how $\rvc$ and $\gBL$ are recovered from $\gL^s$ and $\gL^a$ through an embedding; and we finally relate the resulting empirical metric $\hat{\FF}_R$ to $\FF_R$.

Throughout, we work in a chart, so that $\rvc(x) \in \R^m$, $\rmS(x) \in \R^{m \times m}$ and $\tball_x \subset \R^m$. We recall from \cref{eq:centroid,eq:raw_second_moment,eq:binet_legendre} that
\begin{equation*}
  \rvc(x) = \frac{1}{\leb_x(\tball_x)}\int_{\tball_x} v \diff\leb_x(v), \qquad
  \rmS(x) = \frac{1}{\leb_x(\tball_x)}\int_{\tball_x} v v^\top \diff\leb_x(v), \qquad
  \gBL(x)^{-1} = (m+2)\rmS(x),
\end{equation*}
so that the matrix $\rmH$ of \cref{def:moment_randers} can be written in the two equivalent ways
\begin{equation*}
  \rmH(x)^{-1} = \gBL(x)^{-1} - (m+2)\rvc(x)\rvc(x)^\top = (m+2)\big(\rmS(x) - \rvc(x)\rvc(x)^\top\big).
\end{equation*}

\subsubsection*{Well-definedness of the moment-Randers metric}

We begin by proving \cref{prop:H_vs_gBL}, which gives a necessary and sufficient condition for $\rmH$ to be positive definite, and relates the norms of $\rvc$ in the two metrics. We restate it here for convenience:

\HvsGBLProp*

\begin{proof}[Proof of \cref{prop:H_vs_gBL}]
We first check that $\rmH(x)$ is positive definite, hence invertible. The matrix $\rmH^{-1}$ is a rank-one downdate of $\gBL^{-1}$, so its positivity only depends on the size of $\rvc$: for any $u \in T_x^*\gM$, the Cauchy--Schwarz inequality gives $\langle \rvc, u\rangle^2 \leq \lVert \rvc\rVert_{\gBL}^2 \lVert u\rVert_{\gBL^{-1}}^2$, so that
\begin{equation*}
  u^\top \rmH^{-1} u = \lVert u \rVert_{\gBL^{-1}}^2 - (m+2)\langle \rvc, u\rangle^2
    \;\geq\; \Big(1 - (m+2)\lVert \rvc\rVert_{\gBL}^2\Big) \lVert u \rVert_{\gBL^{-1}}^2 ,
\end{equation*}
which is positive for every $u \neq 0$ under the hypothesis. The bound is attained at $u = \gBL\rvc$, the equality case of Cauchy--Schwarz, where $u^\top \rmH^{-1}u = \lVert \rvc\rVert_{\gBL}^2\big(1 - (m+2)\lVert \rvc\rVert_{\gBL}^2\big)$, so the hypothesis is also necessary when $\rvc \neq 0$.

We now turn to the identity. We recall that $\rmH^{-1} = (m+2)(\rmS - \rvc\rvc^\top)$, so that $\rmS = \frac{1}{m+2}\rmH^{-1} + \rvc\rvc^\top$. Setting $A = \frac{1}{m+2}\rmH^{-1}$, so that $A^{-1} = (m+2)\rmH$, the Sherman--Morrison formula yields
\begin{align*}
  \rmS^{-1} &= (A + \rvc\rvc^\top)^{-1}
  = A^{-1} - \frac{A^{-1}\rvc\rvc^\top A^{-1}}{1 + \rvc^\top A^{-1}\rvc}
  = (m+2)\rmH - \frac{(m+2)^2 \rmH \rvc\rvc^\top \rmH}{1 + (m+2)\rvc^\top \rmH \rvc}.
\end{align*}
Hence
\begin{align*}
  \rvc^\top \rmS^{-1} \rvc
  &= (m+2)\lVert \rvc \rVert_{\rmH}^2 - \frac{(m+2)^2 \lVert \rvc \rVert_{\rmH}^4}{1 + (m+2)\lVert \rvc \rVert_{\rmH}^2}
  = \frac{(m+2)\lVert \rvc \rVert_{\rmH}^2}{1 + (m+2)\lVert \rvc \rVert_{\rmH}^2},
\end{align*}
and, because $\gBL^{-1} = (m+2)\rmS$,
\begin{equation*}
  \lVert \rvc \rVert_{\gBL}^2 = \rvc^\top \gBL \rvc = \frac{1}{m+2} \rvc^\top \rmS^{-1} \rvc = \frac{\lVert \rvc\rVert_{\rmH}^2}{1+(m+2)\lVert \rvc\rVert_{\rmH}^2}.
\end{equation*}
The right-hand side is an increasing function of $\lVert \rvc\rVert_{\rmH}^2$, and
\begin{align*}
  \lVert \rvc\rVert_{\gBL}^2 < \frac{1}{m+3}
  &\iff (m+3)\lVert \rvc\rVert_{\rmH}^2 < 1 + (m+2)\lVert \rvc\rVert_{\rmH}^2
  \iff \lVert \rvc\rVert_{\rmH}^2 < 1.
\end{align*}
This concludes the proof of \cref{prop:H_vs_gBL}.
\end{proof}

The metric of \cref{def:moment_randers} is written in the $\alpha + \beta$ form \cref{eq:randers_main}. The following remark shows that it can equivalently be written in the \emph{navigation form}.

\begin{remark}[Equivalent navigation form] \label{rem:moment_randers_alpha_beta}
  With the $\alpha + \beta$ form of \cref{def:moment_randers}, we have $\rmA_R = \lambda^{-1}\rmH + \lambda^{-2}(\rmH\rvc)(\rmH\rvc)^\top$ and $b_R = -\lambda^{-1}\rmH\rvc$, where $\lambda = 1 - \lVert \rvc\rVert_{\rmH}^2$. Hence
  \begin{equation*}
  v^\top \rmA_R v = \frac{\lambda \lVert v \rVert_{\rmH}^2 + \langle \rvc, v \rangle_{\rmH}^2}{\lambda^2}, \qquad b_R^\top v = - \frac{\langle \rvc, v \rangle_{\rmH}}{\lambda},
  \end{equation*}
  so that $\FF_R$ can equivalently be written as
  \begin{equation} \label{eq:moment_randers_navigation}
    \FF_R(x,v) = \frac{1}{\lambda(x)}\left( \sqrt{\lambda(x) \lVert v \rVert_{\rmH}^2 + \langle \rvc(x), v \rangle_{\rmH}^2 } - \langle \rvc(x), v \rangle_{\rmH} \right), \qquad \lambda(x) = 1 - \lVert \rvc(x) \rVert_{\rmH}^2.
  \end{equation}
  Importantly, the admissibility condition $\lVert b_R(x)\rVert_{\rmA_R^{-1}} < 1$ is equivalent to $\lVert \rvc(x)\rVert_{\rmH} < 1$, which is exactly the hypothesis of \cref{def:moment_randers}. Indeed,
  \begin{equation*}
    \lVert b_R\rVert_{\rmA_R^{-1}}^2 = b_R^\top \rmA_R^{-1} b_R = \lambda^2 \rvc^\top \rmH^{-1} \left( \lambda(\rmH^{-1} - \rvc\rvc^\top) \right) \rmH^{-1} \rvc = \lambda^2 (\lVert \rvc\rVert_{\rmH}^2 - \lVert \rvc\rVert_{\rmH}^4) / \lambda^2 = \lVert \rvc\rVert_{\rmH}^2.
  \end{equation*}
\end{remark}

We can now justify the name: $\FF_R$ is the unique Randers metric whose unit ball has the same centroid and the same second moment as $\tball_x$, and it therefore leaves $\tilde{m}_1$ and $\tilde{m}_2$ unchanged.

\begin{proposition}[Moment matching] \label{prop:moment_randers_moments}
  Let $\FF$ be a Finsler metric with $\lVert \rvc(x)\rVert_{\rmH} < 1$ for all $x \in \gM$, and let $\FF_R$ be its moment-Randers metric (\cref{def:moment_randers}). Then $\FF_R$ is a Randers metric and:
  \begin{enumerate}[itemsep=0em, topsep=0em]
    \item its unit tangent ball is the ellipsoid $E_x = \{v \in T_x\gM : \lVert v - \rvc(x)\rVert_{\rmH} \leq 1\}$;
    \item $\FF$ and $\FF_R$ have the same centroid and the same raw second moment, $\rvc_R = \rvc$ and $\rmS_R = \rmS$; consequently they have the same Binet--Legendre metric and the same normalized kernel moments $\tilde{m}_1$ and $\tilde{m}_2$;
    \item if $\FF$ is itself a Randers metric, then $\FF_R = \FF$.
  \end{enumerate}
\end{proposition}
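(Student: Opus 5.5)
The plan is to identify the unit ball of $\FF_R$ explicitly, read its first two moments off the ellipsoid formulas already established in \cref{prop:centroid_randers_metric}, and conclude item 3 from the uniqueness of the parameters of an ellipsoid. That $\FF_R$ is a Randers metric follows from \cref{rem:moment_randers_alpha_beta}: $\rmA_R = \lambda^{-1}\rmH + \lambda^{-2}(\rmH\rvc)(\rmH\rvc)^\top$ is positive definite, since $\lambda>0$ and $\rmH \succ 0$ by \cref{prop:H_vs_gBL}. The same remark gives $\lVert b_R\rVert_{\rmA_R^{-1}} = \lVert \rvc\rVert_{\rmH} < 1$.

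\emph{Item 1.} I would apply the ellipsoid description used in the proof of \cref{prop:centroid_randers_metric} (\citealp[Lemma 10.8]{ohtaComparisonFinslerGeometry2021}) to the data $(\rmA_R, b_R)$. It gives the center $v_0 = -\rmA_R^{-1}b_R/(1-\lVert b_R\rVert^2_{\rmA_R^{-1}})$, the shape matrix $\rmE = \rmA_R - b_Rb_R^\top$, and the radius $R^2 = 1/(1-\lVert b_R\rVert_{\rmA_R^{-1}}^2)$. Plugging in the values: $\rmE = \lambda^{-1}\rmH$ and $R^2 = 1/\lambda$, so $\rmE/R^2 = \rmH$. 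For the center, Sherman--Morrison gives $\rmA_R^{-1} = \lambda(\rmH^{-1} - \rvc\rvc^\top)$, hence $\rmA_R^{-1}b_R = -(\rvc - \lVert\rvc\rVert_\rmH^2\rvc) = -\lambda\rvc$ and $v_0 = \rvc$. The unit ball is therefore $\{\lVert v-\rvc\rVert_\rmH\le 1\}$. Alternatively, one can square $\FF_R(x,v)\le 1$ in the navigation form \cref{eq:moment_randers_navigation} directly: this gives $\lambda\lVert v\rVert_\rmH^2+\langle\rvc,v\rangle_\rmH^2 \le (\lambda+\langle\rvc,v\rangle_\rmH)^2$, which simplifies to $\lVert v\rVert_\rmH^2 - 2\langle \rvc,v\rangle_\rmH \le \lambda$, i.e.\ $\lVert v-\rvc\rVert_\rmH^2\le 1$. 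The squaring is legitimate because $\lambda+\langle \rvc,v\rangle_\rmH>0$ on the ball. I would present this second computation, as it is shorter.

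\emph{Item 2.} For a uniform ellipsoid $\{\lVert v-\rvc\rVert_\rmH\le1\}$, the same symmetry and change-of-variable computation as in \cref{prop:centroid_randers_metric} gives centroid $\rvc$ and raw second moment $\rmS_R = \frac{1}{m+2}\rmH^{-1} + \rvc\rvc^\top$. By the definition of $\rmH$, $\rmH^{-1} = (m+2)(\rmS - \rvc\rvc^\top)$, so $\rmS_R=\rmS$. It follows that $\gBL$ agrees for the two metrics, and so do $\tilde m_1,\tilde m_2$ by \cref{props:moment_expression_unitball}, since these depend only on $\rvc$ and $\rmS$.

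\emph{Item 3.} If $\FF$ is Randers, its unit ball is an ellipsoid by \cref{prop:centroid_randers_metric}. An ellipsoid is determined by its centroid and covariance $\rmS-\rvc\rvc^\top$, so $\FF$ and $\FF_R$ have the same unit ball. A Finsler metric is the Minkowski gauge of its unit ball, hence $\FF=\FF_R$. The hypothesis $\lVert\rvc\rVert_\rmH<1$ holds automatically here, because $\rvc$ is the center of a ball containing $0$ in its interior. The only real obstacle is the algebra in item 1, and the navigation-form squaring avoids most of it.
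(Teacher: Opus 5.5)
Your proposal is correct and follows the paper's proof essentially step for step. Item 1 squares the navigation form to identify the unit ball as $E_x=\{\lVert v-\rvc\rVert_\rmH\le 1\}$. Item 2 reads off the ellipsoid's centroid and second moment and uses $\rmH^{-1}=(m+2)(\rmS-\rvc\rvc^\top)$. Item 3 follows because a Randers unit ball is an ellipsoid determined by its first two moments, and a Finsler metric is the gauge of its unit ball. Your remark that the squaring needs $\lambda+\langle\rvc,v\rangle_\rmH>0$ on $E_x$ fills a step the paper's chain of equivalences leaves implicit; this holds since $\lambda+\langle\rvc,v\rangle_\rmH=1+\langle\rvc,v-\rvc\rangle_\rmH\ge 1-\lVert\rvc\rVert_\rmH>0$ there.
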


\begin{proof}
   \emph{Unit tangent ball.}  We start by showing that $\FF_R$ is a valid Randers metric. If $\lVert \rvc(x)\rVert_{\rmH} < 1$, then $\lambda(x) > 0$. Consequently we have that $\FF_R(x,v) \geq 0$ for all $v \in T_x\gM$. Now if we assume that $\FF_R(x,v) = 0$ for some $v \in T_x\gM$, then we have that $\lambda(x) \lVert v \rVert_{\rmH}^2 + \langle \rvc(x), v \rangle_{\rmH}^2 = \langle \rvc(x), v \rangle_{\rmH}^2$, which implies that $\lambda(x) \lVert v \rVert_{\rmH}^2 = 0$. Since $\lambda(x) > 0$, we have that $\lVert v \rVert_{\rmH}^2 = 0$, which implies that $v = 0$. Therefore, $\FF_R(x,v) = 0$ if and only if $v = 0$. Positive homogeneity and the triangle inequality are trivial. Hence, $\FF_R$ is a valid Randers metric.

  Now let $v \in T_x\gM$. We have the following chain of equivalences:
  \begin{align*}
    \FF_R(x,v) \leq 1 &\iff
      \sqrt{\lambda(x) \lVert v \rVert_{\rmH}^2 + \langle \rvc(x), v \rangle_{\rmH}^2 } - \langle \rvc(x), v \rangle_{\rmH} \leq \lambda(x)\\
      &\iff\sqrt{\lambda(x) \lVert v \rVert_{\rmH}^2 + \langle \rvc(x), v \rangle_{\rmH}^2 } \leq \lambda(x) + \langle \rvc(x), v \rangle_{\rmH}\\
      &\iff\lambda(x) \lVert v \rVert_{\rmH}^2 + \langle \rvc(x), v \rangle_{\rmH}^2 \leq (\lambda(x) + \langle \rvc(x), v \rangle_{\rmH})^2\\
      &\iff\lambda(x) \lVert v \rVert_{\rmH}^2 + \langle \rvc(x), v \rangle_{\rmH}^2 \leq \lambda(x)^2 + 2\lambda(x)\langle \rvc(x), v \rangle_{\rmH} + \langle \rvc(x), v \rangle_{\rmH}^2\\
      &\iff\lambda(x) \lVert v \rVert_{\rmH}^2 \leq \lambda(x)^2 + 2\lambda(x)\langle \rvc(x), v \rangle_{\rmH}\\
      &\iff\lVert v \rVert_{\rmH}^2 \leq \lambda(x) + 2\langle \rvc(x), v \rangle_{\rmH}\\
      &\iff\lVert v - \rvc(x) + \rvc(x) \rVert_{\rmH}^2 \leq \lambda(x) + 2\langle \rvc(x), v \rangle_{\rmH} \\
      &\iff\lVert v - \rvc(x) \rVert_{\rmH}^2 + 2\langle v - \rvc(x), \rvc(x) \rangle_{\rmH} + \lVert \rvc(x) \rVert_{\rmH}^2 \leq 1-\lVert \rvc(x) \rVert_{\rmH}^2 + 2\langle \rvc(x), v \rangle_{\rmH} \\
      &\iff\lVert v - \rvc(x) \rVert_{\rmH}^2 + 2\langle v, \rvc(x) \rangle_{\rmH} - 2\lVert \rvc(x) \rVert_{\rmH}^2 + \lVert \rvc(x) \rVert_{\rmH}^2 \leq 1-\lVert \rvc(x) \rVert_{\rmH}^2 + 2\langle \rvc(x), v \rangle_{\rmH} \\
      &\iff\lVert v - \rvc(x) \rVert_{\rmH}^2 \leq 1
  \end{align*}

  \emph{Moment matching.} By the first item above, the unit tangent ball of $\FF_R$ is $E_x = \{v \in T_x\gM : \lVert v - \rvc(x)\rVert_{\rmH} \leq 1\}$. Hence its centroid is $\rvc(x)$.

  We now compute the second moment of the unit tangent ball of the moment-Randers metric:
  \begin{align*}
    \rmS_R(x) &= \frac{1}{\leb_x(E_x)}\int_{E_x} v v^\top \diff\leb_x(v) \\
    &= \frac{1}{\leb_x(E_x)}\int_{E_x} (v - \rvc(x) + \rvc(x))(v - \rvc(x) + \rvc(x))^\top \diff\leb_x(v) \\
    &= \frac{1}{\leb_x(E_x)}\int_{E_x} (v - \rvc(x))(v - \rvc(x))^\top \diff\leb_x(v) + \rvc(x)\rvc(x)^\top,
  \end{align*}
  where we used that $\int_{E_x} (v - \rvc(x)) \diff\leb_x(v) = 0$. Now, as $E_x$ is an ellipsoid, we have that $\int_{E_x} (v - \rvc(x))(v - \rvc(x))^\top \diff\leb_x(v) = \frac{1}{m+2}\rmH(x)^{-1}$, and thus
  \begin{align*}
    \rmS_R(x) &= \frac{1}{m+2} \rmH(x)^{-1} + \rvc(x)\rvc(x)^\top \\
    &= \rmS(x)
  \end{align*}
  where we used that $\rmH(x)^{-1} = (m+2)(\rmS(x) - \rvc(x)\rvc(x)^\top)$.

  \emph{Uniqueness.} Since $\tilde{\FF}(x,\cdot)$ is positively homogeneous of degree $1$ for any Finsler metric $\tilde{\FF}$, we have $\tilde{\FF}(x,v/t) = \tilde{\FF}(x,v)/t$ for $t>0$, so that $v/t \in \{\tilde{\FF}(x,\cdot) \leq 1\} \iff \tilde{\FF}(x,v) \leq t$, and thus
  \begin{equation*}
    \tilde{\FF}(x,v) = \inf\{t > 0 : v/t \in \{\tilde{\FF}(x,\cdot) \leq 1\}\}, \qquad \forall v \in T_x\gM.
  \end{equation*}
  In particular, a Finsler metric is determined by its unit tangent balls: two Finsler metrics with the same unit tangent ball at every $x \in \gM$ are equal. The goal of this paragraph is to show that, for a Randers metric, the unit tangent ball is uniquely determined by its centroid and its raw second moment.

  By definition, the centroid $\rvc$ and the raw second moment $\rmS$ of a Randers metric are given by
  \begin{align*}
    \rvc &= \frac{1}{\leb_x(\tball_x)}\int_{\tball_x} v \diff\leb_x(v), \qquad
    \rmS = \frac{1}{\leb_x(\tball_x)}\int_{\tball_x} v v^\top \diff\leb_x(v),
  \end{align*}
  where $\tball_x = \{v \in T_x\gM : \FF(x,v) \leq 1\}$ is the unit tangent ball of the Randers metric. By \cref{prop:centroid_randers_metric}, we have that $\tball_x$ is an ellipsoid, and by \cref{prop:randers_moments}, we have that $\rvc$ and $\rmS$ are given by explicit formulas in terms of the parameters $\rmA$ and $b$ of the Randers metric. These formulas are invertible, so that $\rmA$ and $b$ can be recovered from $\rvc$ and $\rmS$. Hence the unit tangent ball $\tball_x$ is uniquely determined by $\rvc$ and $\rmS$, and so is the Randers metric $\FF$.
\end{proof}

\subsubsection*{Recovering the centroid and the Binet--Legendre metric from an embedding}

Let $\Psi = (\psi_1, \ldots, \psi_\ell) : \gM \to \R^\ell$ be a smooth embedding, and denote by $J(x) \in \mathbb{R}^{\ell \times m}$ its Jacobian at $x$, whose $k$-th row is $\nabla\psi_k(x)^\top$. We assume throughout that $J(x)$ has full row rank $m$, so that $\ell \geq m$ and $J(x)^+ = (J(x)^\top J(x))^{-1} J(x)^\top$ is the pseudo-inverse of $J(x)$, \ie $J(x)^+ J(x) = I_m$. Operators are applied to $\Psi$ coordinate-wise, so that $\gL^a[\Psi] = (\gL^a[\psi_1], \ldots, \gL^a[\psi_\ell])^\top$. Recall from \cref{props:moment_expression_unitball} and \cref{thm:sym_limit} that the anti-symmetric operator $\gL^a$ is given by:
\begin{align*}
  \gL^a[f](x) = \big\langle \tilde{m}_1(x), \nabla f(x) \big\rangle = c_1 \big\langle \rvc(x), \nabla f(x) \big\rangle,
\end{align*}
where $\rvc(x)$ is the centroid of the unit ball $\tball_x$ and $c_1 = ((m+1)\mu_1)/(m\mu_0)$ is a constant that depends on the kernel. Applying $\gL^a$ to the embedding $\Psi$ yields:
\begin{align*}
  V(x):=\gL^a[\Psi](x) &= c_1 \begin{pmatrix} \big\langle \rvc(x), \nabla \psi_1(x) \big\rangle \\ \vdots \\ \big\langle \rvc(x), \nabla \psi_\ell (x) \big\rangle \end{pmatrix} = c_1 J(x) \rvc(x).
\end{align*}
Consequently, we can recover the centroid $\rvc(x)$ from $\gL^a\Psi(x)$ by
\begin{align*}
  \rvc(x) &= \frac{1}{c_1} J(x)^+ \gL^a\Psi(x).
\end{align*}

The Binet--Legendre metric is recovered in the same way, from the symmetric part, through the carré du champ of $\gL^s$. The point is that this bilinear form only retains the principal part of $\gL^s$: the sampling density $\rho$ and the normalization $\theta$, which enter $\gL^s$ only through first-order terms, disappear. This is the content of \cref{prop:carre_du_champ}, which we now prove.
\begin{proof}[Proof of \cref{prop:carre_du_champ}]
  By \cref{thm:sym_limit}, in a chart $\gL^s$ is a second-order differential operator with principal part $c_2 \gBL^{ij}\partial_i\partial_j$ and no zeroth-order term. We denote by $\xi$ the coefficients collecting all first-order contributions, so that $\gL^s = c_2 \gBL^{ij}\partial_i\partial_j + \xi^i\partial_i$. For smooth $f$ and $h$, the Leibniz rule gives $\partial_i(fh) = f\partial_i h + h\partial_i f$ and
  \begin{equation*}
    \partial_i\partial_j(fh) = f\partial_i\partial_j h + h \partial_i\partial_j f + \partial_i f\, \partial_j h + \partial_j f\, \partial_i h ,
  \end{equation*}
  so that all terms involving $\xi$ and all second derivatives of $f$ and $h$ cancel in the combination $\gL^s[fh] - f\gL^s[h] - h\gL^s[f]$. Using the symmetry of $\gBL^{ij}$, we are left with
  \begin{equation*}
    \frac{1}{2}\Big(\gL^s[fh] - f\gL^s[h] - h\gL^s[f]\Big) = c_2\, \gBL^{ij}\, \partial_i f\, \partial_j h .
  \end{equation*}
  Applying this with $f = \psi_k$ and $h = \psi_l$ gives $\Gamma^{kl} = c_2\, \gBL^{ij}\partial_i\psi_k \partial_j \psi_l$, which is the $(k,l)$ entry of $c_2 J \gBL^{-1}J^\top$, with $c_2 = \frac{\mu_2}{2m\mu_0}$.
\end{proof}

\cref{prop:carre_du_champ} gives the following corollary, which allows us to compute the norm of the drift in the Binet--Legendre metric from the embedding coordinates $\Psi(x)$ and the operators $\gL^s$ and $\gL^a$.
\begin{corollary} \label{cor:centroid_norm_estimate}
  If $J(x)$ has full column rank, then
  \begin{equation*}
    \Gamma(x)^+ = \frac{1}{c_2} J(x)^{+\top} \gBL(x) J(x)^+ , \qquad \text{and} \qquad
    \lVert \rvc(x)\rVert^2_{\gBL} = \frac{c_2}{c_1^2} \lVert V(x) \rVert^2_{\Gamma^+} .
  \end{equation*}
\end{corollary}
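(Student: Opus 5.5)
We need to prove $\Gamma^+ = \frac{1}{c_2} J^{+\top}\gBL J^+$ and $\lVert \rvc\rVert^2_{\gBL} = \frac{c_2}{c_1^2}\lVert V\rVert^2_{\Gamma^+}$. The plan is to start from \cref{prop:carre_du_champ}, which gives $\Gamma = c_2 J \gBL^{-1} J^\top$, and then verify directly that the candidate $M := \frac{1}{c_2} J^{+\top}\gBL J^+$ satisfies the four Moore--Penrose conditions.

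Since $J$ has full column rank, $J^+ = (J^\top J)^{-1}J^\top$ satisfies $J^+J = I_m$, and $JJ^+ = \Pi$ is the orthogonal projector onto $\operatorname{range}(J)$. Using these two facts we compute
\begin{equation*}
\Gamma M = J\gBL^{-1}J^\top J^{+\top}\gBL J^+ .
\end{equation*}
Here $J^\top J^{+\top} = (J^+J)^\top = I_m$, so $\Gamma M = J J^+ = \Pi$, which is symmetric. Similarly,
\begin{equation*}
M\Gamma = J^{+\top}\gBL J^+ J \gBL^{-1}J^\top = J^{+\top}J^\top = (JJ^+)^\top = \Pi ,
\end{equation*}
which is again symmetric. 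With these in hand, the remaining two conditions follow at once. First, $\Gamma M\Gamma = \Pi\Gamma = \Gamma$, because the columns of $\Gamma = c_2 J\gBL^{-1}J^\top$ lie in $\operatorname{range}(J)$. Second, $M\Gamma M = \Pi M = M$, because $J^{+\top} = J(J^\top J)^{-1}$ has its range in $\operatorname{range}(J)$. By uniqueness of the pseudo-inverse, $\Gamma^+ = M$.

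For the norm identity, substitute $V = c_1 J\rvc$. This gives
\begin{equation*}
\lVert V\rVert^2_{\Gamma^+} = V^\top\Gamma^+V = \frac{c_1^2}{c_2}\,\rvc^\top J^\top J^{+\top}\gBL J^+ J\rvc = \frac{c_1^2}{c_2}\,\rvc^\top\gBL\rvc ,
\end{equation*}
again using $J^+J = I_m$. Rearranging yields the claim.

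The only delicate point is bookkeeping of transposes and the identities $J^+J = I_m$ and $(J^{+})^\top J^\top = \Pi$. There is no real obstacle. One should also note the minor inconsistency in the preceding text, where it says "full row rank $m$"; what is used is that $J$ is $\ell\times m$ with rank $m$, that is, full column rank.
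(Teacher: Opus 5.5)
Your proposal is correct and follows essentially the same route as the paper's proof: you check the four Penrose conditions for $M = c_2^{-1}J^{+\top}\gBL J^+$ via $\Gamma M = M\Gamma = JJ^+$, then substitute $V = c_1 J\rvc$ and use $J^+J = I_m$ to get the norm identity. The only difference is cosmetic: you compute $M\Gamma$ directly, whereas the paper deduces it from the symmetry of $\Gamma M$. Your remark that ``full row rank'' in the preceding text should read ``full column rank'' is also correct.
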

Consequently, by \cref{prop:H_vs_gBL}, the moment-Randers metric of $\FF$ is well defined at $x$ if and only if $\frac{c_2}{c_1^2} \lVert V(x)\rVert^2_{\Gamma^+} < (m+3)^{-1}$, which is computable from $\gL^s$ and $\gL^a$ alone.
\begin{proof}
  Fix $x \in \gM$, write $M(x) = c_2^{-1}J(x)^{+\top}\gBL(x)J(x)^+$, and let $P(x) = J(x)J(x)^+$ be the orthogonal projector onto the range of $J(x)$\footnote{Since $J(x)$ has full column rank, $J(x)^+ = \big(J(x)^\top J(x)\big)^{-1}J(x)^\top$, so $P(x) = J(x)\big(J(x)^\top J(x)\big)^{-1}J(x)^\top$ is symmetric.}; both $\Gamma(x)$ and $M(x)$ are symmetric. Using \cref{prop:carre_du_champ} and $J(x)^+J(x) = I_m$,
  \begin{equation*}
    \Gamma(x) M(x) = J(x) \gBL(x)^{-1}\big(J(x)^+J(x)\big)^\top \gBL(x)J(x)^+ = J(x)J(x)^+ = P(x).
  \end{equation*}
  Since $P(x)$ is symmetric, we also have $M(x)\Gamma(x) = P(x)$. Applying this, it holds that:
  \begin{equation*}
    \Gamma(x)M(x)\Gamma(x) = \Gamma(x)P(x) = \Gamma(x), \qquad
    M(x)\Gamma(x)M(x) = M(x)P(x) = M(x).
  \end{equation*}
  Together with the symmetry of $\Gamma(x)M(x)$ and $M(x)\Gamma(x)$, these are exactly the Penrose conditions characterizing the Moore--Penrose pseudoinverse, so $M(x) = \Gamma(x)^+$. The second identity then follows from $V(x) = c_1 J(x) \rvc(x)$ and the definition of the pseudo-inverse:
  \begin{equation*}
    \lVert \rvc(x)\rVert^2_{\gBL} = \frac{1}{c_1^2} V(x)^\top J(x)^{+\top} \gBL(x) J(x)^+ V(x) = \frac{c_2}{c_1^2} V(x)^\top \Gamma(x)^+ V(x) = \frac{c_2}{c_1^2} \norm{V(x)}_{\Gamma^+}^2 . \qedhere
  \end{equation*}
\end{proof}

To sum up, we can recover both the direction and the magnitude, in the Binet--Legendre metric, of the vector field $\rvc(x)$ from the vector field $V(x)$ in the embedding space. Note that neither the sampling density $\rho$ nor the normalization exponent $\theta$ appears in $\lVert \rvc(x) \rVert^2_{\gBL}$.

\subsubsection*{The empirical moment-Randers metric}

Building on the above, we can now define a metric from the observed quantities $\Gamma(x)$ and $V(x)$. Following \cref{sec:randers_approximation}, set
\begin{align} \label{eq:empirical_randers_data}
  &\hat{g}_{BL}(x)^{-1} = \Gamma(x), \qquad
  &&\hat{\rvc}(x) = \frac{\sqrt{c_2}}{c_1}\, V(x), \\
  &\hat{\rmH}(x)^{-1} = \Gamma(x) - (m+2)\frac{c_2}{c_1^2} V(x)V(x)^\top, \qquad
  &&\hat{\lambda}(x) = 1 - \lVert \hat{\rvc}(x)\rVert^2_{\hat{\rmH}} , \notag
\end{align}
with $\hat{\rmH} := (\hat{\rmH}(x)^{-1})^+$. Let $\hat{\FF}_R$ be defined by \cref{def:moment_randers} with $(\hat{\rvc}, \hat{\rmH}, \hat{\lambda})$, so that $\hat{\FF}_R$ is defined as:
\begin{align*}
  \hat{\FF}_R(x, v) = \frac{1}{\hat{\lambda}(x)}\left( \sqrt{\hat{\lambda}(x) \lVert v \rVert_{\hat{\rmH}}^2 + \langle \hat{\rvc}(x), v \rangle_{\hat{\rmH}}^2 } - \langle \hat{\rvc}(x), v \rangle_{\hat{\rmH}} \right).
\end{align*}

\begin{lemma} \label{lem:empirical_pushforward}
  If $J(x)$ has full column rank, then $\hat{\rmH}^{-1}$ and $\hat{\rvc}$ are the pushforwards of $\rmH^{-1}$ and $\rvc$ by the Jacobian $J(x)$, \ie
  \begin{align*}
    \hat{\rmH}^{-1}(x) &= c_2 J(x) \rmH^{-1}(x) J(x)^\top, \qquad
    \hat{\rvc}(x) = \sqrt{c_2} J(x) \rvc(x), \qquad \hat{\rmH}(x) = \frac{1}{c_2} J(x)^{+\top} \rmH(x) J(x)^{+},
  \end{align*}
  and $\hat{\lambda}(x) = \lambda(x)$.
\end{lemma}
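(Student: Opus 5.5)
The plan is direct substitution, using \cref{prop:carre_du_champ}, the identity $V(x) = c_1 J(x)\rvc(x)$, and \cref{cor:centroid_norm_estimate}. Fix $x$, drop it from the notation, and set $P = JJ^+$. The three pushforward identities come first, and the equality $\hat{\lambda} = \lambda$ is derived from them.

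\textbf{The centroid and $\hat{\rmH}^{-1}$.} By definition, $\hat{\rvc} = \frac{\sqrt{c_2}}{c_1} V = \sqrt{c_2}\, J\rvc$. By \cref{prop:carre_du_champ}, $\Gamma = c_2 J\gBL^{-1}J^\top$. Since $(m+2)\frac{c_2}{c_1^2}VV^\top = (m+2)c_2\, J\rvc\rvc^\top J^\top$, we get
\begin{equation*}
  \hat{\rmH}^{-1} = c_2 J\big(\gBL^{-1} - (m+2)\rvc\rvc^\top\big)J^\top = c_2 J\rmH^{-1}J^\top .
\end{equation*}

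\textbf{The pseudo-inverse $\hat{\rmH}$.} This step mimics the proof of \cref{cor:centroid_norm_estimate}, with $\gBL$ replaced by $\rmH$. Here $\rmH$ is well defined and positive definite, by \cref{prop:H_vs_gBL} under the admissibility assumption. Set $M = c_2^{-1}J^{+\top}\rmH J^+$. Using $J^+J = I_m$, one checks that $\hat{\rmH}^{-1}M = J\rmH^{-1}\rmH J^+ = P$. This is symmetric, so $M\hat{\rmH}^{-1} = P$ as well. Then $\hat{\rmH}^{-1}P = \hat{\rmH}^{-1}$, since $J^\top P = J^\top$, and $MP = M$, since $J^+P = J^+$. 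These are exactly the four Penrose conditions, so $\hat{\rmH} = (\hat{\rmH}^{-1})^+ = M$.

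\textbf{The equality $\hat{\lambda} = \lambda$.} Compute
\begin{equation*}
  \lVert\hat{\rvc}\rVert_{\hat{\rmH}}^2 = c_2\,\rvc^\top J^\top \big(c_2^{-1}J^{+\top}\rmH J^+\big) J\rvc = \rvc^\top (J^+J)^\top \rmH (J^+J)\rvc = \lVert\rvc\rVert_{\rmH}^2 ,
\end{equation*}
hence $\hat{\lambda} = \lambda$.

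The only delicate point is the pseudo-inverse step. One must use that $J$ has full column rank, so that $J^+ = (J^\top J)^{-1}J^\top$, $J^+J = I_m$ and $P$ is a symmetric projector. One must also use that $\rmH$ is invertible, which requires the admissibility hypothesis implicitly carried by \cref{def:moment_randers}. Everything else is a routine substitution.
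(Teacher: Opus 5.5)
Your proof is correct and follows the paper's argument step for step. You substitute $\Gamma = c_2 J\gBL^{-1}J^\top$ and $V = c_1 J\rvc$, identify $\hat{\rmH}$ through the Penrose conditions exactly as in the proof of \cref{cor:centroid_norm_estimate} (which you write out instead of citing), and use $J^+J = I_m$ to get $\lVert\hat{\rvc}\rVert_{\hat{\rmH}} = \lVert\rvc\rVert_{\rmH}$. One small correction to your last paragraph: invertibility of $\rmH$ does not actually need the admissibility hypothesis, because $\rmH^{-1}/(m+2) = \rmS - \rvc\rvc^\top$ is the covariance of the uniform distribution on the convex body $\tball_x$, which has nonempty interior, so it is always positive definite.
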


In particular, this lemma implies that $\norm{\hat{\rvc}(x)}_{\hat{\rmH}} = \norm{\rvc(x)}_{\rmH}$, so that the admissibility condition $\norm{\hat{\rvc}(x)}_{\hat{\rmH}} < 1$ is equivalent to $\norm{\rvc(x)}_{\rmH} < 1$, which is exactly the hypothesis of \cref{def:moment_randers}. Hence, if $J(x)$ has full column rank, then $\hat{\FF}_R$ is a valid Randers metric on the embedded tangent space $J(T_x\gM)$.

\begin{proof}
  By \cref{prop:carre_du_champ}, $\Gamma(x) = c_2 J(x) \gBL(x)^{-1} J(x)^\top$, and $V(x)V(x)^\top = c_1^2 J(x)\rvc(x)\rvc(x)^\top J(x)^\top$ since $V(x) = c_1 J(x)\rvc(x)$. Substituting into the definition of $\hat{\rmH}^{-1}(x)$ and using $\rmH(x)^{-1} = \gBL(x)^{-1} - (m+2)\rvc(x)\rvc(x)^\top$ (\cref{def:moment_randers}),
  \begin{align*}
    \hat{\rmH}^{-1}(x) &= c_2 J(x) \gBL(x)^{-1} J(x)^\top - (m+2) c_2 J(x) \rvc(x)\rvc(x)^\top J(x)^\top \\
    &= c_2 J(x) \left( \gBL(x)^{-1} - (m+2) \rvc(x)\rvc(x)^\top \right) J(x)^\top \\
    &= c_2 J(x) \rmH(x)^{-1} J(x)^\top,
  \end{align*}
  and, similarly, $\hat{\rvc}(x) = \frac{\sqrt{c_2}}{c_1}\, V(x) = \frac{\sqrt{c_2}}{c_1}\, c_1 J(x) \rvc(x) = \sqrt{c_2} J(x) \rvc(x)$.

  For $\hat{\rmH}(x)$: the identity $\hat{\rmH}^{-1}(x) = c_2 J(x) \rmH(x)^{-1} J(x)^\top$ just established has exactly the same form as $\Gamma(x) = c_2 J(x) \gBL(x)^{-1} J(x)^\top$, with $\gBL(x)^{-1}$ replaced by $\rmH(x)^{-1}$. Since $\rmH(x)^{-1} \succ 0$ (\cref{prop:H_vs_gBL}) and $J(x)$ has full column rank, the proof of \cref{cor:centroid_norm_estimate} applies verbatim under this substitution, giving $\hat{\rmH}(x) = \big(\hat{\rmH}^{-1}(x)\big)^+ = c_2^{-1} J(x)^{+\top} \rmH(x) J(x)^+$.

  It remains to compare the two norms. Using $J(x)^+J(x) = I_m$,
  \begin{equation*}
    \lVert \hat{\rvc}(x)\rVert^2_{\hat{\rmH}} = c_2\, \rvc(x)^\top J(x)^\top \cdot \frac{1}{c_2}J(x)^{+\top}\rmH(x) J(x)^+ \cdot J(x)\rvc(x) = \rvc(x)^\top \rmH(x) \rvc(x) = \lVert \rvc(x)\rVert^2_{\rmH},
  \end{equation*}
  so that $\hat{\lambda}(x) = 1 - \lVert \hat{\rvc}(x)\rVert^2_{\hat{\rmH}} = 1 - \lVert \rvc(x)\rVert^2_{\rmH} = \lambda(x)$.
\end{proof}

Linking the two Randers metrics $\FF_R$ and $\hat{\FF}_R$ together results in \cref{prop:embedded_randers}, which we restate here for convenience:

\EmpiricalRandersProp*

\begin{proof}[Proof of \cref{prop:embedded_randers}]
  By \cref{cor:centroid_norm_estimate} and \cref{prop:H_vs_gBL} the assumption $\lVert \hat{\rvc}(x)\rVert_{\hat{g}_{BL}} < (m+3)^{-1/2}$ is equivalent to $\lVert \rvc(x)\rVert_{\rmH} < 1$, so that $\FF_R$ is a valid Randers metric at $x$. By \cref{lem:empirical_pushforward} and $J^+J = I_m$, for every $v \in T_x\gM$,
  \begin{equation*}
    \lVert J v\rVert^2_{\hat{\rmH}} = \frac{1}{c_2}\lVert v \rVert^2_{\rmH} , \qquad
    \langle \hat{\rvc}, Jv\rangle_{\hat{\rmH}} = \sqrt{c_2}\,\rvc^\top J^\top \frac{1}{c_2}J^{+\top}\rmH J^+ J v = \frac{1}{\sqrt{c_2}}\langle \rvc, v\rangle_{\rmH} .
  \end{equation*}
  Substituting these and $\hat{\lambda} = \lambda$ into \cref{eq:moment_randers_navigation},
  \begin{align*}
    \hat{\FF}_R(x, J(x)v)
    &= \frac{1}{\lambda}\left( \sqrt{\frac{\lambda}{c_2} \lVert v \rVert_{\rmH}^2 + \frac{1}{c_2}\langle \rvc, v \rangle_{\rmH}^2 } - \frac{1}{\sqrt{c_2}}\langle \rvc, v \rangle_{\rmH} \right)
    = \frac{1}{\sqrt{c_2}}\, \FF_R(x,v) . \qedhere
  \end{align*}
\end{proof}

In other words, $\hat{\FF}_R = c_2^{-1/2} \Psi_*\FF_R$ on the embedded tangent space $J(x)(T_x\gM)$: the moment-Randers construction commutes with the embedding, which is what makes it computable. Note that $\hat{\FF}_R$ is a Randers metric on $J(x)(T_x\gM)$ only, and not on the whole of $\R^\ell$, since $\hat{\rmH}^{-1}$ has rank $m$.

\paragraph{Discrete setting.} All the quantities above are evaluated from the discrete operators of \cref{eq:discrete_operators} by replacing $\gL^s$ and $\gL^a$ with $\rmL^s$ and $\rmL^a$, and $\Psi$ with the embedded samples $Y_i = \Psi(X_i)$. Writing $\psi_k \in \R^N$ for the $k$-th embedding coordinate evaluated at the samples and $\odot$ for the entrywise product,
\begin{equation*}
  V_i = \rmL^a[\Psi]_i , \qquad
  \Gamma_i^{kl} = \frac{1}{2}\Big[ \big(\rmL^s(\psi_k \odot \psi_l)\big)_i - \psi_k(X_i)\big(\rmL^s \psi_l\big)_i - \psi_l(X_i)\big(\rmL^s \psi_k\big)_i \Big] ,
\end{equation*}
and $\lVert \rvc(X_i)\rVert^2_{\gBL} = \frac{c_2}{c_1^2} V_i^\top \Gamma_i^+ V_i$ by \cref{cor:centroid_norm_estimate}. Whenever this quantity is below $(m+3)^{-1}$, \cref{eq:empirical_randers_data} defines $\hat{\rvc}_i$, $\hat{\rmH}_i$ and $\hat{\lambda}_i$, and hence the metric $\hat{\FF}_R(X_i, \cdot)$ at the sample $X_i$. By \cref{thm:convergence_discrete_op}, $\rmL^s$ and $\rmL^a$ converge uniformly and almost surely to $\gL^s$ and $\gL^a$, so these empirical quantities converge to their continuous counterparts at every sample.

\subsection{Computation of the kernel moments}
\label{appendix:kernel_moments}

  The values of the kernel moments $\mu_n$ can be computed analytically for the usual kernel functions $K$.
\subsubsection*{Exponential kernel}
  If $K$ is the exponential kernel $K(x) = \exp(-x)$, then we can compute, for $m\in \mathbb{N}$:
  \begin{align*}
    \mu_n = \int_0^{+\infty} K(r) r^{n+m-1} \diff r &= \int_0^{+\infty} e^{-r} r^{n+m-1} \diff r = \Gamma(n+m) = (n+m-1)!
  \end{align*}

\subsubsection*{Gaussian kernel}
  If $K$ is the Gaussian kernel $K(x) = \exp(-x^2)$, then we can compute, for $m\in \mathbb{N}$, using the change of variable $u = r^2$, so that $\diff r = \diff u / (2\sqrt{u})$:
  \begin{align*}
    \mu_n = \int_0^{+\infty} K(r) r^{n+m-1} \diff r &= \int_0^{+\infty} e^{-r^2} r^{n+m-1} \diff r\\
    &= \frac{1}{2} \int_0^{+\infty} e^{-u} u^{(n+m-1)/2 - 1/2} \diff u \\
    &= \frac{1}{2} \int_0^{+\infty} e^{-u} u^{(n+m)/2 - 1} \diff u \\
    &= \frac{1}{2} \Gamma\left(\frac{n+m}{2}\right)
  \end{align*}

\end{document}

%% file: math_commands.tex
\usepackage{amsmath,amsfonts,bm}
\usepackage{tabularx}
\usepackage{makecell}
\usepackage{amsmath}
\usepackage{subcaption}
\usepackage{todonotes}
\usepackage{thmtools, thm-restate}
\usepackage{xcolor}
\usepackage[cal=boondoxo]{mathalfa} %
\usepackage[normalem]{ulem} %
\usepackage{enumitem}
\usepackage{float}
\usepackage{xr}
\usepackage{balance}
\usepackage{xspace}
\usepackage{url}
\usepackage{expl3}
\usepackage{amsmath}
\usepackage{amssymb}
\usepackage{mathtools}
\usepackage{dsfont}
\usepackage{upgreek}
\usepackage{bbm}
\usepackage{graphicx}
\usepackage{epstopdf}
\usepackage{grffile}
\usepackage{tikz}
\usetikzlibrary{arrows.meta}
\usepackage{float}
\usepackage{svg}
\usepackage{algorithm}
\usepackage{algorithmic}
\usepackage[labelfont = bf]{caption}
\usepackage{subcaption}
\usepackage{relsize}  %
\usepackage{makecell}
\usepackage{times}
\usepackage{url}
\usepackage{hyperref}
\usepackage{graphicx}
\usepackage{refcount}
\usepackage{xstring}
\usepackage{geometry}
\usepackage{amsmath}
\usepackage{adjustbox}
\usepackage{amssymb}
\usepackage{mathtools}
\usepackage[table]{xcolor}
\usepackage{xcolor}
\usepackage{multirow}
\usepackage{stmaryrd}
\usepackage{standalone}
\usepackage{natbib}
\usepackage{rotating}
\usepackage{amsthm}

\def\eqref#1{equation~\ref{#1}}

\def\1{\bm{1}}

\def\eps{{\varepsilon}}

\def\rvc{{\mathbf{c}}}

\def\rvf{{\mathbf{f}}}

\def\rmA{{\mathbf{A}}}

\def\rmD{{\mathbf{D}}}
\def\rmE{{\mathbf{E}}}

\def\rmH{{\mathbf{H}}}
\def\rmI{{\mathbf{I}}}
\def\rmJ{{\mathbf{J}}}

\def\rmL{{\mathbf{L}}}

\def\rmP{{\mathbf{P}}}
\def\rmQ{{\mathbf{Q}}}
\def\rmR{{\mathbf{R}}}
\def\rmS{{\mathbf{S}}}

\def\rmV{{\mathbf{V}}}
\def\rmW{{\mathbf{W}}}

\def\rmY{{\mathbf{Y}}}

\DeclareMathAlphabet{\mathsfit}{\encodingdefault}{\sfdefault}{m}{sl}
\SetMathAlphabet{\mathsfit}{bold}{\encodingdefault}{\sfdefault}{bx}{n}

\def\gA{{\mathcal{A}}}

\def\gC{{\mathcal{C}}}

\def\gF{{\mathcal{F}}}
\def\gG{{\mathcal{G}}}

\def\gI{{\mathcal{I}}}

\def\gL{{\mathcal{L}}}
\def\gM{{\mathcal{M}}}

\def\gP{{\mathcal{P}}}

\def\sN{{\mathbb{N}}}

\newcommand{\R}{\mathbb{R}}

\usepackage{aliascnt}

\newtheorem{theorem}{Theorem}

\newaliascnt{property}{theorem}

\aliascntresetthe{property}

\newaliascnt{assumption}{theorem}

\aliascntresetthe{assumption}

\newaliascnt{proposition}{theorem}
\newtheorem{proposition}[proposition]{Proposition}
\aliascntresetthe{proposition}

\newaliascnt{corollary}{theorem}
\newtheorem{corollary}[corollary]{Corollary}
\aliascntresetthe{corollary}

\newaliascnt{lemma}{theorem}
\newtheorem{lemma}[lemma]{Lemma}
\aliascntresetthe{lemma}

\newaliascnt{definition}{theorem}
\newtheorem{definition}[definition]{Definition}
\aliascntresetthe{definition}

\newaliascnt{example}{theorem}
\newtheorem{example}[example]{Example}
\aliascntresetthe{example}

\newaliascnt{remark}{theorem}
\newtheorem{remark}[remark]{Remark}
\aliascntresetthe{remark}

\usepackage{thmtools}

\usepackage[capitalise,noabbrev]{cleveref}

\crefname{theorem}{Theorem}{Theorems}
\crefname{property}{Property}{Properties}
\crefname{assumption}{Assumption}{Assumptions}
\crefname{proposition}{Proposition}{Propositions}
\crefname{corollary}{Corollary}{Corollaries}
\crefname{lemma}{Lemma}{Lemmas}
\crefname{definition}{Definition}{Definitions}
\crefname{example}{Example}{Examples}
\crefname{remark}{Remark}{Remarks}

\crefname{figure}{Figure}{Figures}

\newcommand{\appref}[1]{Appendix~\ref{#1}}

\definecolor{blue1}{HTML}{2851CC}
\definecolor{red1}{HTML}{FF0000}
\definecolor{green1}{HTML}{3BB273}
\definecolor{orange1}{HTML}{F96C39}
\definecolor{pink1}{HTML}{F056BF}
\definecolor{gray1}{HTML}{696A62}
\definecolor{gray2}{HTML}{E3E4DB}

\definecolor{todocolor}{HTML}{F17105}
\definecolor{newcolor}{HTML}{682ACB}
\definecolor{newakcolor}{HTML}{07B053}

\newcommand{\norm}[1]{\left\lVert#1\right\rVert}
\newcommand{\abs}[1]{\left|#1\right|}
\newcommand{\N}{\mathbb{N}}

\newcommand{\MM}{\gM}
\newcommand{\FF}{\gF}
\newcommand{\backFF}{{\overleftarrow{\FF}}}
\newcommand{\backm}{{\overleftarrow{m}}}

\newcommand{\Pts}{\gP^{(\theta,s)}}
\newcommand{\Ptb}{\gP^{(\theta, \bullet)}}
\newcommand{\Pta}{\gP^{(\theta,a)}}
\newcommand{\gGa}{{\gG^{a}_\FF}}
\newcommand{\gGs}{{\gG^{s}_\FF}}
\newcommand{\backgGt}{{\gG^{(\theta)}_{\backFF}}}
\newcommand{\gGt}{{\gG^{(\theta)}_\FF}}
\newcommand{\gGtb}{{\gG^{(\theta,\bullet)}_\FF}}
\newcommand{\gGta}{{\gG^{(\theta,a)}_\FF}}
\newcommand{\gGts}{{\gG^{(\theta,s)}_\FF}}
\newcommand{\phit}{{\phi^{(\theta)}}}

\newcommand{\qe}{{q_\eps}}

\newcommand{\dts}{d^{(\theta,s)}}
\newcommand{\dta}{d^{(\theta,a)}}
\newcommand{\dstF}{{\text{dst}_\gF}}

\newcommand{\expF}{\exp^\FF}
\newcommand{\expa}{\exp^\alpha}
\newcommand{\dst}{\text{dst}}
\newcommand{\ChrisF}{\tilde{\Gamma}}
\newcommand{\RicF}{\text{Ric}^\FF}
\newcommand{\backRicF}{\text{Ric}^{\backFF}}
\newcommand{\backChrisF}{\overleftarrow{\ChrisF}}
\newcommand{\backG}{\overleftarrow{G}}
\newcommand{\backLambda}{\overleftarrow{\Lambda}}
\newcommand{\backR}{\overleftarrow{R}}
\newcommand{\backg}{\overleftarrow{g}}
\newcommand{\backN}{\overleftarrow{N}}
\newcommand{\backS}{\overleftarrow{S}}
\newcommand{\backtau}{\overleftarrow{\tau}}
\newcommand{\backs}{\overleftarrow{s}}
\newcommand{\backUpsilon}{\overleftarrow{\Upsilon}}

\newcommand{\dout}{{d}}
\newcommand{\din}{{d'}}
\newcommand{\dint}{{d'^{(\theta)}}}
\newcommand{\doutt}{{d^{(\theta)}}}
\newcommand{\dmin}{{d_{\min}}}

\newcommand{\vole}{{\operatorname{Vol}_\textnormal{Euc}}}
\newcommand{\vol}{{\operatorname{Vol}}}
\newcommand{\diff}{\mathrm{d}}
\newcommand{\mBH}{\mathfrak{m}_\textnormal{BH}}
\newcommand{\leb}{\lambda}
\newcommand{\Jac}{\mathcal{J}_x}
\newcommand{\tball}{\mathcal{B}}   %
\newcommand{\eball}{\mathbb{B}}    %
\newcommand{\fball}{B^{+}_{\FF}}   %

\newcommand{\gBL}{g_\textnormal{BL}} %
\newcommand{\sigBH}{\sigma_\textnormal{BH}} %

\newcommand{\ie}   			{i.e.\@\xspace}
\newcommand{\eg}   			{e.g.\@\xspace}

\newcommand{\iid}   		{i.i.d.\@\xspace}

\newcommand{\bigO} 		  {\mathcal{O}} %

\newcommand{\Prob}      {\mathbb{P}}

\renewcommand*{\top}{{\mkern-1.5mu\mathsf{T}}}
\newcommand{\dist} {\operatorname{dist}}

\newcommand{\padded}[1] {\,#1\,}
\newcommand{\void}[1] {\padded{\cdot}}

\makeatletter
\def\adl@drawiv#1#2#3{%
        \hskip.5\tabcolsep
        \xleaders#3{#2.5\@tempdimb #1{1}#2.5\@tempdimb}%
                #2\z@ plus1fil minus1fil\relax
        \hskip.5\tabcolsep}
\newcommand{\cmidruledashed}[1]{%
  \noalign{\vskip\aboverulesep
           \global\let\@dashdrawstore\adl@draw
           \global\let\adl@draw\adl@drawiv}
  \cdashline{#1}%
	\noalign{\vskip-\belowrulesep}
	\noalign{\vskip-\belowrulesep}
}
\makeatother

\makeatletter
\newcommand{\pushright}[1]{\ifmeasuring@#1\else\omit\hfill$\displaystyle#1$\fi\ignorespaces}
\newcommand{\pushleft}[1]{\ifmeasuring@#1\else\omit$\displaystyle#1$\hfill\fi\ignorespaces}
\makeatother

\makeatletter
\let\KV@Gin@trim@old\KV@Gin@trim
\define@key{Gin}{trim}{%
  \begingroup
  \edef\x{\endgroup
    \noexpand\setkeys{Gin}{trim@old={#1}}%
  }\x
}
\makeatother
\makeatletter
\let\KV@Gin@viewport@old\KV@Gin@viewport
\define@key{Gin}{viewport}{%
  \begingroup
  \edef\x{\endgroup
    \noexpand\setkeys{Gin}{viewport@old={#1}}%
  }\x
}
\makeatother

\newcommand{\VOID}[1]{}

%% file: euclidean_distance.tex
\begin{tikzpicture}[scale=1.5]

\def\manifoldcoords{(-0.5,0.5) (0.8,0.8) (2.2,0.6) (2.8,0) (2.5,-0.8) (1.5,-1) (0.3,-0.9) (-0.2,-0.2)}

\fill[blue1!15, opacity=0.6] plot[smooth cycle, tension=0.8] coordinates {\manifoldcoords};

\coordinate (A) at (0.5, 0.2);
\coordinate (B) at (2.0, -0.3);

\draw[red1, thick, latex-latex] (A) -- (B);

\fill[black] (A) circle (0.04) node[left] {$x$};
\fill[black] (B) circle (0.04) node[right] {$y$};

\node[blue1, font=\small] at (2.5, 0.7) {$\mathcal{M}$};

\end{tikzpicture}%

%% file: riemannian_distance.tex
\begin{tikzpicture}[scale=1.5]

\def\manifoldcoords{(-0.5,0.5) (0.8,0.8) (2.2,0.6) (2.8,0) (2.5,-0.8) (1.5,-1) (0.3,-0.9) (-0.2,-0.2)}

\fill[blue1!15, opacity=0.6] plot[smooth cycle, tension=0.8] coordinates {\manifoldcoords};

\coordinate (A) at (0.5, 0.2);
\coordinate (B) at (2.0, -0.3);

\draw[red1, thick, latex-latex] (A) to[out=70, in=120, bend left=20] (B);

\fill[black] (A) circle (0.04) node[left] {$x$};
\fill[black] (B) circle (0.04) node[right] {$y$};

\node[blue1, font=\small] at (2.5, 0.7) {$\mathcal{M}$};

\end{tikzpicture}%

%% file: finsler_distance.tex
\begin{tikzpicture}[scale=1.5]

\def\manifoldcoords{(-0.5,0.5) (0.8,0.8) (2.2,0.6) (2.8,0) (2.5,-0.8) (1.5,-1) (0.3,-0.9) (-0.2,-0.2)}

\fill[blue1!15, opacity=0.6] plot[smooth cycle, tension=0.8] coordinates {\manifoldcoords};

\coordinate (A) at (0.5, 0.2);
\coordinate (B) at (2.0, -0.3);

\draw[red1, thick, -latex] (A) to[out=10, in=180, bend left=30] (B);

\draw[green1, thick, -latex] (B) to[out=180, in=-10, bend left=30] (A);

\fill[black] (A) circle (0.04) node[left] {$x$};
\fill[black] (B) circle (0.04) node[right] {$y$};

\node[blue1, font=\small] at (2.5, 0.7) {$\mathcal{M}$};

\end{tikzpicture}%

%% file: disbm_cycle_model.tex
\begin{tikzpicture}[scale=1.5,
    community/.style={circle, fill=#1, minimum size=0.95cm, inner sep=0pt, text=white, font=\small},
    next/.style={#1, line width=1.1pt, -{Latex[length=2.2mm]}},
    prev/.style={gray1, thin, dashed, -{Latex[length=1.6mm]}},
    same/.style={#1, thick, -{Latex[length=1.8mm]}},
    lbl/.style={font=\small, inner sep=1.5pt},
  ]

\node[community=blue1]   (C1) at (90:1.3)  {$C_1$};
\node[community=orange1] (C2) at (-30:1.3) {$C_2$};
\node[community=green1]  (C3) at (210:1.3) {$C_3$};

\draw[next=blue1]   (C1) to[bend left=28] node[lbl, pos=0.5, above right] {$p$} (C2);
\draw[next=orange1] (C2) to[bend left=28] node[lbl, pos=0.5, below]       {$p$} (C3);
\draw[next=green1]  (C3) to[bend left=28] node[lbl, pos=0.5, above left]  {$p$} (C1);

\draw[prev] (C2) to[bend left=18] node[lbl, pos=0.5, below left]  {$q$} (C1);
\draw[prev] (C3) to[bend left=18] node[lbl, pos=0.5, above]       {$q$} (C2);
\draw[prev] (C1) to[bend left=18] node[lbl, pos=0.5, below right] {$q$} (C3);

\draw[same=blue1]   (C1) to[out=65,  in=115, looseness=6] node[lbl, above, inner sep=4pt] {$r$} (C1);
\draw[same=orange1] (C2) to[out=-55, in=-5,  looseness=6] node[lbl, below right, inner sep=3pt] {$r$} (C2);
\draw[same=green1]  (C3) to[out=185, in=235, looseness=6] node[lbl, below left, inner sep=3pt]  {$r$} (C3);

\end{tikzpicture}%

%% file: disbm_cycle_graph.tex
\begin{tikzpicture}[scale=1.5,
    vertex/.style={circle, fill=#1, minimum size=5pt, inner sep=0pt},
    intra/.style={gray1!70, thin, -{Latex[length=1.1mm]}},
    fwd/.style={#1, semithick, -{Latex[length=1.6mm]}},
    bwd/.style={gray1, thin, dashed, -{Latex[length=1.4mm]}},
  ]

\coordinate (c1) at (90:1.3);
\coordinate (c2) at (-30:1.3);
\coordinate (c3) at (210:1.3);

\foreach \k/\col in {1/blue1, 2/orange1, 3/green1}
  \fill[\col!15, opacity=0.8] (c\k) circle (0.5);

\foreach \k/\col in {1/blue1, 2/orange1, 3/green1}
  \foreach \i/\dx/\dy in {1/0/0, 2/0.3/0.14, 3/-0.29/0.17, 4/0.02/0.34, 5/0.24/-0.26, 6/-0.21/-0.28}
    \node[vertex=\col] (n\k\i) at ([shift={(\dx,\dy)}]c\k) {};

\foreach \k in {1,2,3}
  \foreach \a/\b in {1/2, 3/1, 4/2, 3/4, 5/1, 6/5, 3/6, 1/4}
    \draw[intra] (n\k\a) -- (n\k\b);

\draw[fwd=blue1]   (n14) to[bend left=22] (n22);
\draw[fwd=blue1]   (n12) to[bend left=12] (n24);
\draw[fwd=blue1]   (n15) to[bend left=6]  (n23);
\draw[fwd=orange1] (n25) to[bend left=22] (n36);
\draw[fwd=orange1] (n21) to[bend left=12] (n35);
\draw[fwd=orange1] (n26) to[bend left=6]  (n31);
\draw[fwd=green1]  (n36) to[bend left=22] (n13);
\draw[fwd=green1]  (n33) to[bend left=12] (n11);
\draw[fwd=green1]  (n34) to[bend left=6]  (n16);

\draw[bwd] (n23) to[bend left=28] (n16);
\draw[bwd] (n32) to[bend left=28] (n26);
\draw[bwd] (n15) to[bend left=28] (n32);

\end{tikzpicture}%

%% file: main_iclr.bbl
\begin{thebibliography}{57}
\providecommand{\natexlab}[1]{#1}
\providecommand{\url}[1]{\texttt{#1}}
\expandafter\ifx\csname urlstyle\endcsname\relax
  \providecommand{\doi}[1]{doi: #1}\else
  \providecommand{\doi}{doi: \begingroup \urlstyle{rm}\Url}\fi

\bibitem[Arvanitidis et~al.(2019)Arvanitidis, Hauberg, Hennig, and
  Schober]{arvanitidis2019fast}
Georgios Arvanitidis, Soren Hauberg, Philipp Hennig, and Michael Schober.
\newblock Fast and robust shortest paths on manifolds learned from data.
\newblock In \emph{International Conference on Artificial Intelligence and
  Statistics}, 2019.

\bibitem[Asanjarani(2021)]{asanjarani2021finslertraffic}
Azam Asanjarani.
\newblock A {F}insler geometrical programming for the nonlinear complementarity
  problem of traffic equilibrium.
\newblock \emph{Preprint arXiv:2109.01256}, 2021.

\bibitem[Bao et~al.(2012)Bao, Chern, and Shen]{bao2012introduction}
David Bao, Shiing-Shen Chern, and Zhongmin Shen.
\newblock \emph{An introduction to {R}iemann-{F}insler geometry}.
\newblock Springer Science \& Business Media, 2012.

\bibitem[Belkin \& Niyogi(2001)Belkin and Niyogi]{belkin_laplacian_2001}
Mikhail Belkin and Partha Niyogi.
\newblock Laplacian eigenmaps and spectral techniques for embedding and
  clustering.
\newblock In \emph{Advances in Neural Information Processing Systems}, 2001.

\bibitem[Bergen et~al.(2021)Bergen, Soldatov, Kharchenko, and
  Theis]{bergen2021rna}
Volker Bergen, Ruslan~A Soldatov, Peter~V Kharchenko, and Fabian~J Theis.
\newblock {RNA} velocity—current challenges and future perspectives.
\newblock \emph{Molecular Systems Biology}, 17\penalty0 (8):\penalty0
  MSB202110282, 2021.

\bibitem[Calder \& Trillos(2022)Calder and Trillos]{calder_improved_2022}
Jeff Calder and Nicolas~Garcia Trillos.
\newblock Improved spectral convergence rates for graph {L}aplacians on
  $\varepsilon$-graphs and k-{NN} graphs.
\newblock \emph{Applied and Computational Harmonic Analysis}, 60:\penalty0
  123--175, 2022.

\bibitem[Chansri et~al.(2018)Chansri, Chansangiam, and
  Sabau]{chansri2018geometry}
P~Chansri, Pattrawut Chansangiam, and Sorin~V Sabau.
\newblock The geometry on the slope of a mountain.
\newblock \emph{Preprint arXiv:1811.02123}, 2018.

\bibitem[Chen et~al.(2024)Chen, Mirebeau, Shu, and Cohen]{chen2024region}
Da~Chen, Jean-Marie Mirebeau, Huazhong Shu, and Laurent~D Cohen.
\newblock A region-based {R}anders geodesic approach for image segmentation.
\newblock \emph{International Journal of Computer Vision}, 132\penalty0
  (2):\penalty0 349--391, 2024.

\bibitem[Chen et~al.(2007)Chen, Yang, and Tang]{chen2007directed}
Mo~Chen, Qiong Yang, and Xiaoou Tang.
\newblock Directed graph embedding.
\newblock In \emph{International Joint Conference on Artificial Intelligence},
  2007.

\bibitem[Chung(2005)]{chung_laplacians_2005}
Fan Chung.
\newblock Laplacians and the {C}heeger inequality for directed graphs.
\newblock \emph{Annals of Combinatorics}, 9\penalty0 (1):\penalty0 1--19, 2005.

\bibitem[Coifman \& Lafon(2006)Coifman and Lafon]{coifman_diffusion_2006}
Ronald~R Coifman and St{\'e}phane Lafon.
\newblock Diffusion maps.
\newblock \emph{Applied and Computational Harmonic Analysis}, 21\penalty0
  (1):\penalty0 5--30, 2006.

\bibitem[Cucuringu et~al.(2020)Cucuringu, Li, Sun, and
  Zanetti]{cucuringu_hermitian_2020}
Mihai Cucuringu, Huan Li, He~Sun, and Luca Zanetti.
\newblock Hermitian matrices for clustering directed graphs: insights and
  applications.
\newblock In \emph{International Conference on Artificial Intelligence and
  Statistics}, 2020.

\bibitem[Dag\`es et~al.(2025)Dag\`es, Weber, Lin, Talmon, Cremers, Lindenbaum,
  Bruckstein, and Kimmel]{dages_finsler_2025}
Thomas Dag\`es, Simon Weber, Ya-Wei~Eileen Lin, Ronen Talmon, Daniel Cremers,
  Michael Lindenbaum, Alfred~M. Bruckstein, and Ron Kimmel.
\newblock Finsler multi-dimensional scaling: Manifold learning for asymmetric
  dimensionality reduction and embedding.
\newblock In \emph{IEEE/CVF Conference on Computer Vision and Pattern
  Recognition}, 2025.

\bibitem[Dag{\`e}s et~al.(2026)Dag{\`e}s, Weber, Cremers, and
  Kimmel]{dages_harnessing_2026}
Thomas Dag{\`e}s, Simon Weber, Daniel Cremers, and Ron Kimmel.
\newblock Harnessing data asymmetry: Manifold learning in the {F}insler world.
\newblock \emph{Preprint arXiv:2603.11396}, 2026.

\bibitem[Debaussart-Joniec et~al.(2026)Debaussart-Joniec, Sevi, Jonckheere, and
  Kalogeratos]{debaussart_parpic_2026}
Gwendal Debaussart-Joniec, Harry Sevi, Matthieu Jonckheere, and Argyris
  Kalogeratos.
\newblock Parametrized power-iteration clustering for directed graphs.
\newblock In \emph{International Conference on Machine Learning}, 2026.

\bibitem[Donoho \& Grimes(2003)Donoho and Grimes]{donoho2003hessian}
David~L Donoho and Carrie Grimes.
\newblock Hessian eigenmaps: Locally linear embedding techniques for
  high-dimensional data.
\newblock \emph{Proceedings of the National Academy of Sciences}, 2003.

\bibitem[Fanuel et~al.(2018)Fanuel, Alaíz, Ángela Fernández, and
  Suykens]{fanuel_magnetic_2018}
Michaël Fanuel, Carlos~M. Alaíz, Ángela Fernández, and Johan~A.K. Suykens.
\newblock Magnetic eigenmaps for the visualization of directed networks.
\newblock \emph{Applied and Computational Harmonic Analysis}, 44\penalty0
  (1):\penalty0 189--199, 2018.

\bibitem[Finsler(1918)]{finsler1918ueber}
Paul Finsler.
\newblock \emph{{\"U}ber Kurven und Fl{\"a}chen in allgemeinen R{\"a}umen}.
\newblock Philos. Fak., Georg-August-Univ., 1918.

\bibitem[Gahtan et~al.(2026)Gahtan, Shpund, and
  Bronstein]{gahtan2026differentiable}
Barak Gahtan, Jacob Shpund, and Alex~M Bronstein.
\newblock Differentiable {R}anders-{F}insler eikonal solvers.
\newblock \emph{Computer Graphics Forum}, pp.\  e70489, 2026.

\bibitem[He et~al.(2023)He, He, Shi, Huang, and Suykens]{he2023learning}
Mingzhen He, Fan He, Lei Shi, Xiaolin Huang, and Johan~A.K. Suykens.
\newblock Learning with asymmetric kernels: Least squares and feature
  interpretation.
\newblock \emph{IEEE Transactions on Pattern Analysis and Machine
  Intelligence}, 45\penalty0 (8):\penalty0 10044--10054, 2023.

\bibitem[Hein et~al.(2007)Hein, Audibert, and Luxburg]{hein2007graph}
Matthias Hein, Jean-Yves Audibert, and Ulrike~von Luxburg.
\newblock Graph {L}aplacians and their convergence on random neighborhood
  graphs.
\newblock \emph{Journal of Machine Learning Research}, 8\penalty0 (6), 2007.

\bibitem[Kipf \& Welling(2016)Kipf and Welling]{kipf2016semi}
Thomas~N Kipf and Max Welling.
\newblock Semi-supervised classification with graph convolutional networks.
\newblock \emph{Preprint arXiv:1609.02907}, 2016.

\bibitem[Lafon(2004)]{lafon2004diffusion}
St{\'e}phane~S Lafon.
\newblock \emph{Diffusion maps and geometric harmonics}.
\newblock Yale University, 2004.

\bibitem[Lee(2012)]{lee2012introduction}
John~M. Lee.
\newblock \emph{Introduction to Smooth Manifolds}.
\newblock Graduate Texts in Mathematics. Springer, 2 edition, 2012.

\bibitem[Lopez et~al.(2021)Lopez, Pozzetti, Trettel, Strube, and
  Wienhard]{lopez_2021_Symmetric}
Federico Lopez, Beatrice Pozzetti, Steve Trettel, Michael Strube, and Anna
  Wienhard.
\newblock Symmetric spaces for graph embeddings: A {F}insler-{R}iemannian
  approach.
\newblock In \emph{International Conference on Machine Learning}, 2021.

\bibitem[Markvorsen(2016)]{markvorsenFinslerGeodesicSpray2016}
Steen Markvorsen.
\newblock A {F}insler geodesic spray paradigm for wildfire spread modelling.
\newblock \emph{Nonlinear Analysis: Real World Applications}, 28:\penalty0
  208--228, 2016.

\bibitem[Matsumoto(1989)]{matsumoto1989slope}
Makoto Matsumoto.
\newblock A slope of a mountain is a {F}insler surface with respect to a time
  measure.
\newblock \emph{Journal of Mathematics of Kyoto University}, 29\penalty0
  (1):\penalty0 17--25, 1989.

\bibitem[Matsumoto(1992)]{matsumoto_theory_1992}
Makoto Matsumoto.
\newblock Theory of {F}insler spaces with ($\alpha$, $\beta$)-metric.
\newblock \emph{Reports on Mathematical Physics}, 31\penalty0 (1):\penalty0
  43--83, 1992.

\bibitem[Matveev \& Troyanov(2012)Matveev and Troyanov]{matveev2012binet}
Vladimir~S Matveev and Marc Troyanov.
\newblock The {B}inet--{L}egendre metric in {F}insler geometry.
\newblock \emph{Geometry \& Topology}, 16\penalty0 (4):\penalty0 2135--2170,
  2012.

\bibitem[Melonakos et~al.(2008)Melonakos, Pichon, Angenent, and
  Tannenbaum]{melanokos_2008_finsler}
John Melonakos, Eric Pichon, Sigurd Angenent, and Allen Tannenbaum.
\newblock Finsler active contours.
\newblock \emph{IEEE Transactions on Pattern Analysis and Machine
  Intelligence}, 30\penalty0 (3), 2008.

\bibitem[Ohta(2021)]{ohtaComparisonFinslerGeometry2021}
Shin-ichi Ohta.
\newblock \emph{Comparison {{Finsler Geometry}}}.
\newblock Springer {{Monographs}} in {{Mathematics}}. Springer, 2021.

\bibitem[Ou et~al.(2016)Ou, Cui, Pei, Zhang, and Zhu]{ou2016asymmetric}
Mingdong Ou, Peng Cui, Jian Pei, Ziwei Zhang, and Wenwu Zhu.
\newblock Asymmetric transitivity preserving graph embedding.
\newblock In \emph{ACM SIGKDD International Conference on Knowledge Discovery
  and Data Mining}, 2016.

\bibitem[Peoples \& Harlim(2025)Peoples and
  Harlim]{wilson_peoples_spectral_2025}
J.~Wilson Peoples and John Harlim.
\newblock Spectral convergence of symmetrized graph {L}aplacian on manifolds
  with boundary.
\newblock \emph{Foundations of Data Science}, 8\penalty0 (0):\penalty0
  119--167, 2025.

\bibitem[Perrault-Joncas \& Meila(2011)Perrault-Joncas and
  Meila]{perrault2011directed}
Dominique Perrault-Joncas and Marina Meila.
\newblock Directed graph embedding: an algorithm based on continuous limits of
  {L}aplacian-type operators.
\newblock \emph{Advances in Neural Information Processing Systems}, 24, 2011.

\bibitem[Pfeifer(2019)]{pfeifer2019finsler}
Christian Pfeifer.
\newblock Finsler spacetime geometry in physics.
\newblock \emph{International Journal of Geometric Methods in Modern Physics},
  16\penalty0 (supp02):\penalty0 1941004, 2019.

\bibitem[Pouplin et~al.(2023)Pouplin, Eklund, Ek, and
  Hauberg]{pouplin2023identifyinglatentdistancesfinslerian}
Alison Pouplin, David Eklund, Carl~Henrik Ek, and Søren Hauberg.
\newblock Identifying latent distances with {F}inslerian geometry.
\newblock \emph{Preprint arXiv:2212.10010}, 2023.

\bibitem[Randers(1941)]{randers1941asymmetrical}
Gunnar Randers.
\newblock On an asymmetrical metric in the four-space of general relativity.
\newblock \emph{Physical Review}, 59\penalty0 (2):\penalty0 195, 1941.

\bibitem[Rygaard \& Hauberg(2025)Rygaard and Hauberg]{rygaard2025georce}
Frederik~M{\"o}bius Rygaard and S{\o}ren Hauberg.
\newblock {GEORCE}: A fast new control algorithm for computing geodesics.
\newblock \emph{Preprint arXiv:2505.05961}, 2025.

\bibitem[Satuluri \& Parthasarathy(2011)Satuluri and
  Parthasarathy]{satuluri_symmetrizations_2011}
Venu Satuluri and Srinivasan Parthasarathy.
\newblock Symmetrizations for clustering directed graphs.
\newblock In \emph{International Conference on Extending Database Technology},
  2011.

\bibitem[Sevi et~al.(2026)Sevi, Debaussart-Joniec, Hacini, Jonckheere, and
  Kalogeratos]{sevi_generalized_2026}
Harry Sevi, Gwendal Debaussart-Joniec, Malik Hacini, Matthieu Jonckheere, and
  Argyris Kalogeratos.
\newblock Generalized {D}irichlet energy and graph {L}aplacians for clustering
  directed and undirected graphs.
\newblock \emph{Transactions on Machine Learning Research}, 2026.

\bibitem[Shaska(2025)]{shaska_finsler_2025}
Tony Shaska.
\newblock Geometric learning and {F}insler dissimilarity in weighted projective
  spaces.
\newblock \emph{Preprint arXiv:2507.00001}, 2025.

\bibitem[Shen \& Shen(2016)Shen and Shen]{shen2016introduction}
Yi-Bing Shen and Zhongmin Shen.
\newblock \emph{Introduction to modern {F}insler geometry}.
\newblock World Scientific Publishing Company, 2016.

\bibitem[Shen(2001)]{shen2001lectures}
Zhongmin Shen.
\newblock \emph{Lectures on {F}insler geometry}.
\newblock World Scientific, 2001.

\bibitem[Singer(2006)]{singer_from_2006}
Amit Singer.
\newblock From graph to manifold {L}aplacian: The convergence rate.
\newblock \emph{Applied and Computational Harmonic Analysis}, 21\penalty0
  (1):\penalty0 128--134, 2006.
\newblock Special Issue: Diffusion Maps and Wavelets.

\bibitem[Sommer et~al.(2020)Sommer, Fletcher, and
  Pennec]{Sommer2020introduction}
Stefan Sommer, Tom Fletcher, and Xavier Pennec.
\newblock 1 - introduction to differential and {R}iemannian geometry.
\newblock In Xavier Pennec, Stefan Sommer, and Tom Fletcher (eds.),
  \emph{Riemannian Geometric Statistics in Medical Image Analysis}, pp.\
  3--37. Academic Press, 2020.

\bibitem[Tao et~al.(2024)Tao, Tonin, Lambert, Chen, Patrinos, and
  Suykens]{tao2024learning}
Qinghua Tao, Francesco Tonin, Alex Lambert, Yingyi Chen, Panagiotis Patrinos,
  and Johan~A.K. Suykens.
\newblock Learning in feature spaces via coupled covariances: asymmetric kernel
  {SVD} and {N}ystr{\"o}m method.
\newblock In \emph{International Conference on Machine Learning}, 2024.

\bibitem[Tenenbaum et~al.(2000)Tenenbaum, Silva, and
  Langford]{tenenbaum2000global}
Joshua~B Tenenbaum, Vin~de Silva, and John~C Langford.
\newblock A global geometric framework for nonlinear dimensionality reduction.
\newblock \emph{Science}, 290\penalty0 (5500):\penalty0 2319--2323, 2000.

\bibitem[Vershynin(2019)]{vershynin2019high}
Roman Vershynin.
\newblock \emph{High-dimensional probability}, volume~47 of \emph{Cambridge
  Series in Statistical and Probabilistic Mathematics}.
\newblock Cambridge University Press, 2019.

\bibitem[Von~Luxburg(2007)]{von2007tutorial}
Ulrike Von~Luxburg.
\newblock A tutorial on spectral clustering.
\newblock \emph{Statistics and Computing}, 17\penalty0 (4):\penalty0 395--416,
  2007.

\bibitem[Weber et~al.(2024)Weber, Dag\`es, Gao, and
  Cremers]{Weber_finsler_2024}
Simon Weber, Thomas Dag\`es, Maolin Gao, and Daniel Cremers.
\newblock {F}insler-{L}aplace-{B}eltrami operators with application to shape
  analysis.
\newblock In \emph{IEEE/CVF Conference on Computer Vision and Pattern
  Recognition}, 2024.

\bibitem[Wu et~al.(2010)Wu, Xu, Li, and Oyama]{wu2010asymmetric}
Wei Wu, Jun Xu, Hang Li, and Satoshi Oyama.
\newblock Asymmetric kernel learning.
\newblock Technical report, Microsoft Research, 2010.

\bibitem[Yeon(2015)]{yeonHistoryBirthFinsler}
Won~Dae Yeon.
\newblock On the {{History}} of the {{Birth}} of {{Finsler Geometry}} at
  {{Göttingen}}.
\newblock \emph{Journal for History of Mathematics}, 2015.

\bibitem[Yuan et~al.(2022)Yuan, Calder, and Osting]{yuan2022continuum}
Amber Yuan, Jeff Calder, and Braxton Osting.
\newblock A continuum limit for the {P}age{R}ank algorithm.
\newblock \emph{European Journal of Applied Mathematics}, 33\penalty0
  (3):\penalty0 472--504, 2022.

\bibitem[Zermelo(1931)]{zermelo1931navigationsproblem}
Ernst Zermelo.
\newblock {\"U}ber das {N}avigationsproblem bei ruhender oder
  ver{\"a}nderlicher {W}indverteilung.
\newblock \emph{Journal of Applied Mathematics and Mechanics}, 11\penalty0
  (2):\penalty0 114--124, 1931.

\bibitem[Zhang et~al.(2021)Zhang, He, Brugnone, Perlmutter, and
  Hirn]{zhang_magnet_2021}
Xitong Zhang, Yixuan He, Nathan Brugnone, Michael Perlmutter, and Matthew Hirn.
\newblock {MagNet}: A neural network for directed graphs.
\newblock In \emph{Advances in Neural Information Processing Systems}, 2021.

\bibitem[Zhou et~al.(2003)Zhou, Bousquet, Lal, Weston, and
  Sch{\"o}lkopf]{zhou2003learning}
Dengyong Zhou, Olivier Bousquet, Thomas Lal, Jason Weston, and Bernhard
  Sch{\"o}lkopf.
\newblock Learning with local and global consistency.
\newblock \emph{Advances in Neural Information Processing Systems}, 16, 2003.

\bibitem[Zweig et~al.(2026)Zweig, Zhang, Knowles, and
  Azizi]{zweig_learning_2026}
Aaron Zweig, Mingxuan Zhang, David~A Knowles, and Elham Azizi.
\newblock Learning lineage-guided geodesics with finsler geometry.
\newblock \emph{Preprint arXiv:2603.16708}, 2026.

\end{thebibliography}
